\tikzstyle{decision} = [diamond, draw, fill=blue!20, 
\tikzstyle{block} = [rectangle, draw, fill=none, 
\tikzstyle{line} = [draw, -latex']
\tikzstyle{cloud} = [draw, ellipse,fill=red!20, node distance=3cm,
\let\orgdescriptionlabel\descriptionlabel
\renewcommand*{\descriptionlabel}[1]{%
  \let\orglabel\label
  \let\label\@gobble
  \phantomsection
  \edef\@currentlabel{#1}%
  \let\label\orglabel
  \orgdescriptionlabel{#1}%
}
\newtheorem{lem}{Lemma}
\newtheorem{thm}{Theorem}
\def\bx{\bm{x}}
\def\by{\bm{y}}
\def\bz{\bm{z}}
\def\bw{\bm{w}}
\def\bv{\bm{v}}
\def\xSp{\mathcal{X}}
\def\lossLG{L}
\newcommand{\inprod}[2]{\left\langle #1 , #2 \right\rangle }
\newcommand{\ipm}[3]{ \gamma_{#1}(#2,#3) }
\newcommand{\ipmv}[3]{ \gamma^{\text{V}}_{#1}(#2,#3) }
\def\numIter{K}
\def\iterIndex{k}
\def\lossFunc{\ell}
	\def\meangap{mean gap}
	\def\MeanGap{Mean Gap}
	\def\aMG{a}
	\def\meangap{excess risk}
	\def\MeanGap{Excess Risk}
	\def\aMG{an}
\title{Adaptive Sequential Optimization with Applications to Machine Learning}
\author{Craig Wilson and Venugopal V. Veeravalli\thanks{This work was supported by the NSF under award CCF 11-11342 through the University of Illinois at Urbana-Champaign.}\\
Coordinated Science Lab and Electrical and Computer Engineering\\
       University of Illinois at Urbana-Champaign\\
       Urbana, IL 61801, USA\\
       \texttt{\{wilson60,vvv\}@illinois.edu}}
\begin{document}

\maketitle
\begin{abstract}
A framework is introduced for solving a sequence of slowly changing optimization problems, including those arising in regression and classification applications, using optimization algorithms such as stochastic gradient descent (SGD). The optimization problems change slowly in the sense that the minimizers change at either a fixed or bounded rate. A method based on estimates of the change in the minimizers and properties of the optimization algorithm is introduced for adaptively selecting the number of samples needed from the distributions underlying each problem in order to ensure that the \meangap{}, i.e., the expected gap between the loss achieved by the approximate minimizer produced by the optimization algorithm and the exact minimizer, does not exceed a target level. Experiments with synthetic and real data are used to confirm that this approach performs well.
\end{abstract}

\section{Introduction}
Consider solving a sequence of machine learning problems such as regression or classification by minimizing the expected value of a fixed loss function $\lossFunc(\bx,\bz)$ at each time $n$s:
\begin{equation}
\label{probState:seqProbs}
	\min_{\bx \in \xSp} \left\{ f_{n}(\bx) \triangleq \mathbb{E}_{\bz_{n} \sim p_{n}} \left[  \lossFunc(\bx,\bz_{n}) \right] \right\} \;\;\; \forall n \geq 1
\end{equation}
For regression, $\bz_{n}$ corresponds to the predictors and response pair at time $n$ and $\bx$ parameterizes the regression model. For classification $\bz_{n}$ corresponds to the feature and label pair at time $n$ and $\bx$ parameterizes the classifier. Although, motivated by regression and classification, our framework works for any loss function $\lossFunc(\bx,\bz)$ that satisfies certain properties discussed later. In the learning context, a \emph{task} consists of the loss function $\lossFunc(\bx,\bz)$ and the distribution $p_{n}$, and so our problem can be viewed as learning a sequence of tasks.

The problems change slowly at a constant but unknown rate in the sense that
\begin{equation}
\label{probState:slowChangeConstDef}
	\| \bx_{n}^{*} - \bx_{n-1}^{*} \| = \rho \;\;\;\;\;\;\;\;\;\;\;\;\;\; \forall n \geq 2
\end{equation}
with $\bx_{n}^{*}$ the minimizer of $f_{n}(\bx)$. In an extended version of this paper \cite{Wilson2015}, we also consider slow changes at a bounded but unknown rate
\begin{equation}
\label{probState:slowChangeDef}
	\| \bx_{n}^{*} - \bx_{n-1}^{*} \| \leq \rho \;\;\;\;\;\;\;\;\;\;\;\;\;\; \forall n \geq 2
\end{equation}

Under this model, we find approximate minimizers $\bx_{n}$ of each function $f_{n}(\bx)$ using $\numIter_{n}$ samples from distribution $p_{n}$ by applying an optimization algorithm. We evaluate the quality of our approximate minimizers $\bx_{n}$ through \aMG{} \meangap{} criterion $\epsilon_{n}$, i.e.,
\begin{equation*}
\label{probState:meanGapDef}
		\mathbb{E}\left[ f_{n}(\bx_{n}) \right] - f_{n}(\bx_{n}^{*}) \leq \epsilon_{n}
\end{equation*}
\iftoggle{useMeanGap}{
which is a standard criterion for stochastic optimization problems \cite{Nemirovski2009}.
}{
which is a standard criterion for optimization and learning problems \cite{Mohri2012}.
}
Our goal is to determine adaptively the number of samples $\numIter_{n}$ required to achieve a desired \meangap{} $\epsilon$ for each $n$ with $\rho$ unknown. As $\rho$ is unknown, we will construct estimates of $\rho$. Given an estimate of $\rho$, we determine selection rules for the number of samples $\numIter_{n}$ to achieve a target \meangap{} $\epsilon$.

\subsection{Related Work}

Our problem has connections with \textit{multi-task learning} (MTL) and \textit{transfer learning}. In multi-task learning, one tries to learn several tasks simultaneously as in \cite{Agarwal2011},\cite{Evgeniou2004}, and \cite{Zhang2012} by exploiting the relationships between the tasks. In transfer learning, knowledge from one source task is transferred to another target task either with or without additional training data for the target task \cite{Pan2010}. Multi-task learning could be applied to our problem by running a MTL algorithm each time a new task arrives, while remembering all prior tasks. However, this approach incurs a memory and computational burden. Transfer learning lacks the sequential nature of our problem. For multi-task and transfer learning, there are theoretical guarantees on regret for some algorithms \cite{Agarwal2008}.

We can also consider the \textit{concept drift} problem in which we observe a stream of incoming data that potentially changes over time, and the goal is to predict some property of each piece of data as it arrives. After prediction, we incur a loss that is revealed to us. For example, we could observe a feature $\bw_{n}$ and predict the label $y_{n}$ as in \cite{Towfic2013}. Some approaches for concept drift use iterative algorithms such as SGD, but without specific models on how the data changes. As a result, only simulation results showing good performance are available. There are also some bandit approaches in which one of a finite number of predictors must be applied to the data as in \cite{Tekin2014}. For this approach, there are regret guarantees using techniques for analyzing bandit problems.

Another relevant model is \textit{sequential supervised learning} (see \cite{Dietterich2002}) in which we observe a stream of data consisting of feature/label pairs $(\bw_{n},y_{n})$ at time $n$, with $\bw_{n}$ being the feature vector and $y_{n}$ being the label. At time $n$, we want to predict $y_{n}$ given $\bx_{n}$. One approach to this problem, studied in \cite{Fawcett1997} and \cite{Qian1988}, is to look at $L$ consecutive pairs $\{(\bw_{n-i},y_{n-i})\}_{i=1}^{L}$ and develop a predictor at time $n$ by applying a supervised learning algorithm to this training data. Another approach is to assume that there is an underlying hidden Markov model (HMM) \cite{BengioFrasconi1996}. The label $y_{n}$ represents the hidden state and the pair $(\bw_{n},\overline{y}_{n})$ represents the observation with $\overline{y}_{n}$ being a noisy version of $y_{n}$. HMM inference techniques are used to estimate $y_{n}$.

\section{Adaptive Sequential Optimization With $\rho$ Known}
\label{withRhoKnown}

For analysis, we need the following assumptions on our functions $f_{n}(\bx)$ and the optimization algorithm:

\begin{description}
\item[A.1 \label{probState:assump1}] For the optimization algorithm under consideration, there is a function $b(d_{0},\numIter_{n})$ such that
\[
\mathbb{E}\left[ f_{n}(\bx_{n}) \right] - f_{n}(\bx_{n}^{*}) \leq b(d_{0},\numIter_{n})
\]
with $\numIter_{n}$ the number of samples from $p_{n}$ and $\mathbb{E} \| \bx_{n}(0) - \bx_{n}^{*}\|^{2} \leq d_{0}$, where $\bx_{n}(0)$ is the initial point of the optimization algorithm at time $n$. Finally, $b(d_{0},\numIter_{n})$ is non-decreasing in $d_{0}$.		
\item[A.2 \label{probState:assump2}] Each loss function $\lossFunc(\bx,\bz)$ is differentiable in $\bx$. Each $f_{n}(\bx)$ is strongly convex with parameter $m$, i.e.,
\[
f_{n}(\by) \geq f_{n}(\bx) + \langle \nabla_{\bx} f_{n}(\bx) , \by - \bx \rangle + \frac{1}{2} m \| \by - \bx\|^{2}
\]
\item[A.3 \label{probState:assump3}] $\text{diam}(\xSp) < +\infty$			
\item[A.4 \label{probState:assump4}] We can find initial points $\bx_{1}$ and $\bx_{2}$ that satisfy the \meangap{} criterion with $\epsilon_{1}$ and $\epsilon_{2}$ known, i.e.,
\[
\mathbb{E}\left[ f_{i}(\bx_{i})  \right] - f_{i}(\bx_{i}^{*}) \leq \epsilon_{i} \;\;\;\;\;\;\; i =1,2
\]
\end{description}

\emph{Remarks:} For assumption \ref{probState:assump1}, we assume that the bound $b(d_{0},\numIter_{n})$ depends on the number of samples $\numIter_{n}$ and not the number of iterations. For SGD, generally the number of iterations equals $\numIter_{n}$ as each sample is used to produce a noisy gradient. In addition, we often set $\bx_{n}(0) = \bx_{n-1}$. See Appendix~\ref{bBounds} for a discussion of useful $b(d_{0},\numIter_{n})$ bounds. For assumption \ref{probState:assump4}, we can fix $\numIter_{i}$ and set $\epsilon_{i} = b(\text{diam}(\mathcal{X})^{2},\numIter_{i})$ for $i=1,2$.

Now, we examine the case when the change in minimizers, $\rho$ in~\eqref{probState:slowChangeConstDef} or \eqref{probState:slowChangeDef}, is known. For the analysis of the section, whether \eqref{probState:slowChangeConstDef} or \eqref{probState:slowChangeDef} holds does not affect the analysis. Later we will estimate $\rho$ and in this case whether \eqref{probState:slowChangeConstDef} or \eqref{probState:slowChangeDef} holds matters substantially.

 We want to find a bound $\epsilon_{n}$ on the \meangap{} at time $n$ in terms of $\numIter_{n}$ and $\rho$, i.e., $\epsilon_{n}$ such that ${\mathbb{E}[f_{n}(\bx_{n})] - f_{n}(\bx_{n}^{*}) \leq \epsilon_{n}}$. The idea is to start with the bounds from assumption~\ref{probState:assump4} and proceed inductively using the previous $\epsilon_{n-1}$ and $\rho$ from~\eqref{probState:slowChangeConstDef}. Suppose that $\epsilon_{n-1}$ bounds the \meangap{} at time $n-1$. Using the triangle inequality, strong convexity, and \eqref{probState:slowChangeConstDef} we have
\begin{eqnarray}
\mathbb{E}\|\bx_{n-1} - \bx_{n}^{*}\|^{2} &\leq& \left( \|\bx_{n-1} - \bx_{n-1}^{*}\| + \| \bx_{n}^{*} - \bx_{n-1}^{*}\| \right)^{2}  \nonumber \\
&\leq& \left( \sqrt{\frac{2}{m} \mathbb{E}\left[ f_{n-1}(\bx_{n-1})\right] - f_{n-1}(\bx_{n-1}^{*})} + \| \bx_{n}^{*} - \bx_{n-1}^{*}\| \right)^{2}  \nonumber \\
\label{rhoKnown:basicDoBound} &\leq& \left( \sqrt{\frac{2 \epsilon_{n-1}}{m}} + \rho \right)^{2}
\end{eqnarray}
In comparison, we could use the estimate $\text{diam}^{2}(\xSp)$ to bound $\mathbb{E}\|\bx_{n-1} - \bx_{n}^{*}\|^{2}$ and select $\numIter_{n}$. If the bound in \eqref{rhoKnown:basicDoBound} is much smaller than $\text{diam}(\xSp)^{2}$, then we need significantly fewer samples $\numIter_{n}$ to guarantee a desired \meangap{}. Now, by using the bound $b(d_{0},\numIter_{n})$ from assumption~\ref{probState:assump1}, we can set
\begin{eqnarray}
\label{withRhoKnown:epsNRecursion}
\epsilon_{n} &=& b\left( \left( \sqrt{\frac{2 \epsilon_{n-1}}{m}} + \rho \right)^{2}  , \numIter_{n} \right) \;\;\; \forall n \geq 3 \nonumber
\end{eqnarray}
which yields a sequence of bounds on the \meangap{}. Note that this recursion only relies on the immediate past at time $n-1$ through $\epsilon_{n-1}$. To achieve $\epsilon_{n} \leq \epsilon$ for all $n$, we set 
\[
\numIter_{1} = \min\{ \numIter \geq 1 \;|\; b\left( \text{diam}(\xSp)^{2}, \numIter  \right) \leq \epsilon \}
\]
and $\numIter_{n} = \numIter^{*}$ for $n \geq 2$ with
\begin{equation}
\label{withRhoKnown:KChoice}
\numIter^{*} = \min\left\{ \numIter \geq 1 \;\Bigg|\; b\left(\left( \sqrt{\frac{2 \epsilon}{m}} + \rho \right)^{2} , \numIter  \right) \leq \epsilon \right\}
\end{equation}

\section{Estimating $\rho$}
\label{estRho}

In practice, we do not know $\rho$, so we must construct an estimate $\hat{\rho}_{n}$ using the samples from each distribution $p_{n}$. We introduce two approaches to estimate $\rho$ at one time step, $\|\bx_{i}^{*} - \bx_{i-1}^{*}\|$, and methods to combine these estimates under assumptions~\eqref{probState:slowChangeConstDef} and \eqref{probState:slowChangeDef}. We show that for our estimate $\hat{\rho}_{n}$ and appropriately chosen sequences $\{t_{n}\}$ for all $n$ large enough $\hat{\rho}_{n} + t_{n} \geq \rho$ almost surely. With this property, analysis similar to that in Section~\ref{withRhoKnown} holds. 

\subsection{Allowed Ways to Choose $\numIter_{n}$}
One of the sources of difficulty in estimating $\rho$ is that we will allow $\numIter_{n}$ to be selected in a data dependent way, so $\numIter_{n}$ is itself a random variable. We make the assumption that $\numIter_{n}$ is selected using only information available at the end of time $n-1$. To make this precise we define a filtration of sigma algebras to describe the available information. First, we define the sigma algebra $\mathcal{K}_{0}$ containing all the information on the initial conditions of our algorithm. For example, we may start at a random point $\bx_{0}$ and then
\begin{equation*}
\mathcal{K}_{0} = \sigma(\bx_{0})
\end{equation*}
The sigma algebra $\mathcal{K}_{0}$ may also contain information about $\numIter_{1}$ and $\numIter_{2}$. Next, we define the filtration
\begin{equation}
\label{estRho:H0SigAlg}
\mathcal{K}_{n} = \sigma\left( \{\bz_{n}(\iterIndex)\}_{\iterIndex=1}^{\numIter_{n}}  \right) \vee \mathcal{K}_{n-1} \;\;\;\;\;\; \forall n \geq 1
\end{equation}
where 
\[
\mathcal{F} \vee \mathcal{G} = \sigma\left( \mathcal{F} \cup \mathcal{G} \right)
\]
is the merge operator for sigma algebras. The sigma algebra $\mathcal{K}_{n}$ contains all the information available to us at the end of time $n$. We assume that $\numIter_{n}$ is $\mathcal{K}_{n-1}$-measurable to capture the idea that $\numIter_{n}$ is chosen only using information available at the end of time $n-1$.

\subsection{Estimating One Step Change}
First, we estimate the one step changes $\|\bx_{i}^{*} - \bx_{i-1}^{*}\|$ denoted by $\tilde{\rho}_{i}$. Implicitly, we assume that all one step estimates are capped by $\text{diam}(\xSp)$, since trivially $\| \bx_{n}^{*} - \bx_{n-1}^{*}\| \leq \text{diam}(\xSp)$.

\subsubsection{Direct Estimate}
First, we construct an estimate $\tilde{\rho}_{i}$ of the one step changes $\|\bx_{i}^{*} - \bx_{i-1}^{*}\|$. Using the triangle inequality and variational inequalities from \cite{Dontchev2009} yields
\begin{align}
\| \bx_{i}^{*} - \bx_{i-1}^{*} \| &\leq \| \bx_{i} - \bx_{i-1} \| + \|\bx_{i} - \bx_{i}^{*} \| + \| \bx_{i-1} - \bx_{i-1}^{*} \| \nonumber \\
&\leq \| \bx_{i} - \bx_{i-1} \| + \frac{1}{m} \| \nabla_{\bx} f_{i}(\bx_{i}) \| + \frac{1}{m} \| \nabla_{\bx} f_{i}(\bx_{i-1}) \| \nonumber
\end{align}
We then approximate $\| \nabla_{\bx} f_{i}(\bx_{i}) \| =  \| \mathbb{E}_{\bz_{i} \sim p_{i}} \left[ \nabla_{\bx} \lossFunc(\bx_{i},\bz_{i})  \right] \|$
by
\[\bigg\| \frac{1}{\numIter_{i}} \sum_{\iterIndex=1}^{\numIter_{i}} \nabla_{\bx} \lossFunc(\bx_{i},\bz_{i}(\iterIndex))  \bigg\|
\]
to yield the following estimate that we call the \emph{direct estimate}:
\begin{align}
\tilde{\rho}_{i} &\triangleq \| \bx_{i} - \bx_{i-1} \| + \frac{1}{m} \Bigg\| \frac{1}{\numIter_{i}} \sum_{\iterIndex=1}^{\numIter_{i}} \nabla_{\bx} \lossFunc(\bx_{i},\bz_{i}(\iterIndex))  \Bigg\|  + \frac{1}{m} \Bigg\| \frac{1}{\numIter_{i-1}} \sum_{\iterIndex=1}^{\numIter_{i-1}} \nabla_{\bx} \lossFunc(\bx_{i-1},\bz_{i-1}(\iterIndex))  \Bigg\| \nonumber
\end{align}

\subsubsection{Vector Integral Probability Metric Estimate} 
Given a class of functions $\mathscr{F}$ where each $f \in \mathscr{F}$ maps $\mathcal{Z} \to \mathbb{R}$, an integral probability metric (IPM) \cite{Sriperumbudur12} between two distributions $p$ and $q$ is defined to be
\begin{equation*}
\label{estRho:ipmBound:scalarDef}
\ipm{\mathscr{F}}{p}{q} \triangleq \sup_{f \in \mathscr{F}} \big| \mathbb{E}_{\bz \sim p}[f(\bz)] - \mathbb{E}_{\tilde{\bz} \sim q}[f(\tilde{\bz})] \big|
\end{equation*}
We consider an extension of this idea, which we call a \emph{vector IPM}, in which the class of functions $\mathscr{F}$ maps $\mathcal{Z} \to \xSp$:
\begin{equation}
\label{estRho:ipmBound:vectorDef}
\ipmv{\mathscr{F}}{p}{q} \triangleq \sup_{f \in \mathscr{F}} \| \mathbb{E}_{\bz \sim p}[f(\bz)] - \mathbb{E}_{\tilde{\bz} \sim q}[f(\tilde{\bz})] \|
\end{equation}
Lemma~\ref{estRho:ipmBound:exactBoundVector} shows that a vector IPM can be used to bound the change in minimizer at time $i$ and follows from variational inequalities in \cite{Dontchev2009} and the assumption that $\{\nabla_{\bx} \lossFunc(\bx,\cdot) \::\: \bx \in \xSp \} \subset \mathscr{F}$.
\begin{lem}
\label{estRho:ipmBound:exactBoundVector}
Assume that $\{\nabla_{\bx} \lossFunc(\bx,\cdot) \::\: \bx \in \xSp \} \subset \mathscr{F}$. Then ${\| \bx_{i}^{*} - \bx_{i-1}^{*}\| \leq \frac{1}{m} \ipmv{\mathscr{F}}{p_{i}}{p_{i-1}}}$.
\end{lem}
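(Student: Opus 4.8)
The plan is to combine the first-order variational characterization of the two constrained minimizers with the strong monotonicity of the gradient that follows from Assumption~\ref{probState:assump2}, and then to recognize the surviving difference of expected gradients as a vector IPM. First I would write down the variational inequalities from \cite{Dontchev2009}: since $\bx_{i}^{*}$ minimizes $f_{i}$ over $\xSp$, $\langle \nabla_{\bx} f_{i}(\bx_{i}^{*}), \bx - \bx_{i}^{*} \rangle \geq 0$ for all $\bx \in \xSp$, and analogously for $\bx_{i-1}^{*}$ and $f_{i-1}$. Evaluating the first at $\bx = \bx_{i-1}^{*}$ and the second at $\bx = \bx_{i}^{*}$ produces two scalar inequalities that fix the signs of the inner products $\langle \nabla_{\bx} f_{i}(\bx_{i}^{*}), \bx_{i}^{*} - \bx_{i-1}^{*}\rangle$ and $\langle \nabla_{\bx} f_{i-1}(\bx_{i-1}^{*}), \bx_{i}^{*} - \bx_{i-1}^{*}\rangle$.

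Next I would invoke strong monotonicity: strong convexity with parameter $m$ gives $\langle \nabla_{\bx} f_{i}(\bx) - \nabla_{\bx} f_{i}(\by), \bx - \by \rangle \geq m\|\bx - \by\|^{2}$ for all $\bx,\by$, obtained by writing the Assumption~\ref{probState:assump2} inequality with the roles of the two points exchanged and adding. Setting $\bx = \bx_{i}^{*}$, $\by = \bx_{i-1}^{*}$ and using the two variational inequalities to discard the terms of definite sign, I expect to arrive at
\[
m\|\bx_{i}^{*} - \bx_{i-1}^{*}\|^{2} \leq \langle \nabla_{\bx} f_{i-1}(\bx_{i-1}^{*}) - \nabla_{\bx} f_{i}(\bx_{i-1}^{*}), \bx_{i}^{*} - \bx_{i-1}^{*} \rangle .
\]
Applying Cauchy--Schwarz and dividing by $\|\bx_{i}^{*} - \bx_{i-1}^{*}\|$ (the case of coincident minimizers being trivial) then yields $m\|\bx_{i}^{*} - \bx_{i-1}^{*}\| \leq \| \nabla_{\bx} f_{i}(\bx_{i-1}^{*}) - \nabla_{\bx} f_{i-1}(\bx_{i-1}^{*}) \|$.

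The final step identifies the right-hand side with the vector IPM. Because $\nabla_{\bx} f_{i}(\bx_{i-1}^{*}) = \mathbb{E}_{\bz \sim p_{i}}[\nabla_{\bx}\lossFunc(\bx_{i-1}^{*},\bz)]$ and $\nabla_{\bx} f_{i-1}(\bx_{i-1}^{*}) = \mathbb{E}_{\bz \sim p_{i-1}}[\nabla_{\bx}\lossFunc(\bx_{i-1}^{*},\bz)]$, and because $\nabla_{\bx}\lossFunc(\bx_{i-1}^{*},\cdot) \in \mathscr{F}$ by hypothesis, the norm of this gradient difference is bounded by the supremum over $\mathscr{F}$ defining $\ipmv{\mathscr{F}}{p_{i}}{p_{i-1}}$ in \eqref{estRho:ipmBound:vectorDef}. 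Dividing through by $m$ gives the stated bound.

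I expect the sign bookkeeping to be the only delicate part. If the minimizers were interior points I could simply set $\nabla_{\bx} f_{i}(\bx_{i}^{*}) = 0$ and $\nabla_{\bx} f_{i-1}(\bx_{i-1}^{*}) = 0$, collapsing the argument; but to cover minimizers on the boundary of $\xSp$ I must instead use the variational inequalities to control the signs of the boundary terms, and assemble them with strong monotonicity so that exactly the cross term $\nabla_{\bx} f_{i-1}(\bx_{i-1}^{*}) - \nabla_{\bx} f_{i}(\bx_{i-1}^{*})$ remains. Everything after that is Cauchy--Schwarz and the definition of the vector IPM.
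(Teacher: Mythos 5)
Your proposal is correct and takes essentially the same route as the paper: the paper invokes the perturbation bound $\|\bx_{i}^{*} - \bx_{i-1}^{*}\| \leq \frac{1}{m}\|\nabla_{\bx} f_{i}(\bx_{i-1}^{*}) - \nabla_{\bx} f_{i-1}(\bx_{i-1}^{*})\|$ directly as a consequence of the variational inequalities in \cite{Dontchev2009} and then identifies the right-hand side with the vector IPM exactly as you do. The only difference is that you derive that bound explicitly from the two first-order optimality conditions, strong monotonicity, and Cauchy--Schwarz, a step the paper leaves to the citation; your sign bookkeeping there is correct.
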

\begin{proof}
By exploiting variational inequalities from \cite{Dontchev2009}, we can show that
\begin{eqnarray}
\| \bx_{i}^{*} - \bx_{i-1}^{*} \| &\leq& \frac{1}{m} \| \nabla_{\bx}f_{i}(\bx_{i-1}^{*}) - \nabla_{\bx}f_{i-1}(\bx_{i-1}^{*}) \| \nonumber \\
&=& \frac{1}{m} \| \mathbb{E}_{\bz_{i} \sim p_{i}}\left[ \nabla_{\bx}\lossFunc(\bx_{i-1}^{*},\bz_{i}) \right] - \mathbb{E}_{\bz_{i-1} \sim p_{i-1}}\left[ \nabla_{\bx}\lossFunc(\bx_{i-1}^{*},\bz_{i-1}) \right] \| \nonumber
\end{eqnarray}
By assumption $\{\nabla_{\bx} \lossFunc(\bx_{i-1}^{*},\cdot) \::\: \bx \in \xSp \} \subset \mathscr{F}$, so
\begin{eqnarray}
\| \nabla_{\bx}f_{i}(\bx_{i-1}^{*}) - \nabla_{\bx}f_{i-1}(\bx_{i-1}^{*}) \| &=& \| \mathbb{E}_{\bz_{i} \sim p_{i}}\left[ \lossFunc(\bx_{i-1}^{*},\bz_{i}) \right] - \mathbb{E}_{\bz_{i-1} \sim p_{i-1}}\left[ \lossFunc(\bx_{i-1}^{*},\bz_{i-1}) \right] \| \nonumber \\
&\leq& \sup_{f \in \mathscr{F}} \|  \mathbb{E}_{\bz_{i} \sim p_{i}}\left[ f(\bz_{i}) \right] - \mathbb{E}_{\bz_{i-1} \sim p_{i-1}}\left[  f(\bz_{i-1}) \right] \| \nonumber \\
&=& \ipmv{\mathscr{F}}{p_{i}}{p_{i-1}} \nonumber
\end{eqnarray}
\end{proof}

We cannot compute this vector IPM, since we do not know the distributions $p_{i}$ and $p_{i-1}$. Instead, we plug in the empiricals $\hat{p}_{i}$ and $\hat{p}_{i-1}$ to yield the estimate $\frac{1}{m} \ipmv{\mathscr{F}}{\hat{p}_{i}}{\hat{p}_{i-1}}$. This estimate is biased upward, which ensures that $\| \bx_{i}^{*} - \bx_{i-1}^{*}\| \leq \mathbb{E}\left[ \frac{1}{m} \ipmv{\mathscr{F}}{\hat{p}_{i}}{\hat{p}_{i-1}} \right]$. 

Our estimate is still not in a closed form since there is a supremum over $\mathcal{F}$ in the computation of $\ipmv{\mathscr{F}}{\hat{p}_{i}}{\hat{p}_{i-1}}$. For the class of functions 
\begin{equation}
\label{estRho:ipmBound:vectorRClass}
\mathscr{F} = \left\{ f \;\big|\; \|f(\bz) - f(\tilde{\bz})\| \leq r(\bz,\tilde{\bz}) \right\}.
\end{equation}
we can compute an upper bound $\Gamma_{i}$ on $\ipmv{\mathscr{F}}{\hat{p}_{i}}{\hat{p}_{i-1}}$ yielding a computable estimate $\tilde{\rho}_{i} = \frac{1}{m} \Gamma_{i}$. Set ${\tilde{\bz}_{i}(\iterIndex) = \bz_{i}(\iterIndex)}$ if ${1 \leq \iterIndex \leq \numIter_{i}}$ and ${\tilde{\bz}_{i}(\iterIndex) = \bz_{i-1}(\iterIndex)}$ if ${\numIter_{i}+1 \leq \iterIndex \leq \numIter_{i} + \numIter_{i-1}}$. From \eqref{estRho:ipmBound:vectorDef}, we have
\[
\ipmv{\mathscr{F}}{\hat{p}_{i}}{\hat{p}_{i-1}} = \sup_{f \in \mathscr{F}} \Bigg\| \frac{1}{\numIter_{i}} \sum_{\iterIndex=1}^{\numIter_{i}} f(\tilde{\bz}_{i}(\iterIndex)) - \frac{1}{\numIter_{i-1}} \sum_{\iterIndex=1}^{\numIter_{i-1}} f(\tilde{\bz}_{i}(\numIter_{i} + \iterIndex)) \Bigg\|
\] 
We can relax this supremum by maximizing over the function value $f(\tilde{\bz}_{i}(\iterIndex))$ denoted by $\alpha_{\iterIndex}$ in the following non-convex quadratically constrained quadratic program (QCQP):
\begin{equation*}
\arraycolsep=1.4pt\def\arraystretch{1.5}
\begin{array}{ll@{}ll}
\text{maximize}  & \displaystyle \Bigg\| \frac{1}{\numIter_{i}}\sum_{\iterIndex=1}^{\numIter_{i}} \alpha_{\iterIndex} - \frac{1}{\numIter_{i-1}}\sum_{\iterIndex=1}^{\numIter_{i-1}} \alpha_{\numIter_{i} + \iterIndex} \Bigg\| &\\
\vspace{3mm}
\text{subject to}& \displaystyle \| \alpha_{\iterIndex} - \alpha_{j} \| \leq r(\tilde{\bz}_{i}(\iterIndex),\tilde{\bz}_{i}(j)) \;\;\; \forall \iterIndex < j
\end{array}
\end{equation*}
The constraints are imposed to ensure that the function values $\alpha_{\iterIndex}$ can correspond to a function in $\mathscr{F}$ from \eqref{estRho:ipmBound:vectorRClass}. The value of this QCQP exactly  may not equal the vector IPM but at least provides an upper bound. Finally, we note that this QCQP can be converted to its dual form to yield an SDP, which is often easier to solve.

\subsubsection{Comparison of Estimates}
The direct estimate is easier to compute but may be loose if $\|\bx_{n}-\bx_{n}^{*}\|$ is large. If $\|\bx_{n}-\bx_{n}^{*}\|$ is large, then the vector IPM approach is in general tighter. However, the vector IPM is more difficult to compute due to need to solve a QCQP or SDP and check the inclusion conditions in Lemma~\ref{estRho:ipmBound:exactBoundVector}. Also, the number of constraints in the QCQP or SDP grows quadratically in the number of samples.

\subsection{Combining One Step Estimates For Constant Change}
Assuming that $\| \bx_{i}^{*} - \bx_{i-1}^{*} \| = \rho$ from \eqref{probState:slowChangeConstDef}, we average the one step estimates $\tilde{\rho}_{i}$ to yield a better estimate 
\[
\hat{\rho}_{n} = \frac{1}{n-1} \sum_{i=2}^{n} \tilde{\rho}_{i}
\]
of $\rho$ at each time $n$ under \eqref{probState:slowChangeConstDef}. To analyze the behavior of our  combined estimates, we use sub-Gaussian concentration inequalities detailed in Appendix~\ref{usefulConcIneq}. Lemma~\ref{subgauss:subgaussDepLem} is of particular importance to our analysis.

\subsubsection{Direct Estimate}
The difficulty in analyzing the direct estimate comes because in approximating $\frac{1}{m}\| \nabla f_{i}(\bx_{i}) \|$
by
\[
\frac{1}{m} \Bigg\| \frac{1}{\numIter_{i}} \sum_{\iterIndex=1}^{\numIter_{i}} \nabla_{\bx} \lossFunc(\bx_{i},\bz_{i}(\iterIndex))  \Bigg\|
\]
$\bx_{i}$ is dependent on all the samples $\{\bz_{i}(\iterIndex)\}_{\iterIndex=1}^{\numIter_{i}}$. To illustrate the problem further, consider drawing two independent copies $\{\bz_{i}(\iterIndex)\}_{\iterIndex=1}^{\numIter_{i}} \overset{\text{iid}}{\sim} p_{i}$ and $\{\tilde{\bz}_{i}(\iterIndex)\}_{\iterIndex=1}^{\numIter_{i}} \overset{\text{iid}}{\sim} p_{i}$ of the samples. Suppose that we use the second copy $\{\tilde{\bz}_{i}(\iterIndex)\}_{\iterIndex=1}^{\numIter_{i}}$ to compute $\bx_{i}$ using our optimization algorithm of choice starting from $\bx_{i-1}$. Then we approximate $\frac{1}{m}\| \nabla f_{i}(\bx_{i}) \|$ by
\[
\frac{1}{m} \Bigg\| \frac{1}{\numIter_{i}} \sum_{\iterIndex=1}^{\numIter_{i}} \nabla_{\bx} \lossFunc(\bx_{i},\bz_{i}(\iterIndex))  \Bigg\|
\]
Now, since $\bx_{i}$ is independent of $\{\bz_{i}(\iterIndex)\}_{\iterIndex=1}^{\numIter_{i}}$ the quantity
\[
\frac{1}{m} \Bigg\| \frac{1}{\numIter_{i}} \sum_{\iterIndex=1}^{\numIter_{i}} \nabla_{\bx} \lossFunc(\bx_{i},\bz_{i}(\iterIndex))  \Bigg\|
\]
is the norm of an average of independent random variables conditioned on $\bx_{i}$. This allows us to apply standard concentration inequalities for norms of random variables as in \cite{Very2012}. In this section, we argue that re-using the samples $\{\bz_{i}(\iterIndex)\}_{\iterIndex=1}^{\numIter_{i}}$ to compute $\bx_{i}$ is not too far from using a second independent draw $\{\tilde{\bz}_{i}(\iterIndex)\}_{\iterIndex=1}^{\numIter_{i}}$.

For analysis, we need the following additional assumptions:
\begin{description}
\item[B.1 \label{probState:assumpB1}] The loss function $\lossFunc(\bx,\bz)$ has uniform Lipschitz continuous gradients in $\bx$ with modulus $\lossLG$, i.e. 
\[
\| \nabla_{\bx} \lossFunc(\bx,\bz) - \nabla_{\bx} \lossFunc(\tilde{\bx},\bz) \| \leq \lossLG \| \bx - \tilde{\bx} \| \;\;\; \forall \bz \in \mathcal{Z}
\]
\item[B.2 \label{probState:assumpB2}] Assuming $\xSp$ is $d$-dimensional, each component $j$ of the gradient error $\nabla_{\bx} \lossFunc(\bx,\bz_{n}) - f_{n}(\bx)$ satisfies
\[
\mathbb{E}\left[ \exp\left\{ s \left( \nabla_{\bx} \lossFunc(\bx,\bz_{n}) - \nabla f_{n}(\bx) \right)_{j}  \right\} \;\bigg|\; \bx  \right] \leq \exp\left\{\frac{1}{2} \frac{C_{g}}{d^2} s^{2} \right\}
\]
\end{description}
Assumption~\ref{probState:assumpB1} is reasonable if the space $\mathcal{Z}$ containing $\bz$ is compact. Although in practice, the distribution of gradient error could depend on $\bx$, we assume that the bound $C_{g}$ does not depend on $\bx$. We can view this as a pessimistic assumption corresponding to choosing the worst case bound as a function of $\bx$ and the resulting $C_{g}$. This is a common assumption for in high probability analysis of optimization algorithms as in \cite{Nemirovski2009} for example.

To proceed, we first define two other useful estimates for $\rho$. As discussed before, suppose that we make a second independent draw of samples $\{\tilde{\bz_{i}}(\iterIndex)\}_{\iterIndex=1}^{\numIter_{i}}$ from $p_{i}$. We use these samples to compute $\tilde{\bx}_{i}$ in the same manner as $\bx_{i}$ starting from $\bx_{i-1}$ except with $\{\tilde{\bz_{i}}(\iterIndex)\}_{\iterIndex=1}^{\numIter_{i}}$ used in place of $\{\bz_{i}(\iterIndex)\}_{\iterIndex=1}^{\numIter_{i}}$. Then define
\[
\tilde{\rho}_{i}^{(2)} \triangleq \| \tilde{\bx}_{i} - \tilde{\bx}_{i-1} \| + \frac{1}{m} \Bigg\| \frac{1}{\numIter_{i}} \sum_{\iterIndex=1}^{\numIter_{i}} \nabla_{\bx} \lossFunc(\tilde{\bx}_{i},\bz_{i}(\iterIndex))  \Bigg\| + \frac{1}{m} \Bigg\| \frac{1}{\numIter_{i-1}} \sum_{\iterIndex=1}^{\numIter_{i-1}} \nabla_{\bx} \lossFunc(\tilde{\bx}_{i-1},\bz_{i-1}(\iterIndex))  \Bigg\|
\]
This is the same form as the direct estimate with $\tilde{\bx}_{i}$ in place of $\bx_{i}$. Next, define
\[
\tilde{\rho}_{i}^{(3)} \triangleq \| \tilde{\bx}_{i} - \tilde{\bx}_{i-1} \| + \frac{1}{m} \| \nabla f_{i}(\bx_{i})\| + \frac{1}{m} \| \nabla f_{i-1}(\bx_{i-1})\|
\]
This is in fact the bound that inspired the direct estimate. We also define the averaged estimates
\[
\hat{\rho}_{n}^{(2)} \triangleq \frac{1}{n-1} \sum_{i=2}^{n} \tilde{\rho}_{i}^{(2)}
\]
and
\[
\hat{\rho}_{n}^{(3)} \triangleq \frac{1}{n-1} \sum_{i=2}^{n} \tilde{\rho}_{i}^{(3)}
\]
We know that $\hat{\rho}_{n}^{(3)} \geq \rho$. Thus, if we can control the gap between the pair $\hat{\rho}_{n}$ and $\hat{\rho}_{n}^{(2)}$ and the pair $\hat{\rho}_{n}^{(2)}$ and $\hat{\rho}_{n}^{(3)}$, then we can ensure that $\hat{\rho}_{n}$ plus an appropriate constant upper bounds $\rho$ for all $n$ large enough as desired.

First, we show that $\hat{\rho}_{n}^{(2)}$ upper bounds $\rho$ eventually.
\begin{lem}
\label{estRho:directEst:combineOneLemma}
Suppose that the following conditions hold:
\begin{enumerate}
\item \ref{probState:assumpB1}-\ref{probState:assumpB2} hold
\item The sequence $\{t_{n}\}$ satisfies
\[
\sum_{n=2}^{\infty} \exp\left\{-\frac{(n-1)m^2 t_{n}^2}{72 C_{g}}\right\} < \infty
\]
\end{enumerate}
Then for all $n$ large enough it holds that $\hat{\rho}_{n}^{(2)} + \hat{C}_{n}^{(2)} + t_{n} \geq   \rho$ almost surely with
\[
\hat{C}_{n}^{(2)} \triangleq \frac{1}{d m(n-1)} \left( \sqrt{\frac{C_{g}}{\numIter_{1}}} + 2\sum_{i=1}^{n} \sqrt{\frac{C_{g}}{\numIter_{i}}} + \sqrt{\frac{C_{g}}{\numIter_{n}}} \right)
\]
\end{lem}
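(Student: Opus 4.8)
The plan is to compare $\hat{\rho}_{n}^{(2)}$ to the auxiliary estimate $\hat{\rho}_{n}^{(3)}$, which already satisfies $\hat{\rho}_{n}^{(3)} \geq \rho$ deterministically, and to show that the gap between the two shrinks fast enough to be absorbed into $\hat{C}_{n}^{(2)} + t_{n}$. Indeed, $\hat{\rho}_{n}^{(3)} \geq \rho$ holds for every $n$, since each $\tilde{\rho}_{i}^{(3)}$ upper bounds $\|\bx_{i}^{*} - \bx_{i-1}^{*}\| = \rho$ by the same variational inequality that motivated the direct estimate together with \eqref{probState:slowChangeConstDef}. Hence it suffices to prove that $\hat{\rho}_{n}^{(3)} - \hat{\rho}_{n}^{(2)} \leq \hat{C}_{n}^{(2)} + t_{n}$ for all $n$ large enough, almost surely.

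First I would reduce the gap to a weighted sum of gradient-estimation errors. The distance terms $\|\tilde{\bx}_{i} - \tilde{\bx}_{i-1}\|$ are identical in $\tilde{\rho}_{i}^{(2)}$ and $\tilde{\rho}_{i}^{(3)}$, so they cancel, leaving only the difference between empirical and exact gradient norms at the same iterate. Writing $v_{i} \triangleq \frac{1}{\numIter_{i}}\sum_{\iterIndex=1}^{\numIter_{i}} \nabla_{\bx}\lossFunc(\tilde{\bx}_{i},\bz_{i}(\iterIndex)) - \nabla f_{i}(\tilde{\bx}_{i})$ and applying the reverse triangle inequality to each gradient norm, then collecting contributions (each interior index appears once via $\tilde{\rho}_{i}$ and once via $\tilde{\rho}_{i+1}$), gives
\[
\hat{\rho}_{n}^{(3)} - \hat{\rho}_{n}^{(2)} \leq \frac{1}{m(n-1)}\left( \|v_{1}\| + 2\sum_{i=2}^{n-1}\|v_{i}\| + \|v_{n}\| \right).
\]

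The key structural observation is that $\tilde{\bx}_{i}$ is computed from the independent copy $\{\tilde{\bz}_{i}(\iterIndex)\}$ and past data, hence is independent of $\{\bz_{i}(\iterIndex)\}$. Conditioning on $\tilde{\bx}_{i}$ and the history, $v_{i}$ is therefore an average of $\numIter_{i}$ i.i.d. zero-mean vectors whose coordinates are sub-Gaussian with parameter $C_{g}/d^{2}$ by assumption~\ref{probState:assumpB2} (with assumption~\ref{probState:assumpB1} ensuring the per-sample gradients are regular enough for this control to apply at the random point $\tilde{\bx}_{i}$). This yields two consequences: the conditional mean of $\|v_{i}\|$ is bounded by the corresponding $\frac{1}{d}\sqrt{C_{g}/\numIter_{i}}$ term, so that $\hat{C}_{n}^{(2)}$ dominates $\frac{1}{m(n-1)}(\mathbb{E}\|v_{1}\| + 2\sum_{i=2}^{n-1}\mathbb{E}\|v_{i}\| + \mathbb{E}\|v_{n}\|)$; and, since the Euclidean norm is $1$-Lipschitz, $\|v_{i}\|$ concentrates about its conditional mean as a sub-Gaussian variable with parameter of order $C_{g}/(d^{2}\numIter_{i})$, which is where I would invoke the norm-concentration bound of \cite{Very2012}. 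Centering each $\|v_{i}\|$ by its conditional mean thus reduces the claim to showing that the weighted sum of the centered, conditionally sub-Gaussian increments exceeds $t_{n}$ only finitely often.

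For the final step I would apply the dependent-variable sub-Gaussian concentration inequality, Lemma~\ref{subgauss:subgaussDepLem}: the increments are not independent across $i$, but each is conditionally sub-Gaussian given the natural filtration, so their $\{1,2,\dots,2,1\}$-weighted sum is sub-Gaussian with parameter controlled by $\frac{1}{m^{2}(n-1)^{2}}\sum_{i}\numIter_{i}^{-1}$. A routine calculation with the constants then produces the tail bound $\exp\{-(n-1)m^{2}t_{n}^{2}/(72 C_{g})\}$; summing over $n$ and combining hypothesis~2 with the Borel--Cantelli lemma shows the deviation exceeds $t_{n}$ only finitely often, giving $\hat{\rho}_{n}^{(3)} - \hat{\rho}_{n}^{(2)} \leq \hat{C}_{n}^{(2)} + t_{n}$ eventually, almost surely, which together with $\hat{\rho}_{n}^{(3)} \geq \rho$ is the claim. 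The main obstacle is exactly this dependence across time: because $\numIter_{i}$ is only $\mathcal{K}_{i-1}$-measurable and each $\tilde{\bx}_{i}$ depends on the entire past, a plain i.i.d. concentration bound is unavailable, which forces both the second-copy decoupling of $\tilde{\bx}_{i}$ from $\bz_{i}$ and the martingale-type Lemma~\ref{subgauss:subgaussDepLem}.
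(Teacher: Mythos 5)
Your proposal follows the paper's proof essentially step for step: compare $\hat{\rho}_{n}^{(2)}$ to $\hat{\rho}_{n}^{(3)} \geq \rho$, reduce the gap via the reverse triangle inequality to the $\{1,2,\dots,2,1\}$-weighted sum of the norms $\|v_{i}\|$ of the centered empirical gradient averages, exploit the conditional independence of $\tilde{\bx}_{i}$ from $\{\bz_{i}(\iterIndex)\}_{\iterIndex=1}^{\numIter_{i}}$, absorb the conditional means into $\hat{C}_{n}^{(2)}$, and finish with Lemma~\ref{subgauss:subgaussDepLem} and Borel--Cantelli. The one step to tighten is your claim that $\|v_{i}\|$ concentrates about its conditional mean with parameter of order $C_{g}/(d^{2}\numIter_{i})$ via $1$-Lipschitzness of the norm: assumption~\ref{probState:assumpB2} gives only marginal sub-Gaussianity of each coordinate, not independence across coordinates, so dimension-free Lipschitz concentration is not available here; the paper instead aggregates the $d$ coordinate sub-Gaussian norms (Lemmas~\ref{subgauss:aveTauLemma} and \ref{subgauss:buldLemma}) and converts the resulting tail bound into a conditional MGF bound with parameter $18C_{g}$ (Lemma~\ref{subgauss:subgaussConsts}), which is what actually produces the $72C_{g}$ in the exponent you correctly quote.
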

\begin{proof}
First, we have by the triangle equality and reverse triangle inequality
\begin{align}
m&|\tilde{\rho}_{i}^{(2)} - \tilde{\rho}_{i}^{(3)}| \nonumber \\
&= \Bigg| \left( \Bigg\| \frac{1}{\numIter_{i}} \sum_{\iterIndex=1}^{\numIter_{i}} \nabla_{\bx} \lossFunc(\tilde{\bx}_{i},\bz_{i}(\iterIndex)) \Bigg\| - \|\nabla_{\bx} f_{i}(\tilde{\bx}_{i})\| \right) + \left( \Bigg\| \frac{1}{\numIter_{i-1}} \sum_{\iterIndex=1}^{\numIter_{i-1}} \nabla_{\bx} \lossFunc(\tilde{\bx}_{i-1},\bz_{i-1}(\iterIndex)) \Bigg\| - \|\nabla_{\bx} f_{i-1}(\tilde{\bx}_{i-1})\| \right) \Bigg| \nonumber \\
&\leq \Bigg| \Bigg\| \frac{1}{\numIter_{i}} \sum_{\iterIndex=1}^{\numIter_{i}} \nabla_{\bx} \lossFunc(\tilde{\bx}_{i},\bz_{i}(\iterIndex)) \Bigg\| - \|\nabla_{\bx} f_{i}(\tilde{\bx}_{i})\| \Bigg| + \Bigg| \Bigg\| \frac{1}{\numIter_{i-1}} \sum_{\iterIndex=1}^{\numIter_{i-1}} \nabla_{\bx} \lossFunc(\tilde{\bx}_{i-1},\bz_{i-1}(\iterIndex)) \Bigg\| - \|\nabla_{\bx} f_{i-1}(\tilde{\bx}_{i-1})\| \Bigg| \nonumber \\
&\leq \Bigg\| \frac{1}{\numIter_{i}} \sum_{\iterIndex=1}^{\numIter_{i}} \left( \nabla_{\bx} \lossFunc(\tilde{\bx}_{i},\bz_{i}(\iterIndex)) - \nabla_{\bx} f_{i}(\tilde{\bx}_{i}) \right) \Bigg\| + \Bigg\| \frac{1}{\numIter_{i-1}} \sum_{\iterIndex=1}^{\numIter_{i-1}} \left( \nabla_{\bx} \lossFunc(\tilde{\bx}_{i-1},\bz_{i-1}(\iterIndex)) - \nabla_{\bx} f_{i-1}(\tilde{\bx}_{i-1}) \right) \Bigg\| \nonumber
\end{align}
Then by the triangle inequality, we have
\begin{eqnarray}
|\hat{\rho}_{n}^{(2)} - \hat{\rho}_{n}^{(3)}| &\leq& \frac{1}{m(n-1)} \sum_{i=2}^{n} \left( \Bigg\| \frac{1}{\numIter_{i}} \sum_{\iterIndex=1}^{\numIter_{i}} \left( \nabla_{\bx} \lossFunc(\tilde{\bx}_{i},\bz_{i}(\iterIndex)) - \nabla_{\bx}f_{i}(\tilde{\bx}_{i}) \right)  \Bigg\| \right. \nonumber \\
&& \hspace{15mm} \left. + \Bigg\| \frac{1}{\numIter_{i-1}} \sum_{\iterIndex=1}^{\numIter_{i-1}} \left( \nabla_{\bx} \lossFunc(\tilde{\bx}_{i-1},\bz_{i-1}(\iterIndex)) - \nabla_{\bx}f_{i-1}(\tilde{\bx}_{i-1}) \right)  \Bigg\| \right) \nonumber \\
&\leq& \frac{1}{m(n-1)} \left( \Bigg\| \frac{1}{\numIter_{1}} \sum_{\iterIndex=1}^{\numIter_{1}} \left( \nabla_{\bx} \lossFunc(\tilde{\bx}_{1},\bz_{1}(\iterIndex)) - \nabla_{\bx}f_{1}(\tilde{\bx}_{1}) \right)  \Bigg\| \right. \nonumber \\
&& \hspace{15mm}  + 2 \sum_{i=2}^{n-1} \Bigg\| \frac{1}{\numIter_{i}} \sum_{\iterIndex=1}^{\numIter_{i}} \left( \nabla_{\bx} \lossFunc(\tilde{\bx}_{i},\bz_{i}(\iterIndex)) - \nabla_{\bx}f_{i}(\tilde{\bx}_{i}) \right) \Bigg\| \nonumber \\
\label{estRho:directEst:combineOneProofBound}
&& \hspace{15mm} \left. + \Bigg\| \frac{1}{\numIter_{n}} \sum_{\iterIndex=1}^{\numIter_{n}} \left( \nabla_{\bx} \lossFunc(\tilde{\bx}_{n},\bz_{n}(\iterIndex)) - \nabla_{\bx}f_{n}(\tilde{\bx}_{n}) \right) \Bigg\| \right)
\end{eqnarray}
We will analyze the behavior of this bound on $|\hat{\rho}_{i}^{(2)} - \hat{\rho}_{i}^{(3)}|$ using Lemma~\ref{subgauss:subgaussDepLem} in Appendix~\ref{usefulConcIneq}. Define the filtration
\begin{equation}
\label{estRho:directEst:combineOneProofSigAlg}
\mathcal{F}_{i} = \sigma\left( \bigcup_{j=1}^{i} \{\bz_{j}(\iterIndex)\}_{\iterIndex=1}^{\numIter_{j}} \cup \bigcup_{j=1}^{i+1} \{\tilde{\bz}_{j}(\iterIndex)\}_{\iterIndex=1}^{\numIter_{j}}  \right) \vee \mathcal{K}_{0}  \;\;\; i=0,\ldots,n
\end{equation}
with $\mathcal{K}_{0}$ from \eqref{estRho:H0SigAlg}. Note that $\mathcal{K}_{i-1} \subset \mathcal{F}_{i-1}$, so $\numIter_{i}$ is $\mathcal{F}_{i-1}$-measurable. In addition, $\tilde{\bx}_{i}$ but not $\bx_{i}$ is $\mathcal{F}_{i-1}$-measurable. Define the random variables
\[
V_{i} = \Bigg\| \frac{1}{\numIter_{i}} \sum_{\iterIndex=1}^{\numIter_{i}} \left( \nabla_{\bx}\lossFunc(\tilde{\bx}_{i},\bz_{i}(\iterIndex)) - \nabla_{\bx}f_{i}(\tilde{\bx}_{i}) \right) \Bigg\| - \mathbb{E}\left[ \Bigg\| \frac{1}{\numIter_{i}} \sum_{\iterIndex=1}^{\numIter_{i}} \left( \nabla_{\bx}\lossFunc(\tilde{\bx}_{i},\bz_{i}(\iterIndex)) - \nabla_{\bx}f_{i}(\tilde{\bx}_{i}) \right) \Bigg\| \;\Bigg|\; \mathcal{F}_{i-1} \right] \;\;\; i=1,\ldots,n
\]
Clearly, $V_{i}$ is $\mathcal{F}_{i}$-measurable, since $V_{i}$ is a function of $\tilde{\bx}_{i}$, $\numIter_{i}$, and $\{\bz_{i}(\iterIndex)\}_{\iterIndex=1}^{\numIter_{i}}$ all of which are $\mathcal{F}_{i}$-measurable. Conditioned on $\mathcal{F}_{i-1}$, the sum
\begin{equation}
\label{estRho:directEst:combineOneLemma:sumTau}
\frac{1}{\numIter_{i}} \sum_{\iterIndex=1}^{\numIter_{i}} \left( \nabla_{\bx}\lossFunc(\tilde{\bx}_{i},\bz_{i}(\iterIndex)) - \nabla_{\bx}f_{i}(\tilde{\bx}_{i}) \right)
\end{equation}
is a sum of iid random variables. We now work with the conditional measure $\mathbb{P}\{\cdot\;|\; \mathcal{F}_{i-1}\}$ to compute sub-Gaussian norms of \eqref{estRho:directEst:combineOneLemma:sumTau} define in \eqref{concDep:tauDef} and \eqref{concDep:tauBDef} of Appendix~\ref{usefulConcIneq}. By assumption~\ref{probState:assumpB2}, we have
\[
\tau^{2}\left( \left( \nabla_{\bx}\lossFunc(\tilde{\bx}_{i},\bz_{i}(\iterIndex)) - \nabla_{\bx} f_{i}(\tilde{\bx}_{i}) \right)_{j} \right) \leq \frac{C_{g}}{d^{2}}
\]
Therefore, applying Lemma~\ref{subgauss:aveTauLemma} yields
\[
B\left( \sum_{\iterIndex=1}^{\numIter_{i}} \left( \nabla_{\bx}\lossFunc(\tilde{\bx}_{i},\bz_{i}(\iterIndex)) - \nabla_{\bx} f_{i}(\tilde{\bx}_{i}) \right)  \right) \leq \sqrt{\frac{C_{g}}{\numIter_{i}}}
\]
due to the independence conditioned on $\mathcal{F}_{i-1}$.
By applying Lemma~\ref{subgauss:buldLemma} from \cite{Buld2010} to the conditional distribution $\mathbb{P}\{ \cdot | \mathcal{F}_{i-1} \}$, we have
\begin{eqnarray}
\mathbb{P}\left\{ \Bigg\| \frac{1}{\numIter_{i}} \sum_{\iterIndex=1}^{\numIter_{i}} \left( \nabla_{\bx}\lossFunc(\tilde{\bx}_{i},\bz_{i}(\iterIndex)) - \nabla_{\bx}f_{i}(\tilde{\bx}_{i}) \right) \Bigg\| > t \;\Bigg|\; \mathcal{F}_{i-1} \right\} &\leq& 2 \exp\left\{ - \frac{t^2}{2 (\sqrt{C_{g}/\numIter_{i}})^{2}} \right\} \nonumber \\
&=& 2 \exp\left\{ - \frac{\numIter_{i} t^2}{2 C_{g}} \right\} \nonumber
\end{eqnarray}
Since
\[
\mathbb{E}\left[ \Bigg\| \frac{1}{\numIter_{i}} \sum_{\iterIndex=1}^{\numIter_{i}} \left( \nabla_{\bx}\lossFunc(\tilde{\bx}_{i},\bz_{i}(\iterIndex)) - \nabla_{\bx}f_{i}(\tilde{\bx}_{i}) \right) \Bigg\|  \;\Bigg|\; \mathcal{F}_{i-1} \right] \geq 0,
\]
we have
\begin{align}
\mathbb{P}&\left\{ V_{i} > t \;\Bigg|\; \mathcal{F}_{i-1} \right\} \nonumber \\
&\;\;\;\;\;\; = \mathbb{P}\left\{ \Bigg\| \frac{1}{\numIter_{i}} \sum_{\iterIndex=1}^{\numIter_{i}} \left( \nabla_{\bx}\lossFunc(\tilde{\bx}_{i},\bz_{i}(\iterIndex)) - \nabla_{\bx}f_{i}(\tilde{\bx}_{i}) \right) \Bigg\| \right. \nonumber \\
&\;\;\;\;\;\;\;\;\;\;\;\;\;\;\;\;\;\;\;\;\;\;\;\; \left. - \mathbb{E}\left[ \Bigg\| \frac{1}{\numIter_{i}} \sum_{\iterIndex=1}^{\numIter_{i}} \left( \nabla_{\bx}\lossFunc(\tilde{\bx}_{i},\bz_{i}(\iterIndex)) - \nabla_{\bx}f_{i}(\tilde{\bx}_{i}) \right) \Bigg\|  \;\Bigg|\; \mathcal{F}_{i-1} \right] > t \;\Bigg|\; \mathcal{F}_{i-1} \right\} \nonumber \\
&\;\;\;\;\;\; \leq \mathbb{P}\left\{ \Bigg\| \frac{1}{\numIter_{i}} \sum_{\iterIndex=1}^{\numIter_{i}} \left( \nabla_{\bx}\lossFunc(\tilde{\bx}_{i},\bz_{i}(\iterIndex)) - \nabla_{\bx}f_{i}(\tilde{\bx}_{i}) \right) \Bigg\|  > t \;\Bigg|\; \mathcal{F}_{i-1} \right\} \nonumber \\
&\;\;\;\;\;\; \leq  2 \exp\left\{ - \frac{\numIter_{i} t^2}{2 C_{g}} \right\} \nonumber \\
&\;\;\;\;\;\; \leq  2 \exp\left\{ - \frac{ t^2}{2 C_{g}} \right\} \nonumber
\end{align}
Since $\mathbb{E}[V_{i} \;|\; \mathcal{F}_{i-1}] = 0$, we can apply Lemma~\ref{subgauss:subgaussConsts} with $c=1/(2C_{g})$ to yield
\[
\mathbb{E}\left[e^{s V_{i}}\;\big|\; \mathcal{F}_{i-1}  \right] \leq \exp\left\{ \frac{1}{2} \left( 18 C_{g} \right) s^{2}  \right\}
\]
This shows that the collection of random variables $\{V_{i}\}_{i=1}^{n}$ and the filtration $\{\mathcal{F}_{i}\}_{i=0}^{n}$ satisfies the conditions of Lemma~\ref{subgauss:subgaussDepLem}. Before applying Lemma~\ref{subgauss:subgaussDepLem}, we bound the conditional expectations
\[
\mathbb{E}\left[ \Bigg\| \frac{1}{\numIter_{i}} \sum_{\iterIndex=1}^{\numIter_{i}} \left( \nabla_{\bx}\lossFunc(\tilde{\bx}_{i},\bz_{i}(\iterIndex)) - \nabla_{\bx}f_{i}(\tilde{\bx}_{i}) \right) \Bigg\|^{2}  \;\Bigg|\; \mathcal{F}_{i-1} \right]
\]
By a straightforward calculation conditioned on $\mathcal{F}_{i-1}$, we have
\begin{align}
&\mathbb{E}\left[ \Bigg\| \frac{1}{\numIter_{i}} \sum_{\iterIndex=1}^{\numIter_{i}} \left( \nabla_{\bx}\lossFunc(\tilde{\bx}_{i},\bz_{i}(\iterIndex)) - \nabla_{\bx}f_{i}(\tilde{\bx}_{i}) \right) \Bigg\|^{2}  \;\Bigg|\; \mathcal{F}_{i-1} \right] \nonumber \\
&\hspace{10mm} = \frac{1}{\numIter_{i}^{2}} \sum_{\iterIndex=1}^{\numIter_{i}} \sum_{j=1}^{\numIter_{i}} \mathbb{E}\left[ \inprod{\nabla_{\bx}\lossFunc(\tilde{\bx}_{i},\bz_{i}(\iterIndex)) - \nabla_{\bx}f(\tilde{\bx}_{i})}{\nabla_{\bx}\lossFunc(\tilde{\bx}_{i},\bz_{i}(j)) - \nabla_{\bx}f(\tilde{\bx}_{i})} \;|\; \mathcal{F}_{i-1} \right] \nonumber \\
&\hspace{10mm} = \frac{1}{\numIter_{i}^{2}} \sum_{\iterIndex=1}^{\numIter_{i}} \mathbb{E}\left[ \| \nabla_{\bx}\lossFunc(\tilde{\bx}_{i},\bz_{i}(\iterIndex)) - \nabla_{\bx}f(\tilde{\bx}_{i})\|^{2} \;|\; \mathcal{F}_{i-1} \right] \nonumber \\
&\hspace{10mm} \overset{(a)}{=} \frac{1}{\numIter_{i}^{2}} \sum_{\iterIndex=1}^{\numIter_{i}} \sum_{q=1}^{d} \mathbb{E}\left[ ( \nabla_{\bx}\lossFunc(\tilde{\bx}_{i},\bz_{i}(\iterIndex)) - \nabla_{\bx}f(\tilde{\bx}_{i}))_{q}^{2} \;|\; \mathcal{F}_{i-1} \right] \nonumber \\
&\hspace{10mm} \overset{(b)}{\leq} \frac{1}{\numIter_{i}^{2}} \sum_{\iterIndex=1}^{\numIter_{i}} d \frac{C_{g}}{d^{2}} \nonumber \\
&\hspace{10mm} \leq \frac{C_{g}}{d \numIter_{i}} \nonumber
\end{align}
where (a) is a decomposition into each component of the vector and (b) follows since a centered sub-Gaussian random variable with parameter $C_{g}/d^{2}$ satisfies
\[
\mathbb{E}\left[ ( \nabla_{\bx}\lossFunc(\tilde{\bx}_{i},\bz_{i}(\iterIndex)) - \nabla_{\bx}f(\tilde{\bx}_{i}))_{q}^{2} \;|\; \mathcal{F}_{i-1} \right] \leq \frac{C_{g}}{d^{2}}
\]
Then by Jensen's inequality
\[
\mathbb{E}\left[ \Bigg\| \frac{1}{\numIter_{i}} \sum_{\iterIndex=1}^{\numIter_{i}} \left( \nabla_{\bx}\lossFunc(\tilde{\bx}_{i},\bz_{i}(\iterIndex)) - \nabla_{\bx}f_{i}(\tilde{\bx}_{i}) \right) \Bigg\| \;\Bigg|\; \mathcal{F}_{i-1} \right] \leq \sqrt{\frac{C_{g}}{d \numIter_{i}}}
\]

Define the constants
\begin{eqnarray}
a_{1} &=& a_{n} = \frac{1}{m(n-1)} \nonumber \\
a_{2} &=& \cdots = a_{n-1} = \frac{2}{m(n-1)} \nonumber
\end{eqnarray}
resulting in
\[
\|\bm{a}\|_{2}^{2} = \frac{2}{m^2(n-1)}
\]
Using the bound in \eqref{estRho:directEst:combineOneProofBound} and Lemma~\ref{subgauss:subgaussDepLem} from Appendix~\ref{usefulConcIneq} with this choice of $\bm{a}$, it holds that
\begin{align}
\mathbb{P}&\left\{ | \hat{\rho}_{n}^{(2)} - \hat{\rho}_{n}^{(3)} | > \sum_{i=1}^{n} a_{i} \sqrt{\frac{C_{g}}{d \numIter_{i}}} + t  \right\} \nonumber \\
&\leq \mathbb{P}\left\{ \sum_{i=1}^{n} a_{i} \Bigg\| \frac{1}{\numIter_{i}} \sum_{\iterIndex=1}^{\numIter_{i}} \left( \nabla_{\bx}\lossFunc(\tilde{\bx}_{i},\bz_{i}(\iterIndex)) - \nabla_{\bx}f_{i}(\tilde{\bx}_{i}) \right) \Bigg\| \right. \nonumber \\
&\;\;\;\;\;\;\;\;\;\;\;\;\;\;\; \left. > \sum_{i=1}^{n} a_{i} \mathbb{E}\left[ \Bigg\| \frac{1}{\numIter_{i}} \sum_{\iterIndex=1}^{\numIter_{i}} \left( \nabla_{\bx}\lossFunc(\tilde{\bx}_{i},\bz_{i}(\iterIndex)) - \nabla_{\bx}f_{i}(\tilde{\bx}_{i}) \right) \Bigg\| \;\Bigg|\; \mathcal{F}_{i-1} \right] +t \right\} \nonumber \\
&= \mathbb{P}\left\{ \sum_{i=1}^{n} a_{i} V_{i} > t \right\} \nonumber \\
&\leq \exp\left\{ - \frac{m^2(n-1)t^2}{72 C_{g}} \right\} \nonumber
\end{align}
Combining this bound with $\hat{\rho}_{n}^{(3)} \geq \rho$ yields
\begin{eqnarray}
\sum_{n=2}^{\infty} \mathbb{P}\left\{ \hat{\rho}_{n}^{(2)} < \rho - \sum_{i=1}^{n} a_{i} \sqrt{\frac{C_{g}}{d \numIter_{i}}} -  t_{n}\right\} &\leq& \sum_{n=2}^{\infty} \mathbb{P}\left\{ \hat{\rho}_{n}^{(2)} < \hat{\rho}_{n}^{(3)} - \sum_{i=1}^{n} a_{i} \sqrt{\frac{C_{g}}{d \numIter_{i}}} -  t_{n}\right\} \nonumber \\
&\leq& \sum_{n=2}^{\infty} \mathbb{P}\left\{ |\hat{\rho}_{n}^{(2)} - \hat{\rho}_{n}^{(3)} | > \sum_{i=1}^{n} a_{i} \sqrt{\frac{C_{g}}{d \numIter_{i}}} +  t_{n} \right\} \nonumber \\ 
&\leq& \sum_{n=2}^{\infty} \exp\left\{ - \frac{m^2(n-1)t_{n}^2}{72 C_{g}} \right\} < \infty \nonumber
\end{eqnarray}
The result follows from the Borel-Cantelli lemma. Note that as claimed
\[
\hat{C}_{n}^{(2)} = \frac{1}{d m(n-1)} \left(  \sqrt{\frac{C_{g}}{\numIter_{1}}} + 2 \sum_{i=2}^{n-1} \sqrt{\frac{C_{g}}{\numIter_{i}}} +  \sqrt{\frac{C_{g}}{\numIter_{n}}} \right)
\]
\end{proof}

Next, we show that $\hat{\rho}_{n}$ upper bounds $\hat{\rho}_{n}^{(2)}$ eventually with a general assumption on the optimization algorithm. When the conditions of Lemmas~\ref{estRho:directEst:combineOneLemma} and \ref{estRho:directEst:combineTwoLemma} are satisfied, it holds that $\hat{\rho}_{n}$ plus a constant upper bounds $\rho$.

\begin{lem}
\label{estRho:directEst:combineTwoLemma}
Suppose the following conditions hold:
\begin{enumerate}
\item B.1-B.2 hold

\item There exist bounds
\[
\mathbb{E}\left[ \| \bx_{i} - \tilde{\bx}_{i} \| \;\big|\; \mathcal{F}_{i-1} \right] \leq C(\numIter_{i}) \;\;\;\; i=1,\ldots,n
\]
\item The sequence $\{t_{n}\}$ satisfies
\[
\sum_{n=2}^{\infty} \exp\left\{ - \frac{(n-1)^{2}t_{n}^2}{2 n \left(1 + \frac{\lossLG}{m}  \right)^{2}\text{diam}^{2}(\xSp)} \right\} < +\infty
\]
\end{enumerate}
Then for all $n$ large enough it holds that $\hat{\rho}_{n} + \hat{C}_{n} + t_{n} \geq \hat{\rho}_{n}^{(2)}$ almost surely with
\[
\hat{C}_{n} \triangleq \frac{\left(1 + \frac{\lossLG}{m} \right)}{n-1} \left( C(\numIter_{1}) + 2 \sum_{i=2}^{n-1} C(\numIter_{i}) + C(\numIter_{n}) \right)
\]
\end{lem}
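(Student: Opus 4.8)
The plan is to mirror the structure of the proof of Lemma~\ref{estRho:directEst:combineOneLemma}, replacing the gradient-error concentration used there with a bounded-difference (Hoeffding) argument driven by assumption~\ref{probState:assump3}. The only discrepancy between $\tilde\rho_i$ and $\tilde\rho_i^{(2)}$ is that the former evaluates the displacement and the sample-averaged gradients at $\bx_i,\bx_{i-1}$ whereas the latter uses $\tilde\bx_i,\tilde\bx_{i-1}$. So first I would bound the per-step gap $|\tilde\rho_i-\tilde\rho_i^{(2)}|$ termwise. For the displacement term the reverse triangle inequality gives $|\,\|\bx_i-\bx_{i-1}\|-\|\tilde\bx_i-\tilde\bx_{i-1}\|\,|\le\|\bx_i-\tilde\bx_i\|+\|\bx_{i-1}-\tilde\bx_{i-1}\|$. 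For each gradient term, the reverse triangle inequality followed by the triangle inequality inside the average reduces it to $\frac{1}{m}\frac{1}{\numIter_i}\sum_{\iterIndex}\|\nabla_\bx\lossFunc(\bx_i,\bz_i(\iterIndex))-\nabla_\bx\lossFunc(\tilde\bx_i,\bz_i(\iterIndex))\|$, and the Lipschitz assumption~\ref{probState:assumpB1} collapses this to $\frac{\lossLG}{m}\|\bx_i-\tilde\bx_i\|$ (and likewise for index $i-1$). Combining, $|\tilde\rho_i-\tilde\rho_i^{(2)}|\le(1+\frac{\lossLG}{m})(\|\bx_i-\tilde\bx_i\|+\|\bx_{i-1}-\tilde\bx_{i-1}\|)$.

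Averaging over $i=2,\dots,n$ and reindexing exactly as in the previous lemma — the lower-index term at $i=2$ contributes $\|\bx_1-\tilde\bx_1\|$, each interior index appears twice, and the upper-index term at $i=n$ contributes $\|\bx_n-\tilde\bx_n\|$ — yields $|\hat\rho_n-\hat\rho_n^{(2)}|\le\sum_{i=1}^n a_i\|\bx_i-\tilde\bx_i\|$ with the deterministic weights $a_1=a_n=\frac{1+\lossLG/m}{n-1}$ and $a_2=\cdots=a_{n-1}=\frac{2(1+\lossLG/m)}{n-1}$, so that $\|\bm a\|_2^2=(4n-6)\frac{(1+\lossLG/m)^2}{(n-1)^2}\le\frac{4n(1+\lossLG/m)^2}{(n-1)^2}$. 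I then split each $\|\bx_i-\tilde\bx_i\|$ into its conditional mean plus a centered part: setting $W_i=\|\bx_i-\tilde\bx_i\|-\mathbb{E}[\|\bx_i-\tilde\bx_i\|\mid\mathcal{F}_{i-1}]$ and using condition~2 to bound $\mathbb{E}[\|\bx_i-\tilde\bx_i\|\mid\mathcal{F}_{i-1}]\le C(\numIter_i)$ gives $\hat\rho_n^{(2)}-\hat\rho_n\le\sum_{i=1}^n a_i W_i+\hat C_n$, since $\sum_i a_i C(\numIter_i)=\hat C_n$ by construction of the weights.

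It remains to show the sum $\sum_i a_i W_i$ is eventually below $t_n$. Here I reuse the filtration $\{\mathcal{F}_i\}$ from \eqref{estRho:directEst:combineOneProofSigAlg}: $\tilde\bx_i$ is $\mathcal{F}_{i-1}$-measurable while $\bx_i$ is only $\mathcal{F}_i$-measurable, so $W_i$ is $\mathcal{F}_i$-measurable with $\mathbb{E}[W_i\mid\mathcal{F}_{i-1}]=0$, and since the weights $a_i$ are deterministic there is no adaptedness obstruction from the randomness of $\numIter_i$. Because $\|\bx_i-\tilde\bx_i\|$ lies in $[0,\text{diam}(\xSp)]$ by assumption~\ref{probState:assump3}, conditioned on $\mathcal{F}_{i-1}$ the centered variable $W_i$ has range of length at most $\text{diam}(\xSp)$, so Hoeffding's lemma gives $\mathbb{E}[e^{sW_i}\mid\mathcal{F}_{i-1}]\le\exp\{\frac12\frac{\text{diam}^2(\xSp)}{4}s^2\}$. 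Applying Lemma~\ref{subgauss:subgaussDepLem} with this sub-Gaussian parameter and the bound on $\|\bm a\|_2^2$ produces $\mathbb{P}\{\sum_i a_i W_i>t\}\le\exp\{-t^2/(2\|\bm a\|_2^2\,\text{diam}^2(\xSp)/4)\}\le\exp\{-\frac{(n-1)^2 t^2}{2n(1+\lossLG/m)^2\text{diam}^2(\xSp)}\}$, whose form is exactly the one tailored to condition~3. Setting $t=t_n$ and summing over $n$ gives $\sum_n\mathbb{P}\{\hat\rho_n^{(2)}-\hat\rho_n>\hat C_n+t_n\}<\infty$, and Borel--Cantelli then yields $\hat\rho_n+\hat C_n+t_n\ge\hat\rho_n^{(2)}$ for all $n$ large enough almost surely.

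I expect the main obstacle to be bookkeeping rather than anything conceptual: matching the constant in the exponent to condition~3 relies on using the tight Hoeffding variance proxy $\text{diam}^2(\xSp)/4$ together with the crude simplification $4n-6\le 4n$ for $\|\bm a\|_2^2$, and one must check that the conditional-mean split is legitimate — namely that $C(\numIter_i)$ is $\mathcal{F}_{i-1}$-measurable and that $\tilde\bx_i$ (but not $\bx_i$) is $\mathcal{F}_{i-1}$-measurable — so that Lemma~\ref{subgauss:subgaussDepLem} applies verbatim.
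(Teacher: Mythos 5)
Your proof is correct and follows essentially the same route as the paper's: the termwise bound $|\tilde{\rho}_{i}-\tilde{\rho}_{i}^{(2)}|\leq(1+\lossLG/m)(\|\bx_{i}-\tilde{\bx}_{i}\|+\|\bx_{i-1}-\tilde{\bx}_{i-1}\|)$ via \ref{probState:assumpB1}, the reindexed average, centering against $\mathbb{E}[\|\bx_{i}-\tilde{\bx}_{i}\|\,|\,\mathcal{F}_{i-1}]\leq C(\numIter_{i})$, conditional Hoeffding plus Lemma~\ref{subgauss:subgaussDepLem}, and Borel--Cantelli. One small remark on constants: you reach the exponent in condition~3 by pairing the tight Hoeffding proxy $\text{diam}^{2}(\xSp)/4$ (range of length $\text{diam}(\xSp)$, since $\|\bx_{i}-\tilde{\bx}_{i}\|\geq 0$) with the honest weight computation $\|\bm{a}\|_{2}^{2}=(4n-6)(1+\lossLG/m)^{2}/(n-1)^{2}\leq 4n(1+\lossLG/m)^{2}/(n-1)^{2}$; the paper instead uses the looser proxy $\text{diam}^{2}(\xSp)$ but writes the interior weights without the factor of $2$, so its $\|\bm{a}\|_{2}^{2}=n(1+\lossLG/m)^{2}/(n-1)^{2}$ is too small by essentially the same factor of four, and the two slips cancel. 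Your bookkeeping is the internally consistent way to obtain the stated constant.
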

\begin{proof}
We have by the triangle inequality, reverse triangle inequality, and the Lipschitz continuity of $\nabla_{\bx}\lossFunc(\bx,\bz)$ in $\bx$ from assumption~\ref{probState:assumpB1}
\begin{eqnarray}
|\tilde{\rho}_{i} - \tilde{\rho}_{i}^{(2)}| &\leq&\big| \| \bx_{i} - \bx_{i-1} \| - \| \tilde{\bx}_{i} - \tilde{\bx}_{i-1} \|  \big| \nonumber \\
&& \;\;\;\;\;\; + \Bigg| \frac{1}{m} \Bigg\| \frac{1}{\numIter_{i}} \sum_{\iterIndex=1}^{\numIter_{i}} \nabla_{\bx} \lossFunc(\bx_{i},\bz_{i}(\iterIndex))  \Bigg\| - \frac{1}{m} \Bigg\| \frac{1}{\numIter_{i}} \sum_{\iterIndex=1}^{\numIter_{i}} \nabla_{\bx} \lossFunc(\tilde{\bx}_{i},\bz_{i}(\iterIndex))  \Bigg\| \Bigg| \nonumber \\
&& \;\;\;\;\;\; + \Bigg| \frac{1}{m} \Bigg\| \frac{1}{\numIter_{i-1}} \sum_{\iterIndex=1}^{\numIter_{i-1}} \nabla_{\bx} \lossFunc(\bx_{i-1},\bz_{i-1}(\iterIndex))  \Bigg\| - \frac{1}{m} \Bigg\| \frac{1}{\numIter_{i-1}} \sum_{\iterIndex=1}^{\numIter_{i-1}} \nabla_{\bx} \lossFunc(\tilde{\bx}_{i-1},\bz_{i-1}(\iterIndex))  \Bigg\| \Bigg| \nonumber \\
&\leq& \| \left( \bx_{i} - \tilde{\bx}_{i} \right) - \left( \bx_{i-1} - \tilde{\bx}_{i-1}  \right) \| \nonumber \\
&& \;\;\;\;\;\; + \frac{1}{m} \Bigg\| \frac{1}{\numIter_{i}} \sum_{\iterIndex=1}^{\numIter_{i}} \left( \nabla_{\bx} \lossFunc(\bx_{i},\bz_{i}(\iterIndex))  -  \nabla_{\bx} \lossFunc(\tilde{\bx}_{i},\bz_{i}(\iterIndex)) \right)  \Bigg\|  \nonumber \\
&& \;\;\;\;\;\; + \frac{1}{m} \Bigg\| \frac{1}{\numIter_{i-1}} \sum_{\iterIndex=1}^{\numIter_{i-1}} \left( \nabla_{\bx} \lossFunc(\bx_{i-1},\bz_{i-1}(\iterIndex))  -  \nabla_{\bx} \lossFunc(\tilde{\bx}_{i-1},\bz_{i-1}(\iterIndex)) \right)  \Bigg\|  \nonumber  \\
\label{estRho:directEst:combineTwoLemma:exactBound}
&\leq& \left(1 + \frac{\lossLG}{m} \right) \left( \| \bx_{i} - \tilde{\bx}_{i} \| + \| \bx_{i-1} - \tilde{\bx}_{i-1} \|  \right) \nonumber
\end{eqnarray}
so
\begin{eqnarray}
|\hat{\rho}_{n} - \hat{\rho}_{n}^{(2)}| &\leq& \frac{1}{n-1} \sum_{i=2}^{n} |\tilde{\rho}_{i} - \tilde{\rho}_{i}^{(2)}| \nonumber \\
&\leq& \frac{\left(1 + \frac{\lossLG}{m} \right)}{n-1} \sum_{i=2}^{n} \left( \| \bx_{i} - \tilde{\bx}_{i} \| + \| \bx_{i-1} - \tilde{\bx}_{i-1} \|  \right) \nonumber \\
\label{estRho:directEst:combineTwoProofBound}
&=& \frac{\left(1 + \frac{\lossLG}{m} \right)}{n-1} \left( \| \bx_{1} - \tilde{\bx}_{1}\| + 2 \sum_{i=2}^{n-1} \| \bx_{i} - \tilde{\bx}_{i} \| + \| \bx_{n} - \tilde{\bx}_{n}\|  \right) \nonumber
\end{eqnarray}

We will again apply Lemma~\ref{subgauss:subgaussDepLem} of Appendix~\ref{usefulConcIneq} to analyze this upper bound using the sigma algebra
\begin{equation}
\label{estRho:directEst:combineTwoProofSigAlg}
\mathcal{F}_{i} = \sigma\left( \bigcup_{j=1}^{i} \{\bz_{j}(\iterIndex)\}_{\iterIndex=1}^{\numIter_{j}} \cup \bigcup_{j=1}^{i} \{\tilde{\bz}_{j}(\iterIndex)\}_{\iterIndex=1}^{\numIter_{j}}  \right) \vee \mathcal{K}_{0}  \;\;\; i=0,\ldots,n
\end{equation}
Define the random variable
\[
V_{i} = \| \bx_{i} - \tilde{\bx}_{i} \| - \mathbb{E}\left[ \| \bx_{i} - \tilde{\bx}_{i} \| \;\big|\; \mathcal{F}_{i-1} \right]
\]
Clearly, $V_{i}$ is $\mathcal{F}_{i}$-measurable. Since
\[
-\text{diam}(\xSp) \leq V_{i} \leq \text{diam}(\xSp),
\]
and $\mathbb{E}\left[ V_{i} \;|\; \mathcal{F}_{i-1} \right] = 0$, we can apply the conditional version Hoeffding's Lemma from Lemma~\ref{estRho:condHoeffdingLemma} to yield
\[
\mathbb{E}\left[ e^{s V_{i}}  \;\big|\; \mathcal{F}_{i-1} \right] \leq \exp\left\{ \frac{1}{2} \text{diam}^{2}(\xSp) s^{2}  \right\}
\]
The collection of random variables $\{V_{i}\}_{i=1}^{n}$ and the filtration $\{\mathcal{F}_{i}\}_{i=0}^{n}$ satisfy the conditions of Lemma~\ref{subgauss:subgaussDepLem}. Before applying Lemma~\ref{subgauss:subgaussDepLem}, we bound the conditional expectations
\[
\mathbb{E}\left[ \| \bx_{i} - \tilde{\bx}_{i} \| \;\big|\; \mathcal{F}_{i-1} \right]
\]
By assumption, we have
\[
\mathbb{E}\left[ \| \bx_{i} - \tilde{\bx}_{i} \| \;\big|\; \mathcal{F}_{i-1} \right] \leq C(\numIter_{i}) \;\;\;\; i=1,\ldots,n
\]
and so
\begin{align}
\frac{\left(1 + \frac{\lossLG}{m} \right)}{n-1} &\left( \mathbb{E}\left[  \| \bx_{1} - \tilde{\bx}_{1}\| \;\big|\; \mathcal{F}_{0} \right] + 2 \sum_{i=2}^{n-1} \mathbb{E}\left[  \| \bx_{i} - \tilde{\bx}_{i}\| \;\big|\; \mathcal{F}_{i-1} \right] \| + \mathbb{E}\left[  \| \bx_{n} - \tilde{\bx}_{n}\| \;\big|\; \mathcal{F}_{n-1} \right] \right) \nonumber \\
&\leq \frac{\left(1 + \frac{\lossLG}{m} \right)}{n-1} \left( C(\numIter_{1}) + 2 \sum_{i=2}^{n-1} C(\numIter_{i}) + C(\numIter_{n}) \right) \triangleq \hat{C}_{n} \nonumber
\end{align}
Set
\[
a_{1} = a_{n} = \frac{\left(1+ \frac{\lossLG}{m}\right)}{n-1} 
\]
and
\[
a_{2} = \cdots = a_{n-1} = \frac{\left(1+ \frac{\lossLG}{m}\right)}{n-1} 
\]
resulting in
\[
\|\bm{a}\|_{2}^{2} = \frac{n \left(1+\frac{\lossLG}{m}\right)^{2}}{(n-1)^{2}}
\]
Applying our bound in \eqref{estRho:directEst:combineTwoProofBound} and Lemma~\ref{subgauss:subgaussDepLem} with this choice of $\bm{a}$ yields
\begin{align}
\mathbb{P}&\left\{ |\hat{\rho}_{n} - \hat{\rho}_{n}^{(2)}| > \hat{C}_{n} +t  \right\} \nonumber \\
&\;\;\;\;\;\leq \mathbb{P}\left\{ \frac{\left(1 + \frac{\lossLG}{m} \right)}{n-1} \left( \| \bx_{1} - \tilde{\bx}_{1}\| + 2 \sum_{i=2}^{n-1} \| \bx_{i} - \tilde{\bx}_{i} \| + \| \bx_{n} - \tilde{\bx}_{n}\|  \right) \right. \nonumber \\
&\;\;\;\;\;\;\;\;\;\;\;\;\;\;\;\;\;\;\;\;\;\; \left. >\frac{\left(1 + \frac{\lossLG}{m} \right)}{n-1} \left( \mathbb{E}\left[  \| \bx_{1} - \tilde{\bx}_{1}\| \;\big|\; \mathcal{F}_{0} \right] + 2 \sum_{i=2}^{n-1} \mathbb{E}\left[  \| \bx_{i} - \tilde{\bx}_{i}\| \;\big|\; \mathcal{F}_{i-1} \right] \| + \mathbb{E}\left[  \| \bx_{n} - \tilde{\bx}_{n}\| \;\big|\; \mathcal{F}_{n-1} \right] \right) + t \right\} \nonumber \\
&\;\;\;\;\;=\mathbb{P}\left\{ \frac{\left(1 + \frac{\lossLG}{m} \right)}{n-1} \left( V_{1} + 2 \sum_{i=2}^{n-1} V_{i} + V_{n} \right) > t \right\} \nonumber \\
&\;\;\;\;\;=\mathbb{P}\left\{ \sum_{i=1}^{n} a_{i} V_{i} > t \right\} \nonumber \\
&\;\;\;\;\; \leq \exp\left\{ - \frac{(n-1)^{2}t^2}{2 n \left(1 + \frac{\lossLG}{m}  \right)^{2}\text{diam}^{2}(\xSp)} \right\} \nonumber
\end{align}

Finally, we have
\begin{eqnarray}
\sum_{n=2}^{\infty} \mathbb{P}\left\{ \hat{\rho}_{n} < \hat{\rho}_{n}^{(2)}  - \hat{C}_{n} - t_{n}  \right\} &\leq& \sum_{n=2}^{\infty} \mathbb{P}\left\{ |\hat{\rho}_{n} - \hat{\rho}_{n}^{(2)}| > \hat{C}_{n} + t_{n}  \right\} \nonumber \\
&\leq& \sum_{n=2}^{\infty} \exp\left\{ - \frac{(n-1)^{2}t_{n}^2}{2 n \left(1 + \frac{\lossLG}{m}  \right)^{2}\text{diam}^{2}(\xSp)} \right\} < +\infty \nonumber
\end{eqnarray}
The claim follows from the Borel-Cantelli Lemma.
\end{proof}

If Lemmas~\ref{estRho:directEst:combineOneLemma} and \ref{estRho:directEst:combineTwoLemma} hold for the sequence $\{t_{n}/2\}$, then for all $n$ large enough it holds that
\[
\hat{\rho}_{n} + \hat{C}_{n} + \hat{C}_{n}^{(2)} + t_{n} \geq \rho
\]
almost surely.

\begin{lem}
\label{estRho:directEst:sgdLemma}
It always holds that
\[
\mathbb{E}\left[ \| \bx_{i} - \tilde{\bx}_{i} \| \;\big|\; \mathcal{F}_{i-1} \right] \leq 2 \sqrt{\frac{1}{m} b\left(\text{diam}^{2}(\xSp),\numIter_{i} \right)} \nonumber
\]
Therefore, the choice
\[
C(\numIter_{i}) \triangleq 2 \sqrt{\frac{2}{m} b\left(\text{diam}^{2}(\xSp),\numIter_{i} \right)}
\]
satisfies the conditions of Lemma~\ref{estRho:directEst:combineTwoLemma}.
\end{lem}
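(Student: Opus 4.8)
The plan is to bound $\| \bx_i - \tilde{\bx}_i \|$ by relating both iterates to the common target $\bx_i^*$ via the triangle inequality, and then to control each of the two resulting terms using the optimization guarantee from assumption~\ref{probState:assump1} together with strong convexity from assumption~\ref{probState:assump2}. The key observation is that $\bx_i$ and $\tilde{\bx}_i$ are produced by the same optimization algorithm started from the same point $\bx_{i-1}$ (respectively its analogue), using two \emph{independent} draws of $\numIter_i$ samples from $p_i$. Thus, conditioned on $\mathcal{F}_{i-1}$, each of $\bx_i$ and $\tilde{\bx}_i$ is an approximate minimizer of $f_i$ with the same initial-distance bound, and the bound $b$ applies to each.

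First I would write
\[
\mathbb{E}\left[ \| \bx_i - \tilde{\bx}_i \| \;\big|\; \mathcal{F}_{i-1} \right] \leq \mathbb{E}\left[ \| \bx_i - \bx_i^* \| \;\big|\; \mathcal{F}_{i-1} \right] + \mathbb{E}\left[ \| \tilde{\bx}_i - \bx_i^* \| \;\big|\; \mathcal{F}_{i-1} \right].
\]
Next I would convert each distance-to-optimum into an excess-loss quantity using strong convexity: from assumption~\ref{probState:assump2} applied at $\by=\bx_i$, $\bx=\bx_i^*$ and using $\nabla f_i(\bx_i^*)=0$, one gets $\frac{1}{2}m\|\bx_i-\bx_i^*\|^2 \leq f_i(\bx_i)-f_i(\bx_i^*)$, i.e. $\|\bx_i-\bx_i^*\| \leq \sqrt{\tfrac{2}{m}\,(f_i(\bx_i)-f_i(\bx_i^*))}$, and identically for $\tilde{\bx}_i$. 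Taking conditional expectation and applying Jensen's inequality (concavity of the square root) moves the expectation inside, giving
\[
\mathbb{E}\left[ \| \bx_i - \bx_i^* \| \;\big|\; \mathcal{F}_{i-1} \right] \leq \sqrt{\tfrac{2}{m}\,\big(\mathbb{E}[f_i(\bx_i)\mid\mathcal{F}_{i-1}] - f_i(\bx_i^*)\big)}.
\]
Then I would invoke assumption~\ref{probState:assump1}: the conditional excess risk is at most $b(d_0,\numIter_i)$, and since the starting point lies in $\xSp$ the crude bound $\mathbb{E}\|\bx_{i-1}-\bx_i^*\|^2 \leq \text{diam}^2(\xSp)$ controls $d_0$; monotonicity of $b$ in its first argument then yields $\mathbb{E}[f_i(\bx_i)\mid\mathcal{F}_{i-1}]-f_i(\bx_i^*) \leq b(\text{diam}^2(\xSp),\numIter_i)$, and likewise for $\tilde{\bx}_i$.

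Combining the two identical terms produces
\[
\mathbb{E}\left[ \| \bx_i - \tilde{\bx}_i \| \;\big|\; \mathcal{F}_{i-1} \right] \leq 2\sqrt{\tfrac{2}{m}\,b\big(\text{diam}^2(\xSp),\numIter_i\big)},
\]
matching the stated $C(\numIter_i)$; the verification that this satisfies Lemma~\ref{estRho:directEst:combineTwoLemma} is then immediate since that lemma only requires the existence of such a conditional bound. The main subtlety I anticipate is bookkeeping with the conditioning: one must check that, given $\mathcal{F}_{i-1}$, both $\bx_{i-1}$ (the initial point) and $\numIter_i$ are determined, that the fresh samples used for $\bx_i$ and $\tilde{\bx}_i$ are independent of $\mathcal{F}_{i-1}$ so that assumption~\ref{probState:assump1} applies conditionally with the same $\numIter_i$, and that the diam-based bound on the initial distance is itself $\mathcal{F}_{i-1}$-measurable (it is, being a constant). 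There is also a minor discrepancy between the $\sqrt{1/m}$ appearing in the lemma's first display and the $\sqrt{2/m}$ in the definition of $C(\numIter_i)$; the strong-convexity computation gives the factor $\tfrac{2}{m}$, so the definition of $C$ is the consistent one and I would carry that constant throughout.
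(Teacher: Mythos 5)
Your proposal is correct and follows essentially the same route as the paper's own proof: triangle inequality through $\bx_{i}^{*}$, strong convexity to convert distances into excess losses, Jensen's inequality to move the conditional expectation inside the square root, and assumption~\ref{probState:assump1} with $d_{0}=\text{diam}^{2}(\xSp)$. Your observation about the $\sqrt{1/m}$ versus $\sqrt{2/m}$ discrepancy is also right --- the computation yields the factor $2/m$, consistent with the stated $C(\numIter_{i})$.
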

\begin{proof}
Using the sigma algebras defined in \eqref{estRho:directEst:combineTwoProofSigAlg} yields
\begin{eqnarray}
\mathbb{E}\left[ \| \bx_{i} - \tilde{\bx}_{i} \| \;|\; \mathcal{F}_{i-1} \right] &\leq& \mathbb{E}\left[ \| \bx_{i} - \bx_{i}^{*} \| \;|\; \mathcal{F}_{i-1} \right] + \mathbb{E}\left[ \| \tilde{\bx}_{i} - \bx_{i}^{*} \| \;|\; \mathcal{F}_{i-1} \right] \nonumber \\
&\leq& \mathbb{E}\left[ \sqrt{\frac{2}{m} \left( f_{i}(\bx_{i}) - f_{i}(\bx_{i}^{*})  \right)  } \;|\; \mathcal{F}_{i-1} \right] + \mathbb{E}\left[ \sqrt{\frac{2}{m} \left( f_{i}(\tilde{\bx}_{i}) - f_{i}(\bx_{i}^{*})  \right)  } \;|\; \mathcal{F}_{i-1} \right] \nonumber \\
&\leq& \sqrt{\frac{2}{m} \mathbb{E}\left[ \left( f_{i}(\bx_{i}) - f_{i}(\bx_{i}^{*})  \right)   \;|\; \mathcal{F}_{i-1} \right]}  + \sqrt{\frac{2}{m} \mathbb{E}\left[ \left( f_{i}(\tilde{\bx}_{i}) - f_{i}(\bx_{i}^{*})  \right)   \;|\; \mathcal{F}_{i-1} \right]} \nonumber \\
&\leq& 2 \sqrt{\frac{2}{m} b(\text{diam}^{2}(\xSp),\numIter_{i})} \nonumber
\end{eqnarray}
where the third inequality follows from Jensen's inequality.
\end{proof}

This choice of $C(\numIter_{n})$works for any algorithm with the associated $b(d_{0},\numIter)$. For any particular algorithm, we believe that we can produce tighter bounds independent of $\text{diam}(\xSp)$ by copying the Lyapunov analysis used to analyze SGD as in Appendix~\ref{bBounds}. The analysis becomes algorithm dependent in this case and is omitted. 

Finally, we state an overall theorem for the direct estimate that gives general combined conditions under which $\hat{\rho}_{n}$ upper bounds $\rho$.
\begin{thm}
\label{estRho:directEst:combineTheorem}
If \ref{probState:assumpB1}-\ref{probState:assumpB2} hold and the sequence $\{t_{n}\}$ satisfies $\sum_{n=2}^{\infty} e^{- C n t_{n}^2} < \infty$ for all $C > 0$, then for a sequence of constants $\{C_{n}\}$ and for all $n$ large enough it holds that
$\hat{\rho}_{n} + C_{n} + t_{n} \geq \rho$
almost surely.
\end{thm}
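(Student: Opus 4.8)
The plan is to assemble the theorem directly from Lemmas~\ref{estRho:directEst:combineOneLemma}, \ref{estRho:directEst:combineTwoLemma}, and \ref{estRho:directEst:sgdLemma}, which between them control the gaps $|\hat{\rho}_{n}^{(2)} - \hat{\rho}_{n}^{(3)}|$ and $|\hat{\rho}_{n} - \hat{\rho}_{n}^{(2)}|$ (the former already folding in $\hat{\rho}_{n}^{(3)} \geq \rho$) and supply the admissible bounds $C(\numIter_{i})$. First I would invoke Lemma~\ref{estRho:directEst:sgdLemma} to fix $C(\numIter_{i}) = 2\sqrt{\frac{2}{m} b(\text{diam}^{2}(\xSp), \numIter_{i})}$, so that the second hypothesis of Lemma~\ref{estRho:directEst:combineTwoLemma} holds automatically under \ref{probState:assumpB1}--\ref{probState:assumpB2} alone, requiring no extra assumption on the optimization algorithm.

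The key reduction is to show that the single hypothesis $\sum_{n=2}^{\infty} e^{-C n t_{n}^{2}} < \infty$ for every $C > 0$ implies the two distinct summability conditions demanded by Lemmas~\ref{estRho:directEst:combineOneLemma} and \ref{estRho:directEst:combineTwoLemma} when those lemmas are applied to the halved sequence $\{t_{n}/2\}$. For the first condition the exponent becomes $-\frac{(n-1)m^{2} t_{n}^{2}}{288 C_{g}}$, and since $n-1 \geq n/2$ for $n \geq 2$ this is at most $-C n t_{n}^{2}$ with $C = \frac{m^{2}}{576 C_{g}} > 0$. For the second the exponent becomes $-\frac{(n-1)^{2} t_{n}^{2}}{8 n (1+\lossLG/m)^{2}\text{diam}^{2}(\xSp)}$, and since $(n-1)^{2}/n \geq n/4$ for $n \geq 2$ this is at most $-C n t_{n}^{2}$ with $C = \frac{1}{32(1+\lossLG/m)^{2}\text{diam}^{2}(\xSp)} > 0$. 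Both series therefore converge by hypothesis, so both lemmas apply to $\{t_{n}/2\}$.

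With both lemmas in force for $\{t_{n}/2\}$, I would use the combination already recorded after Lemma~\ref{estRho:directEst:combineTwoLemma}: for all $n$ large enough, almost surely $\hat{\rho}_{n}^{(2)} + \hat{C}_{n}^{(2)} + t_{n}/2 \geq \rho$ and $\hat{\rho}_{n} + \hat{C}_{n} + t_{n}/2 \geq \hat{\rho}_{n}^{(2)}$, which chain to $\hat{\rho}_{n} + \hat{C}_{n} + \hat{C}_{n}^{(2)} + t_{n} \geq \rho$ almost surely eventually. It then remains to replace the random correction $\hat{C}_{n} + \hat{C}_{n}^{(2)}$ by a deterministic constant $C_{n}$. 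Because $\numIter_{i} \geq 1$ always, $\sqrt{C_{g}/\numIter_{i}} \leq \sqrt{C_{g}}$, and the weighted average defining $\hat{C}_{n}^{(2)}$ collapses to the $n$-free bound $2\sqrt{C_{g}}/(dm)$; likewise, using that $b(\text{diam}^{2}(\xSp),\cdot)$ is non-increasing so that $C(\numIter_{i}) \leq C(1)$, the same averaging bounds $\hat{C}_{n}$ by $2(1+\lossLG/m)\,C(1)$. Taking $C_{n}$ to be the sum of these two deterministic bounds gives $\hat{\rho}_{n} + C_{n} + t_{n} \geq \hat{\rho}_{n} + \hat{C}_{n} + \hat{C}_{n}^{(2)} + t_{n} \geq \rho$ almost surely for all $n$ large enough.

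The hard part will not be any single estimate, since the two lemmas do the analytic work; rather it is the bookkeeping of passing from the random corrections $\hat{C}_{n}, \hat{C}_{n}^{(2)}$ to genuine deterministic constants. Because $\numIter_{i}$ is data-dependent and only $\mathcal{K}_{i-1}$-measurable, these corrections are random, and the phrase ``sequence of constants'' must be interpreted via the uniform bounds above, which rely on $\numIter_{i} \geq 1$ and the monotonicity of $b$ in its second argument. The summability reduction, by contrast, is routine once the elementary inequalities $n - 1 \geq n/2$ and $(n-1)^{2}/n \geq n/4$ are in hand.
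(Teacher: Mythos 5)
Your proposal is correct and follows the same route as the paper, whose entire proof is the one-line instruction to combine Lemmas~\ref{estRho:directEst:combineOneLemma} and \ref{estRho:directEst:combineTwoLemma} with $C_{n} = \hat{C}_{n} + \hat{C}_{n}^{(2)}$. You additionally supply the bookkeeping the paper omits --- verifying that the single summability hypothesis implies both lemmas' conditions for $\{t_{n}/2\}$, invoking Lemma~\ref{estRho:directEst:sgdLemma} to discharge the $C(\numIter_{i})$ hypothesis, and replacing the data-dependent corrections $\hat{C}_{n}, \hat{C}_{n}^{(2)}$ by genuine deterministic constants via $\numIter_{i} \geq 1$ --- which is if anything more careful than the paper's own treatment.
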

\begin{proof}
Combine Lemmas~\ref{estRho:directEst:combineOneLemma} and \ref{estRho:directEst:combineTwoLemma} to yield the result with
\[
C_{n} = \hat{C}_{n} + \hat{C}_{n}^{(2)}
\]
\end{proof}

\subsubsection{Vector IPM Estimate}

We first derive a version of Hoeffding's inequality that allows for some dependence among the random variables. We use this concentration inequality to analyze $\hat{\rho}_{n}$ for the IPM estimate. Given an integer $W$, we construct a cover of $\{1,2,\ldots,n\}$ by dividing the set into $W$ groups of integers spaced by $W$, i.e.,
\begin{equation}
\label{estRho:ipmEstimate:coverDef}
\mathcal{A}_{j} = \left\{j,j+W,j+2W\ldots,j+ \bigg\lfloor \frac{n-j}{W} \bigg\rfloor W \right\} \;\;\;\;\;\; j=1,\ldots,W
\end{equation}
Note that 
\[
\{1,2,\ldots,n\} = \bigcup_{j=1}^{W} \mathcal{A}_{j}
\]
and $\mathcal{A}_{i} \cap \mathcal{A}_{j} = \emptyset$ for $i \neq j$. The proof of Lemma~\ref{conc:sum_dep_hoeffding} is nearly identical to the proof of the extension of Hoeffding's inequality from \cite{Janson04} with Lemma~\ref{subgauss:subgaussDepLem} used instead. We assume that if we refer to a filtration $\mathcal{F}_{i}$ with $i<0$, then we implicitly refer to $\mathcal{F}_{0}$.

\begin{lem}[Dependent Hoeffding's Inequality]
\label{conc:sum_dep_hoeffding}
Suppose we are given a collection of random variable $\{V_{i}\}_{i=1}^{n}$ and a filtration $\{\mathcal{F}\}_{i=0}^{n}$ such that
\begin{enumerate}
\item $a_{i} \leq V_{i} \leq b_{i}$ for constants $a_{i}$ and $b_{i}$  $\;\;\;\;\;i=1,\ldots,n$
\item $V_{i}$ is $\mathcal{F}_{i}$-measurable $\;\;\;\;\;i=1,\ldots,n$

\item Given an integer $W$ and a cover $\{\mathcal{A}_{j}\}_{j=1}^{W}$ as in \eqref{estRho:ipmEstimate:coverDef} for each $j$ it holds that
\[
\mathbb{E}\left[ V_{j+iW} \;\Big|\; \mathcal{F}_{j+(i-1)W}  \right] = 0 \;\;\;\;\; i=1,\ldots,\bigg\lfloor \frac{n-j}{W} \bigg\rfloor
\]
and
\[
\mathbb{E}\left[ V_{j} \;\Big|\; \mathcal{F}_{0}  \right] = 0
\]
\end{enumerate}
Then it holds that
\[
\mathbb{P}\left\{ \sum_{i=1}^{n} V_{i} >  t  \right\} \leq \exp\left\{ - \frac{2t^2}{ W \sum_{i=1}^{n}(b_{i}-a_{i})^2}  \right\}
\]
and
\[
\mathbb{P}\left\{ \sum_{i=1}^{n} V_{i} < - t  \right\} \leq \exp\left\{ - \frac{2t^2}{ W \sum_{i=1}^{n}(b_{i}-a_{i})^2}  \right\}
\]
\end{lem}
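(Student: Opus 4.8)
The plan is to follow the structure of Janson's extension of Hoeffding's inequality from \cite{Janson04}, replacing the independence argument used inside each group by the martingale machinery of Lemma~\ref{subgauss:subgaussDepLem}. First I would decompose the sum along the cover from \eqref{estRho:ipmEstimate:coverDef}, writing
\[
\sum_{i=1}^{n} V_{i} = \sum_{j=1}^{W} S_{j}, \qquad S_{j} \triangleq \sum_{i \in \mathcal{A}_{j}} V_{i}.
\]
The whole point of the cover is that inside a single group $\mathcal{A}_{j} = \{j, j+W, j+2W, \ldots\}$ condition~3 makes the sequence $V_{j}, V_{j+W}, V_{j+2W}, \ldots$ a martingale difference sequence with respect to the sub-filtration $\{\mathcal{F}_{j+(i-1)W}\}$, since each $V_{j+iW}$ is centered given $\mathcal{F}_{j+(i-1)W}$ while the previous group member is $\mathcal{F}_{j+(i-1)W}$-measurable. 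This holds even though consecutive indices in $\{1,\ldots,n\}$ satisfy no such relation.

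Next I would establish a moment generating function bound for each group sum $S_{j}$. Because $a_{i} \le V_{i} \le b_{i}$ and the conditional means vanish, the conditional Hoeffding lemma (Lemma~\ref{estRho:condHoeffdingLemma}) supplies the per-term sub-Gaussian estimate $\mathbb{E}[e^{\lambda V_{i}} \mid \mathcal{F}_{j+(i-1)W}] \le \exp\{\lambda^{2}(b_{i}-a_{i})^{2}/8\}$. Feeding these into the tower-property computation underlying Lemma~\ref{subgauss:subgaussDepLem} — which is exactly the place where Janson's proof would instead invoke independence — yields
\[
\mathbb{E}\big[e^{\lambda S_{j}}\big] \le \exp\Big\{ \tfrac{\lambda^{2}}{8} \sum_{i \in \mathcal{A}_{j}} (b_{i}-a_{i})^{2} \Big\}.
\]
The groups are not independent of one another, so I cannot multiply these bounds directly; instead I would combine them via H\"older's inequality with the $W$ exponents all equal to $W$ (so that $\sum_{j} 1/W = 1$):
\[
\mathbb{E}\big[e^{s \sum_{i} V_{i}}\big] = \mathbb{E}\Big[\prod_{j=1}^{W} e^{s S_{j}}\Big] \le \prod_{j=1}^{W} \big(\mathbb{E}[e^{sW S_{j}}]\big)^{1/W} \le \exp\Big\{ \tfrac{s^{2} W}{8} \sum_{i=1}^{n} (b_{i}-a_{i})^{2} \Big\},
\]
where the last step uses the group bound with $\lambda = sW$ and the disjointness of the cover, so that $\sum_{j} \sum_{i \in \mathcal{A}_{j}} = \sum_{i=1}^{n}$. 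A Chernoff bound and optimization over $s > 0$ (the optimizer is $s = 4t/(W \sum_{i}(b_{i}-a_{i})^{2})$) then give the stated upper tail; the lower tail follows by applying the identical argument to $\{-V_{i}\}$, whose range and centering conditions are unchanged.

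The step I expect to be the main obstacle is bookkeeping the conditioning structure: verifying cleanly that Lemma~\ref{subgauss:subgaussDepLem} applies within a group, i.e. that conditioning on $\mathcal{F}_{j+(i-1)W}$ (which contains all earlier members of $\mathcal{A}_{j}$ together with much extra information) genuinely makes the within-group sequence a martingale difference sequence, and that the MGF bound survives intact. Getting the constant right — obtaining $W$ rather than $W^{2}$ in the denominator — hinges entirely on using H\"older with exponent $W$ rather than a naive Jensen bound on a sum of exponentials, and this is the one place where care is needed.
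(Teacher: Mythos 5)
Your proposal is correct and yields exactly the stated constant: with H\"older exponents all equal to $W$ you get $\mathbb{E}[e^{s\sum_i V_i}] \leq \exp\{s^2 W \sum_i (b_i-a_i)^2/8\}$, and the Chernoff optimization at $s = 4t/(W\sum_i(b_i-a_i)^2)$ gives the claimed tail. The decomposition along the cover, the use of the conditional Hoeffding lemma inside each group, and the invocation of Lemma~\ref{subgauss:subgaussDepLem} to turn the per-term conditional MGF bounds into a group MGF bound are all identical to the paper's proof; the only place you diverge is the final aggregation across groups. The paper follows Janson's original device: a weighted Jensen bound $\exp\{s\sum_i V_i\} \leq \sum_j p_j \exp\{(s/p_j)U_j\}$ with weights $p_j = \sqrt{c_j}/T$, $c_j = \sum_{i\in\mathcal{A}_j}(b_i-a_i)^2$, $T = \sum_j \sqrt{c_j}$, which produces the intermediate bound $\exp\{-2t^2/T^2\}$ and only then relaxes $T^2 \leq W\sum_i(b_i-a_i)^2$ via Cauchy--Schwarz. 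Your equal-exponent H\"older step lands directly on the relaxed bound; it is marginally more streamlined for the stated result but discards the sharper constant $T^2 = (\sum_j\sqrt{c_j})^2$, which can be strictly smaller when the groups are unbalanced. (Using H\"older with exponents $1/p_j$ in place of $W$ would recover the paper's sharper intermediate bound, so the two devices are really the same inequality in different clothing.) Your handling of the lower tail and of the within-group martingale structure is also the same as the paper's.
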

\begin{proof}
Define
\[
U_{j} \triangleq \sum_{i=0}^{\big\lfloor \frac{n-j}{W} \big\rfloor} V_{j+iW}
\] 
for $j = 1,\ldots,W$. Let $\{p_{j}\}_{j=1}^{W}$ be a probability distribution on $\{1,\ldots,W\}$ to be specified later. By Jensen's inequality, we have
\begin{eqnarray}
\exp\left\{ s \sum_{i=1}^{n} V_{i} \right\} &=& \exp\left\{ \sum_{j=1}^{W} p_{j} \frac{s}{p_{j}} U_{j}  \right\} \nonumber \\
&\leq& \sum_{j=1}^{W} p_{j} \exp\left\{ \frac{s}{p_{j}} U_{j}  \right\} \nonumber 
\end{eqnarray}
Then it holds that
\begin{eqnarray}
\mathbb{E}\left[ \exp\left\{ s \sum_{i=1}^{n} V_{i} \right\} \right] &\leq& \sum_{j=1}^{W} p_{j} \mathbb{E}\left[ \exp\left\{ \frac{s}{p_{j}} U_{j}  \right\} \right] \nonumber
\end{eqnarray}
Now consider one term 
\[
\mathbb{E}\left[ \exp\left\{ \frac{s}{p_{j}} U_{j}  \right\} \right] = \mathbb{E}\left[ \exp\left\{ \frac{s}{p_{j}} \sum_{i=0}^{\big\lfloor \frac{n-j}{W} \big\rfloor} V_{j+iW} \right\} \right]
\]
Since $a_{j+iW} \leq V_{j+iW} \leq b_{j+iW}$ and
\[
\mathbb{E}\left[ V_{j+iW} \;\Big|\; \mathcal{F}_{j+(i-1)W}  \right] = 0,
\]
we can apply the conditional version Hoeffding's Lemma from Lemma~\ref{estRho:condHoeffdingLemma} to yield
\begin{equation*}
\mathbb{E}\left[ e^{s V_{j+iW}} \;\big|\; \mathcal{F}_{j+(i-1)W}  \right] \leq \exp\left\{ \frac{1}{8} \left( b_{j+iW} - a_{j+iW} \right)^{2} s^{2}  \right\}
\end{equation*}
Then we can apply Lemma~\ref{subgauss:subgaussDepLem} to $\{V_{j+iW}\}_{i=0}^{\big\lfloor \frac{n-j}{W} \big\rfloor}$ and $\{\mathcal{F}_{j+iW}\}_{i=0}^{\big\lfloor \frac{n-j}{W} \big\rfloor}$
to yield
\begin{eqnarray}
\mathbb{E}\left[ \exp\left\{ \frac{s}{p_{j}} U_{j}  \right\} \right] &\leq&  \exp\left\{ \frac{s^2}{8p_{j}^2}\sum_{i=0}^{\big\lfloor \frac{n-j}{W} \big\rfloor}(b_{j+iW}-a_{j+iW})^2  \right\} \nonumber \\
&=& \prod_{i=0}^{\big\lfloor \frac{n-j}{W} \big\rfloor} \exp\left\{ \frac{s^2}{8p_{j}^2}(b_{\alpha}-a_{\alpha})^2  \right\} \nonumber
\end{eqnarray}
Then we have
\begin{eqnarray}
\mathbb{E}\left[ \exp\left\{ s \sum_{i=1}^{n} V_{i} \right\} \right] &\leq& \sum_{j=1}^{W} p_{j} \prod_{i=0}^{\big\lfloor \frac{n-j}{W} \big\rfloor} \exp\left\{ \frac{s^2}{8p_{j}^2}(b_{\alpha}-a_{\alpha})^2  \right\} \nonumber \\
&=& \sum_{j=1}^{W} p_{j} \exp\left\{ \frac{s^2 c_{j}}{8p_{j}^2}  \right\} \nonumber
\end{eqnarray}
with 
\[
c_{j} = \sum_{i=0}^{\big\lfloor \frac{n-j}{W} \big\rfloor} (b_{j+iW}-a_{j+iW})^2
\]
Let $p_{j} = \sqrt{c_{j}}/T$ and
\[
T = \sum_{j=1}^{W} \sqrt{c_{j}}.
\] 
Therefore, we have
\[
\mathbb{E}\left[ \exp\left\{ s \sum_{i=1}^{n} V_{i} \right\} \right] \leq \exp\left\{\frac{1}{8}T^2s^2\right\}
\]
Applying the Chernoff bound \cite{Boucheron13} and optimizing yields
\[
\mathbb{P}\left\{ \sum_{i=1}^{n} V_{i} > t \right\} \leq \exp\left\{ -2t^2/T^2 \right\}
\]
Bounding $T$ with Cauchy-Schwarz yields
\[
T^2 \leq \left( \sum_{j=1}^{W} 1  \right) \left( \sum_{j=1}^{W} c_{j}  \right) = W \sum_{i=1}^{n} (b_{i}-a_{i})^2
\]
and the results follows. The proof for the other tail is nearly identical.
\end{proof}

If we do not have the condition 3 of Lemma~\ref{conc:sum_dep_hoeffding}, then it holds that
\[
\mathbb{P}\left\{ \sum_{i=1}^{n} V_{i} > \sum_{j=1}^{W} \sum_{i=0}^{\big\lfloor \frac{n-j}{W} \big\rfloor} \mathbb{E}\left[ V_{j+iW} \;\big|\; \mathcal{F}_{j+(i-1)W}\right] + t  \right\} \leq \exp\left\{ - \frac{2t^2}{ W \sum_{i=1}^{n}(b_{i}-a_{i})^2}  \right\}
\]
If we can bound the conditional expectation
\[
\mathbb{E}\left[ V_{j+iW} \;\big|\; \mathcal{F}_{j+(i-1)W}\right] \leq C_{j+iW},
\]
by a $\mathcal{F}_{j+(i-1)W}$-measurable random variable, then we have
\begin{eqnarray}
\mathbb{P}\left\{ \sum_{i=1}^{n} V_{i} > \sum_{i=1}^{n} C_{i} + t  \right\} &=& \mathbb{P}\left\{ \sum_{i=1}^{n} V_{i} > \sum_{j=1}^{W} \sum_{i=0}^{\big\lfloor \frac{n-j}{W} \big\rfloor} C_{j+iW} + t  \right\}  \nonumber \\
&\leq& \mathbb{P}\left\{ \sum_{i=1}^{n} V_{i} > \sum_{j=1}^{W} \sum_{i=0}^{\big\lfloor \frac{n-j}{W} \big\rfloor} \mathbb{E}\left[ V_{j+iW} \;\big|\; \mathcal{F}_{j+(i-1)W}\right] + t  \right\} \nonumber \\
&\leq& \mathbb{P}\left\{ \sum_{j=1}^{W} \sum_{i=0}^{\big\lfloor \frac{n-j}{W} \big\rfloor} \left( V_{j+iW} - \mathbb{E}\left[ V_{j+iW} \;\big|\; \mathcal{F}_{j+(i-1)W}\right] \right) >  t  \right\} \nonumber \\
&\leq& \exp\left\{ - \frac{2t^2}{ W \sum_{i=1}^{n}(b_{i}-a_{i})^2}  \right\} \nonumber
\end{eqnarray}

We have the following lemma characterizing the performance of the IPM estimate.
\begin{lem}
\label{estRho:ipmEstimate:concLemma}
For the IPM estimate and any sequence $\{t_{n}\}$ such that
\[
\sum_{n=2}^{\infty} \exp\left\{ - \frac{nt_{n}^2}{4 \text{diam}(\xSp)^2}  \right\} < \infty
\]
for all $n$ large enough it holds that $\hat{\rho}_{n} + t_{n} \geq \rho$ almost surely.
\end{lem}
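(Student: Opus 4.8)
The plan is to bound the lower-tail probability $\mathbb{P}\{\hat{\rho}_{n} < \rho - t_{n}\}$ by a summable sequence and then invoke the Borel--Cantelli lemma, exactly as in the proofs of Lemmas~\ref{estRho:directEst:combineOneLemma} and~\ref{estRho:directEst:combineTwoLemma}. Since $\sum_{i=2}^{n}(\rho - \tilde{\rho}_{i}) = (n-1)(\rho - \hat{\rho}_{n})$, the event $\{\hat{\rho}_{n} < \rho - t_{n}\}$ coincides with $\{\sum_{i=2}^{n}(\rho - \tilde{\rho}_{i}) > (n-1)t_{n}\}$, so I would set $V_{i} = \rho - \tilde{\rho}_{i}$, take the natural filtration $\mathcal{F}_{i} = \mathcal{K}_{i}$ of the samples through time $i$ from~\eqref{estRho:H0SigAlg}, and apply the Dependent Hoeffding's Inequality (Lemma~\ref{conc:sum_dep_hoeffding}) in its ``no condition~3'' form.

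Two ingredients drive the argument. First, boundedness: every one-step IPM estimate is capped by $\text{diam}(\xSp)$ and $\rho \leq \text{diam}(\xSp)$, so each $V_{i} = \rho - \tilde{\rho}_{i}$ ranges over an interval of length $\text{diam}(\xSp)$, giving $(b_{i}-a_{i})^{2} = \text{diam}(\xSp)^{2}$. Second, a conditional upward-bias inequality $\mathbb{E}[\tilde{\rho}_{i} \mid \mathcal{F}_{i-2}] \geq \rho$, i.e. $\mathbb{E}[V_{i} \mid \mathcal{F}_{i-2}] \leq 0 =: C_{i}$. This is the crucial point and it forces the choice $W = 2$: because $\tilde{\rho}_{i} = \frac{1}{m}\ipmv{\mathscr{F}}{\hat{p}_{i}}{\hat{p}_{i-1}}$ is built from both the time-$i$ and time-$(i-1)$ samples, conditioning only on $\mathcal{F}_{i-1}$ would fix the empirical $\hat{p}_{i-1}$ and destroy the bias; one must condition two steps back so that \emph{both} empiricals are fresh. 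For the bias itself I would use a nested application of Jensen's inequality, handling the data-dependent $\numIter_{i}$ by conditioning first on the time-$(i-1)$ samples: given $\mathcal{F}_{i-2}$ and the time-$(i-1)$ samples, $\numIter_{i}$ is determined and the time-$i$ samples are i.i.d.\ $p_{i}$, so $\mathbb{E}[\ipmv{\mathscr{F}}{\hat{p}_{i}}{\hat{p}_{i-1}} \mid \cdot] \geq \ipmv{\mathscr{F}}{p_{i}}{\hat{p}_{i-1}}$; averaging over the fresh time-$(i-1)$ samples then gives $\geq \ipmv{\mathscr{F}}{p_{i}}{p_{i-1}} \geq m\|\bx_{i}^{*} - \bx_{i-1}^{*}\| = m\rho$ by Lemma~\ref{estRho:ipmBound:exactBoundVector} together with the constant-change assumption~\eqref{probState:slowChangeConstDef}. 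Note that, unlike the direct estimate, $\tilde{\rho}_{i}$ does not involve the optimizer output $\bx_{i}$, so no auxiliary independent copy is needed and the analysis is considerably cleaner.

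With $C_{i} = 0$ and $W = 2$, Lemma~\ref{conc:sum_dep_hoeffding} yields $\mathbb{P}\{\hat{\rho}_{n} < \rho - t_{n}\} \leq \exp\{-2(n-1)^{2}t_{n}^{2}/(2(n-1)\text{diam}(\xSp)^{2})\} = \exp\{-(n-1)t_{n}^{2}/\text{diam}(\xSp)^{2}\}$; since $(n-1)/\text{diam}(\xSp)^{2} \geq n/(4\,\text{diam}(\xSp)^{2})$ for $n \geq 2$, this is dominated by $\exp\{-nt_{n}^{2}/(4\,\text{diam}(\xSp)^{2})\}$, so the stated hypothesis makes the series summable and Borel--Cantelli finishes. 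The main obstacle is the conditional-bias step: carefully justifying that the data-adaptive $\numIter_{i}$ does not interfere with the two-fold Jensen argument, and confirming that conditioning two steps back (rather than one) is exactly what is needed to secure both the upward bias and the martingale-type hypothesis of the dependent Hoeffding bound.
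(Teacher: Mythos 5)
Your proposal is correct and follows essentially the same route as the paper's proof: the dependent Hoeffding inequality (Lemma~\ref{conc:sum_dep_hoeffding}) with $W=2$, the conditional upward bias $\mathbb{E}[\tilde{\rho}_{i}\mid\mathcal{K}_{i-2}]\geq\rho$, and Borel--Cantelli. Your write-up is in fact slightly more careful than the paper's on two points the paper glosses over --- the Jensen/tower argument justifying the upward bias with data-dependent $\numIter_{i}$, and the exact interval length and number of summands --- and these refinements only tighten the constants, so they still land under the stated summability hypothesis.
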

\begin{proof}
Define the random variables
\[
V_{i} = \tilde{\rho}_{i} - \mathbb{E}\left[ \tilde{\rho}_{i} \;|\; \mathcal{K}_{i-2} \right]
\]
with $\{\mathcal{K}_{i}\}_{i=1}^{n}$ defined in \eqref{estRho:H0SigAlg}. We have 
\[
-\text{diam}(\xSp) \leq V_{i} \leq \text{diam}(\xSp)
\]
Clearly, $V_{i}$ is $\mathcal{K}_{i}$-measurable and $\mathbb{E}[V_{i}\;|\;\mathcal{K}_{i-2}] = 0$. Now, we can apply Lemma~\ref{conc:sum_dep_hoeffding} with $W=2$ to yield
\begin{eqnarray}
\mathbb{P}\left\{ \sum_{i=1}^{n} V_{i} < -nt \right\} &\leq& \exp\left\{ - \frac{2 (nt)^2}{(2) \left(4 n \text{diam}^{2}(\xSp) \right) } \right\} \nonumber \\
&=& \exp\left\{ - \frac{nt^2}{4 \text{diam}^{2}(\xSp) } \right\} \nonumber
\end{eqnarray}

None of the random variables $\{\bz_{i}(\iterIndex)\}_{\iterIndex=1}^{\numIter_{i}}$ and $\{\bz_{i-1}(\iterIndex)\}_{\iterIndex=1}^{\numIter_{i-1}}$ are $\mathcal{K}_{i-2}$ measurable. Also, regardless of how many samples $\numIter_{i}$ and $\numIter_{i-1}$ are taken, the IPM estimate is biased upward. Thus, it holds that
\[
\mathbb{E}\left[ \tilde{\rho}_{i} \;|\; \mathcal{K}_{i-2} \right] \geq \rho
\]
Therefore, it follows that
\begin{eqnarray}
\mathbb{P}\left\{ \hat{\rho}_{n} < \rho - t \right\} &\leq& \mathbb{P}\left\{ \sum_{i=1}^{n} \tilde{\rho}_{i} < \sum_{i=1}^{n} \mathbb{E}\left[ \tilde{\rho}_{i} \;|\; \mathcal{K}_{i-2} \right] - nt \right\} \nonumber \\
&=& \mathbb{P}\left\{ \sum_{i=1}^{n} V_{i} < - nt \right\} \nonumber \\
&\leq& \exp\left\{ - \frac{nt^2}{4 \text{diam}^{2}(\xSp) } \right\} \nonumber
\end{eqnarray}
Note that we pay a price of two in the exponent due to $\tilde{\rho}_{i}$ and $\tilde{\rho}_{i-1}$ both depending on the samples from $p_{i-1}$. Since
\[
\sum_{n=2}^{\infty} \exp\left\{ - \frac{nt_{n}^2}{4 \text{diam}(\xSp)^2}  \right\} < \infty
\]
it follows that
\[
\sum_{n=2}^{\infty} \mathbb{P}\left\{ \hat{\rho}_{n} + t < \rho \right\} < + \infty,
\]
This in turn guarantees by way of the Borel-Cantelli Lemma that for $n$ large enough
\[
\hat{\rho}_{n} + t_{n} \geq \rho
\]
almost surely.
\end{proof}

\subsection{Combining One Step Estimates For Bounded Change}
We now look at estimating $\rho$ in the case that
\[
\| \bx_{n}^{*} - \bx_{n-1}^{*} \| \leq \rho.
\]
We set
\[
\rho_{i} \triangleq \| \bx_{i}^{*} - \bx_{i-1}^{*} \|
\]

\begin{description}
\item[B.3 \label{probState:assumpB3}] Assume that we have estimators $\hat{h}_{W}: \mathbb{R}^{W} \to \mathbb{R}$ such that
\begin{enumerate}
\item $\mathbb{E}[\hat{h}_{W}(\rho_{j},\ldots,\rho_{j-W+1})] \geq \rho$ for all $j \geq 1$ and $W \geq 1$
\item For any random variables $\{\tilde{\rho}_{i}\}$ such that $\mathbb{E}[\tilde{\rho}_{i}] \geq \mathbb{E}[\rho_{i}]$, we have
\[
\mathbb{E}\left[ \hat{h}_{W}(\tilde{\rho}_{j},\ldots,\tilde{\rho}_{j-W+1}) \right] \geq \mathbb{E}\left[ \hat{h}_{W}(\rho_{j},\ldots,\rho_{j-W+1}) \right]
\]
\end{enumerate}
\end{description}

For example, if $\rho_{i} \overset{\text{iid}}{\sim} \text{Unif}[0,\rho]$, then
\[
\hat{h}_{W}\left( \rho_{i},\rho_{i+1},\ldots,\rho_{i+W-1} \right) = \frac{W+1}{W} \max\{ \rho_{i},\rho_{i+1},\ldots,\rho_{i+W-1} \}
\]
is an estimator of $\rho$ with the required properties. Also, note that the two conditions on the estimator in~\ref{probState:assumpB3} imply that
\[
\mathbb{E}\left[ \hat{h}_{W}(\tilde{\rho}_{j},\ldots,\tilde{\rho}_{j-W+1}) \right] \geq \mathbb{E}\left[ \hat{h}_{W}(\rho_{j},\ldots,\rho_{j-W+1}) \right] \geq \rho
\]

Given an estimator satisfying assumption~\ref{probState:assumpB3}, we compute
\[
\tilde{\rho}^{(i)} = \hat{h}_{W}(\tilde{\rho}_{i},\tilde{\rho}_{i-1},\ldots,\tilde{\rho}_{i-W+1})
\]
and set
\begin{equation}
\label{ineqCond:basicEst}
\hat{\rho}_{n} = \frac{1}{n-1} \sum_{i=2}^{n} \tilde{\rho}^{(i)} = \frac{1}{n-1} \sum_{i=2}^{n} \hat{h}_{\min\{W,i-1\}}(\tilde{\rho}_{i},\tilde{\rho}_{i-1},\ldots,\tilde{\rho}_{\max\{i-W+1,2\}})
\end{equation}
We have 
\[
\mathbb{E}[\hat{\rho}_{n}] = \frac{1}{n-1} \sum_{i=2}^{n} \mathbb{E}[\tilde{\rho}^{(i)}] \geq \rho
\]

\begin{lem}[IPM Single Step Estimates]
\label{ineqCond:ipmEst}
For the estimator in $\eqref{ineqCond:basicEst}$ computed using the IPM estimate for $\tilde{\rho}_{i}$ and any sequence $\{t_{n}\}$ such that
\[
\sum_{n=2}^{\infty} \exp\left\{-\frac{2(n-1)t_{n}^2}{(W+1) \text{diam}(\xSp)^{2}} \right\} < \infty
\]
it holds that for all $n$ large enough $\hat{\rho}_{n} + t_{n} \geq \rho$ almost surely.
\end{lem}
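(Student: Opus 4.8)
The plan is to mirror the proof of Lemma~\ref{estRho:ipmEstimate:concLemma}, the only difference being that here each summand $\tilde{\rho}^{(i)} = \hat{h}_{W}(\tilde{\rho}_{i},\ldots,\tilde{\rho}_{i-W+1})$ aggregates $W$ consecutive single-step IPM estimates, so its dependence on the data reaches further back in time. I would set $V_{i} = \tilde{\rho}^{(i)} - \mathbb{E}[\tilde{\rho}^{(i)} \mid \mathcal{K}_{i-W-1}]$, with $\{\mathcal{K}_{i}\}$ the filtration from \eqref{estRho:H0SigAlg}, and apply the Dependent Hoeffding inequality (Lemma~\ref{conc:sum_dep_hoeffding}) with spacing $W+1$ to the centered variables $\{V_{i}\}_{i=2}^{n}$, then conclude with Borel--Cantelli exactly as in the constant-change case.

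First I would pin down the dependence structure so as to justify the spacing $W+1$. The earliest single-step estimate entering $\tilde{\rho}^{(i)}$ is $\tilde{\rho}_{i-W+1}$, which is computed from the samples drawn from $p_{i-W+1}$ and $p_{i-W}$; hence $\tilde{\rho}^{(i)}$ is a function only of the samples collected at times $i-W,\ldots,i$. Consequently $V_{i}$ is $\mathcal{K}_{i}$-measurable and satisfies $\mathbb{E}[V_{i}\mid\mathcal{K}_{i-W-1}]=0$, which is precisely condition~3 of Lemma~\ref{conc:sum_dep_hoeffding} once the cover is built with width $W+1$ (the boundary windows $\min\{W,i-1\}$ for small $i$ are absorbed by the convention $\mathcal{F}_{i}=\mathcal{F}_{0}$ for $i<0$ and by the treatment of the first element of each group). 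Capping $\tilde{\rho}^{(i)}$ at $\text{diam}(\xSp)$, as is done for all the estimates, forces $V_{i}$ into an interval of length $\text{diam}(\xSp)$, so $\sum_{i=2}^{n}(b_{i}-a_{i})^{2}=(n-1)\text{diam}^{2}(\xSp)$ and the Dependent Hoeffding bound with width $W+1$ produces the tail $\exp\{-2t^{2}/[(W+1)(n-1)\text{diam}^{2}(\xSp)]\}$ for the lower deviation of $\sum V_{i}$.

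Next I would establish the conditional upward bias $\mathbb{E}[\tilde{\rho}^{(i)}\mid\mathcal{K}_{i-W-1}]\geq\rho$. None of the samples feeding $\tilde{\rho}^{(i)}$ are $\mathcal{K}_{i-W-1}$-measurable, and each single-step IPM estimate is biased upward for the corresponding true change, $\mathbb{E}[\tilde{\rho}_{j}\mid\mathcal{K}_{i-W-1}]\geq\rho_{j}$; feeding this into the two properties of $\hat{h}_{W}$ from assumption~\ref{probState:assumpB3}, applied to the conditional law $\mathbb{P}\{\cdot\mid\mathcal{K}_{i-W-1}\}$, gives $\mathbb{E}[\hat{h}_{W}(\tilde{\rho}_{i},\ldots,\tilde{\rho}_{i-W+1})\mid\mathcal{K}_{i-W-1}]\geq\rho$. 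With this in hand, $\mathbb{P}\{\hat{\rho}_{n}<\rho-t\}=\mathbb{P}\{\sum_{i=2}^{n}\tilde{\rho}^{(i)}<(n-1)(\rho-t)\}\leq\mathbb{P}\{\sum_{i=2}^{n}V_{i}<-(n-1)t\}$, and the Dependent Hoeffding bound gives $\mathbb{P}\{\hat{\rho}_{n}<\rho-t\}\leq\exp\{-2(n-1)t^{2}/[(W+1)\text{diam}^{2}(\xSp)]\}$. Taking $t=t_{n}$ and summing over $n$, the hypothesis makes the series finite, so Borel--Cantelli yields $\hat{\rho}_{n}+t_{n}\geq\rho$ for all $n$ large enough almost surely.

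I expect the conditional-bias step to be the main obstacle. Assumption~\ref{probState:assumpB3} is stated for unconditional expectations, so the delicate point is lifting both properties of $\hat{h}_{W}$ to the conditional measure $\mathbb{P}\{\cdot\mid\mathcal{K}_{i-W-1}\}$ and checking that the aggregate remains biased upward for the global bound $\rho$ rather than merely for the individual $\rho_{j}$. Everything else---the boundedness, the measurability, and the $W+1$ spacing---is bookkeeping that parallels Lemma~\ref{estRho:ipmEstimate:concLemma} with the window width increased from $2$ to $W+1$.
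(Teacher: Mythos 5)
Your proposal is correct and follows essentially the same route as the paper's proof, which simply says to copy the argument of Lemma~\ref{estRho:ipmEstimate:concLemma} with the window width $2$ replaced by $W+1$ in Lemma~\ref{conc:sum_dep_hoeffding} and conclude via Borel--Cantelli. Your additional care in lifting the two properties of $\hat{h}_{W}$ from assumption~\ref{probState:assumpB3} to the conditional measure $\mathbb{P}\{\cdot\mid\mathcal{K}_{i-W-1}\}$ fills in a step the paper leaves implicit, and your accounting of the bounded range of the centered $V_{i}$ reproduces the stated constant.
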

\begin{proof}
We copy the proof of Lemma~\ref{estRho:ipmEstimate:concLemma} with $W+1$ in place of $2$ and note that $\tilde{\rho}^{(i)}$  and $\tilde{\rho}^{(j)}$ with $|i-j| > W+1$ do not depend on the same samples. Lemma~\ref{conc:sum_dep_hoeffding} and some simple algebra yields
\[
\mathbb{P}\left\{ \hat{\rho}_{n} < \rho - t \right\} \leq \exp\left\{-\frac{2(n-1)t^2}{(W+1) \text{diam}(\xSp)^{2}} \right\}
\]
We pay a price of $W+1$ in the denominator of the exponent due to the dependence of the $\tilde{\rho}^{(i)}$. By the Borel-Cantelli Lemma, for all $n$ large enough it holds that $\hat{\rho}_{n} + t_{n} \geq \rho$ almost surely as long as
\[
\sum_{n=2}^{\infty} \exp\left\{-\frac{2(n-1)t_{n}^2}{(W+1) \text{diam}(\xSp)^{2}} \right\} < \infty
\]
\end{proof}

To analyze the direct estimate, we need the following assumption
\begin{description}
\item[B.4 \label{probState:assumpB4}] Suppose that there exists absolute constants $\{b_{i}\}_{i=1}^{W}$ for any fixed $W$ such that
\[
|\hat{h}_{W}(p_{1},\ldots,p_{W}) - \hat{h}_{W}(q_{1},\ldots,q_{W})| \leq \sum_{i=1}^{W} b_{i} |p_{i} - q_{i}| \;\;\;\;\; \forall \bm{p},\bm{q} \in \mathbb{R}^{W}_{\geq 0}
\]
\end{description}

For the uniform case, we have
\begin{eqnarray}
\Big| \frac{W+1}{W} \max\{p_{1},\ldots,p_{W}\} - \frac{W+1}{W} \max\{q_{1},\ldots,q_{W}\} \Big| &\leq& \frac{W+1}{W} \max\left\{ |p_{1} - q_{1}|,\ldots , |p_{W} - q_{W}|  \right\} \nonumber \\
&\leq& \frac{W+1}{W} \sum_{i=1}^{W} |p_{i} - q_{i}| \nonumber
\end{eqnarray}
so
\[
b_{1} = \cdots = b_{W} = \frac{W+1}{W}
\]
Under assumption~\ref{probState:assumpB4}, we can then show that
\[
\hat{\rho}_{n} = \frac{1}{n-W} \sum_{i=W+1}^{n} \tilde{\rho}^{(i)}
\]
eventually upper bounds $\rho$ by copying the proofs of the lemmas behind Theorem~\ref{estRho:directEst:combineTheorem}.

\begin{lem}[Direct Single Step Estimates]
\label{ineqCond:dirEst}
Suppose that the following conditions hold:
\begin{enumerate}
\item \ref{probState:assumpB1}-\ref{probState:assumpB4} hold
\item The sequence $\{t_{n}\}$ satisfies
\[
\sum_{n=W+1}^{\infty} \exp\left\{ -\frac{(n-W)^2 t_{n}^2}{32 n \left(1+\frac{\lossLG}{m} \right)^{2} \left( \sum_{j=1}^{W} b_{j} \right)^{2} \text{diam}^{2}(\xSp)} \right\} < +\infty
\]
and
\[
\sum_{n=W+1}^{\infty} \exp\left\{ -\frac{(n-W)^2 m^{2} t_{n}^2}{144 n C_{g} \left( \sum_{j=1}^{W} b_{j} \right)^{2}} \right\} < +\infty
\]
\item There are bounds $C(\numIter)$ such that
\[
\mathbb{E}\left[  \| \bx_{i} - \tilde{\bx}_{i} \| \;|\; \mathcal{F}_{i-1} \right] \leq C(\numIter_{i})
\]
\end{enumerate}
Then for all $n$ large enough it holds that $\hat{\rho}_{n} + \hat{U}_{n} + \hat{V}_{n} + t_{n} \geq \rho$ almost surely with
\[
\hat{U}_{n} = \frac{2\left( 1 + \frac{\lossLG}{m} \right) \sum_{j = 1}^{W} b_{j}}{n-W} \sum_{i=1}^{n} C(\numIter_{i})
\]
and
\[
\hat{V}_{n} = \frac{2 \sum_{j = 1}^{W} b_{j}}{m(n-W)} \sum_{i=1}^{n} \sqrt{\frac{C_{g}}{d \numIter_{i}}}
\]
\end{lem}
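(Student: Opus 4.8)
The plan is to mirror the constant-change analysis behind Theorem~\ref{estRho:directEst:combineTheorem}, inserting two new ingredients: the aggregation map $\hat{h}_{W}$ with its coordinatewise Lipschitz bound from assumption~\ref{probState:assumpB4}, and the longer-range sample dependence that $\hat{h}_{W}$ induces among the aggregated estimates. As in the proofs of Lemmas~\ref{estRho:directEst:combineOneLemma} and \ref{estRho:directEst:combineTwoLemma}, I would first draw a second independent copy of the samples, run the algorithm on it to produce $\tilde{\bx}_{i}$, and form the single-step surrogates $\tilde{\rho}_{i}^{(2)}$ (empirical gradients at $\tilde{\bx}$) and $\tilde{\rho}_{i}^{(3)}$ (exact gradients at $\tilde{\bx}$). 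Passing each through $\hat{h}_{W}$ gives $\tilde{\rho}^{(i)}$, $\tilde{\rho}^{(i),2}$, $\tilde{\rho}^{(i),3}$, and averaging as in \eqref{ineqCond:basicEst} gives $\hat{\rho}_{n}$, $\hat{\rho}_{n}^{(2)}$, $\hat{\rho}_{n}^{(3)}$. I then control the decomposition $\rho - \hat{\rho}_{n} = (\rho - \hat{\rho}_{n}^{(3)}) + (\hat{\rho}_{n}^{(3)} - \hat{\rho}_{n}^{(2)}) + (\hat{\rho}_{n}^{(2)} - \hat{\rho}_{n})$ term by term.

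For the bias term $\rho - \hat{\rho}_{n}^{(3)}$, since $\tilde{\rho}_{i}^{(3)} \geq \rho_{i}$ pointwise, assumption~\ref{probState:assumpB3} gives $\mathbb{E}[\tilde{\rho}^{(i),3}] \geq \rho$; unlike the constant case this no longer holds surely, so a genuine concentration step is needed. Because each $\tilde{\rho}^{(i),3}$ is capped at $\text{diam}(\xSp)$ and reuses samples only across the $W+1$ consecutive indices $i-W,\ldots,i$, the centered quantities are handled by Lemma~\ref{conc:sum_dep_hoeffding} with group spacing $W+1$, exactly as Lemma~\ref{ineqCond:ipmEst} replaced the spacing $2$ of Lemma~\ref{estRho:ipmEstimate:concLemma}. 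This tail is of $\text{diam}(\xSp)$-type and, lacking the extra $(1+\lossLG/m)^{2} \geq 1$ amplification, is subsumed (up to constants) by the first summability condition, so it contributes no correction beyond $t_{n}$.

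For the gradient term $\hat{\rho}_{n}^{(3)} - \hat{\rho}_{n}^{(2)}$ I would copy Lemma~\ref{estRho:directEst:combineOneLemma}, using assumption~\ref{probState:assumpB4} to pass the aggregation, $|\tilde{\rho}^{(i),2} - \tilde{\rho}^{(i),3}| \leq \sum_{j} b_{j}\,|\tilde{\rho}_{i-j+1}^{(2)} - \tilde{\rho}_{i-j+1}^{(3)}|$, so the per-step sub-Gaussian gradient-error bounds enter amplified by $\sum_{j} b_{j}$. Within each of the $W+1$ residue classes the fresh samples make the centered norms a martingale-difference sequence, so I would apply Lemma~\ref{subgauss:subgaussDepLem} inside each class and recombine via the convexity and Cauchy--Schwarz argument of Lemma~\ref{conc:sum_dep_hoeffding}; this yields the second ($C_{g}$) condition and, from $\mathbb{E}[\|\cdot\| \mid \mathcal{F}_{i-1}] \leq \sqrt{C_{g}/(d\numIter_{i})}$, the correction $\hat{V}_{n}$. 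Symmetrically, for $\hat{\rho}_{n}^{(2)} - \hat{\rho}_{n}$ I would copy Lemma~\ref{estRho:directEst:combineTwoLemma}: assumption~\ref{probState:assumpB4} and the Lipschitz gradients of assumption~\ref{probState:assumpB1} give $|\tilde{\rho}^{(i)} - \tilde{\rho}^{(i),2}| \leq (1 + \lossLG/m)\sum_{j} b_{j}\,(\|\bx_{i}-\tilde{\bx}_{i}\| + \cdots)$, the hypothesis $\mathbb{E}[\|\bx_{i}-\tilde{\bx}_{i}\| \mid \mathcal{F}_{i-1}] \leq C(\numIter_{i})$ supplies $\hat{U}_{n}$, and the bounded differences give the first condition. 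Borel--Cantelli on the three summable tails then yields $\hat{\rho}_{n} + \hat{U}_{n} + \hat{V}_{n} + t_{n} \geq \rho$ eventually almost surely.

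The main obstacle is the interaction of the two new features: because $\hat{h}_{W}$ mixes $W$ consecutive single-step estimates and each reuses the samples of two adjacent tasks, the aggregated estimates are no longer a one-step martingale-difference array as in the constant case, and the clean filtration argument must be replaced by the grouping-by-residue-class device of Lemma~\ref{conc:sum_dep_hoeffding} together with its sub-Gaussian counterpart. The delicate bookkeeping is to verify, with $\numIter_{i}$ data-dependent and hence only $\mathcal{F}_{i-1}$-measurable, that within each residue class the centered summands are genuine martingale differences, and that the Lipschitz propagation through $\hat{h}_{W}$ distributes the $\sum_{j} b_{j}$ factors correctly onto the $C_{g}$- and $\text{diam}(\xSp)$-type pieces so as to produce precisely the constants appearing in $\hat{U}_{n}$, $\hat{V}_{n}$, and the two conditions on $\{t_{n}\}$.
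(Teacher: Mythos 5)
Your outline matches the paper's proof in its essentials: the same surrogates $\tilde{\rho}_i^{(2)},\tilde{\rho}_i^{(3)}$ built from an independent second draw, the same use of assumption~\ref{probState:assumpB4} to push the comparison through $\hat{h}_W$ and pick up the factor $\sum_j b_j$, the same split of $|\hat{\rho}_n-\hat{\rho}_n^{(3)}|$ into a $\|\bx_i-\tilde{\bx}_i\|$ piece (giving $\hat{U}_n$ and the $\text{diam}$-type condition) and an empirical-gradient-error piece (giving $\hat{V}_n$ and the $C_g$-type condition), followed by Borel--Cantelli. Two points of divergence are worth recording. First, you treat $\rho-\hat{\rho}_n^{(3)}$ as a third term requiring its own concentration step via Lemma~\ref{conc:sum_dep_hoeffding} with spacing $W+1$; the paper instead passes directly from $\mathbb{P}\{\hat{\rho}_n<\rho-\hat{U}_n-\hat{V}_n-t_n\}$ to $\mathbb{P}\{\hat{\rho}_n<\hat{\rho}_n^{(3)}-\hat{U}_n-\hat{V}_n-t_n\}$, which tacitly uses $\hat{\rho}_n^{(3)}\geq\rho$ almost surely --- a property that holds surely only under the constant-change model \eqref{probState:slowChangeConstDef}, whereas assumption~\ref{probState:assumpB3} delivers it only in expectation in the bounded-change setting. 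Your extra step is therefore a genuine repair of the argument, at the cost of having to absorb a third tail into the stated summability conditions (a constants issue only, since all conditions are of the form $\sum_n e^{-cnt_n^2}<\infty$). Second, for the middle two terms you propose applying Lemma~\ref{subgauss:subgaussDepLem} within each residue class and recombining as in Lemma~\ref{conc:sum_dep_hoeffding}; this is unnecessary and would cost you an extra factor of roughly $W+1$ in the exponent that the stated conditions do not provide. Once the Lipschitz bound from \ref{probState:assumpB4} has collapsed the double sum onto the single-indexed $U_i=\|\bx_i-\tilde{\bx}_i\|$ and $V_i=\|\frac{1}{\numIter_i}\sum_\iterIndex(\nabla_{\bx}\lossFunc(\tilde{\bx}_i,\bz_i(\iterIndex))-\nabla f_i(\tilde{\bx}_i))\|$, the centered versions of these are one-step martingale differences with respect to $\{\mathcal{F}_i\}$ regardless of the windowed aggregation, so Lemma~\ref{subgauss:subgaussDepLem} applies directly with weights $a_i=2(\sum_j b_j)(1+\lossLG/m)/(n-W)$ (respectively $2(\sum_j b_j)/(m(n-W))$), which is exactly how the paper obtains the constants $32$ and $144$ in the stated conditions.
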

\begin{proof}
Define $\tilde{\rho}_{i}^{(2)}$, $\tilde{\rho}_{i}^{(3)}$, $\hat{\rho}_{i}^{(2)}$, and $\hat{\rho}_{i}^{(3)}$ as in Lemmas~\ref{estRho:directEst:combineOneLemma} and \ref{estRho:directEst:combineTwoLemma}. First, we have
\begin{eqnarray}
|\hat{\rho}_{n} - \hat{\rho}_{n}^{(3)}| &\leq& \frac{1}{n-W} \sum_{i=W+1}^{n} |\tilde{\rho}^{(i)} - \tilde{\rho}^{(i)}_{3}| \nonumber \\
&\leq& \frac{1}{n-W} \sum_{i=W+1}^{n} \sum_{j = i-W+1}^{i} b_{j} |\tilde{\rho}_{j} - \tilde{\rho}_{j}^{(3)}| \nonumber \\
&\leq& \frac{1}{n-W} \sum_{i=W+1}^{n} \sum_{j = i-W+1}^{i} b_{j} \left( |\tilde{\rho}_{j} - \tilde{\rho}_{j}^{(2)}| +  |\tilde{\rho}_{j}^{(2)} - \tilde{\rho}_{j}^{(3)}| \right) \nonumber  \\
&\leq& \frac{\sum_{j = 1}^{W} b_{j}}{n-W} \sum_{i=2}^{n}  \left( |\tilde{\rho}_{i} - \tilde{\rho}_{i}^{(2)}| +  |\tilde{\rho}_{i}^{(2)} - \tilde{\rho}_{i}^{(3)}| \right) \nonumber 
\end{eqnarray}
Second, define
\[
U_{i} \triangleq \| \bx_{i} - \tilde{\bx}_{i} \|
\]
and
\[
V_{i} \triangleq \Bigg\| \frac{1}{\numIter_{i}} \sum_{\iterIndex=1}^{\numIter_{i}} \left( \nabla_{\bx} \lossFunc(\tilde{\bx}_{i},\bz_{i}(\iterIndex)) - \nabla f_{i}(\tilde{\bx}_{i}) \right) \Bigg\|
\]
Then we have
\begin{eqnarray}
|\tilde{\rho}_{i} - \tilde{\rho}_{i}^{(2)}| &\leq& \| \bx_{i} - \tilde{\bx}_{i} \| +  \frac{1}{m} \Bigg\| \frac{1}{\numIter_{i}} \sum_{\iterIndex=1}^{\numIter_{i}} \left( \nabla_{\bx} \lossFunc(\bx_{i},\bz_{i}(\iterIndex))  -  \nabla_{\bx} \lossFunc(\tilde{\bx}_{i},\bz_{i}(\iterIndex)) \right) \Bigg\|  \nonumber \\
&\leq&  \left(1 + \frac{\lossLG}{m} \right) (U_{i} + U_{i-1}) \nonumber
\end{eqnarray}
and
\begin{eqnarray}
|\tilde{\rho}_{i}^{(2)} - \tilde{\rho}_{i}^{(3)}| &\leq& \frac{1}{m} \left(V_{i} + V_{i-1} \right) \nonumber
\end{eqnarray}
Then it follows that
\begin{eqnarray}
|\hat{\rho}_{n} - \hat{\rho}_{n}^{(3)}| &\leq& \frac{\sum_{j = 1}^{W} b_{j}}{n-W} \sum_{i=2}^{n}  \left( |\tilde{\rho}_{i} - \tilde{\rho}_{i}^{(2)}| +  |\tilde{\rho}_{i}^{(2)} - \tilde{\rho}_{i}^{(3)}| \right) \nonumber \\
&\leq& \frac{2\left( 1 + \frac{\lossLG}{m} \right) \sum_{j = 1}^{W} b_{j}}{n-W} \sum_{i=1}^{n} U_{i}  + \frac{2 \sum_{j = 1}^{W} b_{j}}{m(n-W)} \sum_{i=1}^{n} V_{i} \nonumber
\end{eqnarray}
Suppose that
\[
\frac{2\left( 1 + \frac{\lossLG}{m} \right) \sum_{j = 1}^{W} b_{j}}{n-W} \sum_{i=1}^{n} \mathbb{E}\left[ U_{i} \;|\; \mathcal{F}_{i-1} \right] \leq \hat{U}_{n}
\]
and
\[
\frac{2 \sum_{j = 1}^{W} b_{j}}{m(n-W)} \sum_{i=1}^{n}  \mathbb{E}\left[ V_{i} \;|\; \mathcal{F}_{i-1} \right] \leq \hat{V}_{n}
\]
Then it holds that
\begin{align}
\mathbb{P}&\left\{ |\hat{\rho}_{n} - \hat{\rho}_{n}^{(3)}| > \hat{U}_{n} + \hat{V}_{n} + t \right\} \nonumber \\
&\;\;\; \leq \mathbb{P}\left\{ \frac{2\left( 1 + \frac{\lossLG}{m} \right) \sum_{j = 1}^{W} b_{j}}{n-W} \sum_{i=1}^{n} U_{i}  + \frac{2 \sum_{j = 1}^{W} b_{j}}{m(n-W)} \sum_{i=1}^{n} V_{i} > \hat{U}_{n} + \hat{V}_{n} + t \right\} \nonumber \\
&\;\;\; \leq \mathbb{P}\left\{ \frac{2\left( 1 + \frac{\lossLG}{m} \right) \sum_{j = 1}^{W} b_{j}}{n-W} \sum_{i=1}^{n} U_{i}  > \hat{U}_{n} + \frac{t}{2} \right\} + \mathbb{P}\left\{ \frac{2 \sum_{j = 1}^{W} b_{j}}{m(n-W)} \sum_{i=1}^{n} V_{i} > \hat{V}_{n} + \frac{t}{2} \right\} \nonumber
\end{align}
We can apply Lemma~\ref{subgauss:subgaussDepLem} to each term to yield
\begin{equation*}
\mathbb{P}\left\{ \frac{2\left( 1 + \frac{\lossLG}{m} \right) \sum_{j = 1}^{W} b_{j}}{n-W} \sum_{i=1}^{n} U_{i}  > \hat{U}_{n} + \frac{t}{2} \right\} \leq \exp\left\{ -\frac{(n-W)^2 t^2}{32 n \left(1+\frac{\lossLG}{m} \right)^{2} \left( \sum_{j=1}^{W} b_{j} \right)^{2} \text{diam}^{2}(\xSp)} \right\}
\end{equation*}
and
\begin{equation*}
\mathbb{P}\left\{ \frac{2 \sum_{j = 1}^{W} b_{j}}{m(n-W)} \sum_{i=1}^{n} V_{i} > \hat{V}_{n} + \frac{t}{2} \right\} \leq \exp\left\{ -\frac{(n-W)^2 m^{2} t^2}{144 n C_{g} \left( \sum_{j=1}^{W} b_{j} \right)^{2}} \right\}
\end{equation*}
Then it holds that
\begin{align}
\mathbb{P}&\left\{ |\hat{\rho}_{n} - \hat{\rho}_{n}^{(3)}| > \hat{U}_{n} + \hat{V}_{n} + t \right\} \nonumber \\
&\;\;\; \leq \exp\left\{ -\frac{(n-W)^2 t^2}{32 n \left(1+\frac{\lossLG}{m} \right)^{2} \left( \sum_{j=1}^{W} b_{j} \right)^{2} \text{diam}^{2}(\xSp)} \right\} + \exp\left\{ -\frac{(n-W)^2 m^{2} t^2}{144 n C_{g} \left( \sum_{j=1}^{W} b_{j} \right)^{2}} \right\} \nonumber
\end{align}
We have by straightforward computation
\[
\hat{U}_{n} = \frac{2\left( 1 + \frac{\lossLG}{m} \right) \sum_{j = 1}^{W} b_{j}}{n-W} \sum_{i=1}^{n} C(\numIter_{i})
\]
and
\[
\hat{V}_{n} = \frac{2 \sum_{j = 1}^{W} b_{j}}{m(n-W)} \sum_{i=1}^{n} \sqrt{\frac{C_{g}}{d \numIter_{i}}}
\]
Then it holds that
\begin{align}
\sum_{n=W+1}^{\infty} &\mathbb{P}\left\{ \hat{\rho}_{n} < \rho - \hat{U}_{n} - \hat{V}_{n} - t_{n} \right\} \nonumber \\
&\;\;\;\; \leq \sum_{n=W+1}^{\infty} \mathbb{P}\left\{ \hat{\rho}_{n} < \hat{\rho}_{n}^{(3)} - \hat{U}_{n} - \hat{V}_{n} - t_{n} \right\} \nonumber \\
&\;\;\;\; \leq \sum_{n=W+1}^{\infty} \mathbb{P}\left\{ |\hat{\rho}_{n} - \hat{\rho}_{n}^{(3)}| > \hat{U}_{n} + \hat{V}_{n} + t_{n} \right\} \nonumber \\
&\;\;\;\; \leq \sum_{n=W+1}^{\infty} \exp\left\{ -\frac{(n-W)^2 t_{n}^2}{32 n \left(1+\frac{\lossLG}{m} \right)^{2} \left( \sum_{j=1}^{W} b_{j} \right)^{2} \text{diam}^{2}(\xSp)} \right\} + \sum_{n=W+1}^{\infty} \exp\left\{ -\frac{(n-W)^2 m^{2} t_{n}^2}{144 n C_{g} \left( \sum_{j=1}^{W} b_{j} \right)^{2}} \right\} \nonumber \\
&\;\;\;\; < \infty \nonumber
\end{align}
By the Borel-Cantelli lemma, it follows that for all $n$ large enough
\[
\hat{\rho}_{n} + \hat{U}_{n} + \hat{V}_{n} + t_{n} \leq \rho
\]
almost surely.
\end{proof}

\subsection{Parameter Estimation}
We may need to estimate parameters of the functions $\{f_{n}\}$ such as the strong convexity parameter $m$ to compute $b(d_{0},\numIter)$. We need the following assumption on our bound:
\begin{description}
\item[D.1 \label{probState:assumpD1}] Suppose that our bound $b(d_{0},\numIter,\psi)$ is parameterized by $\psi$, which depends on properties of the function $\lossFunc(\bx,\bz)$ and the distributions $\{p_{n}\}_{n=1}^{\infty}$. Suppose that
\[
\psi_{1} \leq \psi_{2} \;\;\Leftrightarrow\;\; b(d_{0},\numIter,\psi_{1}) \leq b(d_{0},\numIter,\psi_{2})
\]
\item[D.2 \label{probState:assumpD2}] There exists a true set of parameters $\psi^{*}$ such that
\[
\psi_{n} = \psi^{*} \;\;\;\; \forall n \geq 1
\]
\item[D.3 \label{probState:assumpD3}] The spaces $\mathcal{X}$ and $\mathcal{Z}$ are compact
\item[D.4 \label{probState:assumpD4}] There exists a constant $L$ such that
\[
\| \nabla_{\bx} \lossFunc(\bx,\bz) - \nabla_{\bx} \lossFunc(\tilde{\bx},\bz) \| \leq L \| \bx - \tilde{\bx} \|
\]
\item[D.5 \label{probState:assumpD5}] Suppose that we know that the parameters $\psi \in \mathcal{P}$ with $\mathcal{P}$ compact

\item[D.6 \label{probState:assumpD6}] Suppose that $\nabla f_{n}(\bx_{n})$ has Lipschitz continuous gradients with modulus $M$

\end{description}
As a consequence of Assumption~\ref{probState:assumpD4}, it follows that there exists a constant $G$ such that there exists a constant $G$ such that
\[
\| \nabla_{\bx} \lossFunc(\bx,\bz) \| \leq G \;\;\;\; \forall \bx \in \xSp, \bz \in \mathcal{Z}
\]
Satisfying Assumption~\ref{probState:assumpD5} is usually easy due to the compactness assumptions in Assumption~\ref{probState:assumpD4}.

In most cases, we have
\[
\psi = \left[ \begin{array}{c}
-m \\
M \\
A \\
B
\end{array} \right]
\]
where $m$ is the parameter of strong convexity, $M$ is the Lipschitz gradient modulus, and the pair $(A,B)$ controls gradient growth, i.e.,
\[
\mathbb{E}\| \nabla_{\bx} \lossFunc(\bx,\bz) \|^{2} \leq A +  B \| \bx-  \bx^{*} \|^{2}
\]
We parameterize using $-m$, since smaller $m$ increase the bound $b(d_{0},\numIter)$. We present several general methods for estimating these parameters, although in practice, problem specific estimators based on the form of the function may offer better performance. As an example, we present problem specific estimates for 
\[
\lossFunc(\bx,\bz) = \frac{1}{2} \left( y - \bw^{\top} \bx \right)^{2} + \frac{1}{2} \lambda \| \bx \|^{2}
\]

As in estimating $\rho$, we produce one time instant estimates $\tilde{m}_{i}$, $\tilde{M}_{i}$, $\tilde{A}_{i}$, and $\tilde{B}_{i}$ at time $i$ and combine them. We only examine the case under Assumption~\ref{probState:assumpD4}, although we could examine an inequality constraints as with estimating $\rho$. We combine estimates by averaging to yield
\begin{enumerate}
\item $\hat{m}_{n} = \frac{1}{n} \sum_{i=1}^{n} \tilde{m}_{i}$
\item $\hat{M}_{n} = \frac{1}{n} \sum_{i=1}^{n} \tilde{M}_{i}$
\item $\hat{A}_{n} = \frac{1}{n} \sum_{i=1}^{n} \tilde{A}_{i}$
\item $\hat{B}_{n} = \frac{1}{n} \sum_{i=1}^{n} \tilde{B}_{i}$
\end{enumerate}

\subsubsection{Estimating Strong Convexity Parameter and Lipschitz Gradient Modulus}

We seek one step estimators $\tilde{m}_{n}$ and $\tilde{M}_{n}$ such that
\[
\mathbb{E}[\tilde{m}_{n} \;|\; \mathcal{K}_{n-1}  ] \leq m
\]
and
\[
\mathbb{E}[\tilde{M}_{n} \;|\; \mathcal{K}_{n-1} ] \geq M
\]
with $\{\mathcal{K}_{n}\}$ defined in \eqref{estRho:H0SigAlg}.

\noindent \emph{Hessian Method:} We exploit the fact that
\[
\nabla_{\bx\bx}^{2} f_{n}(\bx) \succeq m \bm{I} \;\;\;\; \forall \bx \in \xSp
\] 
This in turn implies that
\[
\lambda_{\text{min}} \left( \nabla_{\bx\bx}^{2} f_{n}(\bx) \right) \geq m \;\;\;\; \forall \bx \in \xSp
\]
This suggests that given $\{\bz_{n}(\iterIndex)\}_{\iterIndex=1}^{\numIter_{n}}$ we set
\[
\tilde{m}_{n} \triangleq \min_{\bx \in \xSp} \lambda_{\text{min}} \left( \frac{1}{\numIter_{n}} \sum_{\iterIndex=1}^{\numIter_{n}} \nabla_{\bx \bx}^{2} \lossFunc(\bx,\bz_{n}(\iterIndex)) \right)
\]
Since
\[
\lambda_{\text{min}}(A) = \min_{\bm{v}:\|\bm{v}\|=1} \inprod{A \bm{v}}{\bm{v}},
\]
$\lambda_{\text{min}}(A)$ is a concave function of $A$. Then by Jensen's inequality, we have
\begin{eqnarray}
\mathbb{E}[\tilde{m}_{n}] &=& \mathbb{E}\left[ \min_{\bx \in \xSp} \lambda_{\text{min}} \left( \frac{1}{\numIter_{n}} \sum_{\iterIndex=1}^{\numIter_{n}} \nabla_{\bx \bx}^{2} \lossFunc(\bx,\bz_{n}(\iterIndex)) \right) \;\bigg|\; \mathcal{K}_{n-1}  \right] \nonumber \\
&\leq& \min_{\bx \in \xSp} \mathbb{E}\left[ \lambda_{\text{min}} \left( \frac{1}{\numIter_{n}} \sum_{\iterIndex=1}^{\numIter_{n}} \nabla_{\bx \bx}^{2} \lossFunc(\bx,\bz_{n}(\iterIndex)) \right) \;\bigg|\; \mathcal{K}_{n-1} \right] \nonumber \\
&\leq& \min_{\bx \in \xSp} \lambda_{\text{min}} \left( \mathbb{E}\left[ \frac{1}{\numIter_{n}} \sum_{\iterIndex=1}^{\numIter_{n}} \nabla_{\bx \bx}^{2} \lossFunc(\bx,\bz_{n}(\iterIndex)) \;\bigg|\; \mathcal{K}_{n-1} \right] \right)  \nonumber \\
&=& \min_{\bx \in \xSp} \lambda_{\text{min}} \left( \nabla_{\bx \bx}^{2} f_{n}(\bx)  \right)  \nonumber \\
&=& m \nonumber
\end{eqnarray}
Similarly, we can set
\[
\tilde{M}_{n} \triangleq \max_{\bx \in \xSp} \lambda_{\text{max}} \left( \frac{1}{\numIter_{n}} \sum_{\iterIndex=1}^{\numIter_{n}} \nabla_{\bx \bx}^{2} \lossFunc(\bx,\bz_{n}(\iterIndex)) \right)
\]
Since
\[
\lambda_{\text{max}}(A) = \max_{\bm{v}:\|\bm{v}\|=1} \inprod{A \bm{v}}{\bm{v}},
\]
$\lambda_{\text{max}}(A)$ is a convex function of $A$. By Jensen's inequality, it holds that
\[
\mathbb{E}[\tilde{M}_{n} \;|\; \mathcal{K}_{n-1} ] \geq M
\]

\emph{Gradient Method To Compute $\tilde{m}_{n}$:} To actually minimize over $\bx$, we can use gradient descent. To apply gradient descent, we use eigenvalue perturbation results \cite{Trefethen1997}. Suppose that we have a base matrix $T_{0}$ with eigenvectors $\bm{v}_{0i}$ and eigenvalues $\lambda_{0i}$. We want to find the eigenvectors $\bv_{i}$ and eigenvalues $\lambda_{i}$ of a perturbed matrix $T$:
\begin{eqnarray}
\bm{T}_{0} \bm{v}_{0i} &=& \lambda_{0i} \bm{v}_{0i} \nonumber \\
\bm{T} \bm{v}_{i} &=& \lambda_{i} \bm{v}_{i} \nonumber
\end{eqnarray}
In particular, we want to relate $\lambda_{0i}$ to $\lambda_{i}$. With
\[
\delta \bm{T} \triangleq \bm{T} - \bm{T}_{0},
\]
we have
\[
\delta \lambda_{i} = \bm{v}_{0i}^{\top} \left( \delta \bm{T}  \right) \bm{v}_{0i}
\]
and
\[
\frac{\partial \lambda_{i}}{\partial \bm{T}_{ij}} = \bm{v}_{0i}(i) \bm{v}_{0j} (2 - \delta_{ij})
\]
Suppose we are given a matrix-valued function $\bm{T}(x)$ with
\[
\bm{T}(\bx) \bm{v}(\bx) = \lambda_{\text{min}}(\bx) \bm{v}(\bx)
\]
Then it holds that
\begin{eqnarray}
\nabla_{\bx} \lambda_{\text{min}}\left( \bm{T}(\bx) \right) &=& \sum_{i,j} \frac{\partial \lambda_{\text{min}}}{\partial \bm{T}_{ij}} \nabla_{\bx} \bm{T}_{ij}(\bx) \nonumber \\
&=& \sum_{i,j} \bm{v}_{i}(\bx) \bm{v}_{j}(\bx) (2 - \delta_{ij})   \nabla_{\bx} \bm{T}_{ij}(\bx) \nonumber
\end{eqnarray}
Then we can use gradient descent to solve
\[
\min_{\bx \in \xSp} \lambda_{\text{min}} \left( \frac{1}{\numIter_{n}} \sum_{\iterIndex=1}^{\numIter_{n}} \nabla_{\bx} \lossFunc(\bx,\bz_{n}(\iterIndex))  \right)
\]
Starting from any $\bx(0)$, we can compute
\[
\bx(p) = \Pi_{\xSp}\left[ \bx(p-1) - \mu \nabla_{\bx} \lambda_{\text{min}} \left( \frac{1}{\numIter_{n}} \sum_{\iterIndex=1}^{\numIter_{n}} \nabla_{\bx \bx}^{2}\lossFunc(\bx,\bz_{n}(\iterIndex))  \right)  \right] \;\;\;\; p =1 ,\ldots,P
\]
and set
\begin{equation}
\label{paramEst:strCvxTwo}
\hat{m}_{n} \triangleq \lambda_{\text{min}} \left( \frac{1}{\numIter_{n}} \sum_{\iterIndex=1}^{\numIter_{n}} \nabla_{\bx \bx}^{2}\lossFunc(\bx(P),\bz_{n}(\iterIndex)) \right)
\end{equation}

\noindent \emph{Heuristic Method:} For any two points $\bx$ and $\by$, we have by strong convexity
\[
f_{n}(\by) \geq f_{n}(\bx) + \inprod{\nabla f_{n}(\bx)}{\by -\bx} + \frac{1}{2} m \|\by - \bx\|^{2}
\]
Suppose that we have $N$ points $\bx(1),\ldots,\bx(N)$. Then we know that for any two distinct points $\bx_{i}$ and $\bx_{j}$
\[
m \leq \frac{f_{n}(\bx(i)) - f_{n}(\bx(j)) - \inprod{\nabla f_{n}(\bx(j))}{\bx(i) - \bx(j)}}{\frac{1}{2} \| \bx(i) - \bx(j) \|^{2}}
\]
This suggests the estimator
\begin{equation}
\label{paramEst:strCvxOne}
\hat{m}_{n} \triangleq \min_{i \neq j} \frac{ \frac{1}{\numIter_{n}} \sum_{\iterIndex=1}^{\numIter_{n}} \lossFunc(\bx(i),\bz_{n}(\iterIndex)) - \frac{1}{\numIter_{n}} \sum_{\iterIndex=1}^{\numIter_{n}} \lossFunc(\bx(j),\bz_{n}(\iterIndex)) -\inprod{ \frac{1}{\numIter_{n}} \sum_{\iterIndex=1}^{\numIter_{n}} \nabla_{\bx} \lossFunc(\bx(j),\bz_{n}(\iterIndex)) }{ \bx(i) - \bx(j) } }{\frac{1}{2} \| \bx(i) - \bx(j) \|^{2}}
\end{equation}
for the strong convexity parameter. Then we have
\begin{align}
\mathbb{E}&[\hat{m}_{n}] \nonumber \\
&\;\;\;\; = \mathbb{E}\left[ \min_{i \neq j} \frac{ \frac{1}{\numIter_{n}} \sum_{\iterIndex=1}^{\numIter_{n}} \lossFunc(\bx(i),\bz_{n}(\iterIndex)) - \frac{1}{\numIter_{n}} \sum_{\iterIndex=1}^{\numIter_{n}} \lossFunc(\bx(j),\bz_{n}(\iterIndex)) -\inprod{ \frac{1}{\numIter_{n}} \sum_{\iterIndex=1}^{\numIter_{n}} \nabla_{\bx} \lossFunc(\bx(j),\bz_{n}(\iterIndex)) }{ \bx(i) - \bx(j) } }{\frac{1}{2} \| \bx(i) - \bx(j) \|^{2}}  \right] \nonumber \\
&\;\;\;\; \leq \min_{i \neq j} \mathbb{E}\left[ \frac{ \frac{1}{\numIter_{n}} \sum_{\iterIndex=1}^{\numIter_{n}} \lossFunc(\bx(i),\bz_{n}(\iterIndex)) - \frac{1}{\numIter_{n}} \sum_{\iterIndex=1}^{\numIter_{n}} \lossFunc(\bx(j),\bz_{n}(\iterIndex)) -\inprod{ \frac{1}{\numIter_{n}} \sum_{\iterIndex=1}^{\numIter_{n}} \nabla_{\bx} \lossFunc(\bx(j),\bz_{n}(\iterIndex)) }{ \bx(i) - \bx(j) } }{\frac{1}{2} \| \bx(i) - \bx(j) \|^{2}}  \right] \nonumber \\
&\;\;\;\; \leq \min_{i \neq j} \frac{f_{n}(\bx(i)) - f_{n}(\bx(j)) - \inprod{\nabla f_{n}(\bx(j))}{\bx(i) - \bx(j)}}{\frac{1}{2} \| \bx(i) - \bx(j) \|^{2}} \nonumber
\end{align}
It is difficult to compare this estimator to $m$ exactly. All we can say is that
\[
m \leq \min_{i \neq j} \frac{f_{n}(\bx(i)) - f_{n}(\bx(j)) - \inprod{\nabla f_{n}(\bx(j))}{\bx(i) - \bx(j)}}{\frac{1}{2} \| \bx(i) - \bx(j) \|^{2}}
\]
as well. In practice, this method produces estimates close to $m$.

Similarly, we can set
\begin{equation}
\label{paramEst:lipGradOne}
\hat{M}_{n} \triangleq \max_{i \neq j} \frac{ \frac{1}{\numIter_{n}} \sum_{\iterIndex=1}^{\numIter_{n}} \lossFunc(\bx(i),\bz_{n}(\iterIndex)) - \frac{1}{\numIter_{n}} \sum_{\iterIndex=1}^{\numIter_{n}} \lossFunc(\bx(j),\bz_{n}(\iterIndex)) -\inprod{ \frac{1}{\numIter_{n}} \sum_{\iterIndex=1}^{\numIter_{n}} \nabla_{\bx} \lossFunc(\bx(j),\bz_{n}(\iterIndex)) }{ \bx(i) - \bx(j) } }{\frac{1}{2} \| \bx(i) - \bx(j) \|^{2}}
\end{equation}

\noindent \emph{Problem Specific:} For the penalized quadratic, we have
\[
\nabla_{\bx \bx}^{2} \lossFunc(\bx,\bz) = \lambda \bm{I} + \bm{w} \bm{w}^{\top}
\]
so
\[
\nabla_{\bx \bx}^{2} f_{n}(\bx) = \lambda \bm{I} + \mathbb{E}[\bm{w}_{n} \bm{w}_{n}^{\top}]
\]
This suggests the simple closed-form estimates
\[
\tilde{m}_{n} = \lambda + \lambda_{\text{min}}\left( \frac{1}{\numIter_{n}} \sum_{\iterIndex=1}^{\numIter_{n}} \bm{w}_{n}(\iterIndex) \bm{w}_{n}(\iterIndex)^{\top}   \right)
\]
and
\[
\tilde{M}_{n} = \lambda + \lambda_{\text{max}}\left( \frac{1}{\numIter_{n}} \sum_{\iterIndex=1}^{\numIter_{n}} \bm{w}_{n}(\iterIndex) \bm{w}_{n}(\iterIndex)^{\top}   \right)
\]
Again, by Jensen's inequality, it holds that
\[
\mathbb{E}[\tilde{m}_{n} \;|\; \mathcal{K}_{n-1}] \leq m
\]
and
\[
\mathbb{E}[\tilde{M}_{n} \;|\; \mathcal{K}_{n-1}] \geq M
\]

\noindent \emph{Combining Estimates:} We now look at combining the single time instant estimates of the strong convexity parameter and the Lipschitz gradient modulus.

\begin{lem}
\label{estRho:paramEst:strCvx}
Choose $t_{n}$ such that for all $C > 0$ it holds that
\[
\sum_{n=1}^{\infty} e^{-Cnt_{n}^{2}} < + \infty
\]
Then for all $n$ large enough it holds that
\begin{enumerate}
\item $\hat{m}_{n} - t_{n} \leq m$
\item $\hat{M}_{n} + t_{n} \geq M$
\end{enumerate}
almost surely.
\end{lem}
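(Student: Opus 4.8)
The plan is to treat $\hat{m}_{n}$ and $\hat{M}_{n}$ as empirical averages of a martingale-difference sequence adapted to the filtration $\{\mathcal{K}_{n}\}$ from \eqref{estRho:H0SigAlg}, and then to invoke the sub-Gaussian concentration machinery of Lemma~\ref{subgauss:subgaussDepLem} exactly as in the proof of Lemma~\ref{estRho:ipmEstimate:concLemma}, but with the simpler one-step dependence structure present here. I would carry out the argument for $\hat{m}_{n}$ in full; the claim for $\hat{M}_{n}$ follows by the symmetric argument with signs reversed.

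First I would center the one-step estimates. Since $\numIter_{i}$ is $\mathcal{K}_{i-1}$-measurable and $\tilde{m}_{i}$ depends only on $\numIter_{i}$ together with the fresh samples $\{\bz_{i}(\iterIndex)\}_{\iterIndex=1}^{\numIter_{i}}$, each $\tilde{m}_{i}$ is $\mathcal{K}_{i}$-measurable and, by construction, $\mathbb{E}[\tilde{m}_{i} \mid \mathcal{K}_{i-1}] \leq m$. Defining $V_{i} = \tilde{m}_{i} - \mathbb{E}[\tilde{m}_{i} \mid \mathcal{K}_{i-1}]$ gives $\mathbb{E}[V_{i} \mid \mathcal{K}_{i-1}] = 0$, and since $\frac{1}{n}\sum_{i=1}^{n} \mathbb{E}[\tilde{m}_{i} \mid \mathcal{K}_{i-1}] \leq m$, the bad event $\{\hat{m}_{n} > m + t_{n}\}$ is contained in $\{\sum_{i=1}^{n} V_{i} > n t_{n}\}$.

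The key structural input, and the step I expect to be the main obstacle, is uniform boundedness of the $V_{i}$ by an absolute constant. This is where Assumptions~\ref{probState:assumpD3}--\ref{probState:assumpD4} enter: compactness of $\xSp$ and $\mathcal{Z}$ together with the Lipschitz gradient condition force the eigenvalues of every sample Hessian $\frac{1}{\numIter_{i}}\sum_{\iterIndex} \nabla^{2}_{\bx\bx}\lossFunc(\bx,\bz_{i}(\iterIndex))$ (and, for the heuristic estimator, the difference quotients in \eqref{paramEst:strCvxOne}) to lie in a fixed interval $[a,b]$ independent of $i$. Hence $|V_{i}| \leq b-a$ deterministically. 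The conditional Hoeffding Lemma~\ref{estRho:condHoeffdingLemma} then yields the sub-Gaussian conditional MGF bound $\mathbb{E}[e^{s V_{i}} \mid \mathcal{K}_{i-1}] \leq \exp\{\tfrac{1}{8}(b-a)^{2} s^{2}\}$, so that $\{V_{i}\}$ with the filtration $\{\mathcal{K}_{i}\}$ satisfies the hypotheses of Lemma~\ref{subgauss:subgaussDepLem}.

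Finally I would apply Lemma~\ref{subgauss:subgaussDepLem} with weights $a_{i} = 1/n$, so $\|\bm{a}\|_{2}^{2} = 1/n$, giving a tail bound of the form $\mathbb{P}\{\hat{m}_{n} > m + t_{n}\} \leq \mathbb{P}\{\frac{1}{n}\sum_{i=1}^{n} V_{i} > t_{n}\} \leq \exp\{- C n t_{n}^{2}\}$ for a constant $C$ depending only on $b-a$. Summing over $n$ and using the hypothesis $\sum_{n} e^{-C n t_{n}^{2}} < \infty$, the Borel--Cantelli Lemma gives $\hat{m}_{n} \leq m + t_{n}$ for all $n$ large enough almost surely. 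Running the identical argument on $-\tilde{M}_{i}$ (whose one-step estimates satisfy $\mathbb{E}[\tilde{M}_{i} \mid \mathcal{K}_{i-1}] \geq M$ and are bounded by the same compactness reasoning) yields $\hat{M}_{n} \geq M - t_{n}$ eventually, completing both claims. Once the uniform bound $b-a$ is established, the remainder is a direct transcription of the earlier concentration proofs.
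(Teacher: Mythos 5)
Your proposal is correct and follows essentially the same route as the paper: uniform boundedness of the one-step estimates (the paper cites compactness of the parameter space $\mathcal{P}$, you derive it from Assumptions~\ref{probState:assumpD3}--\ref{probState:assumpD4}), then the conditional Hoeffding Lemma~\ref{estRho:condHoeffdingLemma}, Lemma~\ref{subgauss:subgaussDepLem} with weights $1/n$, and Borel--Cantelli. If anything, your version is slightly cleaner in that you explicitly center $V_{i} = \tilde{m}_{i} - \mathbb{E}[\tilde{m}_{i} \mid \mathcal{K}_{i-1}]$ before invoking the Hoeffding/sub-Gaussian machinery, which is what Lemma~\ref{estRho:condHoeffdingLemma} formally requires.
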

\begin{proof}
 By the compactness of the space $\mathcal{P}$ containing $\psi$, we can apply the dependent version of Hoeffding's lemma (Lemma~\ref{estRho:condHoeffdingLemma}) to yield
\[
\mathbb{E}\left[ e^{s \tilde{m}_{i}} \;\big|\; \mathcal{K}_{i-1} \right] \leq \exp\left\{ \frac{1}{2} \sigma_{m}^{2} s^{2}  \right\}
\]
and
\[
\mathbb{E}\left[ e^{s \tilde{M}_{i}} \;\big|\; \mathcal{K}_{i-1} \right] \leq \exp\left\{ \frac{1}{2} \sigma_{M}^{2} s^{2}  \right\}
\]
for some constants $\sigma_{m}^{2}$ and $\sigma_{M}^{2}$ derived from Hoeffding's lemma. Then applying Lemma~\ref{subgauss:subgaussDepLem}, it follows that
\[
\mathbb{P}\left\{ \hat{m}_{n} > \frac{1}{n} \sum_{i=1}^{n} \mathbb{E}[\tilde{m}_{i} \;|\; \mathcal{K}_{i-1}] + t_{n} \right\} \leq \exp\left\{ - \frac{nt_{n} ^{2}}{2 \sigma_{m}^{2}} \right\}
\]
We know that
\[
\frac{1}{n} \sum_{i=1}^{n} \mathbb{E}[\tilde{m}_{i} \;|\; \mathcal{K}_{i-1}] > m
\]
so it follows that
\[
\mathbb{P}\left\{ \hat{m}_{n} > m + t_{n} \right\} \leq \exp\left\{ - \frac{nt_{n} ^{2}}{2 \sigma_{m}^{2}} \right\}
\]
Similarly, for the Lipschitz gradient modulus, it holds that
\[
\mathbb{P}\left\{ \hat{M}_{n} < M - t_{n} \right\} \leq \exp\left\{ - \frac{nt_{n} ^{2}}{2 \sigma_{M}^{2}} \right\}
\]
As before, we have
\[
\sum_{n=1}^{\infty} \mathbb{P}\left\{ \hat{m}_{n} > m + t_{n} \right\} \leq \sum_{n=1}^{\infty} \exp\left\{ - \frac{n t_{n}^{2}}{2 \sigma_{m}^{2}} \right\} < + \infty
\]
and
\[
\sum_{n=1}^{\infty} \mathbb{P}\left\{ \hat{M}_{n} < M - t_{n} \right\} \leq \sum_{n=1}^{\infty} \exp\left\{ - \frac{n t_{n}^{2}}{2 \sigma_{M}^{2}} \right\} < + \infty
\]
to ensure that almost surely for all $n$ large enough it holds that 
\[
\hat{m}_{n} - t_{n} \leq m
\]
and
\[
\hat{M}_{n} + t_{n} \geq m
\]
\end{proof}
For Lemma~\ref{estRho:paramEst:strCvx}, we need $t_{n}$ to decay no faster that $\mathcal{O}(n^{-1/2})$.

\subsubsection{Estimating Gradient Parameters}

From Assumption~\ref{probState:assumpD6}, it holds that
\begin{eqnarray}
\mathbb{E}\| \nabla_{\bx} \lossFunc(\bx,\bz) \|^{2} &=&  \mathbb{E}\| \nabla_{\bx} \lossFunc(\bx^{*},\bz)  + \left( \nabla_{\bx} \lossFunc(\bx,\bz) - \nabla_{\bx} \lossFunc(\bx^{*},\bz) \right) \|^{2} \nonumber \\
&\leq& 2 \mathbb{E} \| \nabla_{\bx} \lossFunc(\bx^{*},\bz) \|^{2} + 2 \mathbb{E}\| \nabla_{\bx} \lossFunc(\bx,\bz) - \nabla_{\bx} \lossFunc(\bx^{*},\bz) \|^{2} \nonumber \\
&\leq& 2 \mathbb{E} \| \nabla_{\bx} \lossFunc(\bx^{*},\bz) \|^{2} + 2 M^{2}  \| \bx - \bx^{*} \|^{2} \nonumber
\end{eqnarray}
Thus, we can set
\[
B = 2 M^{2}
\]
and
\[
A = 2 \mathbb{E} \| \nabla_{\bx} \lossFunc(\bx^{*},\bz) \|^{2}
\]

This suggests that given an estimate $\tilde{M}_{n}$ for $M$, we set
\[
\tilde{B}_{n} = 2 \tilde{M}_{n}^{2}
\]
Then by Jensen's inequality, we have
\begin{eqnarray}
\mathbb{E}[\tilde{B}_{n} \;|\; \mathcal{K}_{n-1}] &=&  2 \mathbb{E}[\tilde{M}_{n}^{2} \;|\; \mathcal{K}_{n-1}] \nonumber \\
&\geq& 2 \left( \mathbb{E}[\tilde{B}_{n} \;|\; \mathcal{K}_{n-1}] \right)^{2}  \nonumber \\
&\geq& 2 M^{2} \nonumber \\
&=& B \nonumber
\end{eqnarray}

\begin{lem}
\label{estRho:paramEst:gradB}
Choose $t_{n}$ such that for all $C > 0$ it holds that
\[
\sum_{n=1}^{\infty} e^{-Cnt_{n}^{2}} < + \infty
\]
Then for all $n$ large enough it holds that
\[
\hat{B}_{n} + t_{n} \geq B
\]
almost surely.
\end{lem}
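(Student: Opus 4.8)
The plan is to reduce this entirely to the Lipschitz-modulus estimate already established in Lemma~\ref{estRho:paramEst:strCvx}, exploiting the fact that $\hat{B}_n$ is built from the $\tilde{M}_i$ through the algebraic relations $\tilde{B}_i = 2\tilde{M}_i^2$ and $B = 2M^2$. No new concentration argument is needed: all of the randomness is inherited from the already-proven almost-sure lower bound on $\hat{M}_n$, and what remains is a short deterministic (pathwise) inequality chain.

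Concretely, I would proceed in the following order. First, observe from the definitions that
\[
\hat{B}_n = \frac{1}{n}\sum_{i=1}^{n} \tilde{B}_i = \frac{2}{n}\sum_{i=1}^{n} \tilde{M}_i^2 \geq 2\left( \frac{1}{n}\sum_{i=1}^{n} \tilde{M}_i \right)^2 = 2\hat{M}_n^2,
\]
where the inequality is the power-mean (Cauchy--Schwarz / Jensen for $x \mapsto x^2$) bound and holds on every sample path. Second, I would invoke Lemma~\ref{estRho:paramEst:strCvx} with the rescaled slack sequence $s_n \triangleq t_n/(4M)$ in place of $t_n$; this sequence still satisfies the hypothesis, since $\sum_n e^{-C n s_n^2} = \sum_n e^{-(C/16M^2)\, n t_n^2} < \infty$ and the original hypothesis was assumed for \emph{all} $C>0$. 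The lemma then yields that for all $n$ large enough, $\hat{M}_n \geq M - s_n$ almost surely. Third, since the loss is strongly convex, $M \geq m > 0$, and because $s_n \to 0$ we have $M - s_n > 0$ for all large $n$; squaring the nonnegative quantities gives $\hat{M}_n^2 \geq (M - s_n)^2 \geq M^2 - 2M s_n$. Combining the three steps,
\[
\hat{B}_n \geq 2\hat{M}_n^2 \geq 2M^2 - 4M s_n = B - t_n,
\]
which is exactly the claim $\hat{B}_n + t_n \geq B$ for all $n$ large enough, almost surely.

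The only delicate point is the squaring step: to pass from $\hat{M}_n \geq M - s_n$ to $\hat{M}_n^2 \geq (M-s_n)^2$ one needs both sides nonnegative, which is why I must restrict to $n$ large enough that $s_n < M$ — this is the same vanishing regime (the ``$t_n$ decays no faster than $\mathcal{O}(n^{-1/2})$'' condition, so that $t_n \to 0$) already used for Lemma~\ref{estRho:paramEst:strCvx}. Everything else is routine bookkeeping; the main conceptual content is simply recognizing that monotonicity of $x\mapsto x^2$ on the nonnegatives together with the power-mean inequality transports the one-sided control on $\hat{M}_n$ into one-sided control on $\hat{B}_n$ with only a constant-factor inflation of the slack.
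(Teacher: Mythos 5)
Your proof is correct, but it takes a genuinely different route from the paper. The paper's proof simply re-runs the concentration template used for $\hat{m}_{n}$ and $\hat{M}_{n}$ on the sequence $\{\tilde{B}_{i}\}$ directly: boundedness of $\tilde{B}_{i}$ (from compactness of $\mathcal{P}$) gives a conditional sub-Gaussian bound via Lemma~\ref{estRho:condHoeffdingLemma}, then Lemma~\ref{subgauss:subgaussDepLem} together with the previously established fact $\mathbb{E}[\tilde{B}_{n}\,|\,\mathcal{K}_{n-1}]\geq B$ yields $\mathbb{P}\{\hat{B}_{n}<B-t_{n}\}\leq e^{-nt_{n}^{2}/(2\sigma_{B}^{2})}$, and Borel--Cantelli finishes. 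You instead make a purely deterministic, pathwise reduction to Lemma~\ref{estRho:paramEst:strCvx}: the power-mean inequality gives $\hat{B}_{n}\geq 2\hat{M}_{n}^{2}$ on every sample path, and the eventual lower bound $\hat{M}_{n}\geq M-s_{n}$ with the rescaled slack $s_{n}=t_{n}/(4M)$ (legitimate, since the summability hypothesis is assumed for every $C>0$) transports through squaring to $\hat{B}_{n}\geq B-t_{n}$. What your approach buys is the elimination of any new probabilistic input --- no fresh concentration constant $\sigma_{B}^{2}$, no conditional-expectation bound for $\tilde{B}_{n}$ --- at the cost of exploiting the specific algebraic form $\tilde{B}_{i}=2\tilde{M}_{i}^{2}$, $B=2M^{2}$; the paper's argument is less elementary but follows a uniform template that would apply to any bounded estimator with upward-biased conditional mean. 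One small point: the hypothesis $\sum_{n}e^{-Cnt_{n}^{2}}<\infty$ does not by itself force $t_{n}\to 0$, so your squaring step should be guarded by noting that whenever $s_{n}\geq M$ one has $t_{n}\geq 4M^{2}>B$ and the claim $\hat{B}_{n}\geq 0>B-t_{n}$ is trivial; with that remark the argument is complete.
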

\begin{proof}
By identical reasoning for the strong convexity and Lipschitz continuous gradients, it holds that
\[
\mathbb{P}\left\{ \hat{B}_{n} < B - t_{n} \right\} \leq \exp\left\{ - \frac{n t_{n}^{2}}{2 \sigma_{B}^{2}} \right\}
\]
Since we have
\[
\sum_{n=1}^{\infty} \exp\left\{ - \frac{n t_{n}^{2}}{2 \sigma_{B}^{2}} \right\} < + \infty
\]
for all $n$ large enough it holds that
\[
\hat{B}_{n} + t_{n} \geq B
\]
almost surely.
\end{proof}

To estimate $A$, consider using a point $\bx$ to approximate $\bx^{*}$. It holds that
\begin{eqnarray}
\mathbb{E} \| \nabla_{\bx} \lossFunc(\bx^{*},\bz) \|^{2} &=& \mathbb{E} \| \nabla_{\bx} \lossFunc(\bx,\bz) + \left( \nabla_{\bx} \lossFunc(\bx^{*},\bz) - \nabla_{\bx} \lossFunc(\bx,\bz) \right) \|^{2} \nonumber \\
&\leq&  2 \mathbb{E} \| \nabla_{\bx} \lossFunc(\bx,\bz) \|^{2} + 2 \mathbb{E} \| \nabla_{\bx} \lossFunc(\bx^{*},\bz) - \nabla_{\bx} \lossFunc(\bx,\bz) \|^{2} \nonumber \\
&\leq&  2 \mathbb{E} \| \nabla_{\bx} \lossFunc(\bx,\bz) \|^{2} + 2 M^{2} \mathbb{E} \| \bx - \bx^{*} \|^{2} \nonumber \\
&\leq&  2 \mathbb{E} \| \nabla_{\bx} \lossFunc(\bx,\bz) \|^{2} + 2 \left(\frac{M}{m} \right)^{2} \| \nabla f(\bx)\|^{2} \nonumber \\
&\leq&  2 \mathbb{E} \| \nabla_{\bx} \lossFunc(\bx,\bz) \|^{2} + 2 \left(\frac{M}{m} \right)^{2} \| \nabla f(\bx)\|^{2} \nonumber
\end{eqnarray}
This suggests the estimate
\[
\tilde{A}_{n}(\bx) = \frac{2}{\numIter_{n}} \sum_{\iterIndex=1}^{\numIter_{n}}  \| \nabla_{\bx} \lossFunc(\bx,\bz_{n}(\iterIndex)) \|^{2} + 4 \left( \frac{\tilde{M}_{n-1} + t_{n-1}}{\tilde{m}_{n-1} - t_{n-1}} \right)^{2} \bigg\| \frac{1}{\numIter_{n}} \sum_{\iterIndex=1}^{\numIter_{n}} \nabla_{\bx} \lossFunc(\bx,\bz_{n}(\iterIndex))  \bigg\|^{2}
\]

\begin{lem}
\label{estRho:paramEst:gradAExp}
For any $\bx$ possibly random but not a function of $\{\bz_{n}(\iterIndex)\}_{\iterIndex=1}^{\numIter_{n}}$ and all $n$ large enough, it holds that
\[
\mathbb{E}[\tilde{A}_{n} \;|\; \mathcal{K}_{n-1} ] \geq A
\]
\end{lem}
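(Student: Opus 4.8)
The plan is to compute the conditional expectation of each of the two summands defining $\tilde{A}_{n}$, given $\mathcal{K}_{n-1}$, and then recognize the result as an upper bound for $A$ derived immediately before the lemma. The structural fact that makes everything factor is that, conditioned on $\mathcal{K}_{n-1}$, the fresh samples $\{\bz_{n}(\iterIndex)\}_{\iterIndex=1}^{\numIter_{n}}$ are iid draws from $p_{n}$ with $\numIter_{n}$ a $\mathcal{K}_{n-1}$-measurable (hence effectively deterministic) count, while the evaluation point $\bx$ and the ratio coefficient built from $\hat{M}_{n-1},\hat{m}_{n-1},t_{n-1}$ are all $\mathcal{K}_{n-1}$-measurable. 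Thus $\bx$ behaves like a fixed point under the conditioning and every $\mathcal{K}_{n-1}$-measurable factor pulls out of the conditional expectation.

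First I would handle the two terms separately. For the first term, independence of the samples from $\mathcal{K}_{n-1}$ gives
\[
\mathbb{E}\!\left[ \frac{2}{\numIter_{n}} \sum_{\iterIndex=1}^{\numIter_{n}} \| \nabla_{\bx} \lossFunc(\bx,\bz_{n}(\iterIndex)) \|^{2} \;\Big|\; \mathcal{K}_{n-1} \right] = 2\, \mathbb{E}\!\left[ \| \nabla_{\bx} \lossFunc(\bx,\bz_{n}) \|^{2} \;\big|\; \mathcal{K}_{n-1} \right].
\]
For the second term, pulling out the $\mathcal{K}_{n-1}$-measurable ratio and applying Jensen's inequality (equivalently $\mathbb{E}\|\bar{X}\|^{2}\ge\|\mathbb{E}\bar{X}\|^{2}$) to the empirical gradient average, whose conditional mean is exactly $\nabla f_{n}(\bx)$, yields
\[
\mathbb{E}\!\left[ \Big\| \tfrac{1}{\numIter_{n}} \sum_{\iterIndex=1}^{\numIter_{n}} \nabla_{\bx}\lossFunc(\bx,\bz_{n}(\iterIndex)) \Big\|^{2} \;\Big|\; \mathcal{K}_{n-1} \right] \ge \| \nabla f_{n}(\bx) \|^{2}.
\]

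Next I would control the coefficient. By Lemma~\ref{estRho:paramEst:strCvx} (applied to the combined estimates at time $n-1$), for all $n$ large enough it holds almost surely that $\hat{M}_{n-1}+t_{n-1}\ge M$ and $0<\hat{m}_{n-1}-t_{n-1}\le m$; combining the numerator lower bound with the positive denominator upper bound gives $\big(\tfrac{\hat{M}_{n-1}+t_{n-1}}{\hat{m}_{n-1}-t_{n-1}}\big)^{2}\ge (M/m)^{2}$. Substituting this and the two displays into the definition of $\tilde{A}_{n}$ gives
\[
\mathbb{E}[\tilde{A}_{n}\;|\;\mathcal{K}_{n-1}] \ge 4\, \mathbb{E}\!\left[ \| \nabla_{\bx}\lossFunc(\bx,\bz_{n}) \|^{2} \;\big|\; \mathcal{K}_{n-1} \right] + 4\Big(\tfrac{M}{m}\Big)^{2} \| \nabla f_{n}(\bx) \|^{2},
\]
which is exactly twice the right-hand side of the bound $\mathbb{E}\|\nabla_{\bx}\lossFunc(\bx_{n}^{*},\bz_{n})\|^{2}\le 2\mathbb{E}\|\nabla_{\bx}\lossFunc(\bx,\bz_{n})\|^{2}+2(M/m)^{2}\|\nabla f_{n}(\bx)\|^{2}$ established just above the lemma. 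Since $A=2\,\mathbb{E}\|\nabla_{\bx}\lossFunc(\bx_{n}^{*},\bz_{n})\|^{2}$, this is precisely $\mathbb{E}[\tilde{A}_{n}\;|\;\mathcal{K}_{n-1}]\ge A$.

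The hard part will be the ratio step, and specifically the positivity of the denominator $\hat{m}_{n-1}-t_{n-1}$: Lemma~\ref{estRho:paramEst:strCvx} as stated supplies only the one-sided bound $\hat{m}_{n-1}-t_{n-1}\le m$, so to guarantee the \emph{direction} of the ratio inequality I must know the denominator is eventually bounded away from zero (if it were negative the squared ratio would no longer be controlled from below by $(M/m)^{2}$). This follows because $\hat{m}_{n-1}$ concentrates around the average of the positive conditional one-step means while $t_{n-1}\to 0$, but it must be argued separately, or the one-step estimate clipped to remain positive. The remaining steps are routine applications of conditional independence and Jensen's inequality, and rely on reading the coefficient in $\tilde{A}_{n}$ as being formed from the combined estimates $\hat{M}_{n-1},\hat{m}_{n-1}$ to which Lemma~\ref{estRho:paramEst:strCvx} applies.
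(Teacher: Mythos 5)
Your proof follows essentially the same route as the paper's: pull the $\mathcal{K}_{n-1}$-measurable ratio coefficient out of the conditional expectation, apply Jensen's inequality to the empirical gradient average (whose conditional mean is $\nabla f_{n}(\bx)$), invoke the eventual almost-sure bound $(\hat{M}_{n-1}+t_{n-1})/(\hat{m}_{n-1}-t_{n-1}) \geq M/m$, and chain with the inequality $\mathbb{E}\|\nabla_{\bx}\lossFunc(\bx^{*},\bz)\|^{2} \leq 2\mathbb{E}\|\nabla_{\bx}\lossFunc(\bx,\bz)\|^{2} + 2(M/m)^{2}\|\nabla f(\bx)\|^{2}$ established just above the lemma. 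The denominator-positivity caveat you flag is a legitimate refinement the paper glosses over, and the coefficient slip in your final display (the first term contributes $2\,\mathbb{E}\|\nabla_{\bx}\lossFunc(\bx,\bz_{n})\|^{2}$, not $4$) mirrors a factor-of-two bookkeeping looseness already present in the paper's own definition of $\tilde{A}_{n}$ and its concluding chain.
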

\begin{proof}
For any $\bx$ possibly random but not a function of $\{\bz_{n}(\iterIndex)\}_{\iterIndex=1}^{\numIter_{n}}$, it holds that
\begin{align}
\mathbb{E}&[\tilde{A}_{n} \;|\; \mathcal{K}_{n-1} ] \nonumber \\
&= \mathbb{E}\left[ \frac{2}{\numIter_{n}} \sum_{\iterIndex=1}^{\numIter_{n}}  \| \nabla_{\bx} \lossFunc(\bx,\bz_{n}(\iterIndex)) \|^{2} +4 \left( \frac{\tilde{M}_{n-1} + t_{n-1}}{\tilde{m}_{n-1} - t_{n-1}} \right)^{2} \bigg\| \frac{1}{\numIter_{n}} \sum_{\iterIndex=1}^{\numIter_{n}} \nabla_{\bx} \lossFunc(\bx,\bz_{n}(\iterIndex))  \bigg\|^{2} \;\bigg|\; \mathcal{K}_{n-1} \right] \nonumber \\
&= \mathbb{E}\left[ \frac{2}{\numIter_{n}} \sum_{\iterIndex=1}^{\numIter_{n}}  \| \nabla_{\bx} \lossFunc(\bx,\bz_{n}(\iterIndex)) \|^{2} \;\bigg|\; \mathcal{K}_{n-1} \right] + 4 \left( \frac{\tilde{M}_{n-1} + t_{n-1}}{\tilde{m}_{n-1} - t_{n-1}} \right)^{2} \mathbb{E}\left[ \bigg\| \frac{1}{\numIter_{n}} \sum_{\iterIndex=1}^{\numIter_{n}} \nabla_{\bx} \lossFunc(\bx,\bz_{n}(\iterIndex))  \bigg\|^{2} \;\bigg|\; \mathcal{K}_{n-1} \right] \nonumber \\
&\geq 2 \mathbb{E}\|\nabla_{\bx} \lossFunc(\bx,\bz_{n}) \|^{2} +  4 \left( \frac{\tilde{M}_{n-1} + t_{n-1}}{\tilde{m}_{n-1} - t_{n-1}} \right)^{2} \| \nabla f_{n} (\bx) \|^{2} \nonumber
\end{align}
The last inequality uses Jensen's inequality. Then by our prior analysis, almost surely for all $n$ sufficiently large it holds that
\[
\frac{\tilde{M}_{n-1} + t_{n-1}}{\tilde{m}_{n-1} - t_{n-1}} \geq \frac{M}{m}
\]
and so for all $n$ sufficiently large
\begin{eqnarray}
\mathbb{E}[\tilde{A}_{n} \;|\; \mathcal{K}_{n-1} ] &\geq& 2 \mathbb{E}\|\nabla_{\bx} \lossFunc(\bx,\bz_{n}) \|^{2} +  4 \left( \frac{M}{m} \right)^{2} \| \nabla f_{n} (\bx) \|^{2} \nonumber \\
&=& 2 \mathbb{E}\| \nabla_{\bx} \lossFunc(\bx_{n}^{*},\bz_{n}) \|^{2} \nonumber \\
&=& A \nonumber
\end{eqnarray}
Therefore, for all $n$ sufficiently large (dependent on estimation of $m$ and $M$), it holds that
\[
\mathbb{E}[\tilde{A}_{n} \;|\; \mathcal{K}_{n-1} ] \geq A
\]
\end{proof}

\noindent \emph{Combining Estimates for $A$:} In practice, we use $\tilde{A}_{n}(\bx_{n})$, which complicates the analysis due to the fact that $\bx_{n}$ is computed using the same samples $\{\bz_{n}(\iterIndex)\}_{\iterIndex = 1}^{\numIter_{n}}$. 
\begin{lem}
\label{estRho:paramEst:gradA}
Choose $t_{n}$ such that for all $C > 0$ it holds that
\[
\sum_{n=1}^{\infty} e^{-Cnt_{n}^{2}} < + \infty
\]
Then for all $n$ large enough it holds that
\[
\hat{A}_{n} + t_{n} \geq A
\]
almost surely.
\end{lem}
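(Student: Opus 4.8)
The plan is to follow the template of Lemmas~\ref{estRho:paramEst:strCvx} and \ref{estRho:paramEst:gradB}: combine a conditional lower bound on $\mathbb{E}[\tilde{A}_i \mid \cdot]$ with the sub-Gaussian martingale concentration inequality (Lemma~\ref{subgauss:subgaussDepLem}) and Borel--Cantelli. The essential difficulty is that, unlike $\tilde{m}_n$ and $\tilde{M}_n$, the estimate $\tilde{A}_n$ is evaluated at $\bx_n$, which is computed from the very samples $\{\bz_n(\iterIndex)\}_{\iterIndex=1}^{\numIter_n}$; hence the clean conditional bound of Lemma~\ref{estRho:paramEst:gradAExp} does not apply with $\bx = \bx_n$. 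I would resolve this exactly as in the direct-estimate analysis, by introducing the second independent copy $\tilde{\bx}_n$ and the filtration \eqref{estRho:directEst:combineTwoProofSigAlg} under which $\tilde{\bx}_n$ is $\mathcal{F}_{n-1}$-measurable while the samples $\{\bz_n(\iterIndex)\}$ remain fresh.

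First I would define $\tilde{A}_n^{(2)}$ to be $\tilde{A}_n$ with $\tilde{\bx}_n$ substituted for $\bx_n$ (but still evaluated on the original samples $\{\bz_n(\iterIndex)\}$), together with $\hat{A}_n^{(2)} = \frac{1}{n}\sum_{i=1}^n \tilde{A}_i^{(2)}$. Since $\tilde{\bx}_n$ is independent of $\{\bz_n(\iterIndex)\}$ given $\mathcal{F}_{n-1}$, Lemma~\ref{estRho:paramEst:gradAExp} applies conditionally; combined with Lemma~\ref{estRho:paramEst:strCvx}, which guarantees $\frac{\tilde{M}_{n-1}+t_{n-1}}{\tilde{m}_{n-1}-t_{n-1}} \geq \frac{M}{m}$ for all $n$ large enough almost surely, this yields $\mathbb{E}[\tilde{A}_n^{(2)} \mid \mathcal{F}_{n-1}] \geq A$ eventually. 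The compactness of $\mathcal{X}$ and $\mathcal{Z}$ (Assumption~\ref{probState:assumpD3}), the gradient bound $G$, and the boundedness of the ratio make $\tilde{A}_i^{(2)}$ bounded, so the conditional Hoeffding lemma (Lemma~\ref{estRho:condHoeffdingLemma}) gives sub-Gaussian increments and Lemma~\ref{subgauss:subgaussDepLem} gives $\mathbb{P}\{\hat{A}_n^{(2)} < \frac{1}{n}\sum_{i=1}^n \mathbb{E}[\tilde{A}_i^{(2)} \mid \mathcal{F}_{i-1}] - t_n/2\} \leq \exp\{-n t_n^2/(8\sigma_A^2)\}$ for a constant $\sigma_A$. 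Borel--Cantelli (using the hypothesis $\sum_n e^{-Cn t_n^2} < \infty$ for all $C>0$) together with the eventual bound $\frac{1}{n}\sum_i \mathbb{E}[\tilde{A}_i^{(2)}\mid\mathcal{F}_{i-1}] \geq A$ then gives $\hat{A}_n^{(2)} + t_n/2 \geq A$ for all $n$ large enough.

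Second, I would control $|\hat{A}_n - \hat{A}_n^{(2)}|$. On the compact sets the map $\bx \mapsto \tilde{A}_n(\bx)$ is Lipschitz: using $|\,\|a\|^2 - \|b\|^2\,| \leq \|a-b\|(\|a\|+\|b\|)$ together with the gradient bound $G$ and the Lipschitz modulus $L$ from Assumption~\ref{probState:assumpD4}, one gets $|\tilde{A}_i(\bx_i) - \tilde{A}_i(\tilde{\bx}_i)| \leq L_A \|\bx_i - \tilde{\bx}_i\|$ for a constant $L_A$ that absorbs the bounded ratio. This reduces the problem to bounding $\frac{L_A}{n}\sum_{i=1}^n \|\bx_i - \tilde{\bx}_i\|$, which is precisely the quantity handled in the proof of Lemma~\ref{estRho:directEst:combineTwoLemma}: the summands are bounded by $\text{diam}(\xSp)$, their conditional means are bounded by $C(\numIter_i)$ from Lemma~\ref{estRho:directEst:sgdLemma}, and a second application of Lemma~\ref{subgauss:subgaussDepLem} gives $|\hat{A}_n - \hat{A}_n^{(2)}| \leq \hat{C}_n + t_n/2$ eventually, with $\hat{C}_n = \frac{L_A}{n}\sum_{i=1}^n C(\numIter_i)$. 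Adding the two pieces yields $\hat{A}_n + \hat{C}_n + t_n \geq A$ almost surely for all $n$ large enough.

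The main obstacle is therefore twofold: first, the decoupling of $\bx_n$ from the samples, which forces the two-copy construction and the attendant gap analysis; and second, reconciling the residual correction $\hat{C}_n$ with the clean statement $\hat{A}_n + t_n \geq A$. I would handle the latter by noting that $\hat{C}_n$ is built from the same $C(\numIter_i)$ terms as $\hat{C}_n^{(2)}$ in Lemma~\ref{estRho:directEst:combineOneLemma} and, under the standing sampling schedule, can be folded into the constant sequence accompanying $\hat{\rho}_n$; if one insists on the stated form, an additional requirement that the per-step accuracy improve (so $C(\numIter_i) \to 0$ and the Ces\`aro average $\hat{C}_n \to 0$) suffices, after which $\hat{C}_n$ is eventually dominated by $t_n/2$. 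Care must also be taken that $\tilde{m}_{n-1} - t_{n-1}$ stays bounded away from zero, which follows from $m > 0$ and Lemma~\ref{estRho:paramEst:strCvx} once $n$ is large.
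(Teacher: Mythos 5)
Your proposal is correct and follows essentially the same route as the paper's proof: the paper likewise introduces second-copy estimates evaluated at $\tilde{\bx}_{i}$ (its $\tilde{A}_{i}^{(2)},\tilde{A}_{i}^{(3)},\tilde{A}_{i}^{(4)}$), bounds the pairwise gaps via the gradient bound $G$ and Lipschitz modulus to reduce to $\frac{1}{n}\sum_{i}\|\bx_{i}-\tilde{\bx}_{i}\|$ plus empirical-deviation terms, applies Lemma~\ref{subgauss:subgaussDepLem} and Borel--Cantelli, and handles the estimated ratio $(\hat{M}_{i-1}+t_{i-1})/(\hat{m}_{i-1}-t_{i-1})$ by splitting at the random time $\tilde{N}$ after which it dominates $M/m$, using that $t_{n}$ decays no faster than $\mathcal{O}(n^{-1/2})$ to absorb the finitely many early terms. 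The residual correction you flag is not a defect of your argument but a feature of the paper's as well: its proof actually concludes with $\hat{A}_{n}+\frac{1}{n}\sum_{i=1}^{n}C_{i}/\sqrt{\numIter_{i}}+t_{n}\geq A$ rather than the cleaner $\hat{A}_{n}+t_{n}\geq A$ of the statement, so your observation that one must either fold this term into an accompanying constant sequence or assume it is eventually dominated by $t_{n}$ is exactly the right caveat.
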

\begin{proof}
Consider the following three estimates of $A$ all computed with knowledge of $m$ and $M$ and $\tilde{\bx}_{n}$ as in Lemma~\ref{estRho:directEst:combineOneLemma}:
\begin{eqnarray}
\tilde{A}_{i}^{(2)} &=& \frac{2}{\numIter_{i}} \sum_{\iterIndex=1}^{\numIter_{i}} \| \nabla_{\bx} \lossFunc(\bx_{i},\bz_{i}(\iterIndex))\|^{2}  + 4 \left( \frac{M}{m} \right)^{2} \bigg\| \frac{1}{\numIter_{i}} \sum_{\iterIndex=1}^{\numIter_{i}} \nabla_{\bx} \lossFunc(\bx_{i},\bz_{i}(\iterIndex)) \bigg\|^{2} \nonumber \\
\tilde{A}_{i}^{(3)} &=& \frac{2}{\numIter_{i}} \sum_{\iterIndex=1}^{\numIter_{i}} \| \nabla_{\bx} \lossFunc(\tilde{\bx}_{i},\bz_{i}(\iterIndex))\|^{2}  + 4 \left( \frac{M}{m} \right)^{2} \bigg\| \frac{1}{\numIter_{i}} \sum_{\iterIndex=1}^{\numIter_{i}} \nabla_{\bx} \lossFunc(\tilde{\bx}_{i},\bz_{i}(\iterIndex)) \bigg\|^{2} \nonumber \\
\tilde{A}_{i}^{(4)} &=& 2 \mathbb{E}\| \nabla_{\bx} \lossFunc(\tilde{\bx}_{i},\bz_{i}) \|^{2} + 4 \left( \frac{M}{m} \right)^{2} \| \nabla f_{i}(\tilde{\bx}_{i}) \|^{2} \nonumber
\end{eqnarray}
Define the averaged estimates
\begin{eqnarray}
\hat{A}_{n}^{(2)} &=& \frac{1}{n} \sum_{i=1}^{n} \tilde{A}_{i}^{(2)} \nonumber \\
\hat{A}_{n}^{(3)} &=& \frac{1}{n} \sum_{i=1}^{n} \tilde{A}_{i}^{(3)} \nonumber \\
\hat{A}_{n}^{(4)} &=& \frac{1}{n} \sum_{i=1}^{n} \tilde{A}_{i}^{(4)} \nonumber
\end{eqnarray}
We always have
\[
\tilde{A}_{i}^{(4)} \geq A
\]
so
\[
\hat{A}_{n}^{(4)} \geq A
\]
First, we show that $\hat{A}_{n}^{(2)}$ is close to $A_{n}^{(3)}$. We have
\begin{align}
&|\tilde{A}_{i}^{(2)} - \tilde{A}_{i}^{(3)}| \nonumber \\
&\;\;\; \leq 2 \bigg| \frac{1}{\numIter_{i}} \sum_{\iterIndex = 1}^{\numIter_{i}} \left( \| \nabla_{\bx} \lossFunc(\bx_{i},\bz_{i}(\iterIndex))\|^{2} - \| \nabla_{\bx} \lossFunc(\tilde{\bx}_{i},\bz_{i}(\iterIndex))\|^{2}  \right) \bigg| \nonumber \\
&\;\;\;\;\;\;\;\;\;\;\; + 4 \left( \frac{M}{m} \right)^{2} \bigg| \bigg\| \frac{1}{\numIter_{i}} \sum_{\iterIndex=1}^{\numIter_{i}} \nabla_{\bx} \lossFunc(\bx_{i},\bz_{i}(\iterIndex)) \bigg\|^{2} - \bigg\| \frac{1}{\numIter_{i}} \sum_{\iterIndex=1}^{\numIter_{i}} \nabla_{\bx} \lossFunc(\tilde{\bx}_{i},\bz_{i}(\iterIndex)) \bigg\|^{2} \bigg| \nonumber \\
&\;\;\; \leq 4G \frac{1}{\numIter_{i}} \sum_{\iterIndex = 1}^{\numIter_{i}} \| \nabla_{\bx} \lossFunc(\bx_{i},\bz_{i}(\iterIndex)) - \nabla_{\bx} \lossFunc(\tilde{\bx}_{i},\bz_{i}(\iterIndex))\|  + 8 G \left( \frac{M}{m} \right)^{2} \bigg\|\frac{1}{\numIter_{i}} \sum_{\iterIndex=1}^{\numIter_{i}} \left(  \nabla_{\bx} \lossFunc(\bx_{i},\bz_{i}(\iterIndex)) - \nabla_{\bx} \lossFunc(\tilde{\bx}_{i},\bz_{i}(\iterIndex)) \right) \bigg\|^{2} \nonumber \\
& \;\;\; \leq \left( 4 + 8 \left( \frac{M}{m} \right)^{2} \right) G M \| \bx_{i} - \tilde{\bx}_{i} \| \nonumber
\end{align}
yielding
\[
| \hat{A}_{n}^{(2)} - \hat{A}_{n}^{(3)}| \leq  \left( 4 + 8 \left( \frac{M}{m} \right)^{2} \right) G M \left( \frac{1}{n} \sum_{i=1}^{n} \| \bx_{i} - \tilde{\bx}_{i} \| \right)
\]
Second, we have
\begin{align}
&| \hat{A}_{n}^{(3)} - \hat{A}_{n}^{(4)}| \nonumber \\
& \;\;\; \leq \bigg| \frac{1}{n} \sum_{i=1}^{n} \left( \frac{2}{\numIter_{i}} \sum_{\iterIndex=1}^{\numIter_{i}} \left( \| \nabla_{\bx} \lossFunc(\tilde{\bx}_{i},\bz_{i}(\iterIndex))  \|^{2} - \mathbb{E}\left[ \| \nabla_{\bx} \lossFunc(\tilde{\bx}_{i},\bz_{i})  \|^{2} \;|\; \mathcal{F}_{n-1} \right]  \right)  \right)  \bigg| \nonumber \\
& \;\;\;\;\;\;\;\;\;\;\; + 8 \left( \frac{M}{m} \right)^{2} G \frac{1}{n} \sum_{i=1}^{n} \bigg\| \frac{1}{\numIter_{i}} \sum_{\iterIndex = 1}^{\numIter_{i}} \left( \nabla_{\bx}\lossFunc(\tilde{\bx}_{i},\bz_{i}(\iterIndex)) - \nabla f_{i}(\tilde{\bx}_{i}) \right)  \bigg\| \nonumber
\end{align}
Combining both inequalities, we know that
\begin{align}
&|\hat{A}_{n}^{(2)} - \hat{A}_{n}^{(4)}| \nonumber \\
& \;\;\; \leq \left( 4 + 8 \left( \frac{M}{m} \right)^{2} \right) G M \left( \frac{1}{n} \sum_{i=1}^{n} \| \bx_{i} - \tilde{\bx}_{i} \| \right) \nonumber \\
& \;\;\;\;\;\;\;\;\;\;\; + \bigg| \frac{1}{n} \sum_{i=1}^{n} \left( \frac{2}{\numIter_{i}} \sum_{\iterIndex=1}^{\numIter_{i}} \left( \| \nabla_{\bx} \lossFunc(\tilde{\bx}_{i},\bz_{i}(\iterIndex))  \|^{2} - \mathbb{E}\left[ \| \nabla_{\bx} \lossFunc(\tilde{\bx}_{i},\bz_{i})  \|^{2} \;|\; \mathcal{F}_{n-1} \right]  \right)  \right)  \bigg| \nonumber \\
& \;\;\;\;\;\;\;\;\;\;\; + 8 \left( \frac{M}{m} \right)^{2} G \frac{1}{n} \sum_{i=1}^{n} \bigg\| \frac{1}{\numIter_{i}} \sum_{\iterIndex = 1}^{\numIter_{i}} \left( \nabla_{\bx}\lossFunc(\tilde{\bx}_{i},\bz_{i}(\iterIndex)) - \nabla f_{i}(\tilde{\bx}_{i}) \right)  \bigg\| \nonumber 
\end{align}
The first and third terms in this bound can be controlled by the analysis of the direct estimate and the second term by Lemma~\eqref{subgauss:subgaussDepLem}. This shows that
\begin{align}
\mathbb{P}&\left\{ \hat{A}_{n}^{(2)} < A - \frac{1}{n} \sum_{i=1}^{n} \frac{C_{i}}{\sqrt{\numIter_{i}}} - t_{n} \right\} \nonumber \\
& \;\; \leq \mathbb{P}\left\{ \hat{A}_{n}^{(2)} < \hat{A}_{n}^{(4)} - \frac{1}{n} \sum_{i=1}^{n} \frac{C_{i}}{\sqrt{\numIter_{i}}} - t_{n} \right\} \nonumber \\
& \;\; \leq \mathbb{P}\left\{ |\hat{A}_{n}^{(2)} - \hat{A}_{n}^{(4)}| > \frac{1}{n} \sum_{i=1}^{n} \frac{C_{i}}{\sqrt{\numIter_{i}}}  t_{n} \right\} \nonumber \\
& \;\; \leq 2\exp\left\{ -\frac{n t_{n}^{2}}{2 \sigma^{2}_{A2}}  \right\} \nonumber
\end{align}
Since
\[
\sum_{n=1}^{\infty} \mathbb{P}\left\{ \hat{A}_{n}^{(2)} < A - \frac{1}{n} \sum_{i=1}^{n} \frac{C_{i}}{\sqrt{\numIter_{i}}} - t_{n} \right\} \leq \sum_{n=1}^{\infty} C\exp\left\{ -\frac{n t_{n}^{2}}{2 \sigma^{2}_{A2}}  \right\} < + \infty \nonumber 
\]
almost surely for all $n$ large enough, it holds that
\[
\hat{A}_{n}^{(2)} + \frac{1}{n} \sum_{i=1}^{n} \frac{C_{i}}{\sqrt{\numIter_{i}}} + t_{n} \geq A
\]
In addition, we have
\[
\hat{A}_{n}^{(2)} + \frac{1}{n} \sum_{i=1}^{n} \frac{C_{i}}{\sqrt{\numIter_{i}}} + 2t_{n} \geq A
\]
There exists a random variable $\tilde{N}$ such that
\[
n \geq \tilde{N} \;\;\Rightarrow \;\; \frac{M_{n}+t_{n}}{m_{n}-t_{n}} \geq \frac{M}{m}
\]
Then for $n \geq \tilde{N}$, it holds that
\begin{align}
&\hat{A}_{n} - \hat{A}_{n}^{(2)} \nonumber \\
&\;\;\; = \frac{4}{n} \sum_{i=1}^{n} \left[ \left( \frac{\hat{M}_{i-1}+t_{i-1}}{\hat{m}_{i-1}-t_{i-1}} \right)^{2} - \left( \frac{M}{m} \right)^{2} \right] \bigg\| \frac{1}{\numIter_{i}} \sum_{\iterIndex=1}^{\numIter_{i}} \nabla_{\bx} \lossFunc(\bx_{i},\bz_{i}(\iterIndex)) \bigg\|^{2} \nonumber \\
&\;\;\; \geq \frac{4}{n} \sum_{i=1}^{\tilde{N}-1} \left[ \left( \frac{\hat{M}_{i-1}+t_{i-1}}{\hat{m}_{i-1}-t_{i-1}} \right)^{2} - \left( \frac{M}{m} \right)^{2}  \right] \bigg\| \frac{1}{\numIter_{i}} \sum_{\iterIndex=1}^{\numIter_{i}} \nabla_{\bx} \lossFunc(\bx_{i},\bz_{i}(\iterIndex)) \bigg\|^{2} \nonumber
\end{align}
Since our choice of $t_{n}$ can decay only as fast as $C / \sqrt{n}$, it follows that
\[
\frac{4}{n} \sum_{i=1}^{\tilde{N}-1} \left[ \left( \frac{\hat{M}_{i-1}+t_{i-1}}{\hat{m}_{i-1}-t_{i-1}} \right)^{2} - \left( \frac{M}{m} \right)^{2}  \right] \bigg\| \frac{1}{\numIter_{i}} \sum_{\iterIndex=1}^{\numIter_{i}} \nabla_{\bx} \lossFunc(\bx_{i},\bz_{i}(\iterIndex)) \bigg\|^{2} - t_{n} < 0
\]
for all $n$ large enough. This implies that
\begin{align}
\hat{A}_{n} & + \frac{1}{n} \sum_{i=1}^{n} \frac{C_{i}}{\sqrt{\numIter_{i}}} + t_{n} \nonumber \\
&\geq \hat{A}_{n} - \left( \frac{4}{n} \sum_{i=1}^{\tilde{N}-1} \left[ \left( \frac{M}{m} \right)^{2} - \left( \frac{\hat{M}_{i-1}+t_{i-1}}{\hat{m}_{i-1}+t_{i-1}} \right)^{2} \right] \bigg\| \frac{1}{\numIter_{i}} \sum_{\iterIndex=1}^{\numIter_{i}} \nabla_{\bx} \lossFunc(\bx_{i},\bz_{i}(\iterIndex)) \bigg\|^{2} - t_{n} \right) + \frac{1}{n} \sum_{i=1}^{n} \frac{C_{i}}{\sqrt{\numIter_{i}}} +  t_{n} \nonumber \\
& \geq \hat{A}_{n}^{(2)} + \frac{1}{n} \sum_{i=1}^{n} \frac{C_{i}}{\sqrt{\numIter_{i}}} + 2 t_{n} \nonumber \\
& \geq A \nonumber
\end{align}
for $n$ large enough.
\end{proof}

Using these estimates, we have constructed estimates $\hat{\psi}_{n}$ such that for all $n$ large enough it holds that
\[
\hat{\psi}_{n} + C_{n} + t_{n} \bm{1} \geq \psi^{*}
\]
for appropriate constants $C_{n}$ almost surely. Therefore, by assumption for all $n$ large enough it holds that
\[
b(d_{0},\numIter,\psi^{*}) \leq b(d_{0},\numIter,\hat{\psi}_{n} + t_{n})
\]

\subsubsection{Effect on $\rho$ Estimation}
Our analysis of estimating $\rho$ assumes that we know the parameters of the function and in particular the strong convexity parameter $m$. We now argue that the effect of using estimated parameters instead is minimal. This happens because we know that for all $n$ large enough it holds that
\[
\hat{\psi}_{n} \geq \psi^{*}
\]
almost surely. 

\begin{lem}
We want to estimate a non-negative parameter $\phi^{*}$ by producing a sequence of estimates $\phi_{i}$ for all $i \geq 1$ and averaging to produce
\[
\hat{\phi}_{n} = \frac{1}{n} \sum_{i=1}^{n} \phi_{i}
\]
where the estimates $\phi_{i}$ are dependent on an auxiliary sequence $\psi_{i}$ in the sense that $\phi_{i}(\psi_{i})$. Suppose that the following conditions hold:
\begin{enumerate}
\item Suppose that there exists a random variable $\tilde{N}$ such that $n \geq \tilde{N}$ implies that $\hat{\psi}_{n} \geq \psi^{*}$
\item $\mathbb{E}[\phi_{i}(\psi^{*})] \geq \phi^{*}$
\end{enumerate}
Then it follows that
\[
\liminf_{n \to \infty} \mathbb{E}\left[ \frac{1}{n} \sum_{i=1}^{n} \phi_{i} \right] \geq \phi^{*}
\]
\end{lem}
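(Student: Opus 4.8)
The plan is to compare the running average $\hat\phi_n = \frac1n\sum_{i=1}^n \phi_i(\psi_i)$ built from the estimated parameters against the idealized average $\frac1n\sum_{i=1}^n \phi_i(\psi^*)$ built from the true parameter, and to argue that the discrepancy between them cannot hurt the $\liminf$ in expectation. Writing $\phi_i(\psi_i) = \phi_i(\psi^*) + \big(\phi_i(\psi_i)-\phi_i(\psi^*)\big)$ and summing, the first piece is controlled directly by condition~2: since $\mathbb{E}[\phi_i(\psi^*)] \geq \phi^*$ for every $i$, we have $\mathbb{E}\big[\frac1n\sum_{i=1}^n \phi_i(\psi^*)\big] \geq \phi^*$ for all $n$. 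The entire argument therefore reduces to showing that the averaged correction $\frac1n\sum_{i=1}^n (\phi_i(\psi_i)-\phi_i(\psi^*))$ contributes no negative amount in the limit.

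To handle the correction I would use the monotone dependence of $\phi_i$ on its parameter argument (larger parameter estimates yield larger, more conservative values of $\phi_i$, exactly as in Assumption D.1) together with condition~1. By condition~1 there is an almost surely finite random index $\tilde N$ such that for $i \geq \tilde N$ the parameter feeding $\phi_i$ dominates $\psi^*$; monotonicity then gives $\phi_i(\psi_i)-\phi_i(\psi^*) \geq 0$ for those indices, so they may simply be discarded from a lower bound. The only terms that can be negative are the finitely many with $i < \tilde N$. Using the compactness of $\mathcal{X}$, $\mathcal{Z}$, and $\mathcal{P}$ (Assumptions D.3 and D.5) together with the uniform gradient bound $G$, each $\phi_i$ is bounded by a deterministic constant, so $|\phi_i(\psi_i)-\phi_i(\psi^*)| \leq C$ uniformly. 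Combining these observations yields
\[
\hat\phi_n \;\geq\; \frac1n\sum_{i=1}^n \phi_i(\psi^*) \;-\; \frac{C\,(\tilde N \wedge n)}{n}.
\]

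Taking expectations and invoking condition~2 gives $\mathbb{E}[\hat\phi_n] \geq \phi^* - \mathbb{E}\!\left[ C(\tilde N \wedge n)/n\right]$, so it remains only to show the last expectation vanishes. This is the main obstacle: $\tilde N$ is random and, through the Borel--Cantelli conclusions of the preceding lemmas, is known only to be almost surely finite, so I cannot assume $\mathbb{E}[\tilde N]<\infty$. The resolution is to avoid needing any moment bound on $\tilde N$: the sequence $C(\tilde N \wedge n)/n$ is uniformly bounded by $C$ and tends to $0$ almost surely, since once $n>\tilde N$ the numerator freezes at $C\tilde N$ while the denominator diverges. The bounded convergence theorem then forces $\mathbb{E}\!\left[C(\tilde N \wedge n)/n\right]\to 0$, and taking $\liminf_{n\to\infty}$ in the displayed inequality delivers $\liminf_n \mathbb{E}[\hat\phi_n] \geq \phi^*$. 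The delicate points are making the monotonicity and uniform boundedness of $\phi_i$ explicit, and justifying the exchange of limit and expectation via the uniform bound rather than through an integrability assumption on the random threshold $\tilde N$.
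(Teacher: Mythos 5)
Your proof is correct and follows essentially the same route as the paper: split the sum at the random index $\tilde{N}$, use monotonicity of $\phi_i$ in its parameter to discard the nonnegative corrections for $i \geq \tilde{N}$, and argue that the finitely many early terms contribute $o(1)$ to the average. In fact you are more careful than the paper, which simply drops the head of the sum and asserts $\liminf_n \mathbb{E}\bigl[\frac{1}{n}\sum_{i=\tilde{N}}^{n}\phi_i(\psi^*)\bigr] \geq \phi^*$ without comment; your explicit uniform bound $C(\tilde{N}\wedge n)/n$ and the bounded convergence theorem supply exactly the justification needed to handle the random, possibly non-integrable $\tilde{N}$.
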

\begin{proof}
It holds that
\begin{eqnarray}
\frac{1}{n} \sum_{i=1}^{n} \phi_{i} &=& \frac{1}{n} \sum_{i=1}^{\tilde{N}-1} \phi_{i}(\psi_{i}) + \frac{1}{n}\sum_{i=\tilde{N}}^{n} \phi_{i}(\psi_{i}) \nonumber \\
\label{estRho:paramEst:keyIneq} &\geq& \frac{1}{n} \sum_{i=1}^{\tilde{N}-1} \phi_{i}(\psi_{i}) + \frac{1}{n}\sum_{i=\tilde{N}}^{n} \phi_{i}(\psi_{i}^{*})
\end{eqnarray}
Therefore, it follows that
\begin{eqnarray}
\liminf_{n \to \infty} \mathbb{E}\left[ \frac{1}{n} \sum_{i=1}^{n} \phi_{i} \right] &\geq& \liminf_{n \to \infty} \mathbb{E}\left[ \frac{1}{n}\sum_{i=\tilde{N}}^{n} \phi_{i}(\psi_{i}^{*}) \right] \nonumber \\
&\geq& \phi^{*} \nonumber
\end{eqnarray}
\end{proof}
We can extend all the concentration inequalities for estimating $\rho$ as well by extending the inequality in \eqref{estRho:paramEst:keyIneq} to yield
\begin{eqnarray}
\frac{1}{n} \sum_{i=1}^{n} \phi_{i} &=& \frac{1}{n} \sum_{i=1}^{\tilde{N}-1} \phi_{i}(\psi_{i}) + \frac{1}{n}\sum_{i=\tilde{N}}^{n} \phi_{i}(\psi_{i}) \nonumber \\
&\geq& \frac{1}{n} \sum_{i=1}^{\tilde{N}-1} \phi_{i}(\psi_{i}) + \frac{1}{n}\sum_{i=\tilde{N}}^{n} \phi_{i}(\psi_{i}^{*}) \nonumber \\
&\geq& \frac{1}{n} \sum_{i=1}^{\tilde{N}-1} \left( \phi_{i}(\psi_{i}) - \phi_{i}(\psi^{*}) \right) + \frac{1}{n}\sum_{i=1}^{n} \phi_{i}(\psi_{i}^{*}) \nonumber \\
&=& \frac{1}{n}\sum_{i=1}^{n} \phi_{i}(\psi_{i}^{*}) + o(1) \nonumber
\end{eqnarray}
Before, we have analyzed
\[
 \frac{1}{n}\sum_{i=1}^{n} \phi_{i}(\psi_{i}^{*})
\]
so for large enough $n$, we recover previous results, since the $o(1)$ term goes to $0$.

\section{Adaptive Sequential Optimization With $\rho$ Unknown}
\label{withRhoUnknown}
We now examine the case with $\rho$ unknown. We extend the work of Section~\ref{withRhoKnown} using the estimates of $\rho$ in Section~\ref{estRho}. Our analysis depends on the following crucial assumption:
\begin{description}
\item[C.1 \label{probState:assumpC1}] For appropriate sequences $\{t_{n}\}$, for all $n$ sufficiently large it holds that $\hat{\rho}_{n} + t_{n} \geq \rho$ almost surely.
\item[C.2 \label{probState:assumpC2}] $b(d_{0},\numIter_{n})$ factors as $b(d_{0},\numIter_{n}) = \alpha(\numIter_{n}) d_{0} + \beta(\numIter_{n})$
\end{description}
We have demonstrated that assumption~\ref{probState:assumpC1} that holds for the direct and IPM estimates of $\rho$ under \eqref{probState:slowChangeConstDef} and \eqref{probState:slowChangeDef}. Note that whether we assume \eqref{probState:slowChangeConstDef} or \eqref{probState:slowChangeDef} does not matter for analysis.

\subsection{General Condition on $\numIter_{n}$}

We start with a general result showing that for any choice of $\numIter_{n}$ such that $\numIter_{n} \geq \numIter^{*}$ for all n large enough the excess
risk is controlled in the sense that
\[
\limsup_{n \to \infty} \left( \mathbb{E}[f_{n}(\bx_{n})] - f_{n}(\bx_{n}^{*}) \right) \leq \epsilon
\]
We then apply this result to two different selection rules for Kn.

Consider the function
\[
\phi_{\numIter}(v) = \alpha(\numIter) \left( \sqrt{\frac{2}{m}v} + \rho \right)^{2} + \beta(\numIter)
\]
derived from assumption C.2. Note that as a function of $v$, $\phi_{\numIter}(v)$ is clearly increasing and strictly concave. First, suppose that we select $\numIter^{*}$ defined in \eqref{withRhoKnown:KChoice}. Then by definition it holds that
\[
\phi_{\numIter^{*}}(\epsilon) \leq \epsilon
\]
We study fixed points of the function $\phi_{\numIter^{*}}(v)$:
\begin{lem}
\label{withRhoUnknown:fixedPointBasic}
The function $\phi_{\numIter^{*}}(v)$ has a unique positive fixed point $\bar{v}$ with
\begin{enumerate}
\item $\bar{v} = \phi_{\numIter^{*}}(\bar{v}) \leq \epsilon$
\item $\phi'_{\numIter^{*}}(\bar{v}) < 1$
\end{enumerate}
\end{lem}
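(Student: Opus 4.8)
The plan is to study the auxiliary function $h(v) \triangleq \phi_{\numIter^{*}}(v) - v$ on $[0,\infty)$ and to locate its zeros. Two structural facts are already in hand: $\phi_{\numIter^{*}}$ is increasing and strictly concave (as noted immediately before the statement), so $h$ is strictly concave; and, writing $\alpha = \alpha(\numIter^{*})$ and $\beta = \beta(\numIter^{*})$ from the factorization in assumption~\ref{probState:assumpC2}, the boundary value is $h(0) = \phi_{\numIter^{*}}(0) = \alpha \rho^{2} + \beta > 0$. Crucially, I would invoke the defining inequality of $\numIter^{*}$ in \eqref{withRhoKnown:KChoice}, which under assumption~\ref{probState:assumpC2} reads $\phi_{\numIter^{*}}(\epsilon) = b\big((\sqrt{2\epsilon/m}+\rho)^{2},\numIter^{*}\big) \le \epsilon$, i.e. $h(\epsilon) \le 0$. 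This single inequality is what lets me avoid any case analysis on the asymptotic slope $2\alpha/m$ of $\phi_{\numIter^{*}}$.

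Existence of a positive fixed point then follows at once from the intermediate value theorem applied to the continuous function $h$ on $[0,\epsilon]$: since $h(0) > 0$ and $h(\epsilon) \le 0$, there is a root $\bar v \in (0,\epsilon]$, which simultaneously gives $\bar v = \phi_{\numIter^{*}}(\bar v)$ and $\bar v \le \epsilon$, establishing part~1.

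For the derivative bound in part~2, and simultaneously for uniqueness, the key step is a chord estimate coming from concavity. At any positive root $\bar v$ of $h$, the mean value theorem on $[0,\bar v]$ (using that $\phi_{\numIter^{*}}$ is differentiable on $(0,\infty)$) produces $\xi \in (0,\bar v)$ with $h'(\xi) = (h(\bar v) - h(0))/\bar v = -h(0)/\bar v$; since $h$ is concave, $h'$ is non-increasing, so $h'(\bar v) \le h'(\xi) = -h(0)/\bar v < 0$. Because $\phi'_{\numIter^{*}}(\bar v) = 1 + h'(\bar v)$, this yields $\phi'_{\numIter^{*}}(\bar v) < 1$, which is part~2. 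Uniqueness falls out of the same computation: if there were two positive roots $v_{1} < v_{2}$, the estimate gives $h'(v_{1}) < 0$, and monotonicity of $h'$ forces $h'(v) < 0$ for all $v \ge v_{1}$, so $h$ is strictly decreasing on $[v_{1},\infty)$ and cannot return to $0$ at $v_{2}$, a contradiction.

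I expect the only delicate point to be confirming the strict positivity $h(0) = \alpha\rho^{2} + \beta > 0$, which relies on $\alpha(\numIter^{*}) > 0$ together with $\rho > 0$ (or $\beta > 0$); this is precisely what upgrades the derivative bound from the merely nonstrict $\phi'_{\numIter^{*}}(\bar v) \le 1$ that bare concavity would give to the strict inequality $\phi'_{\numIter^{*}}(\bar v) < 1$ required by the lemma. Everything else is a routine application of concavity, the mean value theorem, and the intermediate value theorem.
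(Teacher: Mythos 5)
Your proof is correct, and it takes a genuinely different and more self-contained route than the paper's. The paper establishes existence/uniqueness by first proving $\tfrac{2}{m}\alpha(\numIter^{*})<1$ via a separate contradiction argument, deducing that $v-\phi_{\numIter^{*}}(v)\to\infty$, finding a point $b$ with $\phi_{\numIter^{*}}(b)<b$, and then invoking an external fixed-point theorem (Theorem 3.3 of \cite{Kennan2001}); for part 2 it runs a two-case contradiction ($\phi'_{\numIter^{*}}(\bar v)>1$ ruled out by Taylor's theorem plus the asymptotic growth, $\phi'_{\numIter^{*}}(\bar v)=1$ ruled out by integrating the derivative). You instead exploit the defining inequality $\phi_{\numIter^{*}}(\epsilon)\leq\epsilon$ from \eqref{withRhoKnown:KChoice} to get a sign change of $h(v)=\phi_{\numIter^{*}}(v)-v$ on $[0,\epsilon]$, which yields existence \emph{and} the bound $\bar v\leq\epsilon$ in one stroke (the paper's proof leaves $\bar v\leq\epsilon$ implicit), and your chord estimate $h'(\bar v)\leq -h(0)/\bar v<0$ via the mean value theorem delivers both $\phi'_{\numIter^{*}}(\bar v)<1$ and uniqueness simultaneously, with no need for the asymptotic-slope analysis or the external reference. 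Two small points to be aware of: your argument, like the paper's, silently needs $\phi_{\numIter^{*}}(0)=\alpha(\numIter^{*})\rho^{2}+\beta(\numIter^{*})>0$ (you correctly flag this), and the mean value theorem on $[0,\bar v]$ is legitimate even though the $\sqrt{v}$ term is not differentiable at $0$, since only differentiability on the open interval is required. Your approach is arguably the cleaner of the two.
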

\begin{proof}
We have
\[
\phi_{\numIter^{*}}(0) = \alpha(\numIter^{*})\rho^{2} + \beta(\numIter^{*}) >0
\]
Since 
\[
\lim_{v \to 0} \phi_{\numIter^{*}}(v) = \phi_{\numIter^{*}}(0)
\]
and $\phi_{\numIter^{*}}(0) > 0$, there exists a positive $a$ sufficiently small that
\[
\phi_{\numIter^{*}}(a) > a
\]
Next, expanding $\phi_{\numIter}(v)$ yields
\[
\phi_{\numIter}(v) =  \frac{2}{m} \alpha(\numIter) v + 2 \alpha(\numIter) \rho \sqrt{\frac{2}{m}} \sqrt{v} + \alpha(\numIter) \rho^{2}  + \beta(\numIter)
\]
Since $\phi_{\numIter^{*}}(\epsilon) \leq \epsilon$, we obviously must have $\frac{2}{m} \alpha(\numIter^{*})  \leq 1$. Suppose that
\[
\frac{2}{m} \alpha(\numIter^{*})  = 1
\]
Then it holds that
\[
\phi_{\numIter^{*}}(\epsilon) =  \epsilon + \sqrt{2m} \rho \sqrt{\epsilon} + \frac{m}{2} \rho^{2}  + \beta(\numIter) > \epsilon
\]
This is a contradiction, so it holds that
\[
\frac{2}{m} \alpha(\numIter^{*}) < 1
\]
It is thus readily apparent that
\[
v - \phi_{\numIter^{*}}(v) \to \infty
\]
as $v \to \infty$. Therefore, there exists a point $b > a$ such that
\[
\phi_{\numIter^{*}}(b) < b
\]
It is easy to check that $\phi_{\numIter^{*}}(v)$ is increasing and strictly concave. Therefore, we can apply Theorem 3.3 from \cite{Kennan2001} to conclude that there exists a unique, positive fixed point $\bar{v}$ of $\phi_{\numIter^{*}}(v)$.

Next, suppose that $\phi'_{\numIter^{*}}(\bar{v}) > 1$. Then by Taylor's Theorem for $v > \bar{v}$ sufficiently close to $\bar{v}$, we have
\[
\phi_{\numIter^{*}}(v) > v
\]
However, we know that as $v \to \infty$, it holds that $v - \phi_{\numIter^{*}}(v) \to \infty$. By the Intermediate Value Theorem, this implies that there is another fixed point on $[v,\infty)$. This is a contradiction, since $\bar{v}$ is the unique, positive fixed point. Therefore, it holds that $\phi'_{\numIter^{*}}(\bar{v}) \leq 1$. Now, suppose that $\phi'_{\numIter^{*}}(\bar{v}) = 1$. Since $\phi_{\numIter^{*}}(v)$ is strictly concave, its derivative is decreasing \cite{Boyd2004}. Therefore, on $[0,\bar{v})$, it holds that
\[
\phi'_{\numIter^{*}}(v) > 1
\]
This implies that
\begin{eqnarray}
\phi_{\numIter^{*}}(\bar{v}) &=& \phi_{\numIter^{*}}(0) + \int_{0}^{\bar{v}} \phi'_{\numIter^{*}}(v) dx \nonumber \\
&\geq& \phi_{\numIter^{*}}(0) + \bar{v} \nonumber \\
&>& \bar{v} \nonumber
\end{eqnarray}
This is a contradiction, so it must be that $\phi'_{\numIter^{*}}(\bar{v}) < 1$.
\end{proof}
As a simple consequence of the concavity of $\phi_{\numIter^{*}}(v)$, we can study a fixed point iteration involving $\phi_{\numIter}(v)$. Define the $n$-fold composition mapping
\[
\phi^{(n)}_{\numIter}(v) \triangleq \left( \phi_{\numIter} \circ \cdots \circ \phi_{\numIter} \right)(v)
\]
\begin{lem}
\label{withRhoUnknown:fixedPointIter}
For any $v > 0$, it holds that
\[
\lim_{n \to \infty} \phi^{(n)}_{\numIter^{*}}(v) = \bar{v}
\]
\end{lem}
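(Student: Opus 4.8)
The plan is to treat this as a standard monotone fixed-point iteration and exploit the structural facts about $\phi_{\numIter^{*}}$ already established in Lemma~\ref{withRhoUnknown:fixedPointBasic}. Write $g \triangleq \phi_{\numIter^{*}}$ for brevity. From Lemma~\ref{withRhoUnknown:fixedPointBasic} and its proof I may assume that $g$ is increasing and strictly concave on $[0,\infty)$, that $g(0) = \alpha(\numIter^{*})\rho^{2} + \beta(\numIter^{*}) > 0$, that $\bar{v}$ is the unique positive fixed point, and crucially that $g'(\bar{v}) < 1$. The idea is to show that from any starting point $v>0$ the orbit $\{g^{(n)}(v)\}$ is monotone and trapped between $v$ and $\bar{v}$, hence convergent, and that its limit is a positive fixed point and therefore equal to $\bar{v}$.

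First I would dispose of the trivial case $v=\bar{v}$ and then split according to whether $v>\bar{v}$ or $0<v<\bar{v}$. For $v>\bar{v}$, strict concavity gives the tangent-line bound $g(v) \le g(\bar{v}) + g'(\bar{v})(v-\bar{v}) = \bar{v} + g'(\bar{v})(v-\bar{v})$, and since $g'(\bar{v})<1$ and $v-\bar{v}>0$ this is strictly less than $v$; monotonicity of $g$ gives $g(v) > g(\bar{v}) = \bar{v}$. Thus $\bar{v} < g(v) < v$. For $0<v<\bar{v}$, I would instead use the chord from $(0,g(0))$ to $(\bar{v},\bar{v})$: strict concavity forces $g$ to lie above this chord on $(0,\bar{v})$, and the chord value at $v$ equals $v + g(0)\left(1-v/\bar{v}\right) > v$ because $g(0)>0$ and $1-v/\bar{v}>0$; together with $g(v) < g(\bar{v}) = \bar{v}$ from monotonicity this yields $v < g(v) < \bar{v}$.

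Because $g$ is increasing, these two-sided inequalities propagate under iteration: in the first case an easy induction shows $\bar{v} < g^{(n+1)}(v) < g^{(n)}(v)$ for all $n$, so the orbit is decreasing and bounded below by $\bar{v}$; in the second case it is increasing and bounded above by $\bar{v}$. Either way the monotone bounded sequence converges to some limit $L$, with $L \ge \bar{v}$ or $L \ge v > 0$ respectively, so $L>0$. Continuity of $g$ then gives $L = g(L)$, i.e.\ $L$ is a positive fixed point, and uniqueness of the positive fixed point (Lemma~\ref{withRhoUnknown:fixedPointBasic}) forces $L = \bar{v}$, which is the claim.

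The routine part is the monotone-convergence bookkeeping; the only place demanding care is extracting the correct one-sided comparisons of $g(v)$ with $v$ and with $\bar{v}$ on each side of the fixed point --- the upper bound coming from the tangent line at $\bar{v}$ (using $g'(\bar{v})<1$) and the lower bound from the chord through $(0,g(0))$ (using $g(0)>0$). I expect no genuine obstacle beyond verifying these two concavity estimates, since everything downstream is the textbook argument that a monotone orbit of a continuous increasing map converges to a fixed point.
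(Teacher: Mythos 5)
Your proof is correct, but it takes a genuinely different route from the paper's. The paper quotes from \cite{Granas2003} the one-step estimate $|\phi_{\numIter^{*}}(v) - \bar{v}| \leq \phi'_{\numIter^{*}}(\bar{v})\,|v - \bar{v}|$, iterates it, and concludes geometric convergence from $\phi'_{\numIter^{*}}(\bar{v}) < 1$. You instead run the classical monotone-orbit argument: trap $\phi_{\numIter^{*}}(v)$ strictly between $v$ and $\bar{v}$ on each side of the fixed point (tangent line at $\bar{v}$ when $v > \bar{v}$, chord through $(0,\phi_{\numIter^{*}}(0))$ and $(\bar{v},\bar{v})$ when $0 < v < \bar{v}$), propagate these inequalities by monotonicity, and identify the limit via continuity and uniqueness of the positive fixed point. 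The comparison is instructive: the paper's one-step inequality is valid for $v \geq \bar{v}$, where concavity and monotonicity give $0 \leq \phi_{\numIter^{*}}(v) - \bar{v} \leq \phi'_{\numIter^{*}}(\bar{v})(v-\bar{v})$, but for $0 < v < \bar{v}$ the gradient inequality of a concave function points the other way, yielding $\bar{v} - \phi_{\numIter^{*}}(v) \geq \phi'_{\numIter^{*}}(\bar{v})(\bar{v}-v)$, so the quoted contraction estimate does not hold globally as stated (a map such as $v \mapsto \tfrac{1}{2}\sqrt{v} + \tfrac{3}{4}$ violates it for small $v$). Your chord bound is precisely the ingredient needed to handle that regime, so your argument is the more robust of the two; what you give up relative to the paper is the explicit geometric rate $(\phi'_{\numIter^{*}}(\bar{v}))^{n}$, which the lemma does not claim anyway.
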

\begin{proof}
Following \cite{Granas2003}, for any fixed point $\bar{v}$, it holds that 
\[
|\phi_{\numIter^{*}}(v) - \bar{v}| \leq \phi'_{\numIter^{*}}(\bar{v})|v - \bar{v}|
\]
Therefore, applying the fixed point property repeatedly yields
\[
|\phi^{(n)}_{\numIter^{*}}(v) - \bar{v}| \leq (\phi'_{\numIter^{*}}(\bar{v}))^{n}|v - \bar{v}|
\]
By Lemma~\ref{withRhoUnknown:fixedPointBasic}, it holds that
\[
\phi'_{\numIter^{*}}(\bar{v}) < 1
\]
and so the result follows.
\end{proof}

Now, we show that we appropriately control the \meangap{} when we estimate $\rho$. The extension of this argument to the case when we also estimate function parameters $\psi$ is straightforward. If we have
\[
p(\{\bz_{n}(\iterIndex)\}_{\iterIndex=1}^{\numIter_{n}} \;|\; \bx_{n-1},\numIter_{n}) = \prod_{\iterIndex=1}^{\numIter_{n}} p_{n}(\bz_{n}(\iterIndex))
\]
then
\[
\mathbb{E}\left[ f_{n}(\bx_{n}) \;|\; \bx_{n-1},\numIter_{n} \right] - f_{n}(\bx_{n}^{*}) \leq b\left( \left( \sqrt{\frac{2}{m} \left( f_{n-1}(\bx_{n-1}) - f_{n-1}(\bx_{n-1}^{*}) \right)} + \rho \right)^{2} , \numIter_{n} \right)
\]
Therefore, it holds that
\[
\mathbb{E}\left[ f_{n}(\bx_{n}) \right] - f_{n}(\bx_{n}^{*}) \leq \mathbb{E}\left[  b\left( \left( \sqrt{\frac{2}{m} \left( f_{n-1}(\bx_{n-1}) - f_{n-1}(\bx_{n-1}^{*}) \right)} + \rho \right)^{2} , \numIter_{n} \right) \right]
\]
Suppose that we set
\[
\mathcal{K}_{\infty} = \sigma\left( \{\numIter_{n}\}_{n=1}^{\infty} \cup \{\hat{\rho}_{n}\}_{n=2}^{\infty} \right)
\]
This sigma algebra contains all the information about $\{\hat{\rho}_{n}\}$ and thus $\{\numIter_{n}\}$. Then, we do not have
\[
p(\{\bz_{n}(\iterIndex)\}_{\iterIndex=1}^{\numIter_{n}} \;|\; \mathcal{K}_{\infty}) = \prod_{\iterIndex=1}^{\numIter_{n}} p_{n}(\bz_{n}(\iterIndex))
\]
since $\numIter_{n+1},\numIter_{n+2},\ldots$ are a function of $\{\numIter_{n}\}_{\iterIndex=1}^{\numIter_{n}}$. We do not even have
\[
\mathbb{E}\left[ f_{n}(\bx_{n}) \;|\; \mathcal{K}_{\infty} \right] - f_{n}(\bx_{n}^{*}) \leq b\left( \left( \sqrt{\frac{2}{m} \left( f_{n-1}(\bx_{n-1}) - f_{n-1}(\bx_{n-1}^{*}) \right)} + \rho \right)^{2}  , \numIter_{n} \right)
\]
However, we would expect that this is not too far from true. Conceptually, we consider running our approach twice on independent samples. The first run determines the required number of samples $\{\numIter_{n}\}_{n=1}^{\infty}$. We then run our process for a second run with these fixed choices of $\{\numIter_{n}\}_{n=1}^{\infty}$and independent samples as in \figurename{}~\ref{withRhoKnown:secondRunFig}. For the second run, it is true that
\[
p(\{\bz_{n}^{(2)}(\iterIndex)\}_{\iterIndex=1}^{\numIter_{n}} \;|\; \mathcal{K}_{\infty}) = \prod_{\iterIndex=1}^{\numIter_{n}} p_{n}(\bz_{n}^{(2)}(\iterIndex))
\]
and
\[
\mathbb{E}\left[ f_{n}(\bx_{n}^{(2)}) \;|\; \mathcal{K}_{\infty} \right] - f_{n}(\bx_{n}^{*}) \leq b\left( \left( \sqrt{\frac{2}{m} \left( f_{n-1}(\bx_{n-1}^{(2)}) - f_{n-1}(\bx_{n-1}^{*}) \right)} + \rho \right)^{2}  , \numIter_{n} \right)
\]
In practice, we do not need to run our process twice. This is only a proof technique. Now, for the second run the recursion
\begin{equation}
\label{withRhoUnknown:secondRunRecursion}
\epsilon_{n}^{(2)} = b\left( \left(\sqrt{\frac{2}{m} \epsilon_{n-1}^{(2)}} + \rho \right)^{2} , \numIter_{n} \right) \;\;\; \forall n \geq 3
\end{equation}
with $\epsilon_{1}$ and $\epsilon_{2}$ from Assumption~\ref{probState:assump4} bounds the \meangap{} of the second run
\[
\mathbb{E}[f_{n}(\bx_{n}^{(2)}) \;|\; \mathcal{K}_{\infty}] - f_{n}(\bx_{n}^{*}) \leq \epsilon_{n}^{(2)}
\]
Then it follows that
\[
\mathbb{E}[f_{n}(\bx_{n}^{(2)})] - f_{n}(\bx_{n}^{*}) \leq \mathbb{E}[\epsilon_{n}^{(2)}]
\]

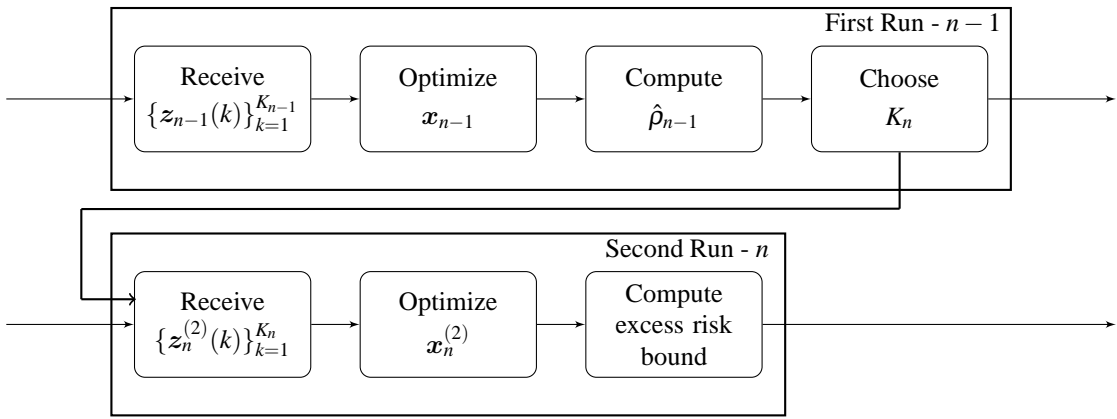
\begin{figure}[!ht]

\begin{tikzpicture}[node distance = 3cm, auto]
    
    
    \node (startOne) {};
    
    \node [block,right of=startOne] (samplesOne) {Receive\\\vspace{0.8mm} $\{\bz_{n-1}(\iterIndex)\}_{\iterIndex=1}^{\numIter_{n-1}}$};
   
    \node [block,right of=samplesOne] (optXn) {Optimize\\\vspace{0.8mm} $\bx_{n-1}$};
    
    \node [block,right of=optXn] (estRn) {Compute\\\vspace{0.8mm} $\hat{\rho}_{n-1}$};
    
    \node [block,right of=estRn] (chooseKn) {Choose\\\vspace{0.8mm} $\numIter_{n}$};
    
    \node [right of=chooseKn] (endOne) {};
    
    
    \node [below of=startOne] (startTwo) {};
    
    \node [block,right of=startTwo] (samplesTwo) {Receive\\\vspace{0.8mm} $\{\bz_{n}^{(2)}(\iterIndex)\}_{\iterIndex=1}^{\numIter_{n}}$};
   
    \node [block,right of=samplesTwo] (optXn2) {Optimize\\\vspace{0.8mm} $\bx_{n}^{(2)}$};
    
    \node [block,right of=optXn2] (estMG) {Compute \\ \meangap{} \\ bound};
   
    \node [below of=endOne] (endTwo) {};


	\draw[thick] ($(samplesOne.north west)+(-0.3,0.5)$)  rectangle ($(chooseKn.south east)+(0.3,-0.5)$);
	
	\draw[thick] ($(samplesTwo.north west)+(-0.3,0.5)$)  rectangle ($(estMG.south east)+(0.3,-0.5)$);
	
	\node at ($(chooseKn.north east)+(-1.0,+0.3)$) {First Run - $n-1$};
	\node at ($(estMG.north east)+(-1.0,+0.3)$) {Second Run - $n$};
	
	\draw[thick] (chooseKn.south) -- ++(0,-0.75) coordinate (ptOne);
	\draw[thick] (ptOne) -- ($(samplesTwo.north west)+(-0.7,0.825)$) coordinate (ptTwo);
	\draw[thick] (ptTwo) -- ++(0,-1.2) coordinate (ptThree);
	\draw[thick,->] (ptThree) -- ++(0.725,0);

    
    \path [line] (startOne) -- (samplesOne);
    \path [line] (samplesOne) -- (optXn);
    \path [line] (optXn) -- (estRn);
    \path [line] (estRn) -- (chooseKn);
    \path [line] (chooseKn) -- (endOne);
    
    \path [line] (startTwo) -- (samplesTwo);
    \path [line] (samplesTwo) -- (optXn2);
    \path [line] (optXn2) -- (estMG);
    \path [line] (estMG) -- (endTwo);
\end{tikzpicture}
\caption{Two Run Process}
\label{withRhoKnown:secondRunFig}
\end{figure}

We now argue that $\mathbb{E}[\epsilon_{n}^{(2)}]$ also bounds the \meangap{} of the first run.
\begin{lem}
For the first run, it holds that
\[
\mathbb{E}[f_{n}(\bx_{n})] - f_{n}(\bx_{n}^{*}) \leq \mathbb{E}[\epsilon^{(2)}_{n}]
\]
\end{lem}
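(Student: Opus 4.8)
The plan is to prove the bound by induction on $n$, driving the recursion that defines $\epsilon_n^{(2)}$ against the actual first-run gap. Write $D_n \triangleq f_n(\bx_n) - f_n(\bx_n^*)$ for the (random) first-run gap, and recall from Assumption~\ref{probState:assumpC2} that $b(d_0,\numIter_n) = \alpha(\numIter_n)d_0 + \beta(\numIter_n)$, so that the one-step bound established just above takes the form $\mathbb{E}[f_n(\bx_n)\mid \bx_{n-1},\numIter_n] - f_n(\bx_n^*) \le \phi_{\numIter_n}(D_{n-1})$, where $\phi_{\numIter}(v) = \alpha(\numIter)\big(\sqrt{2v/m}+\rho\big)^2 + \beta(\numIter)$ is the increasing, strictly concave map studied before Lemma~\ref{withRhoUnknown:fixedPointBasic}. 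The base cases $n=1,2$ follow at once from Assumption~\ref{probState:assump4} together with $\epsilon_1^{(2)}=\epsilon_1$ and $\epsilon_2^{(2)}=\epsilon_2$.

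For the inductive step I would first take expectations of the one-step bound. Since $\numIter_n$ is $\mathcal{K}_{n-1}$-measurable while the time-$n$ samples are drawn fresh, conditioning on $\mathcal{K}_{n-1}$ coincides with conditioning on $(\bx_{n-1},\numIter_n)$, so the one-step inequality is legitimate and yields $\mathbb{E}[D_n] \le \mathbb{E}[\phi_{\numIter_n}(D_{n-1})]$. The aim is then to compare this against $\mathbb{E}[\epsilon_n^{(2)}] = \mathbb{E}[\phi_{\numIter_n}(\epsilon_{n-1}^{(2)})]$, which is built from the very same realized sequence $\{\numIter_n\}$ through the recursion \eqref{withRhoUnknown:secondRunRecursion}. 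I would exploit the concavity of $v\mapsto\phi_{\numIter}(v)$ via Jensen's inequality to pass from the random gap $D_{n-1}$ to its conditional mean, and then use the monotonicity of $\phi_{\numIter}$ with the inductive hypothesis $\mathbb{E}[D_{n-1}] \le \mathbb{E}[\epsilon_{n-1}^{(2)}]$ to close the recursion and conclude $\mathbb{E}[D_n] \le \mathbb{E}[\epsilon_n^{(2)}]$.

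The hard part will be the statistical dependence between the sample count $\numIter_n$ and the first-run gap $D_{n-1}$: both are $\mathcal{K}_{n-1}$-measurable and in general correlated, because the selection rule reads $\hat\rho_{n-1}$, which itself depends on $\bx_{n-1}$ and on the time-$(n-1)$ samples, so one cannot naively factor a quantity such as $\mathbb{E}[\alpha(\numIter_n)D_{n-1}]$ into a product of marginal means. This is precisely the difficulty that motivates the two-run device: conditioning on $\mathcal{K}_\infty$ freezes every $\numIter_n$ into a constant and renders it independent of the fresh second-run samples, so that the clean recursion for $\epsilon_n^{(2)}$ holds conditionally and can be iterated pathwise. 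I therefore expect to spend most of the care on arranging the conditioning — on $\mathcal{K}_{n-1}$, then applying Jensen slicewise over the value of $\numIter_n$ — so that the concavity comparison remains valid despite this dependence, and on checking that the fixed constants $\epsilon_1,\epsilon_2$ seeding the recursion keep the induction self-consistent.
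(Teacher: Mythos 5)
Your setup — induction on $n$, base cases from Assumption~\ref{probState:assump4}, the one-step bound $\mathbb{E}[D_n]\le\mathbb{E}[\phi_{\numIter_n}(D_{n-1})]$, and the comparison against $\mathbb{E}[\epsilon_n^{(2)}]=\mathbb{E}[\phi_{\numIter_n}(\epsilon_{n-1}^{(2)})]$ — matches the paper exactly. But the step you propose for closing the induction has a genuine gap: Jensen plus monotonicity cannot get you from $\mathbb{E}[D_{n-1}]\le\mathbb{E}[\epsilon_{n-1}^{(2)}]$ to $\mathbb{E}[\phi_{\numIter_n}(D_{n-1})]\le\mathbb{E}[\phi_{\numIter_n}(\epsilon_{n-1}^{(2)})]$, because concavity cuts the wrong way on the right-hand side. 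Concretely (take $\numIter_n$ deterministic for the moment), Jensen gives $\mathbb{E}[\phi(D_{n-1})]\le\phi(\mathbb{E}[D_{n-1}])\le\phi(\mathbb{E}[\epsilon_{n-1}^{(2)}])$, but the target $\mathbb{E}[\phi(\epsilon_{n-1}^{(2)})]$ is itself bounded \emph{above} by $\phi(\mathbb{E}[\epsilon_{n-1}^{(2)}])$ by the same concavity, so both sides sit under the same quantity and no comparison between them follows. (Recall $\epsilon_{n-1}^{(2)}$ is random — it is built from the realized $\numIter_1,\ldots,\numIter_{n-1}$ of the first run — so you cannot pull it outside the expectation.) Your "slicewise over $\numIter_n$" refinement also does not rescue this: conditioning on $\{\numIter_n=K\}$ leaves you needing to compare $\mathbb{E}[D_{n-1}\mid\numIter_n=K]$ with $\mathbb{E}[\epsilon_{n-1}^{(2)}\mid\numIter_n=K]$, and the inductive hypothesis is a statement about unconditional means only.

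The paper avoids Jensen entirely. It expands the square via the identity $(\sqrt{a}+\rho)^2-(\sqrt{b}+\rho)^2=(a-b)+2\rho(\sqrt{a}-\sqrt{b})$ (after absorbing the $\sqrt{2/m}$ factor), so that
\[
\mathbb{E}[\epsilon_n^{(2)}]-\mathbb{E}[D_n]\;\ge\;\mathbb{E}\!\left[\alpha(\numIter_n)\bigl(\epsilon_{n-1}^{(2)}-D_{n-1}\bigr)\right]+\mathbb{E}\!\left[2\rho\,\alpha(\numIter_n)\bigl(\sqrt{\epsilon_{n-1}^{(2)}}-\sqrt{D_{n-1}}\bigr)\right],
\]
and then argues each term is nonnegative by a truncation/limiting device ($\max\{\alpha(\numIter_n),1/q\}$, $q\to\infty$, with the second term rewritten via $\sqrt{a}-\sqrt{b}=(a-b)/(\sqrt{a}+\sqrt{b})$) that reduces everything to the inductive hypothesis on the unconditional mean of $\epsilon_{n-1}^{(2)}-D_{n-1}$. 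If you want to repair your proof, you should replace the Jensen step with a term-by-term comparison of this kind; as written, the plan does not go through.
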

\begin{proof}
We proceed by induction. For $n=1,2$, we know that
\[
\mathbb{E}[f_{n}(\bx_{n})] - f_{n}(\bx_{n}^{*}) \leq \mathbb{E}[\epsilon_{n}^{(2)}]
\]
by definition. Next, suppose that
\[
\mathbb{E}[f_{n-1}(\bx_{n-1})] - f_{n-1}(\bx_{n-1}^{*}) \leq \mathbb{E}[\epsilon_{n-1}^{(2)}]
\]
We have
\[
\mathbb{E}[f_{n}(\bx_{n})] - f_{n}(\bx_{n}^{*}) \leq \mathbb{E}\left[ \alpha(\numIter_{n}) \left( \sqrt{f_{n-1}(\bx_{n-1}) - f_{n-1}(\bx_{n-1}^{*})} + \rho \right)^{2} + \beta(\numIter_{n}) \right]
\]
so it holds that
\begin{align}
\mathbb{E}&[\epsilon_{n}^{(2)}] - \left( \mathbb{E}[f_{n}(\bx_{n})] - f_{n}(\bx_{n}^{*})  \right) \nonumber \\
&\geq \mathbb{E}\left[ \alpha(\numIter_{n}) \left( \sqrt{\epsilon_{n-1}^{(2)}} + \rho \right)^{2} -  \alpha(\numIter_{n}) \left( \sqrt{f_{n-1}(\bx_{n-1}) - f_{n-1}(\bx_{n-1}^{*})} + \rho \right)^{2} \right] \nonumber \\
&= \mathbb{E}\left[ \alpha(\numIter_{n}) \left( \epsilon_{n-1}^{(2)} - \left( f_{n-1}(\bx_{n-1}) - f_{n-1}(\bx_{n-1}^{*}) \right) \right) \right] \nonumber \\
& \;\;\;\;\;\;\; +  \mathbb{E}\left[ 2 \rho \alpha(\numIter_{n}) \left( \sqrt{\epsilon_{n-1}^{(2)}} - \sqrt{ f_{n-1}(\bx_{n-1}) - f_{n-1}(\bx_{n-1}^{*}) } \right) \right] \nonumber
\end{align}
By the Monotone Convergence Theorem, it holds that
\begin{align}
\mathbb{E}&\left[ \alpha(\numIter_{n}) \left( \epsilon_{n-1}^{(2)} - \left( f_{n-1}(\bx_{n-1}) - f_{n-1}(\bx_{n-1}^{*}) \right) \right) \right] \nonumber \\
&= \lim_{q \to \infty} \mathbb{E}\left[ \max\{\alpha(\numIter_{n}),1/q\} \left( \epsilon_{n-1}^{(2)} - \left( f_{n-1}(\bx_{n-1}) - f_{n-1}(\bx_{n-1}^{*}) \right) \right) \right] \nonumber \\
&\geq \liminf_{q \to \infty} \frac{1}{q} \mathbb{E}\left[ \epsilon_{n-1}^{(2)} - \left( f_{n-1}(\bx_{n-1}) - f_{n-1}(\bx_{n-1}^{*}) \right) \right] \nonumber \\
&\geq 0 \nonumber
\end{align}
where the last line follows, since by hypothesis
\[
\mathbb{E}[f_{n-1}(\bx_{n-1})] - f_{n-1}(\bx_{n-1}^{*}) \leq \mathbb{E}[\epsilon_{n-1}^{(2)}]
\]
Similarly, it holds that
\begin{align}
\mathbb{E}&\left[ 2 \rho  \alpha(\numIter_{n}) \left( \sqrt{\epsilon_{n-1}^{(2)}} - \sqrt{ f_{n-1}(\bx_{n-1}) - f_{n-1}(\bx_{n-1}^{*}) } \right) \right] \nonumber \\
&= \mathbb{E}\left[ 2 \rho \alpha(\numIter_{n}) \frac{ \epsilon_{n-1}^{(2)} - \left( f_{n-1}(\bx_{n-1}) - f_{n-1}(\bx_{n-1}^{*}) \right) }{\sqrt{\epsilon_{n-1}^{(2)}} + \sqrt{ f_{n-1}(\bx_{n-1}) - f_{n-1}(\bx_{n-1}^{*}) }}  \right] \nonumber \\
&= \lim_{q \to \infty} \mathbb{E}\left[ 2 \rho \max\{\alpha(\numIter_{n}),1/q\} \frac{ \epsilon_{n-1}^{(2)} - \left( f_{n-1}(\bx_{n-1}) - f_{n-1}(\bx_{n-1}^{*}) \right) }{\sqrt{\epsilon_{n-1}^{(2)}} + \sqrt{ f_{n-1}(\bx_{n-1}) - f_{n-1}(\bx_{n-1}^{*}) }}   \right] \nonumber \\
&\geq \limsup_{q \to \infty} \frac{2\rho }{q} \mathbb{E}\left[ \frac{ \epsilon_{n-1}^{(2)} - \left( f_{n-1}(\bx_{n-1}) - f_{n-1}(\bx_{n-1}^{*}) \right) }{\sqrt{\epsilon_{n-1}^{(2)}} + \sqrt{ f_{n-1}(\bx_{n-1}) - f_{n-1}(\bx_{n-1}^{*}) }}   \right] \nonumber \\
&\geq \limsup_{q \to \infty} \frac{2\rho }{q} \lim_{\tau \to \infty} \mathbb{E}\left[ \frac{ \epsilon_{n-1}^{(2)} - \left( f_{n-1}(\bx_{n-1}) - f_{n-1}(\bx_{n-1}^{*}) \right) }{\sqrt{\epsilon_{n-1}^{(2)}} + \sqrt{ f_{n-1}(\bx_{n-1}) - f_{n-1}(\bx_{n-1}^{*}) }} \mathbbm{1}_{\{ \sqrt{\epsilon_{n-1}^{(2)}} + \sqrt{ f_{n-1}(\bx_{n-1}) - f_{n-1}(\bx_{n-1}^{*}) } \leq \tau \}}  \right] \nonumber \\
&\geq \limsup_{q \to \infty} \frac{2\rho }{q} \limsup_{\tau \to \infty} \frac{1}{\tau} \mathbb{E}\left[ \epsilon_{n-1}^{(2)} - \left( f_{n-1}(\bx_{n-1}) - f_{n-1}(\bx_{n-1}^{*}) \right)  \right] \nonumber \\
&\geq 0 \nonumber
\end{align}
Therefore, we conclude that 
\[
\mathbb{E}[f_{n}(\bx_{n})] - f_{n}(\bx_{n}^{*}) \leq \mathbb{E}[\epsilon^{(2)}_{n}]
\]
\end{proof}

\begin{thm}
\label{withRhoUnknown:meanGapRhoKnownLemma}
Under assumptions~\ref{probState:assumpC1}-~\ref{probState:assumpC2} and with $\numIter_{n} \geq \numIter^{*}$ for all $n$ large enough almost surely with $\numIter^{*}$ from \eqref{withRhoUnknown:KnChoice}, we have  

\centering$\limsup_{n \to \infty} \left( \mathbb{E}[f_{n}(\bx_{n})] - f_{n}(\bx_{n}^{*})  \right) \leq \epsilon$
\end{thm}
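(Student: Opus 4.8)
The plan is to reduce the claim to a deterministic fixed-point analysis of the second-run recursion and then to pass from an almost-sure statement back to expectations. By the preceding lemma it suffices to bound $\limsup_{n\to\infty}\mathbb{E}[\epsilon_{n}^{(2)}]$, since $\mathbb{E}[f_{n}(\bx_{n})]-f_{n}(\bx_{n}^{*})\leq \mathbb{E}[\epsilon_{n}^{(2)}]$. Using assumption~\ref{probState:assumpC2}, the recursion \eqref{withRhoUnknown:secondRunRecursion} is exactly $\epsilon_{n}^{(2)}=\phi_{\numIter_{n}}(\epsilon_{n-1}^{(2)})$, so the entire randomness of $\epsilon_{n}^{(2)}$ enters only through the sample counts $\{\numIter_{n}\}$ (the true $\rho$ being fixed). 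Thus the target is $\limsup_{n\to\infty}\mathbb{E}[\epsilon_{n}^{(2)}]\leq\epsilon$.

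First I would establish the pathwise bound $\limsup_{n\to\infty}\epsilon_{n}^{(2)}\leq\bar{v}$ almost surely. Because more samples can only tighten the bound $b$, we have $\alpha(\numIter_{n})\leq\alpha(\numIter^{*})$ and $\beta(\numIter_{n})\leq\beta(\numIter^{*})$ whenever $\numIter_{n}\geq\numIter^{*}$, hence $\phi_{\numIter_{n}}(v)\leq\phi_{\numIter^{*}}(v)$ for all $v\geq 0$. On the almost-sure event that $\numIter_{n}\geq\numIter^{*}$ for all $n\geq N$ (with $N$ a finite random time, guaranteed by the hypothesis), monotonicity of $\phi_{\numIter^{*}}$ lets me iterate to get $\epsilon_{N-1+j}^{(2)}\leq\phi_{\numIter^{*}}^{(j)}(\epsilon_{N-1}^{(2)})$ for every $j\geq 1$. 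Since $\epsilon_{N-1}^{(2)}>0$, Lemma~\ref{withRhoUnknown:fixedPointIter} gives $\phi_{\numIter^{*}}^{(j)}(\epsilon_{N-1}^{(2)})\to\bar{v}$, and Lemma~\ref{withRhoUnknown:fixedPointBasic} gives $\bar{v}\leq\epsilon$; taking $j\to\infty$ yields $\limsup_{n\to\infty}\epsilon_{n}^{(2)}\leq\bar{v}\leq\epsilon$ almost surely. Note that $\bar{v}$ is a deterministic constant, since $\numIter^{*}$, $m$, $\rho$, and $\epsilon$ are all deterministic.

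It remains to exchange the limit superior with the expectation, and this is where I expect the main difficulty. An almost-sure $\limsup$ bound does not by itself control $\limsup_{n}\mathbb{E}[\epsilon_{n}^{(2)}]$, because both the random time $N$ and the early iterates depend on the data in an uncontrolled way. The resolution is a uniform domination supplied by the finite-diameter assumption~\ref{probState:assump3}. Since $\mathbb{E}\|\bx_{n-1}^{(2)}-\bx_{n}^{*}\|^{2}\leq\text{diam}^{2}(\xSp)$ holds unconditionally, I may replace the argument of $b$ in \eqref{withRhoUnknown:secondRunRecursion} by its minimum with $\text{diam}^{2}(\xSp)$; the resulting capped sequence is still a valid upper bound on the second-run \meangap{} (so the reduction in the first paragraph persists), it is pointwise no larger than the uncapped $\epsilon_{n}^{(2)}$ by an easy induction using monotonicity of $\phi$, and it satisfies $\epsilon_{n}^{(2)}\leq b(\text{diam}^{2}(\xSp),\numIter_{n})\leq G$ for a deterministic constant $G$. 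With this uniform bound in hand, the reverse Fatou lemma gives $\limsup_{n\to\infty}\mathbb{E}[\epsilon_{n}^{(2)}]\leq\mathbb{E}[\limsup_{n\to\infty}\epsilon_{n}^{(2)}]\leq\mathbb{E}[\bar{v}]=\bar{v}\leq\epsilon$, which combined with the first paragraph completes the proof. The crux is thus the domination step: establishing a deterministic envelope for the random sequence $\epsilon_{n}^{(2)}$ so that a bounded/reverse-Fatou convergence argument applies, for which assumption~\ref{probState:assump3} is exactly the right tool.
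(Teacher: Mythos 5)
Your proof follows essentially the same route as the paper's: reduce to the second-run recursion $\epsilon_{n}^{(2)}=\phi_{\numIter_{n}}(\epsilon_{n-1}^{(2)})$, use the fixed-point lemmas to conclude $\limsup_{n\to\infty}\epsilon_{n}^{(2)}\leq\bar{v}\leq\epsilon$ almost surely on the event that $\numIter_{n}\geq\numIter^{*}$ eventually, and then pass back to expectations. The only difference is that you explicitly supply the uniform deterministic envelope $\epsilon_{n}^{(2)}\leq b(\text{diam}^{2}(\xSp),\numIter_{n})$ needed to justify the reverse Fatou step, which the paper invokes as ``Fatou's lemma'' without comment --- a welcome tightening of the same argument rather than a different one.
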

\begin{proof}
Let $\bar{v}$ be the fixed point associated with $\phi_{\numIter^{*}}(v)$ from Lemma~\ref{withRhoUnknown:fixedPointBasic}. We know that
\[
\bar{v}  = \phi_{\numIter^{*}}(\bar{v}) \leq \epsilon
\]
and
\[
\phi^{(n)}_{\numIter^{*}}(v) \to \bar{v} \leq \epsilon
\]
with $\bar{v} \leq \epsilon$.  
Since we have $\numIter_{n} \geq \numIter^{*}$ for all $n$ large enough almost surely, there exists a random variable $\tilde{N}$ such that
\[
n \geq \tilde{N} \;\;\Rightarrow\;\; \numIter_{n} \geq \numIter^{*}
\]

Then we have almost surely
\begin{eqnarray}
\limsup_{n \to \infty} \epsilon_{n}^{(2)} &\leq& \limsup_{n \to \infty} (\phi_{\numIter_{n}} \circ \cdots \circ \phi_{\numIter_{\tilde{N}}})(\epsilon_{\tilde{N}-1}) \nonumber \\
&\leq& \limsup_{n \to \infty} \phi^{(n-\tilde{N}+1)}_{\numIter^{*}}(\epsilon_{\tilde{N}-1}) \nonumber \\
&=& \bar{v} \nonumber \\
&\leq& \epsilon \nonumber
\end{eqnarray}
Finally, applying Lemma~\ref{withRhoUnknown:secondRunRecursion} and Fatou's lemma yields
\begin{eqnarray}
\limsup_{n \to \infty} \left( \mathbb{E}[f_{n}(\bx_{n})] - f_{n}(\bx_{n}^{*}) \right) &\leq& \limsup_{n \to \infty} \mathbb{E}\left[ \epsilon^{(2)}_{n}  \right] \nonumber \\
&\leq& \mathbb{E}\left[ \limsup_{n \to \infty} \epsilon^{(2)}_{n}  \right] \nonumber \\
&\leq& \epsilon \nonumber
\end{eqnarray}
\end{proof}

\subsection{Update Past \MeanGap{} Bounds}
\label{withRhoUnknown:updatePast}

We first consider updating all past \meangap{} bounds as we go. At time $n$, we plug-in $\hat{\rho}_{n-1} + t_{n-1}$ in place of $\rho$ and follow the analysis of Section~\ref{withRhoKnown}. Define for $i=1,\ldots,n$
\begin{eqnarray}
\hat{\epsilon}_{i}^{(n)} &=& b\left( \left(  \sqrt{\frac{2}{m} \hat{\epsilon}_{i-1}^{(n)}}  + (\hat{\rho}_{n-1} + t_{n-1}) \right)^{2},\numIter_{i} \right) \nonumber
\end{eqnarray}
If it holds that $\hat{\rho}_{n-1} + t_{n-1} \geq \rho$, then ${\mathbb{E}\left[ f_{n}(\bx_{n})  \right] - f_{n}(\bx_{n}^{*}) \leq \hat{\epsilon}_{n}^{(i)}}$ for ${i=1,\ldots,n}$. Assumption~\ref{probState:assumpC1} guarantees that this holds for all $n$ large enough almost surely. We can thus set $\numIter_{n}$ equal to the smallest $\numIter$ such that
\[
b\left( \left( \sqrt{ \frac{2}{m} \max\{\hat{\epsilon}^{(n-1)}_{n-1},\epsilon\} } + ( \hat{\rho}_{n-1} + t_{n-1} ) \right)^{2} , \numIter  \right) \leq \epsilon
\]
for all $n \geq 3$ to achieve \meangap{} $\epsilon$. The maximum in this definition ensures that when $\hat{\rho}_{n-1} + t_{n-1} \geq \rho$, $\numIter_{n} \geq \numIter^{*}$ with $\numIter^{*}$ from \eqref{withRhoKnown:KChoice}. We can therefore apply Theorem~\ref{withRhoUnknown:meanGapRhoKnownLemma}.

\subsection{Do Not Update Past \MeanGap{} Bounds}
\label{withRhoUnknown:doNotUpdatePast}

Updating all past estimates of the \meangap{} bounds from time $1$ up to $n$ imposes a computational and memory  burden. Suppose that for all $n \geq 3$ we set
\begin{equation}
\label{withRhoUnknown:KnChoice}
\numIter_{n} = \min \left\{ \numIter \geq 1 \;\Bigg|\; b\left( \left( \sqrt{\frac{2\epsilon}{m}} + ( \hat{\rho}_{n-1} + t_{n-1} ) \right)^{2} , \numIter  \right) \leq \epsilon \right\}
\end{equation}
This is the same form as the choice in \eqref{withRhoKnown:KChoice} with $\hat{\rho}_{n-1} + t_{n-1}$ in place of $\rho$. Due to assumption~\ref{probState:assumpC1}, for all $n$ large enough it holds that $\hat{\rho}_{n} + t_{n} \geq \rho$ almost surely. Then by the monotonicity assumption in \ref{probState:assump1}, for all $n$ large enough we pick $\numIter_{n} \geq \numIter^{*}$ almost surely. We can therefore apply Theorem~\ref{withRhoUnknown:meanGapRhoKnownLemma}.

\section{Experiments}
We focus on two regression applications for synthetic and real data as well as two classification applications for synthetic and real data. For the synthetic regression problem, we can explicitly compute $\rho$ and $\bx_{n}^{*}$ and exactly evaluate the performance of our method. It is straightforward to check that all requirements in \ref{probState:assump1}-\ref{probState:assump4} are satisfied for the problems considered in this section. We apply the do not update past \meangap{} choice of $\numIter_{n}$ here.

\subsection{Synthetic Regression}

Consider a regression problem with synthetic data using the penalized quadratic loss 
\[
\lossFunc(\bx,\bz) = \frac{1}{2} \left( y - \bw^{\top} \bx \right)^{2} + \frac{1}{2} \lambda \| \bx \|^{2}
\]
with $\bz = (\bw,y) \in \mathbb{R}^{d+1}$. The distribution of $\bz_{n}$ is zero mean Gaussian with covariance matrix
\[
\left[ \begin{array}{cc}
					\sigma_{\bw}^{2} \bm{I} & r_{\bw_{n},y_{n}} \\
					r_{\bw_{n},y_{n}}^{\top} & \sigma_{y_{n}}^{2} 
				\end{array} \right]
\]
Under these assumptions, we can analytically compute minimizers $\bx_{n}^{*}$ of ${f_{n}(\bx) = \mathbb{E}_{\bz_{n} \sim p_{n}} \left[ \lossFunc(\bx,\bz_{n})  \right]}$. We change only $r_{\bw_{n},y_{n}}$ and $\sigma_{y_{n}}^{2}$ appropriately to ensure that $\| \bx_{n}^{*} - \bx_{n-1}^{*} \| = \rho$ holds for all $n$. We find approximate minimizers using SGD with $\lambda = 0.1$. We estimate $\rho$ using the direct estimate.

We let $n$ range from $1$ to $20$ with $\rho = 1$, a target \meangap{} $\epsilon = 0.1$, and $\numIter_{n}$ from \eqref{withRhoUnknown:KnChoice}. We average over twenty runs of our algorithm. \figurename{}~\ref{exper:synthRegress:rhoEst} shows $\hat{\rho}_{n}$, our estimate of $\rho$, which is above $\rho$ in general. \figurename{}~\ref{exper:synthRegress:Kn} shows the number of samples $\numIter_{n}$, which settles down. We can exactly compute $f_{n}(\bx_{n}) - f_{n}(\bx_{n}^{*})$, and so by averaging over the twenty runs of our algorithm, we can estimate the \meangap{} (denoted ``sample average estimate''). \figurename{}~\ref{exper:synthRegress:meanGap} shows this estimate of the \meangap{}, the target \meangap{}, and our bound on the \meangap{} from Section~\ref{withRhoUnknown:doNotUpdatePast}. We achieve at least our targeted \meangap{}

\begin{figure}[!ht]
\centering
\begin{minipage}[b]{0.48\linewidth} \quad
 	\centering
	\includegraphics[width = \linewidth]{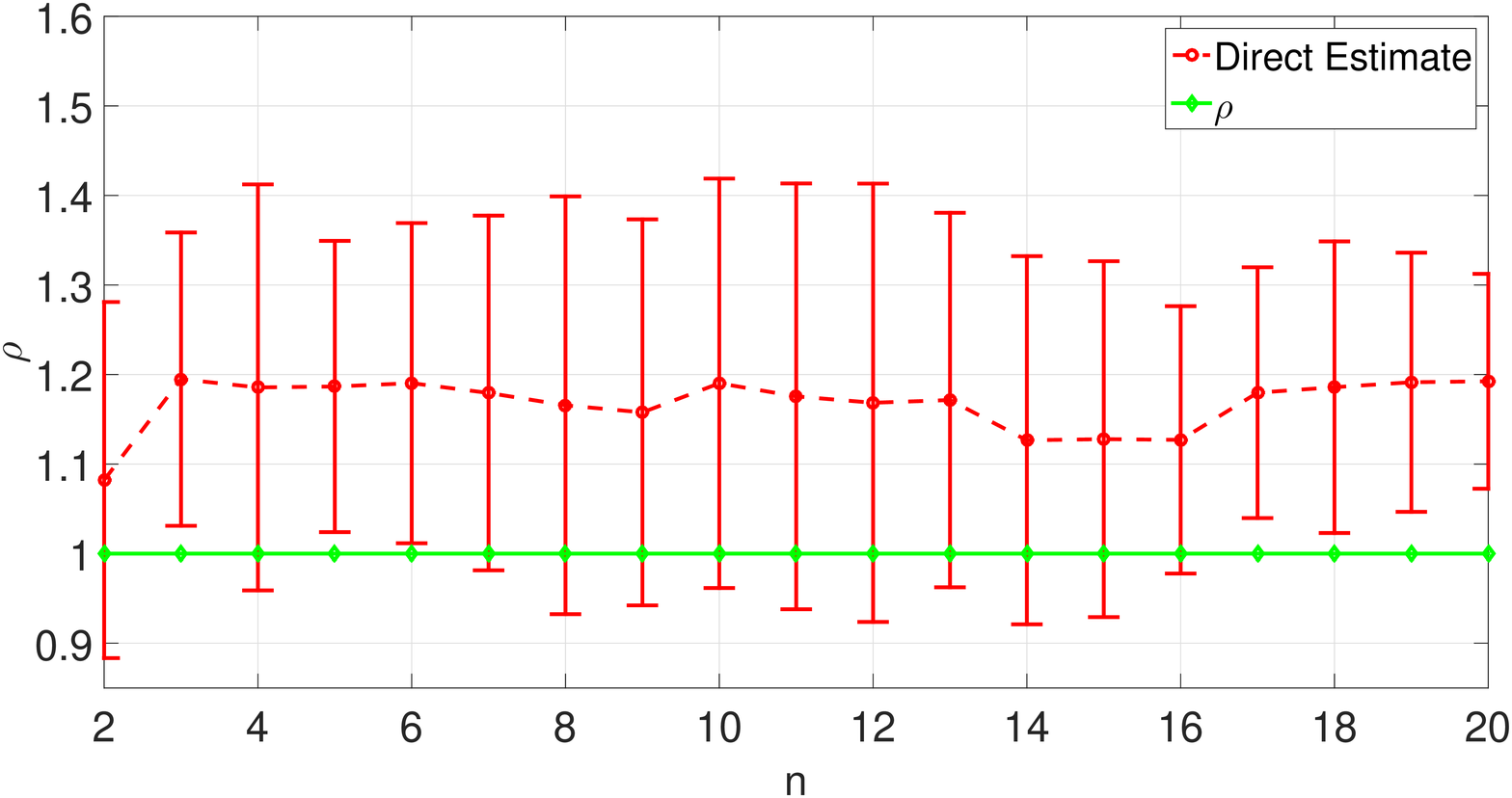}
	\caption{$\rho$ Estimate}
	\label{exper:synthRegress:rhoEst}
\end{minipage}
\begin{minipage}[b]{0.48\linewidth}
 	\centering
	\includegraphics[width = \linewidth]{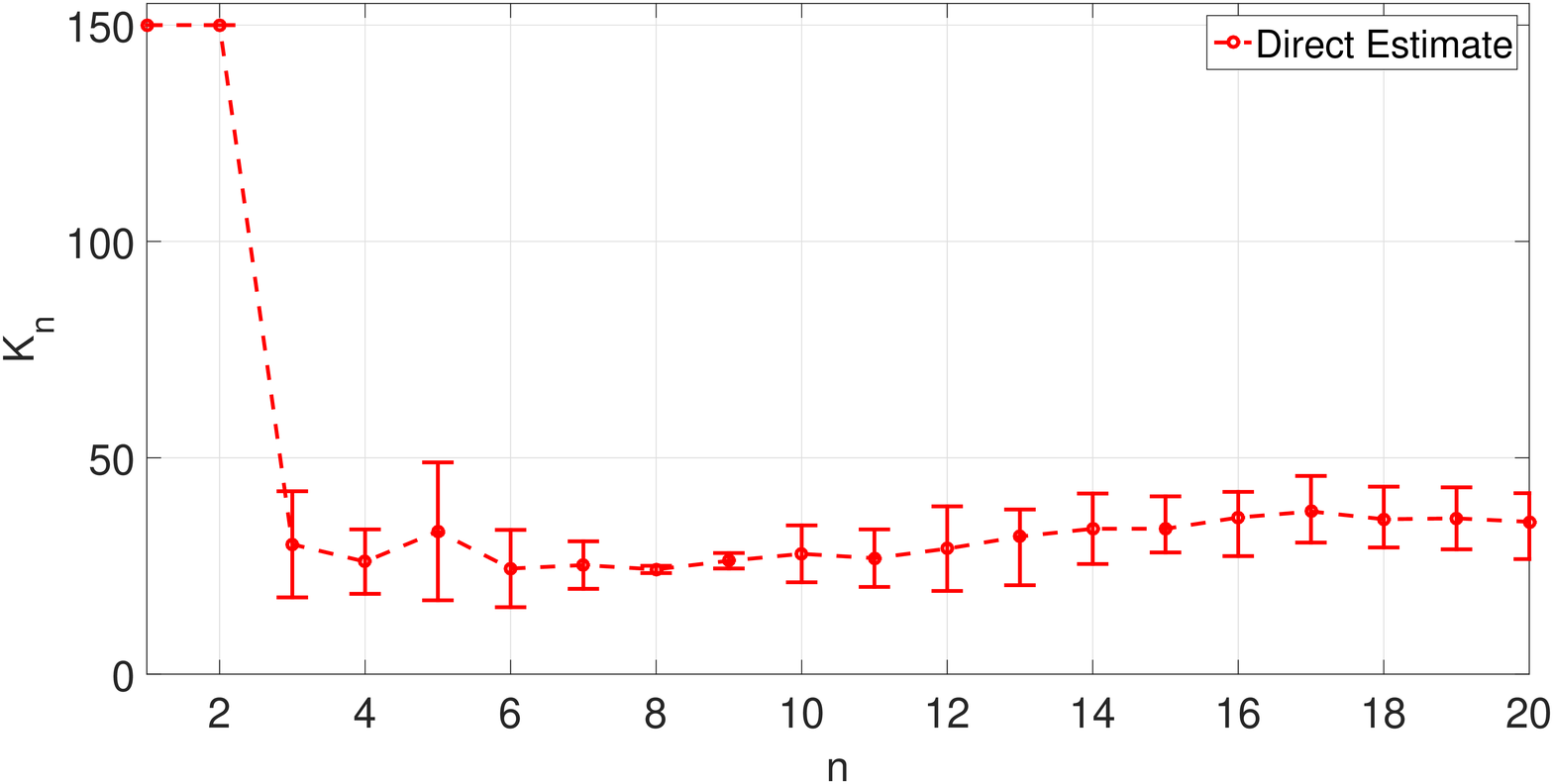}
	\caption{$\numIter_{n}$}
	\label{exper:synthRegress:Kn}
\end{minipage}

\medskip

\begin{minipage}[b]{0.48\linewidth}
 	\centering
 	\iftoggle{useMeanGap}{
 	\includegraphics[width = \linewidth]{synthRegressMeanGap}
 	}{
 	\includegraphics[width = \linewidth]{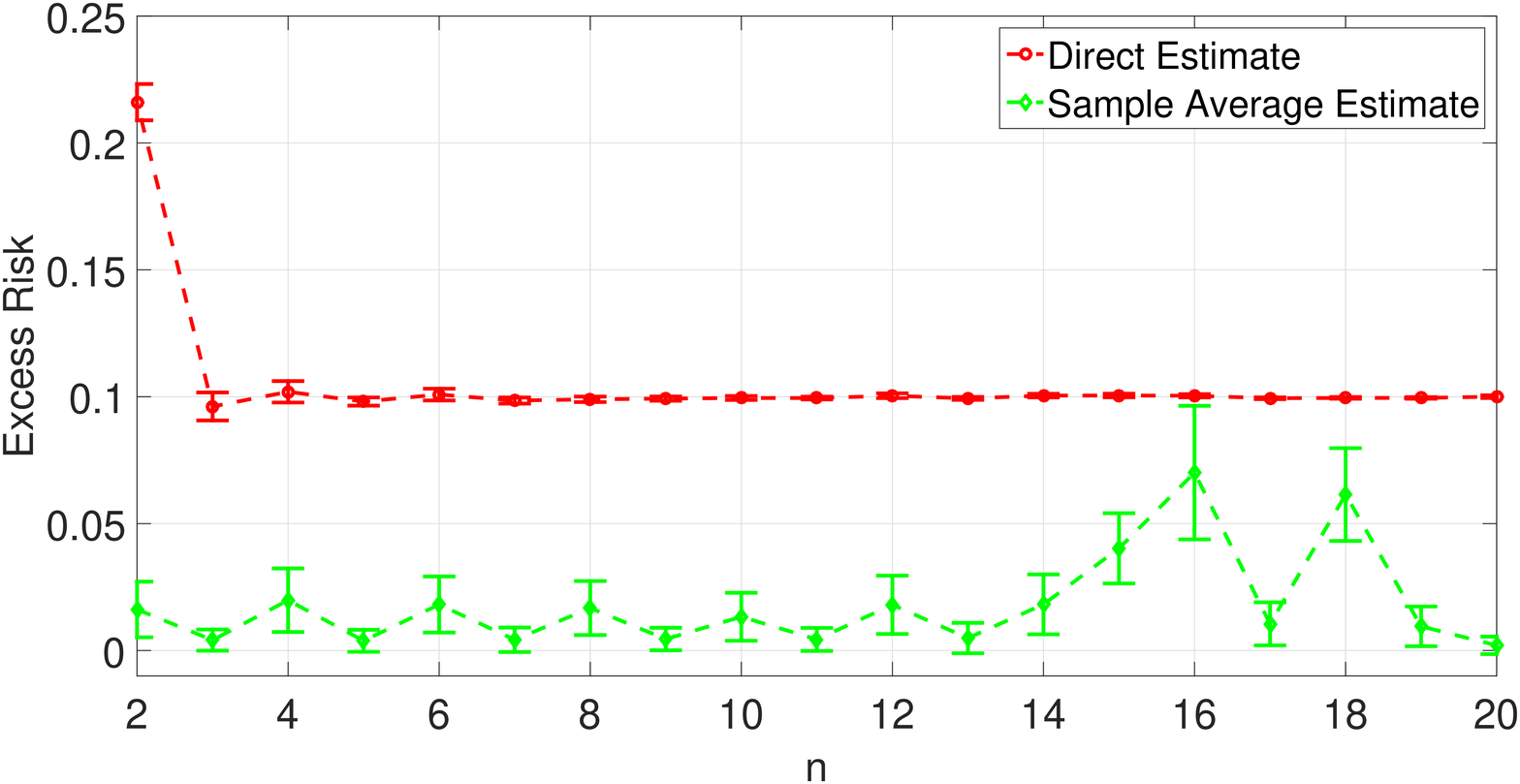}
 	}
	\caption{\MeanGap{}}
	\label{exper:synthRegress:meanGap}
\end{minipage}
\end{figure}

\subsection{Panel Study on Income Dynamics Income - Regression}
The Panel Study of Income Dynamics (PSID) surveyed individuals every year to gather demographic and income data annually from 1981-1997 \cite{PSID2015}. We want to predict an individual's annual income ($y$) from several demographic features ($\bw$) including age, education, work experience, etc. chosen based on previous economic studies in \cite{Jenkins2006}. The idea of this problem conceptually is to rerun the survey process and determine how many samples we would need if we wanted to solve this regression problem to within a desired \meangap{} criterion $\epsilon$.

We use the same loss function, direct estimate for $\rho$, and minimization algorithm as the synthetic regression problem.  The income is adjusted for inflation to 1997 dollars with mean \$20,294. We average over twenty runs of our algorithm by resampling without replacement \cite{Hastie2001}. We compare to taking an equivalent number of samples up front. \figurename{}~\ref{exper:psid:testLosses} shows the test losses over time evaluated over twenty percent of the available samples. The test loss for our approach is substantially less than taking the same number of samples up front. The square root of the average test loss over this time period for our approach and all samples up front are $\$1153 \pm 352$ and $\$2805 \pm 424$ respectively in 1997 dollars.	

\begin{figure}[!ht]
 	\centering
	\includegraphics[width = 0.5\textwidth]{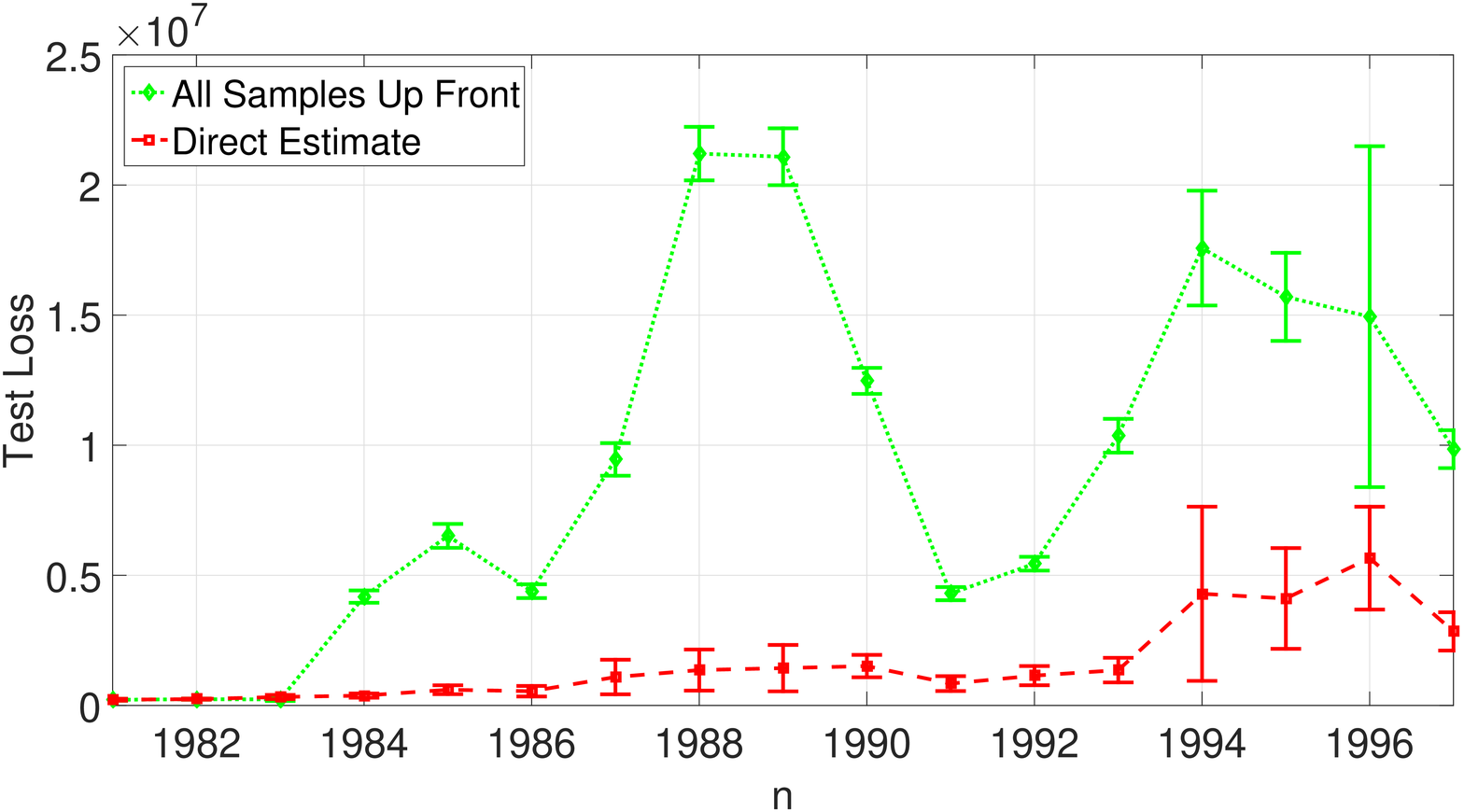}
	\caption{Test Loss}
	\label{exper:psid:testLosses}
\end{figure}

\subsection{Synthetic Classification}

Consider a binary classification problem using ${\lossFunc(\bx,\bz) = \frac{1}{2} ( 1 - y ( \bw^{\top} \bx ) )_{+}^{2} + \frac{1}{2} \lambda \| \bx \|^{2}}$ with ${\bz = (\bw,y) \in \mathbb{R}^{d} \times \mathbb{R}}$ and $(y)_{+} = \max\{y,0\}$. This is a smoothed version of the hinge loss used in support vector machines (SVM) \cite{Hastie2001}. We suppose that at time $n$, the two classes have features drawn from a Gaussian distribution with covariance matrix $\sigma^{2} \bm{I}$ but different means $\mu_{n}^{(1)}$ and $\mu_{n}^{(2)}$, i.e., ${\bw_{n} \;|\; \{y_{n} = i\} \;\sim\; \mathcal{N}(\mu_{n}^{(i)} , \sigma^{2} \bm{I})}$. The class means move slowly over uniformly spaced points on a unit sphere in $\mathbb{R}^{d}$ as in \figurename{}~\ref{exper:synthClass:ClassMeans} to ensure that \eqref{probState:slowChangeConstDef} holds. We find approximate minimizers using SGD with $\lambda = 0.1$. We estimate $\rho$ using the direct estimate with $t_{n} \propto 1/n^{3/8}$.

\begin{figure}[!ht]
\centering
\includegraphics[width = 0.3 \textwidth]{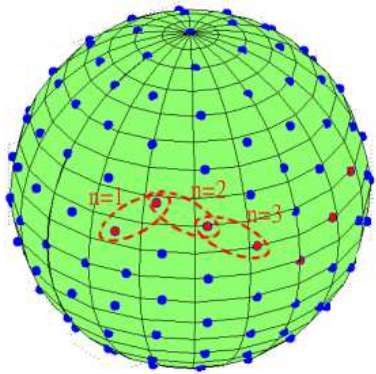}
\caption{Evolution of Class Means}
\label{exper:synthClass:ClassMeans}
\end{figure}

We let $n$ range from $1$ to $20$ and target a \meangap{} $\epsilon = 0.1$. We average over twenty runs of our algorithm. As a comparison, if our algorithm takes $\{\numIter_{n}\}_{n=1}^{20}$ samples, then we consider taking $\sum_{n=1}^{20} \numIter_{n}$ samples up front at $n=1$. This is what we would do if we assumed that our problem is not time varying. \figurename{}~\ref{exper:synthClass:rhoEst} shows $\hat{\rho}_{n}$, our estimate of $\rho$. \figurename{}~\ref{exper:synthClass:testLoss} shows the average test loss for both sampling strategies. To compute test loss we draw $T_{n}$ additional samples $\{\bz_{n}^{\text{test}}(\iterIndex)\}_{\iterIndex=1}^{T_{n}}$ from $p_{n}$ and compute $\frac{1}{T_{n}} \sum_{\iterIndex=1}^{T_{n}} \lossFunc(\bx_{n},\bz_{n}^{\text{test}}(\iterIndex))$. We see that our approach achieves substantially smaller test loss than taking all samples up front.

\vspace{-4mm}
\begin{figure}[!ht]
\centering
\begin{minipage}{0.5\textwidth}
 	\centering
	\includegraphics[width = 1 \textwidth]{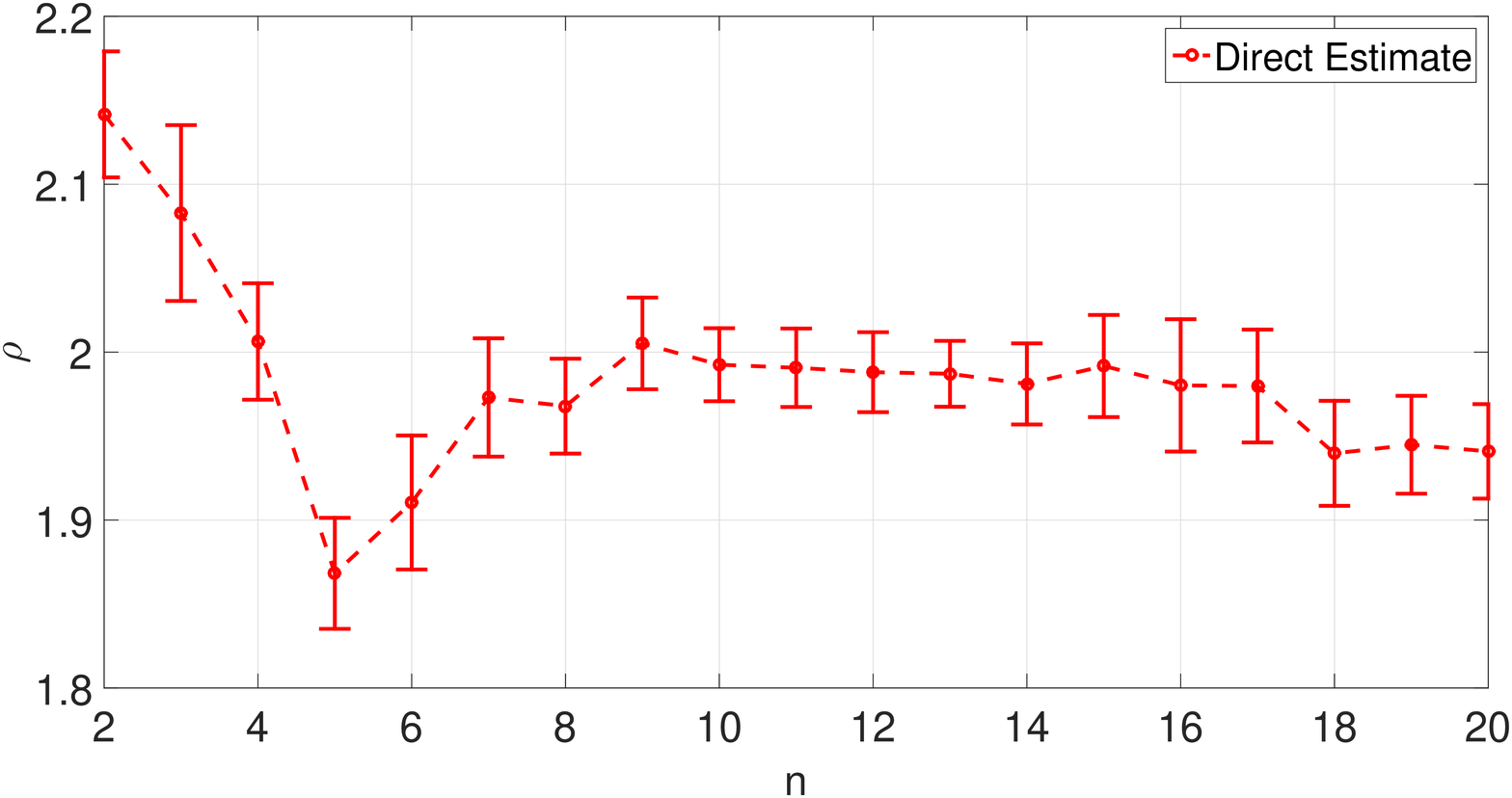}
	\caption{$\rho$ Estimate}
	\label{exper:synthClass:rhoEst}
\end{minipage}%
\begin{minipage}{0.5\textwidth}
 	\centering
	\includegraphics[width = 1 \textwidth]{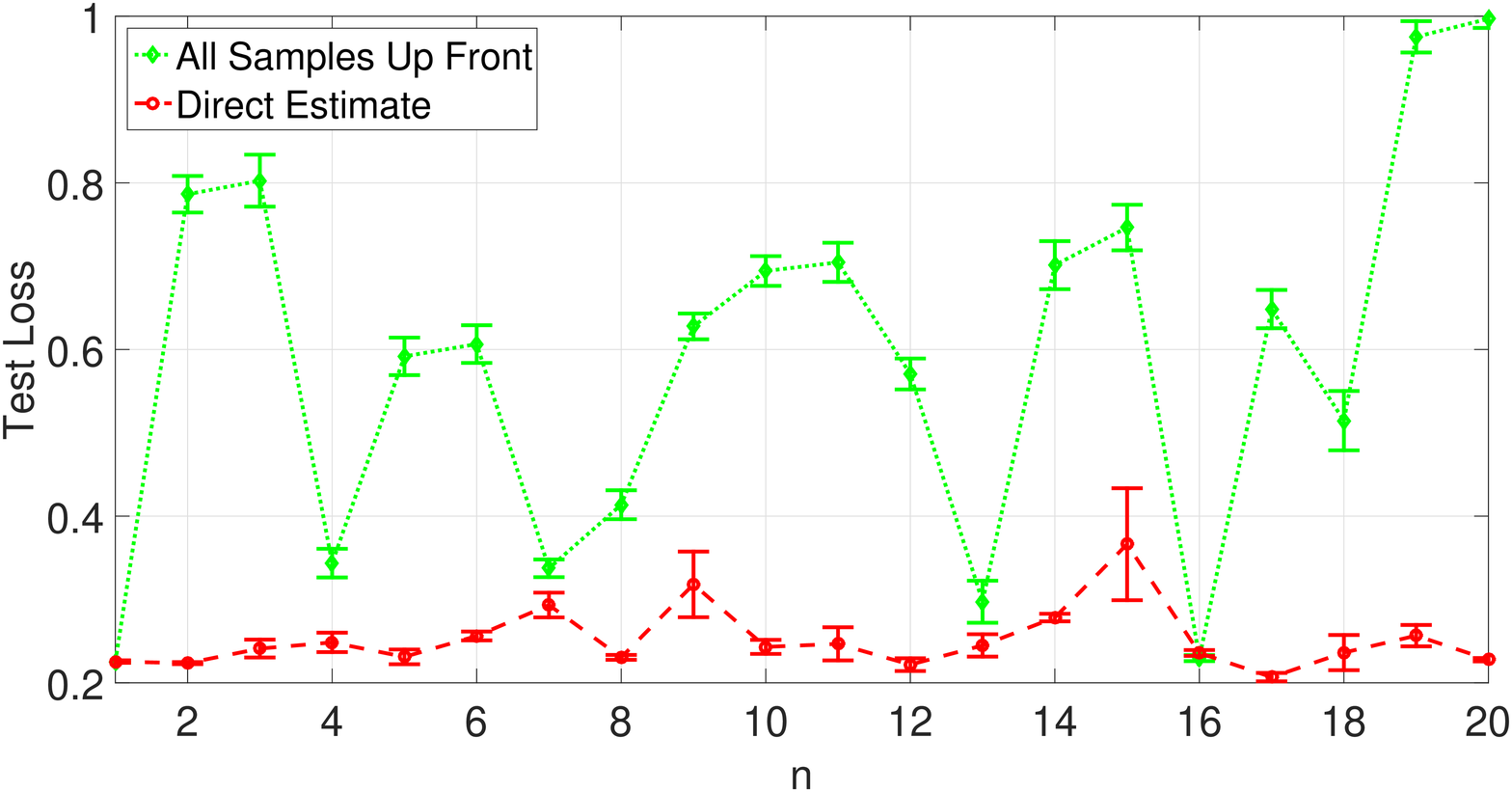}
	\caption{Test Loss}
	\label{exper:synthClass:testLoss}
\end{minipage}%
\end{figure}

\subsection{General Social Survey - Classification}

The General Social Survey (GSS) surveyed individuals every year to gather socio-economic data annually from 1981-2013 \cite{GSS2015}. We want to predict an individual's marital status ($y$) from several demographic features ($\bw$) including age, education, etc. We model this as a binary classification problem using loss
\[
\lossFunc(\bx,\bz) = \frac{1}{2} ( 1 - y ( \bw^{\top} \bx ) )_{+}^{2} + \frac{1}{2} \lambda \| \bx \|^{2}
\]
with ${\bz = (\bw,y) \in \mathbb{R}^{d} \times \mathbb{R}}$ and $(y)_{+} = \max\{y,0\}$. This is a smoothed version of the hinge loss used in support vector machines \cite{Hastie2001}. We find approximate minimizers using SGD with $\lambda = 0.1$. \figurename{}~\ref{exper:gssClass:testLoss} shows the test loss. We see that our approach achieves smaller test loss than taking all samples up front. We also plot receiver operating characteristics (ROC) \cite{Hastie2001} to characterize the performance of our classifiers. In particular we plot the ROC for 1974 in \figurename{}~\ref{exper:gssClass:roc1} and the ROC for 2012 in \figurename{}~\ref{exper:gssClass:roc28}. By examining the ROC, we see that taking all samples up front is much better in 1974 but much worse in 2012.

\begin{figure}[!ht]
\centering
\begin{minipage}[b]{0.48\linewidth} \quad
 	\centering
	\includegraphics[width = \textwidth]{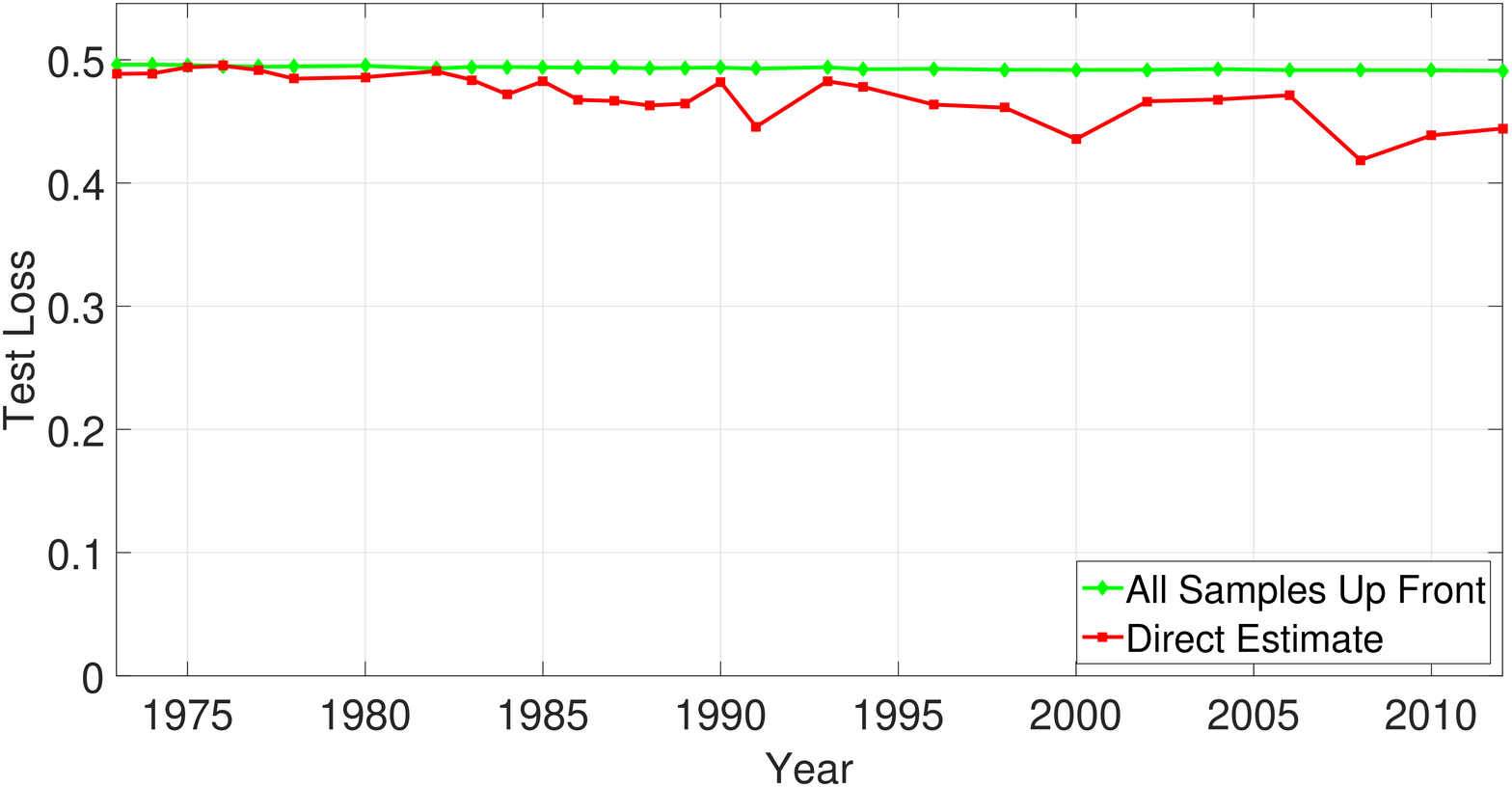}
	\caption{Test Loss}
	\label{exper:gssClass:testLoss}
\end{minipage}

\medskip

\begin{minipage}[b]{0.48\linewidth} \quad
 	\centering
	\includegraphics[width = \linewidth]{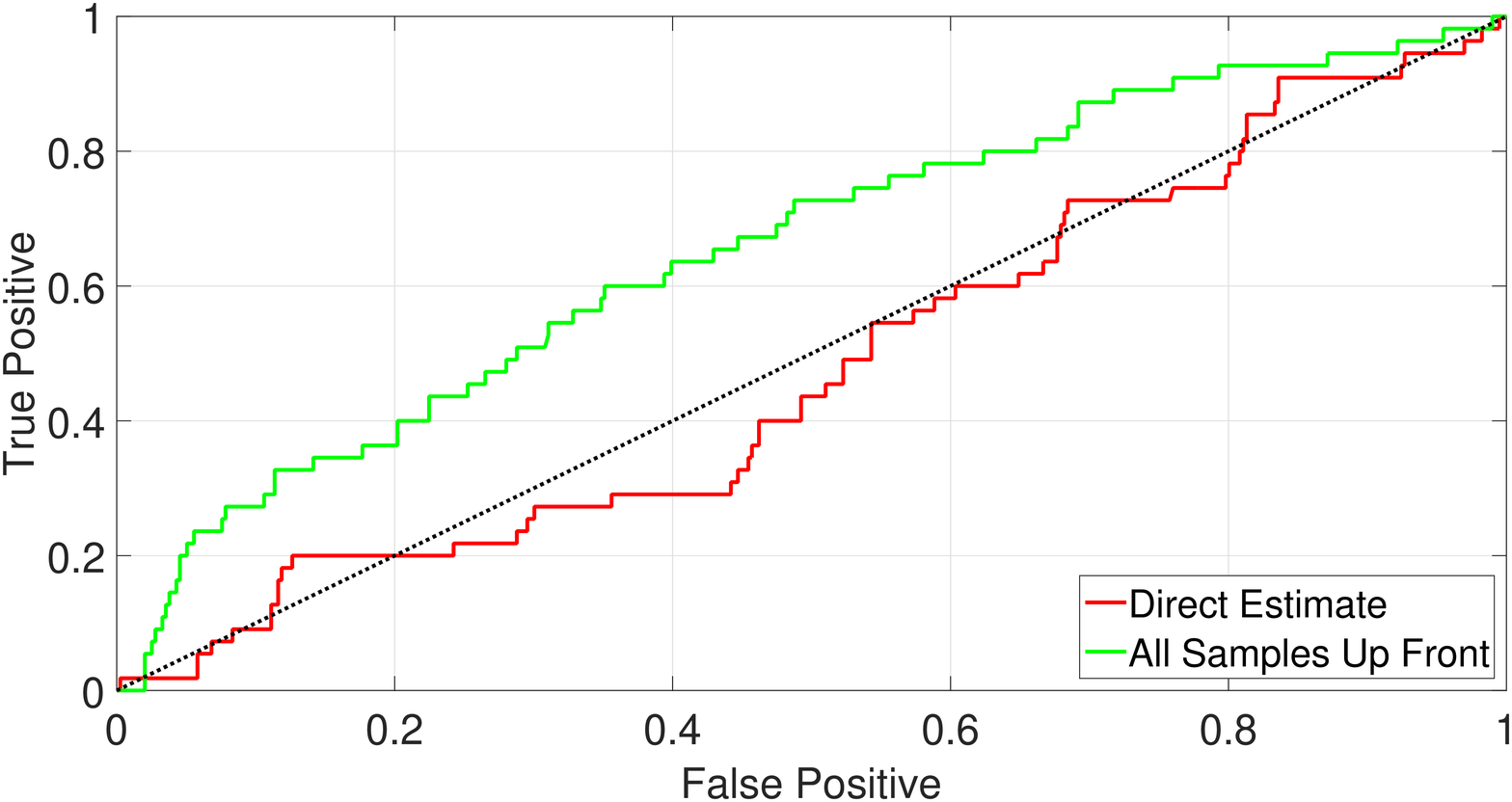}
	\caption{ROC for 1974}
	\label{exper:gssClass:roc1}
\end{minipage}
\begin{minipage}[b]{0.48\linewidth}
 	\centering
 	\includegraphics[width = \linewidth]{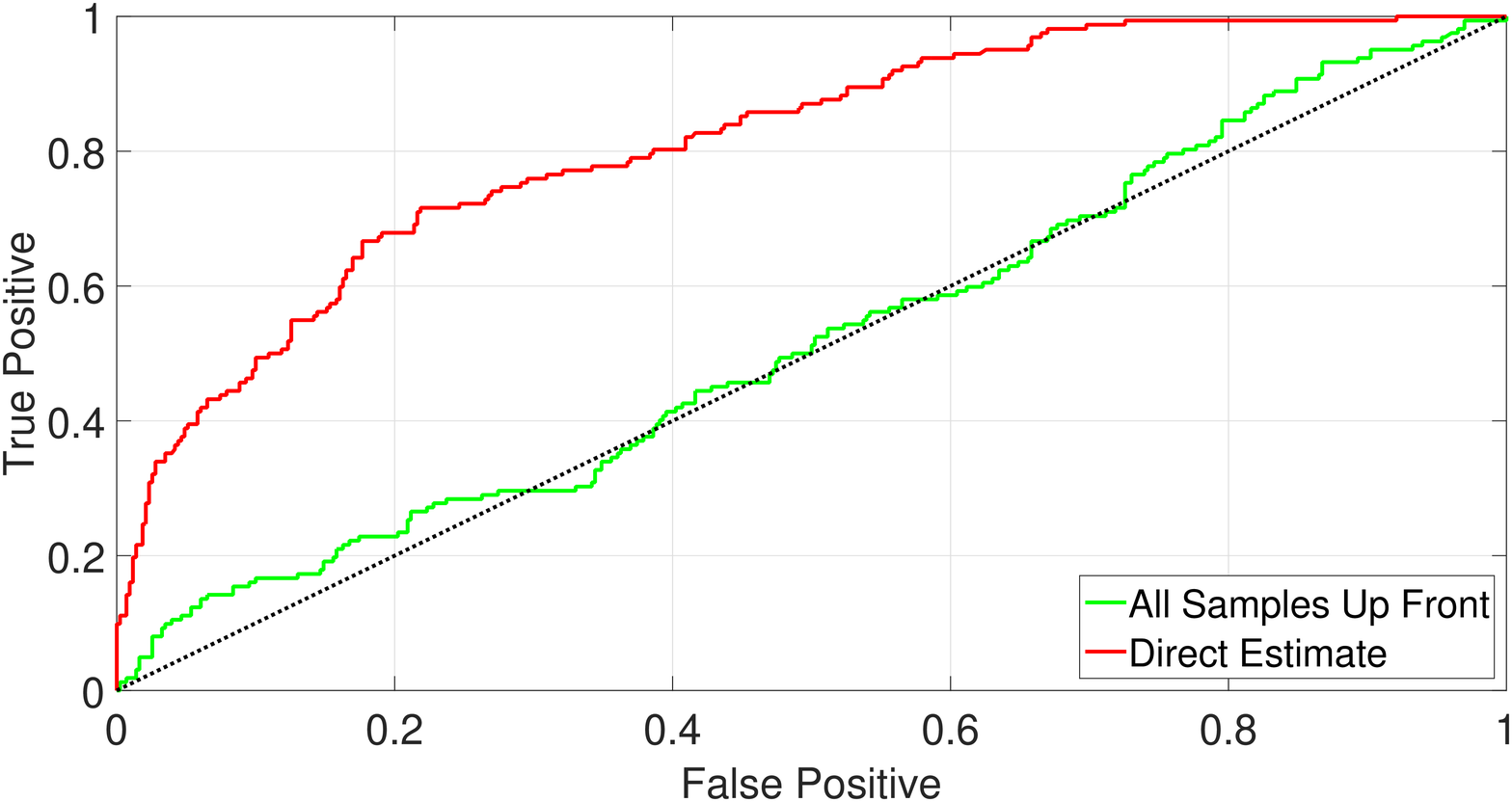}
	\caption{ROC for 2012}
	\label{exper:gssClass:roc28}
\end{minipage}
\end{figure}

\section{Conclusion}

We introduced a framework for adaptively solving a sequence of optimization problems with applications to machine learning. We developed estimates of the change in the minimizers used to determine the number of samples $\numIter_{n}$ needed to achieve a target \meangap{} $\epsilon$. Experiments with synthetic and real data demonstrate that this approach is effective.


\bibliographystyle{IEEEbib}
\bibliography{IEEEabrv,SlowlyChangingTasks_ICASSP2016,online,stochastic_prop}

\begin{thebibliography}{10}

\bibitem{Mohri2012}
M.~Mohri, A.~Rostamizadeh, and A.~Talwalkar,
\newblock {\em Foundations of Machine Learning},
\newblock The MIT Press, 2012.

\bibitem{Agarwal2011}
A.~Agarwal, H.~Daum\'e, and S.~Gerber,
\newblock ``Learning multiple tasks using manifold regularization.,''
\newblock in {\em NIPS}, 2011, pp. 46--54.

\bibitem{Evgeniou2004}
T.~Evgeniou and M.~Pontil,
\newblock ``Regularized multi--task learning,''
\newblock in {\em Proceedings of the Tenth ACM SIGKDD International Conference
  on Knowledge Discovery and Data Mining}, New York, NY, USA, 2004, KDD '04,
  pp. 109--117, ACM.

\bibitem{Zhang2012}
Y.~Zhang and D.~Yeung,
\newblock ``A convex formulation for learning task relationships in multi-task
  learning,''
\newblock {\em CoRR}, vol. abs/1203.3536, 2012.

\bibitem{Pan2010}
S.~Pan and Q.~Yang,
\newblock ``A survey on transfer learning,''
\newblock {\em IEEE Transactions on Knowledge and Data Engineering}, vol. 22,
  no. 10, pp. 1345--1359, Oct 2010.

\bibitem{Agarwal2008}
A.~Agarwal, A.~Rakhlin, and P.~Bartlett,
\newblock ``Matrix regularization techniques for online multitask learning,''
\newblock Tech. {R}ep. UCB/EECS-2008-138, EECS Department, University of
  California, Berkeley, Oct 2008.

\bibitem{Towfic2013}
Z.~Towfic, J.~Chu, and A.~Sayed,
\newblock ``Online distirubted online classifcation in the midst of concept
  drifts,''
\newblock {\em Neurocomputing}, vol. 112, pp. 138--152, 2013.

\bibitem{Tekin2014}
C.~Tekin, L.~Canzian, and M.~van~der Schaar,
\newblock ``Context adaptive big data stream mining,''
\newblock in {\em Allerton Conference}, 2014, pp. 46--54.

\bibitem{Dietterich2002}
T.~Dietterich,
\newblock ``Machine learning for sequential data: A review,''
\newblock in {\em Structural, Syntactic, and Statistical Pattern Recognition},
  2002, pp. 15--30.

\bibitem{Fawcett1997}
T.~Fawcett and F.~Provost,
\newblock ``Adaptive fraud detection.,''
\newblock {\em Data Min. Knowl. Discov.}, vol. 1, no. 3, pp. 291--316, 1997.

\bibitem{Qian1988}
N.~Qian and T.~Sejnowski,
\newblock ``Predicting the secondary structure of globular proteins using
  neural network models,''
\newblock {\em Journal of Molecular Biology}, vol. 202, pp. 865--884, Aug 1988.

\bibitem{BengioFrasconi1996}
Y.~Bengio and P.~Frasconi,
\newblock ``Input-output {HMM}'s for sequence processing,''
\newblock {\em IEEE Transactions on Neural Networks}, vol. 7(5), pp.
  1231--1249, 1996.

\bibitem{Dontchev2009}
A.~Dontchev and R.~Rockafellar,
\newblock {\em Implicit Functions and Solution Mappings: A View from
  Variational Analysis},
\newblock Springer, New York, New York, 2009.

\bibitem{Sriperumbudur12}
B.~Sriperumbudur,
\newblock ``On the empirical estimation of integral probability metrics,''
\newblock {\em Electronic Journal of Statistics}, pp. 1550--1599, 2012.

\bibitem{Very2012}
R.~Veryshin,
\newblock ``Introduction to non-asymptotic analysis of random matrices,''
\newblock Tech. {R}ep., University of Michigan, 2012.

\bibitem{Nemirovski2009}
A.~Nemirovski, A.~Juditsky, G.~Lan, and A.~Shapiro,
\newblock ``Stochastic approximation approach to stochastic programming,''
\newblock {\em SIAM Journal on Optimization}, vol. 19, pp. 1574--1609, 2009.

\bibitem{Buld2010}
V.V Buldygin and E.D. Pechuk,
\newblock ``Inequalities for the distributions of functionals of sub-gaussian
  vectors,''
\newblock {\em Theor. Probability and Math. Statist.}, pp. 25--36, 2010.

\bibitem{Janson04}
S.~Janson,
\newblock ``Large deviations for sums of partly dependent random variables,''
\newblock {\em Random Structures Algorithms}, vol. 24, pp. 234--248, 2004.

\bibitem{Boucheron13}
S.~Boucheron, G.~Lugosi, and P.~Massart,
\newblock {\em Concentration Inequalities: A Nonasymptotic Theory of
  Independence},
\newblock Oxford University Press, 2013.

\bibitem{Trefethen1997}
L.~Trefethen,
\newblock {\em Numerical Linear Algebra},
\newblock SIAM, 1997.

\bibitem{Kennan2001}
J.~Kennan,
\newblock ``Uniqueness of positive fixed points for increasing concave
  functions on rn: An elementary result,''
\newblock {\em Review of Economic Dynamics}, vol. 4, pp. 893–899, 2001.

\bibitem{Boyd2004}
Stephen Boyd and Lieven Vandenberghe,
\newblock {\em Convex Optimization},
\newblock Cambridge University Press, New York, NY, USA, 2004.

\bibitem{Granas2003}
A.~Granas and J.~Dugundji,
\newblock {\em Fixed Point Theory},
\newblock Springer-Verlag, 2003.

\bibitem{PSID2015}
``Panel study of income dynamics: public use dataset,''
\newblock {\em Survey Research Center}, 2015.

\bibitem{Jenkins2006}
S.~Jenkins and P.~Van Kerm,
\newblock ``Trends in income inequality, pro-poor income growth, and income
  mobility,''
\newblock {\em Oxford Economic Papers}, vol. 58, no. 3, pp. 531--548, 2006.

\bibitem{Hastie2001}
T.~Hastie, R.~Tibshirani, and J.H. Friedman,
\newblock {\em The elements of statistical learning: data mining, inference,
  and prediction: with 200 full-color illustrations},
\newblock New York: Springer-Verlag, 2001.

\bibitem{GSS2015}
``General social survey,''
\newblock {\em National Opinion Research Center}, 2015.

\bibitem{BachMoulines2011}
F.~Bach and E.~Moulines,
\newblock ``{Non-Asymptotic Analysis of Stochastic Approximation Algorithms for
  Machine Learning},''
\newblock in {\em {Advances in Neural Information Processing Systems (NIPS)}},
  Spain, 2011.

\bibitem{Bertsekas1999}
D.~Bertsekas,
\newblock {\em Nonlinear Programming},
\newblock Athena Scientific, 1999.

\bibitem{Bottou1998}
L\'eon Bottou,
\newblock ``Online learning and stochastic approximations,'' 1998.

\bibitem{NedicLee12}
A.~Nedic and S.~Lee,
\newblock ``Analysis of mirror descent for strongly convex functions,''
\newblock {\em ArXiV}, 2013.

\bibitem{NesterovBook2004}
Yu. Nesterov,
\newblock {\em Introductory Lectures on Convex Optimization: A Basic Course},
\newblock Kluwer Academic Publishers, Norwell, Massachusetts, USA, 2004.

\bibitem{Antonini2005}
R.~Antonini and Y.~Kozachenko,
\newblock ``A note on the asymptotic behavior of sequences of generalized
  subgaussian random vectors,''
\newblock {\em Random Op. and Stoch. Equ.}, vol. 13, pp. 39--52, 2005.

\end{thebibliography}

\appendix

\section{Examples of $b(d_{0},\numIter)$:}
\label{bBounds}

For this section, we drop the $n$ index for convenience. The bounds of this form depend on the strong convexity parameter $m$ and an assumption on how the gradients grow. In general, we assume that
\[
\mathbb{E}_{\bz \sim p}\| \nabla_{\bx} \lossFunc(\bx,\bz) \|^{2} \leq A + B \| \bx - \bx^{*} \|^{2}
\]
The base algorithm we look at is SGD. First, we generate iterates $\bx(0),\ldots,\bx(\numIter)$ through SGD as follows:
\begin{eqnarray}
\bx(\ell+1) &=& \Pi_{\xSp} \left[ \bx(\ell) - \mu(\ell+1) \nabla_{\bx} \lossFunc(\bx(\ell),\bz(\ell))  \right] \;\;\;\; \ell = 0,\ldots,\numIter-1 \nonumber
\end{eqnarray}
with $\bx(0)$ fixed. We then combine the iterates to yield a final approximate minimizer
\begin{eqnarray}
\bar{\bx}(\numIter) &=& \phi(\bx(0),\ldots,\bx(\numIter)) \nonumber
\end{eqnarray} 
For our choice of $\phi$, we look at two cases:
\begin{enumerate}
\item No iterate averaging, i.e.,
\[
\phi(\bx(0),\ldots,\bx(\numIter)) = \bx(\numIter)
\]
\item Iterate averaging, i.e, for a convex combination $\{\lambda(\ell)\}_{\ell=0}^{\numIter}$
\[
\phi(\bx(0),\ldots,\bx(\numIter)) = \sum_{\ell=0}^{\numIter} \lambda(\ell) \bx(\ell)
\]
\end{enumerate}

Define
\begin{equation}
\label{bBounds:dDef}
d(\ell) \triangleq \| \bx(\ell) - \bx^{*} \|^{2}
\end{equation}
First we bound $\mathbb{E}[d(\ell)]$ in Lemma~\ref{bBounds:dBound}.
\begin{lem}
\label{bBounds:dBound}
Suppose that the function $f(\bx)$ has Lipschitz continuous gradients. Then it holds that
\[
\mathbb{E}[d(\ell)] \leq \prod_{\iterIndex = 1}^{\ell} (1 - 2 m \mu(\ell) + B \mu^{2}(\ell)) + \sum_{\iterIndex = 1}^{\ell} \prod_{i=\iterIndex+1}^{\ell} (1 - 2 m \mu(i) + B \mu^{2}(i)) \mu^{2}(\iterIndex)
\]
\end{lem}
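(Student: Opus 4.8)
The plan is to reduce the claim to a scalar linear recursion for $\mathbb{E}[d(\ell)]$ and then solve that recursion by induction on $\ell$, since the displayed bound is precisely the unrolled solution of
\[
\mathbb{E}[d(\ell+1)] \leq \left(1 - 2m\mu(\ell+1) + B\mu^{2}(\ell+1)\right)\mathbb{E}[d(\ell)] + A\mu^{2}(\ell+1).
\]
Establishing this one-step inequality is the heart of the argument; the final product-and-sum expression then drops out mechanically.

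To obtain the one-step inequality I would first remove the projection. Since $\bx^{*}\in\xSp$ we have $\Pi_{\xSp}[\bx^{*}]=\bx^{*}$, and because Euclidean projection onto a convex set is nonexpansive,
\[
d(\ell+1) = \big\|\Pi_{\xSp}[\bx(\ell) - \mu(\ell+1)\nabla_{\bx}\lossFunc(\bx(\ell),\bz(\ell))] - \Pi_{\xSp}[\bx^{*}]\big\|^{2} \leq \big\|\bx(\ell) - \mu(\ell+1)\nabla_{\bx}\lossFunc(\bx(\ell),\bz(\ell)) - \bx^{*}\big\|^{2}.
\]
Expanding the square produces $d(\ell)$, the cross term $-2\mu(\ell+1)\langle\nabla_{\bx}\lossFunc(\bx(\ell),\bz(\ell)),\bx(\ell)-\bx^{*}\rangle$, and the quadratic term $\mu^{2}(\ell+1)\|\nabla_{\bx}\lossFunc(\bx(\ell),\bz(\ell))\|^{2}$.

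Next I would condition on the history $\sigma(\bx(0),\ldots,\bx(\ell))$. Because the sample $\bz(\ell)$ is drawn fresh from $p$ and is independent of $\bx(\ell)$, the stochastic gradient is unbiased, $\mathbb{E}[\nabla_{\bx}\lossFunc(\bx(\ell),\bz(\ell))\mid\bx(\ell)]=\nabla f(\bx(\ell))$, so the cross term contributes $-2\mu(\ell+1)\langle\nabla f(\bx(\ell)),\bx(\ell)-\bx^{*}\rangle$. Strong convexity (Assumption~\ref{probState:assump2}) together with the first-order optimality of $\bx^{*}$ yields the strong-monotonicity bound $\langle\nabla f(\bx(\ell)),\bx(\ell)-\bx^{*}\rangle\geq m\,d(\ell)$, which one obtains by adding the two strong-convexity inequalities evaluated at the pair $(\bx(\ell),\bx^{*})$. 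The quadratic term is controlled by the gradient-growth hypothesis $\mathbb{E}[\|\nabla_{\bx}\lossFunc(\bx(\ell),\bz(\ell))\|^{2}\mid\bx(\ell)]\leq A + B\,d(\ell)$. Substituting both bounds and taking total expectation removes the conditioning and produces the one-step recursion above.

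Finally, writing $c(\iterIndex)=1-2m\mu(\iterIndex)+B\mu^{2}(\iterIndex)$, a routine induction unrolls the recursion to
\[
\mathbb{E}[d(\ell)] \leq \left(\prod_{\iterIndex=1}^{\ell}c(\iterIndex)\right)\mathbb{E}[d(0)] + A\sum_{\iterIndex=1}^{\ell}\left(\prod_{i=\iterIndex+1}^{\ell}c(i)\right)\mu^{2}(\iterIndex),
\]
which is the claimed bound, with $\mathbb{E}[d(0)]$ the natural factor multiplying the leading product and $A$ the constant multiplying each summand. I expect the only genuine subtlety to lie in the conditioning step: one must verify that $\bx(\ell)$ is measurable with respect to the past while $\bz(\ell)$ is independent of it, so that the conditional expectation of the stochastic gradient equals $\nabla f(\bx(\ell))$ and both the cross and quadratic terms reduce to deterministic functions of $d(\ell)$. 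The Lipschitz-gradient hypothesis plays only the supporting role of guaranteeing that $\nabla f$ is well defined and that expectation and differentiation may be interchanged; the contraction itself is driven entirely by strong convexity and the growth bound.
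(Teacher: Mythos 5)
Your proposal follows essentially the same route as the paper: drop the projection by nonexpansiveness, expand the square, condition on the past to invoke unbiasedness, strong monotonicity from strong convexity, and the growth bound $\mathbb{E}\|\nabla_{\bx}\lossFunc(\bx,\bz)\|^{2}\leq A+B\|\bx-\bx^{*}\|^{2}$, then unroll the resulting linear recursion. The one step you gloss over that the paper makes explicit is the verification that each coefficient $1-2m\mu(\iterIndex)+B\mu^{2}(\iterIndex)$ is nonnegative (the paper shows it is at least $\tfrac{1}{2}$); without this the ``routine induction'' is not legitimate, since substituting the inductive bound into the recursion multiplies an inequality by that coefficient, so you should add that check before unrolling.
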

\begin{proof}
Following the standard SGD analysis (see \cite{Nemirovski2009}), it holds that
\begin{eqnarray}
d(\ell) &\leq& \| \bx(\ell-1) - \bx^{*} - \mu(\ell) \nabla_{\bx} \lossFunc(\bx(\ell-1),\bz(\ell)) \|^{2} \nonumber \\
&\leq& d(\ell-1) - 2 \mu(\ell) \inprod{\bx(\ell-1) - \bx^{*}}{\nabla_{\bx} \lossFunc(\bx(\ell-1),\bz(\ell))} + \mu^{2}(\ell) \| \nabla_{\bx} \lossFunc(\bx(\ell-1),\bz(\ell)) \|^{2}  \nonumber
\end{eqnarray}
Then it follows that
\begin{align}
\mathbb{E}&[d(\ell) \;|\; \bx(\ell-1)] \nonumber \\
&\leq d(\ell-1)  - 2 \mu(\ell) \inprod{\bx(\ell-1) - \bx^{*}}{\nabla f(\bx(\ell-1)) } + \mu^{2}(\ell) \mathbb{E}[ \| \nabla_{\bx} \lossFunc(\bx(\ell-1),\bz(\ell)) \|^{2} \;|\; \bx(\ell-1)]  \nonumber \\
&\leq (1 - 2m \mu(\ell) + B \mu^{2}(\ell)) d(\ell-1) + \mu^{2}(\ell-1) A \nonumber
\end{align}
and
\[
\mathbb{E}[d(\ell)] \leq (1 - 2 m \mu(\ell) + B \mu^{2}(\ell)) \mathbb{E}[d(\ell-1)] + \mu^{2}(\ell-1) A
\]
Since $B > m$, we have
\[
2 m \mu - B \mu^2 \leq 2 \sqrt{\frac{B}{2}} \mu \left( 1 - \sqrt{\frac{B}{2}} \mu \right) \leq 2 \frac{1}{4} = \frac{1}{2}
\]
and so
\[
1 - 2m \mu(\ell) + B \mu^{2}(\ell) \geq 1 - \frac{1}{2} = \frac{1}{2}
\]
Since this quantity is non-negative, we can unwind this recursion to yield
\[
\mathbb{E}[d(\ell)] \leq \prod_{\iterIndex = 1}^{\ell} (1 - 2 m \mu(\ell) + B \mu^{2}(\ell)) + \sum_{\iterIndex = 1}^{\ell} \prod_{i=\iterIndex+1}^{\ell} (1 - 2 m \mu(i) + B \mu^{2}(i)) \mu^{2}(\iterIndex)
\]
\end{proof}

The bound in Lemma~\ref{bBounds:dBound} can be further bounded into a closed form as follows from \cite{BachMoulines2011}:
Define
\[
\varphi_{\beta}(t) = \begin{cases}
\frac{t^{\beta} - 1}{\beta}, & \text{if }\beta \neq 0 \\
\log(t), & \text{if }\beta = 0
\end{cases}
\]
Then with $\mu(\ell) = C \ell^{-\alpha}$, it holds that
\[
\mathbb{E}[d(\ell)] \leq \begin{cases}
2 \exp\left\{ 2 B C^{2} \varphi_{1-2\alpha}(\ell) \right\} \exp\left\{ - \frac{mC}{4} \ell^{1-\alpha} \right\} \left( \mathbb{E}[d(0)] + \frac{A}{B} \right) + \frac{2AC}{m \ell^{\alpha}}, & \text{if } 0 \leq \alpha < 1 \\
\frac{\exp\left\{ B C^{2}  \right\}}{\ell^{mC}} \left( \mathbb{E}[d(0)] + \frac{A}{B} \right) + A C^{2} \frac{ \varphi_{mC/2-1}(\ell)}{\ell^{mC/2}},& \text{if } \alpha = 1
\end{cases}
\]
Note that this bound is a closed form but is substantially looser than Lemma~\ref{bBounds:dBound}. In the case that the functions in question have Lipschitz continuous gradients, we introduce a bound on the \meangap{} using Lemma~\ref{bBounds:dBound}. This case corresponds to choosing
\[
\phi(\bx(0),\ldots,\bx(\numIter)) = \bx(\numIter)
\]

\begin{lem}
\label{bBounds:fBoundBasic}
With arbitrary step sizes and assuming that $f(\bx)$ has Lipschitz continuous gradients with modulus $M$, it holds that
\[
\mathbb{E}[f(\bx)] - f(\bx^{*}) \leq \frac{1}{2} M \mathbb{E}[d(\numIter)]
\]
and therefore, we set
\[
b(d_{0},\numIter) = \frac{1}{2} M \left( \prod_{\ell = 1}^{\numIter} (1 - 2 m \mu(\ell) + B \mu^{2}(\ell)) + \sum_{\ell = 1}^{\numIter} \prod_{i=\ell+1}^{\numIter} (1 - 2 m \mu(i) + B \mu^{2}(i)) \mu^{2}(\ell) \right) \nonumber
\]
\end{lem}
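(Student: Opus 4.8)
The plan is to combine the descent-type inequality implied by Lipschitz continuity of $\nabla f$ with the recursive bound on $\mathbb{E}[d(\numIter)]$ already established in Lemma~\ref{bBounds:dBound}. First I would invoke the standard quadratic upper bound valid for any function with $M$-Lipschitz continuous gradients: for all $\bx$,
\[
f(\bx) \leq f(\bx^{*}) + \inprod{\nabla f(\bx^{*})}{\bx - \bx^{*}} + \frac{M}{2} \| \bx - \bx^{*} \|^{2}.
\]
Since $\bx^{*}$ minimizes $f$, the first-order optimality condition gives $\nabla f(\bx^{*}) = 0$, so the cross term vanishes and I obtain the pointwise bound $f(\bx) - f(\bx^{*}) \leq \frac{M}{2} \|\bx - \bx^{*}\|^{2}$. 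Evaluating at $\bx = \bx(\numIter)$, the final SGD iterate (recall that in this part of the appendix we are in the no-averaging case, so $\bx = \bx(\numIter)$), and recognizing $\|\bx(\numIter) - \bx^{*}\|^{2} = d(\numIter)$ from \eqref{bBounds:dDef}, this reads $f(\bx) - f(\bx^{*}) \leq \frac{M}{2} d(\numIter)$.

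Next I would take expectations of both sides. Because the right-hand side is exactly $\frac{M}{2} d(\numIter)$, linearity yields
\[
\mathbb{E}[f(\bx)] - f(\bx^{*}) \leq \frac{1}{2} M \, \mathbb{E}[d(\numIter)],
\]
which is the first claimed inequality. The second claim then follows immediately by substituting the upper bound on $\mathbb{E}[d(\numIter)]$ from Lemma~\ref{bBounds:dBound} into the right-hand side and reading off the resulting expression as the definition of $b(d_{0},\numIter)$.

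The only delicate point is the appeal to $\nabla f(\bx^{*}) = 0$: strictly, this requires $\bx^{*}$ to lie in the interior of $\xSp$ (equivalently, to be the unconstrained minimizer), so that the first-order condition holds with equality rather than merely as a variational inequality. Under the strong-convexity Assumption~\ref{probState:assump2} the minimizer is unique, and the setting of this appendix treats it as interior, so the equality is available. The remaining bookkeeping — carrying the $\mathbb{E}[d(0)] \leq d_{0}$ and $A$ factors through the recursion of Lemma~\ref{bBounds:dBound} — is routine, and I do not expect it to present any real obstacle.
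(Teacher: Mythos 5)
Your proof is correct and takes essentially the same route as the paper's: the paper simply invokes ``the descent lemma'' from Bertsekas (exactly the quadratic upper bound you write out) and then substitutes the bound from Lemma~\ref{bBounds:dBound}. Your caveat about needing $\nabla f(\bx^{*}) = 0$ is a legitimate point the paper glosses over --- with projection onto $\xSp$ the first-order condition only gives $\inprod{\nabla f(\bx^{*})}{\bx - \bx^{*}} \geq 0$, which does not let you drop the cross term from an upper bound, so the stated inequality really does rely on the minimizer being interior (or on the gradient vanishing there).
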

\begin{proof}
Using the descent lemma from \cite{Bertsekas1999}, it holds that
\[
\mathbb{E}[f(\bx)] - f(\bx^{*}) \leq \frac{1}{2} M \mathbb{E}[d(\numIter)]
\]
Plugging in the bound from Lemma~\ref{bBounds:dBound} yields the bound $b(d_{0},\numIter)$.
\end{proof}

Next, we introduce a bound inspired by \cite{Bottou1998} for the case where $\phi(\bx(0),\ldots,\bx(\numIter))$ corresponds to forming a convex combination of the iterates.

\begin{lem}
\label{bBounds:fBoundConstantAve}
With a constant step size and averaging with
\[
\lambda(\ell) = \begin{cases}
\frac{\gamma(\ell)}{\sum_{\tau=1}^{\numIter} \gamma(\tau)},& \text{if }\ell >0 \\
0, & \text{if } \ell = 0
\end{cases}
\]
where
\[
\gamma(\ell) = (1 - m \mu + B \mu^{2})^{-\ell}
\]
it holds that
\[
b(d_{0},\numIter) = \frac{d_{0}}{2 \mu \sum_{\ell=0}^{\numIter} \gamma(\ell)} + \frac{1}{2} A \mu
\]
\end{lem}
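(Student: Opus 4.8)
The plan is to adapt the SGD distance recursion behind Lemma~\ref{bBounds:dBound}, but instead of discarding the inner-product term I will retain it and convert it into the function gap via strong convexity; exponentially weighted averaging of the resulting per-step inequalities then telescopes to give the stated closed form. Writing $d(\ell)=\|\bx(\ell)-\bx^*\|^2$ and using non-expansiveness of the projection together with the gradient-growth bound $\mathbb{E}\|\nabla_{\bx}\lossFunc(\bx,\bz)\|^{2}\le A+B\|\bx-\bx^*\|^{2}$, the constant-step update gives
\[
\mathbb{E}[d(\ell+1)\mid \bx(\ell)]\le d(\ell)-2\mu\inprod{\nabla f(\bx(\ell))}{\bx(\ell)-\bx^*}+\mu^{2}\big(A+B\,d(\ell)\big).
\]
The key step, and the one that distinguishes this lemma from Lemma~\ref{bBounds:dBound}, is to keep the inner product and lower bound it using strong convexity (assumption~\ref{probState:assump2}) evaluated between $\bx(\ell)$ and $\bx^*$, namely $\inprod{\nabla f(\bx(\ell))}{\bx(\ell)-\bx^*}\ge \big(f(\bx(\ell))-f(\bx^*)\big)+\tfrac{m}{2}d(\ell)$. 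Substituting and taking full expectations yields, with $q\triangleq 1-m\mu+B\mu^{2}$,
\[
2\mu\big(\mathbb{E}[f(\bx(\ell))]-f(\bx^*)\big)\le q\,\mathbb{E}[d(\ell)]-\mathbb{E}[d(\ell+1)]+\mu^{2}A .
\]

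Next I would multiply the $\ell$-th inequality by the weight $\gamma(\ell)=q^{-\ell}$ and sum over the iterates. The weights are chosen precisely so that $q\,\gamma(\ell)=\gamma(\ell-1)$, which makes the weighted distance terms telescope, leaving only a boundary term proportional to $\mathbb{E}[d(0)]\le d_{0}$ and a non-positive term of the form $-\gamma(K-1)\mathbb{E}[d(K)]$ that I discard. Because $q<1$ for a sufficiently small step size (so that $\gamma$ is increasing and the leading distance term can be absorbed into $d_{0}$ without an extra factor), this produces a bound on the $\gamma$-weighted sum of the gaps $\mathbb{E}[f(\bx(\ell))]-f(\bx^*)$. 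Applying Jensen's inequality, using convexity of $f$ (a consequence of~\ref{probState:assump2}), to the convex combination $\bar{\bx}(\numIter)=\sum_{\ell}\lambda(\ell)\bx(\ell)$ with $\lambda(\ell)\propto\gamma(\ell)$ moves the averaging inside the expectation; dividing through by the weight normalization $\sum_{\ell}\gamma(\ell)$ then yields the two terms $\tfrac{d_{0}}{2\mu\sum_{\ell}\gamma(\ell)}$ and $\tfrac12 A\mu$.

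The main obstacle is the telescoping and normalization bookkeeping: one must track exactly which iterates enter the weighted average and line up the summation ranges so that the denominator comes out as $\sum_{\ell=0}^{\numIter}\gamma(\ell)$ rather than a shifted partial sum, and one must verify the signs — that the discarded $\mathbb{E}[d(\numIter)]$ boundary term carries the favorable sign and that $q\in(0,1)$ so the initial distance is controlled by $d_{0}$. Individually none of these steps is hard, but matching the constants and index shifts to the exact stated form is the delicate part; everything else is the standard strongly-convex SGD estimate combined with exponentially weighted iterate averaging in the style of~\cite{Bottou1998}.
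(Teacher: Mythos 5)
Your proposal is correct and follows essentially the same route as the paper's proof: retain the inner-product term, lower-bound it via strong convexity to obtain the recursion $\mathbb{E}[d(\ell)] \leq (1-m\mu+B\mu^{2})\mathbb{E}[d(\ell-1)] - 2\mu(\mathbb{E}[f(\bx(\ell-1))]-f(\bx^{*})) + A\mu^{2}$, telescope with the weights $\gamma(\ell)$, and finish with convexity of $f$ applied to the weighted average. The paper's proof is just a terser sketch of exactly these steps, so your expanded version (including the observation that $q<1$ is needed to absorb the boundary term into $d_{0}$) fills in the details faithfully.
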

\begin{proof}
By strong convexity, it holds that
\[
-\inprod{\bx(\ell-1) - \bx^{*}}{\nabla f(\bx(\ell-1))} \leq -m\|\bx(\ell-1) - \bx^{*}\|^{2} - \left( f(\bx(\ell-1)) - f(\bx^{*})  \right)
\]
Following the Lyapunov-style analysis of Lemma~\ref{bBounds:dBound}, it holds that
\[
\mathbb{E}[d(\ell)] \leq (1-m \mu + B \mu^{2}) \mathbb{E}[d(\ell-1)] - 2 \mu \left( \mathbb{E}[f(\bx(\ell-1))] - f(\bx^{*}) \right) + A \mu^{2}
\]
Rearranging, using the telescoping sum, and using convexity, it holds that
\[
\mathbb{E}[f(\bx)] - f(\bx^{*}) \leq \frac{d_{0}}{2 \mu \sum_{\tau=0}^{\numIter} \gamma(\tau)} + \frac{1}{2} A \mu
\]
\end{proof}
If we set $\mu = \frac{1}{\sqrt{\numIter}}$, then it holds that
\[
b(d_{0},\numIter) = \mathcal{O}\left( \frac{1}{\sqrt{\numIter}} \right)
\]
for Lemma~\ref{bBounds:fBoundConstantAve}.

We consider an extension of the averaging scheme in \cite{NedicLee12}. The bound in this paper only works with $B = 0$, so we extend it slightly to handle $B > 0$.

\begin{lem}
\label{bBounds:fBoundNedLeeAve}
 Consider the choice of step sizes given by 
\[
\mu(\ell) = \frac{1}{m \ell}  \;\;\;\; \forall \ell \geq 1
\]
Then
\[
b(d_{0},\numIter) = \frac{\frac{1}{2} d(0) + \frac{1}{2} (\numIter+1) A + \frac{1}{2} B \sum_{\ell=0}^{\numIter} \gamma(\ell)}{ 1 + \frac{1}{2} m (\numIter+1)(\numIter+2) }
\]
where
\[
\mathbb{E}[d(\ell)] \leq \gamma(\ell)
\] 
Note that we can use the bound in Lemma~\ref{bBounds:dBound} here.
\end{lem}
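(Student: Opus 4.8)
The plan is to re-run the Lyapunov recursion behind Lemmas~\ref{bBounds:dBound} and \ref{bBounds:fBoundConstantAve}, now specialized to the harmonic step size $\mu(\ell)=\frac{1}{m\ell}$, and then to combine the per-step inequalities using the weighted-averaging weights of \cite{NedicLee12}. Starting from the projected SGD update, using non-expansiveness of the projection, conditioning on the past, invoking the gradient-growth bound $\mathbb{E}\|\nabla_{\bx}\lossFunc(\bx,\bz)\|^{2}\le A+B\|\bx-\bx^{*}\|^{2}$, and applying strong convexity (Assumption~\ref{probState:assump2}), I would obtain exactly the recursion used in the proof of Lemma~\ref{bBounds:fBoundConstantAve},
\[
\mathbb{E}[d(\ell)] \le \left(1-m\mu(\ell)+B\mu^{2}(\ell)\right)\mathbb{E}[d(\ell-1)] - 2\mu(\ell)\left(\mathbb{E}[f(\bx(\ell-1))]-f(\bx^{*})\right) + A\mu^{2}(\ell).
\]
With $\mu(\ell)=\frac{1}{m\ell}$ the contraction factor is $1-m\mu(\ell)=\frac{\ell-1}{\ell}$, and I would rearrange to isolate the per-step excess risk $\mathbb{E}[f(\bx(\ell-1))]-f(\bx^{*})$ on one side.

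The second step is the weighted telescoping. Multiplying the $\ell$-th rearranged inequality by a weight growing linearly in $\ell$ (the Nedi\'c--Lee choice) and summing, the factor $\frac{\ell-1}{\ell}$ is precisely what makes the squared-distance terms $\mathbb{E}[d(\ell)]$ collapse up to a controlled residual. Treating the degenerate first step $\ell=1$ (where $1-m\mu(1)=0$) separately, the boundary of the telescope contributes the $\tfrac12 d(0)$ term from the initial iterate and a nonnegative multiple of $\mathbb{E}[d(\numIter+1)]$ that is simply dropped. Summing the weights produces the normalizing constant $1+\tfrac12 m(\numIter+1)(\numIter+2)$, while the $A\mu^{2}(\ell)$ noise terms accumulate to $\tfrac12(\numIter+1)A$. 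Applying convexity (Jensen's inequality) to the resulting convex combination of the iterates converts the weighted sum of per-step excess risks into the excess risk of the averaged iterate $\phi(\bx(0),\ldots,\bx(\numIter))$, and dividing by the total weight yields the claimed closed form for $b(d_{0},\numIter)$.

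The main obstacle, and the reason the scheme of \cite{NedicLee12} must be extended, is the term $B\mu^{2}(\ell)\mathbb{E}[d(\ell-1)]$: because it feeds the squared distance back into the recursion, it does not telescope and, when $B>0$, cannot be folded into the contraction as in the bounded-gradient case. The fix I would use is to decouple every distance-dependent residual (both the $B$-weighted terms and any leftover from the telescope) by the \emph{deterministic} a priori bound $\mathbb{E}[d(\ell)]\le\gamma(\ell)$ supplied by Lemma~\ref{bBounds:dBound}; substituting this bound turns those terms into the closed-form contribution $\tfrac12 B\sum_{\ell=0}^{\numIter}\gamma(\ell)$ in the numerator, with $\gamma(0)=d(0)$ absorbing the first-step piece. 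The remaining delicate point is pure bookkeeping: selecting the linear weights and handling the $\ell=1$ boundary so that the telescope leaves exactly $\tfrac12 d(0)$ and the weights sum to exactly $1+\tfrac12 m(\numIter+1)(\numIter+2)$. Since the resulting $\mathcal{O}(1/\numIter)$ rate is insensitive to these constants, the verification reduces to matching coefficients, and I expect the only genuinely new ingredient relative to the $B=0$ argument to be the $\gamma$-substitution that closes the recursion.
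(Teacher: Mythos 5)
Your overall architecture matches the paper's proof: the same Lyapunov recursion for $\mathbb{E}[d(\ell)]$, a reweighting of the $\ell$-th inequality so that the excess-risk terms carry weight $1/\mu(\ell)=m\ell$ and the normalization becomes $\sum_{\tau=0}^{\numIter}1/\mu(\tau)=1+\tfrac12 m(\numIter+1)(\numIter+2)$, the same handling of the non-telescoping $B$-term by substituting the a priori bound $\mathbb{E}[d(\ell)]\le\gamma(\ell)$ from Lemma~\ref{bBounds:dBound}, the same dropped right boundary term and $\tfrac12 d(0)$ left boundary term, and the same convexity step to pass to the averaged iterate. One terminological slip: the multiplier applied to the recursion is $1/\mu^{2}(\ell)=m^{2}\ell^{2}$, which is quadratic in $\ell$; it is the resulting averaging weights $\lambda(\ell)\propto 1/\mu(\ell)$ on the iterates that are linear. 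Your stated outcomes (the normalization constant and the accumulation of the noise terms to $\tfrac12(\numIter+1)A$) are only consistent with the quadratic multiplier, so I read your plan as intending that weighting.

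The concrete gap is in the telescope. You start from the recursion with contraction $1-m\mu(\ell)$, but under the $1/\mu^{2}(\ell)$ weighting and $\mu(\ell)=1/(m\ell)$ one has
\[
\frac{1-m\mu(\ell)}{\mu^{2}(\ell)}-\frac{1}{\mu^{2}(\ell-1)} \;=\; m^{2}\ell(\ell-1)-m^{2}(\ell-1)^{2} \;=\; m^{2}(\ell-1)\;\ge\;0,
\]
so for $\ell\ge 2$ the distance terms do \emph{not} collapse: each step leaves a residual $m^{2}(\ell-1)\,\mathbb{E}[d(\ell-1)]$ on the right-hand side with the wrong sign to be dropped. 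Absorbing it via $\gamma$, as you propose for ``any leftover from the telescope,'' would add a term of order $\tfrac12\sum_{\ell}m^{2}(\ell-1)\gamma(\ell-1)$ to the numerator, which still gives an $\mathcal{O}(1/\numIter)$ rate but is not the closed form claimed in the lemma. The paper's proof works only because it writes the recursion with contraction $1-2m\mu(\ell)$ --- this is exactly where its step-size condition $C\le 2/m$ (trivially satisfied by $C=1/m$) enters --- for which
\[
\frac{1-2m\mu(\ell)}{\mu^{2}(\ell)}-\frac{1}{\mu^{2}(\ell-1)} \;=\; -m^{2}\;\le\;0,
\]
so the distance terms genuinely telescope and the numerator contains only $\tfrac12 d(0)$, $\tfrac12(\numIter+1)A$, and $\tfrac12 B\sum_{\ell}\mathbb{E}[d(\ell)]$. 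So the missing ingredient is the factor of $2$ in the contraction: extracting both the full $-2\mu(\ell)\bigl(\mathbb{E}[f(\bx(\ell-1))]-f(\bx^{*})\bigr)$ term and a $-2m\mu(\ell)\mathbb{E}[d(\ell-1)]$ term from the single inner-product term requires more than the one-sided strong-convexity inequality you invoke, and without it your derivation does not reproduce the stated bound.
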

\begin{proof}
We have using Lyapunov style analysis
\[
\mathbb{E}[d(\ell)] \leq (1 - 2 m \mu(\ell) + B \mu^{2}(\ell)) \mathbb{E}[d(\ell-1)] - 2 \mu(\ell) ( \mathbb{E}[f(\bx(\ell))] - f(\bx^{*})) + A \mu^{2}(\ell)
\]
Then we have
\[
\frac{1}{\mu^{2}(\ell)} \mathbb{E}[d(\ell)] \leq \left( \frac{1 - 2 m \mu(\ell)}{\mu^{2}(\ell)} + B  \right) \mathbb{E}[d(\ell-1)] - \frac{2}{\mu(\ell)} ( \mathbb{E}[f(\bx(\ell))] - f(\bx^{*}) + A 
\]
It holds that
\begin{eqnarray}
\frac{1 - 2 m \mu(\ell)}{\mu^{2}(\ell)} - \frac{1}{\mu^{2}(\ell-1)} &=& \frac{1}{\mu^{2}(\ell)} - 2 m \frac{1}{\mu(\ell)} - \frac{1}{\mu^{2}(\ell-1)} \nonumber \\
&=& \frac{\ell^{2}}{C^{2}} - \frac{2m\ell}{C} - \frac{(\ell-1)^{2}}{C^{2}} \nonumber \\
&=& \frac{2(mC -1)L - 1}{C^{2}} \nonumber
\end{eqnarray}
As long as we have
\[
mC - 1 \leq 1 \;\; \Leftrightarrow \;\; C \leq \frac{2}{m} 
\]
then we get
\[
\frac{1}{\mu^{2}(\ell)} \mathbb{E}[d(\ell)] - \frac{1}{\mu^{2}(\ell-1)} \mathbb{E}[d(\ell-1)] \leq  B \mathbb{E}[d(\ell-1)] - \frac{2}{\mu(\ell)} ( \mathbb{E}[f(\bx(\ell))] - f(\bx^{*}) + A 
\]
Summing an rearranging yields
\[
\sum_{\ell=0}^{\numIter} \frac{1}{\mu(\ell)} \left( \mathbb{E}[f(\bx(\ell))] - f(\bx^{*}) \right) \leq \frac{1}{2} d(0) + \frac{1}{2} (\numIter+1) A + \frac{1}{2} B \sum_{\ell=0}^{\numIter} \mathbb{E}[d(\ell)]
\]
with $\mu(0) = 1$ by convention. With the weights
\[
\gamma(\ell) = \frac{\frac{1}{\mu(\ell)}}{\sum_{j=0}^{\ell} \frac{1}{\mu(j)}}
\]
we have
\[
\mathbb{E}[f(\bar{\bx}(\numIter))] - f(\bx^{*}) \leq \frac{\frac{1}{2} d(0) + \frac{1}{2} (\numIter+1) A + \frac{1}{2} B \sum_{\ell=0}^{\numIter} \mathbb{E}[d(\ell)]}{\sum_{\tau=0}^{\numIter} \frac{1}{\mu(\tau)}}
\]
Then it holds that
\[
\sum_{\tau = 0}^{\numIter} = 1 + \sum_{\tau=1}^{\numIter} m \tau = 1 + \frac{1}{2} m (\numIter+1)(\numIter+2) 
\]
so
\[
\mathbb{E}[f(\bar{\bx}(\numIter))] - f(\bx^{*}) \leq \frac{\frac{1}{2} d(0) + \frac{1}{2} (\numIter+1) A + \frac{1}{2} B \sum_{\ell=0}^{\numIter} \mathbb{E}[d(\ell)]}{ 1 + \frac{1}{2} m (\numIter+1)(\numIter+2) }
\]
\end{proof}

For the choice of step sizes in Lemma~\ref{bBounds:fBoundNedLeeAve} from Lemma~\ref{bBounds:dBound}, it holds that
\[
\mathbb{E}[d(\ell)] = \mathcal{O}\left( \frac{1}{\ell} \right)
\]
Since
\[
\sum_{\ell=1}^{\numIter} \frac{1}{\ell} = \mathcal{O}\left( \log \numIter \right)
\]
it holds that
\[
\mathbb{E}[f(\bar{\bx}(\numIter))] - f(\bx^{*}) = \mathcal{O}\left( \frac{d(0)}{\numIter^{2}} + \frac{\log(\numIter)}{\numIter^{2}} + \frac{1}{\numIter}  \right)
\]
Note that a rate of $\mathcal{O}(\frac{1}{\numIter})$ is minimax optimal for stochastic minimization of a strongly convex function \cite{NesterovBook2004}.

Next, we look at a special case of averaging for functions such that
\[
\mathbb{E}\| \nabla_{\bx} \lossFunc(\bx,\bz) - \nabla_{\bx} \lossFunc(\tilde{\bx},\bz) - \nabla_{\bx \bx}^{2} \lossFunc(\tilde{\bx},\bz) \left( \bx - \tilde{\bx} \right) \|^{2} = 0
\]
from \cite{BachMoulines2011}. For example, quadratics satisfy this condition.
\begin{lem}
\label{bBounds:fBoundBachAve}
Assuming that
\[
\mathbb{E}\| \nabla_{\bx} \lossFunc(\bx,\bz) - \nabla_{\bx} \lossFunc(\tilde{\bx},\bz) - \nabla_{\bx \bx}^{2} \lossFunc(\tilde{\bx},\bz) \left( \bx - \tilde{\bx} \right) \|^{2} = 0,
\]
we select step sizes
\[
\mu(\ell) = C \ell^{-\alpha}
\]
with $\alpha > 1/2$, and
\[
\lambda(\ell) = \begin{cases}
\frac{1}{\numIter}, & \text{if } \ell > 0 \\
0, & \text{if } \ell = 0
\end{cases}
\]
it holds that
\begin{align}
&\left(\mathbb{E}[\bar{d}(\numIter)] \right)^{1/2} \nonumber \\
&\;\; \leq \frac{1}{m^{1/2}} \sum_{\iterIndex=1}^{\numIter-1} \bigg| \frac{1}{\mu(\iterIndex+1)}  - \frac{1}{\mu(\iterIndex)} \bigg| \left( \mathbb{E}[d(\iterIndex)] \right)^{1/2} + \frac{1}{m^{1/2} \mu(1)}  \left( \mathbb{E}[d(0)] \right)^{1/2} + \frac{1}{m^{1/2} \mu(\numIter)} \left( \mathbb{E}[d(\numIter)] \right)^{1/2} \nonumber \\
&\;\;\;\;\;\;\;\;\;\;\;\; + \sqrt{\frac{A}{ m \numIter}} +  \sqrt{\frac{2B}{m \numIter^{2}} \sum_{\iterIndex=1}^{\numIter} \mathbb{E}[d(\iterIndex-1)]} \nonumber
\end{align}
with $\bar{d}(\numIter) = \| \bar{\bx}(\numIter) - \bx^{*}\|^{2}$. If in addition $f$ has Lipschitz continuous gradients with modulus $M$, then it holds that
\[
\mathbb{E}[f(\bar{\bx}(\numIter))] - f(\bx^{*}) \leq \frac{1}{2} M  \mathbb{E}[\bar{d}(\numIter)]
\]
\end{lem}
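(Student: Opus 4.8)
The plan is to follow the Polyak--Juditsky / Bach--Moulines averaging argument, which exploits the fact that the hypothesis $\mathbb{E}\| \nabla_\bx \lossFunc(\bx,\bz) - \nabla_\bx \lossFunc(\tilde\bx,\bz) - \nabla^2_{\bx\bx}\lossFunc(\tilde\bx,\bz)(\bx-\tilde\bx)\|^2 = 0$ forces the stochastic gradient to be affine in $\bx$ almost surely, with a constant (base-point independent) Hessian. Writing $e(\ell) = \bx(\ell) - \bx^*$ and $H(\bz) = \nabla^2_{\bx\bx}\lossFunc(\bx^*,\bz)$, this gives $\nabla_\bx\lossFunc(\bx(\ell-1),\bz(\ell)) = \nabla_\bx\lossFunc(\bx^*,\bz(\ell)) + H(\bz(\ell))e(\ell-1)$, so that the conditional mean of the stochastic gradient is exactly $\nabla f(\bx(\ell-1)) = \bar H e(\ell-1)$ with $\bar H = \mathbb{E}[H(\bz)] = \nabla^2 f \succeq mI$ by strong convexity. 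I would first record this affine identity together with the decomposition $g(\ell) := \nabla_\bx\lossFunc(\bx(\ell-1),\bz(\ell)) = \bar H e(\ell-1) + \zeta(\ell)$, where $\zeta(\ell) = g(\ell) - \mathbb{E}[g(\ell)\mid\mathcal F_{\ell-1}]$ is a martingale difference adapted to the natural filtration.

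Second, I would rewrite the SGD step (treating the unconstrained case, and invoking non-expansiveness of $\Pi_\xSp$ to absorb the projection residual as in Bach--Moulines) as $g(\ell) = (e(\ell-1)-e(\ell))/\mu(\ell)$, so that $\bar H e(\ell-1) = (e(\ell-1)-e(\ell))/\mu(\ell) - \zeta(\ell)$. Averaging this identity over $\ell = 1,\dots,\numIter$ and applying summation by parts to the telescoping-type sum $\sum_\ell (e(\ell-1)-e(\ell))/\mu(\ell)$ produces precisely the two boundary terms $e(0)/\mu(1)$ and $e(\numIter)/\mu(\numIter)$ together with the increment terms $(1/\mu(\ell+1)-1/\mu(\ell))e(\ell)$; the resulting $\bar H$-weighted average of the $e(\ell-1)$ agrees with $\bar H\,\bar\bx(\numIter)$ up to a boundary term absorbed into those already present. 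This is the source of the first three terms of the claimed bound.

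Third, I would pass from $\bar H\,(\bar\bx(\numIter)-\bx^*)$ back to $\bar\bx(\numIter)-\bx^*$ using strong convexity: since $\bar H \succeq mI$ one has $\|\bar\bx(\numIter)-\bx^*\| \le m^{-1}\|\bar H(\bar\bx(\numIter)-\bx^*)\|$, and taking the $L^2$ norm $(\mathbb{E}\|\cdot\|^2)^{1/2}$ and using the triangle inequality splits the bound into the Abel terms and the noise term $\tfrac1\numIter\sum_\ell\zeta(\ell)$. The noise is controlled by the martingale-difference property, which kills the cross terms so that $\mathbb{E}\|\sum_\ell\zeta(\ell)\|^2 = \sum_\ell \mathbb{E}\|\zeta(\ell)\|^2$; then, because subtracting the conditional mean only decreases the second moment, $\mathbb{E}[\|\zeta(\ell)\|^2\mid\mathcal F_{\ell-1}] \le \mathbb{E}[\|g(\ell)\|^2\mid\mathcal F_{\ell-1}] \le A + B\,d(\ell-1)$ by the gradient-growth assumption. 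This yields the remaining two terms $\sqrt{A/(m\numIter)}$ and $\sqrt{(B/\numIter^2)\sum_\ell\mathbb{E}[d(\ell-1)]}$, the quantities $\mathbb{E}[d(\ell-1)]$ being in turn bounded via Lemma~\ref{bBounds:dBound}. Collecting the terms and tracking the $m$-factors through the strong-convexity step gives the displayed inequality.

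The main obstacle I anticipate is handling the multiplicative part of the noise $(H(\bz(\ell))-\bar H)e(\ell-1)$: unlike the additive noise $\nabla_\bx\lossFunc(\bx^*,\bz(\ell))$ (whose variance is bounded by $A$), its size grows with the current error $d(\ell-1)$, which is exactly why the final bound must carry the $\sum_\ell\mathbb{E}[d(\ell-1)]$ term, and why the zero-curvature-error hypothesis is essential — it guarantees the conditional mean is exactly $\bar H e(\ell-1)$ with no higher-order remainder, keeping the martingale decomposition clean. A secondary technical point is the projection: the rewriting $g(\ell) = (e(\ell-1)-e(\ell))/\mu(\ell)$ is exact only in the unconstrained case, so in the constrained case one must verify that the projection residual does not corrupt the summation by parts. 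The final claim $\mathbb{E}[f(\bar\bx(\numIter))] - f(\bx^*) \le \tfrac12 M\,\mathbb{E}[\bar d(\numIter)]$ is then immediate from the descent lemma exactly as in Lemma~\ref{bBounds:fBoundBasic}, using $\nabla f(\bx^*) = 0$ and $M$-Lipschitz gradients, after taking expectations.
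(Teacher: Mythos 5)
Your proposal is correct and follows essentially the same route as the paper: exploit the zero-curvature-error hypothesis to write $\nabla^2_{\bx\bx}f(\bx^*)(\bx(\iterIndex)-\bx^*)$ in terms of the stochastic gradient, substitute the SGD recursion and apply Abel summation to obtain the three boundary/increment terms, bound the noise contributions in $L^2$, and finish with Minkowski's inequality and $\nabla^2_{\bx\bx}f(\bx^*)\succeq m\bm{I}$. The only cosmetic difference is that you fold the additive noise $\nabla_{\bx}\lossFunc(\bx^*,\bz)$ and the multiplicative noise $[\nabla^2_{\bx\bx}f(\bx^*)-\nabla^2_{\bx\bx}\lossFunc(\bx^*,\bz)]e(\ell-1)$ into a single martingale difference bounded via the gradient-growth assumption (giving a slightly tighter constant, $B$ in place of $2B$), whereas the paper bounds the two pieces separately.
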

\begin{proof}
Suppose that we set
\[
\bar{\bx}(\numIter) = \frac{1}{n} \sum_{\iterIndex=1}^{\numIter} \bx(\iterIndex)
\]
Then it holds that
\begin{align}
\nabla^{2}_{\bx\bx}f(\bx^{*})(\bx(\iterIndex) - \bx^{*}) &= \nabla_{\bx} \lossFunc(\bx(\iterIndex-1),\bz(\iterIndex-1)) - \nabla_{\bx} \lossFunc(\bx^{*},\bz(\iterIndex-1)) \nonumber \\
&\;\;\;\;\;\;\;\;\;\;\;\;\;\;\;\; + \left[ \nabla_{\bx \bx}^{2} f(\bx^{*}) - \nabla_{\bx \bx}^{2} \lossFunc(\bx^{*},\bz(\iterIndex-1))  \right] \left( \bx(\iterIndex-1) - \bx^{*} \right) \nonumber
\end{align}
yielding
\begin{align}
\nabla^{2}_{\bx\bx}f(\bx^{*})(\bar{\bx}(\iterIndex) - \bx^{*}) &= \frac{1}{\numIter} \sum_{\iterIndex=1}^{\numIter} \nabla_{\bx} \lossFunc(\bx(\iterIndex-1),\bz(\iterIndex-1)) - \frac{1}{\numIter} \sum_{\iterIndex=1}^{\numIter} \nabla_{\bx} \lossFunc(\bx^{*},\bz(\iterIndex-1)) \nonumber \\
&\;\;\;\;\;\;\;\;\;\;\;\; + \frac{1}{\numIter} \sum_{\iterIndex=1}^{\numIter} \left[ \nabla_{\bx \bx}^{2} f(\bx^{*}) - \nabla_{\bx \bx}^{2} \lossFunc(\bx^{*},\bz(\iterIndex-1))  \right] \left( \bx(\iterIndex-1) - \bx^{*} \right) \nonumber
\end{align}
First, we have
\begin{eqnarray}
\frac{1}{\numIter} \sum_{\iterIndex=1}^{\numIter} \nabla_{\bx} \lossFunc(\bx(\iterIndex-1),\bz(\iterIndex-1)) &=& \frac{1}{\numIter} \sum_{\iterIndex=1}^{\numIter} \nabla_{\bx} \lossFunc(\bx(\ell-1),\bz(\ell-1)) \nonumber \\
&=& \frac{1}{\numIter} \sum_{\iterIndex=1}^{\numIter} \frac{1}{\mu(\iterIndex)} (\bx(\ell-1)-\bx(\ell)) \nonumber \\
&=& \frac{1}{\numIter} \sum_{\iterIndex=1}^{\numIter} \frac{1}{\mu(\iterIndex)} (\bx(\ell-1)-\bx^{*}) - \frac{1}{\numIter} \sum_{\iterIndex=1}^{\numIter} \frac{1}{\mu(\iterIndex)} (\bx(\ell) - \bx^{*}) \nonumber \\
&=& \frac{1}{\numIter} \sum_{\iterIndex=1}^{\numIter-1} \left( \frac{1}{\mu(\iterIndex+1)} - \frac{1}{\mu(\iterIndex)} \right) (\bx(\ell)-\bx^{*}) + \frac{1}{\mu(1)} (\bx(0)-\bx^{*}) \nonumber \\
&& \;\;\;\;\; - \frac{1}{\mu(\numIter)} (\bx(\numIter)-\bx^{*}) \nonumber
\end{eqnarray}
Second, we have
\begin{eqnarray}
\mathbb{E}\bigg\| \frac{1}{\numIter} \sum_{\iterIndex=1}^{\numIter} \nabla_{\bx} \lossFunc(\bx^{*},\bz(\iterIndex-1))  \bigg\|^{2} &=& \frac{1}{\numIter^{2}} \sum_{\iterIndex=1}^{\numIter} \mathbb{E}\| \nabla_{\bx} \lossFunc(\bx^{*},\bz(\iterIndex-1)) \|^{2} \nonumber \\
&\leq& \frac{A}{n^{2}} \nonumber
\end{eqnarray}
Third, we have
\begin{eqnarray}
\mathbb{E}\bigg\| \frac{1}{\numIter} \sum_{\iterIndex=1}^{\numIter} \left[ \nabla_{\bx \bx}^{2} f(\bx^{*}) - \nabla_{\bx \bx}^{2} \lossFunc(\bx^{*},\bz(\iterIndex-1))  \right] ( \bx(\iterIndex-1) - \bx^{*})   \bigg\|^{2} &\leq& \frac{2B}{\numIter^{2}} \sum_{\iterIndex=1}^{\numIter} \mathbb{E}[d(\iterIndex-1)] \nonumber
\end{eqnarray} 
Combining these bounds with Minkowski's inequality yields
\begin{align}
&\left( m \mathbb{E}[\bar{d}(\numIter)] \right)^{1/2} \nonumber \\
&\;\; \leq \left(  \mathbb{E}\| \nabla_{\bx \bx}^{2} f(\bx^{*}) (\bar{\bx}(\numIter) - \bx^{*}) \|^{2} \right)^{1/2} \nonumber \\
&\;\; \leq  \sum_{\iterIndex=1}^{\numIter-1} \bigg| \frac{1}{\mu(\iterIndex+1)}  - \frac{1}{\mu(\iterIndex)} \bigg| \left( \mathbb{E}[d(\iterIndex)] \right)^{1/2} + \frac{1}{\mu(1)}  \left( \mathbb{E}[d(0)] \right)^{1/2} + \frac{1}{\mu(\numIter)} \left( \mathbb{E}[d(\numIter)] \right)^{1/2} \nonumber \\
&\;\;\;\;\;\;\;\;\;\;\;\; + \sqrt{\frac{A}{\numIter}} +  \sqrt{\frac{2B}{\numIter^{2}} \sum_{\iterIndex=1}^{\numIter} \mathbb{E}[d(\iterIndex-1)]} \nonumber
\end{align}
Then we have
\begin{align}
&\left(\mathbb{E}[\bar{d}(\numIter)] \right)^{1/2} \nonumber \\
&\;\; \leq \frac{1}{m^{1/2}} \sum_{\iterIndex=1}^{\numIter-1} \bigg| \frac{1}{\mu(\iterIndex+1)}  - \frac{1}{\mu(\iterIndex)} \bigg| \left( \mathbb{E}[d(\iterIndex)] \right)^{1/2} + \frac{1}{m^{1/2} \mu(1)}  \left( \mathbb{E}[d(0)] \right)^{1/2} + \frac{1}{m^{1/2} \mu(\numIter)} \left( \mathbb{E}[d(\numIter)] \right)^{1/2} \nonumber \\
&\;\;\;\;\;\;\;\;\;\;\;\; + \sqrt{\frac{A}{ m \numIter}} +  \sqrt{\frac{2B}{m \numIter^{2}} \sum_{\iterIndex=1}^{\numIter} \mathbb{E}[d(\iterIndex-1)]} \nonumber
\end{align}
\end{proof}
This decays at rate $\mathcal{O}\left(\frac{1}{\numIter}\right)$ as long as $\mu(\ell) = C\ell^{-\alpha}$ with $\frac{1}{2} \leq \alpha \leq 1$.

\section{Useful Concentration Inequalities}
\label{usefulConcIneq} 

For our analysis of both the direct and IPM estimates, we need the following key technical lemma from \cite{Antonini2005}. This lemma controls the concentration of sums of random variables that are sub-Gaussian conditioned on a particular filtration $\{\mathcal{F}_{i}\}_{i=0}^{n}$. Such a collection of random variables is referred to as a \emph{sub-Gaussian martingale sequence}. We include the proof for completeness.

\begin{lem}[Theorem 7.5 of \cite{Antonini2005}]
\label{subgauss:subgaussDepLem}
Suppose we have a collection of random variables $\{V_{i}\}_{i=1}^{n}$ and a filtration $\{\mathcal{F}_{i}\}_{i=0}^{n}$ such that for each random variable $V_{i}$ it holds that
\begin{enumerate}
\item $\mathbb{E}\left[ e^{s V_{i}} \;\big|\; \mathcal{F}_{i-1}  \right] \leq e^{\frac{1}{2}\sigma_{i}^{2}s^{2}}$ with $\sigma_{i}^{2}$ a constant
\item $V_{i}$ is $\mathcal{F}_{i}$-measurable
\end{enumerate}
Then for every $\bm{a} \in \mathbb{R}^{n}$ it holds that
\[
\mathbb{P}\left\{ \sum_{i=1}^{n} a_{i} V_{i} > t \right\} \leq \exp\left\{ -\frac{t^{2}}{2 \nu} \right\} \;\;\;\; \forall t > 0
\]
and
\[
\mathbb{P}\left\{ \sum_{i=1}^{n} a_{i} V_{i} < -t \right\} \leq \exp\left\{ -\frac{t^{2}}{2 \nu} \right\} \;\;\;\; \forall t > 0
\]
with
\[
\nu = \sum_{i=1}^{n} \sigma_{i}^{2} a_{i}^{2}
\]
\end{lem}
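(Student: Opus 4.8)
The plan is to bound the upper tail with the exponential Chernoff inequality and then to control the moment generating function of $\sum_{i=1}^{n} a_i V_i$ by peeling off one term at a time using the tower property. First I would fix $s > 0$ and apply Markov's inequality to $\exp\{s \sum_i a_i V_i\}$ to obtain
\[
\mathbb{P}\left\{ \sum_{i=1}^{n} a_i V_i > t \right\} \leq e^{-st}\, \mathbb{E}\left[ \exp\left\{ s \sum_{i=1}^{n} a_i V_i \right\} \right].
\]
The whole problem then reduces to showing that the moment generating function is bounded by $\exp\{\tfrac{1}{2} s^2 \nu\}$.

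The key step is the backward peeling argument. Since each $V_i$ is $\mathcal{F}_i$-measurable by condition 2, every partial sum $\sum_{i=1}^{k} a_i V_i$ is $\mathcal{F}_k$-measurable, and in particular $\exp\{s \sum_{i=1}^{n-1} a_i V_i\}$ is $\mathcal{F}_{n-1}$-measurable. Conditioning on $\mathcal{F}_{n-1}$ and invoking the tower property,
\[
\mathbb{E}\left[ \exp\left\{ s \sum_{i=1}^{n} a_i V_i \right\} \right] = \mathbb{E}\left[ \exp\left\{ s \sum_{i=1}^{n-1} a_i V_i \right\} \mathbb{E}\left[ e^{s a_n V_n} \;\big|\; \mathcal{F}_{n-1} \right] \right].
\]
Applying condition 1 with $s a_n$ in place of $s$ gives $\mathbb{E}[e^{s a_n V_n} \mid \mathcal{F}_{n-1}] \leq \exp\{\tfrac{1}{2} \sigma_n^2 a_n^2 s^2\}$, which is a deterministic constant and hence factors out of the outer expectation. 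Iterating this for $i = n, n-1, \ldots, 1$ yields
\[
\mathbb{E}\left[ \exp\left\{ s \sum_{i=1}^{n} a_i V_i \right\} \right] \leq \exp\left\{ \tfrac{1}{2} s^2 \sum_{i=1}^{n} \sigma_i^2 a_i^2 \right\} = \exp\left\{ \tfrac{1}{2} s^2 \nu \right\}.
\]

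Substituting back, the bound $\mathbb{P}\{\sum_i a_i V_i > t\} \leq \exp\{-st + \tfrac{1}{2} s^2 \nu\}$ holds for every $s > 0$; minimizing the exponent over $s$ at $s = t/\nu$ produces the claimed $\exp\{-t^2/(2\nu)\}$. For the lower tail I would apply the identical argument to the coefficients $-a_i$ (equivalently, run the peeling with negative $s$), noting that the sub-Gaussian bound in condition 1 is even in its argument and so holds for all real $s$, and that $\nu = \sum_i \sigma_i^2 a_i^2$ is unchanged under $a_i \mapsto -a_i$. The only delicate point I anticipate is the measurability bookkeeping in the peeling step, namely ensuring that the already-summed prefix is $\mathcal{F}_{n-1}$-measurable so that the conditional sub-Gaussian bound can be invoked and the resulting constant factored out cleanly; once that is handled, the remainder is the routine Chernoff optimization.
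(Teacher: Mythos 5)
Your proposal is correct and follows essentially the same route as the paper: bound the moment generating function of $\sum_i a_i V_i$ by conditioning on the filtration and peeling off one term at a time (the paper phrases this as a forward induction on the partial sums, but it is the same tower-property argument), then apply the Chernoff bound and optimize over $s$. The handling of the lower tail via $a_i \mapsto -a_i$ and your remark on the measurability of the prefix match the paper's treatment as well.
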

\begin{proof}
We bound the moment generating function of $\sum_{i=1}^{n} a_{i} V_{i}$ by induction. As a base case, we have
\begin{eqnarray}
\mathbb{E}\left[ e^{s a_{1} V_{1}}  \right] &=& \mathbb{E}\left[ \mathbb{E}\left[  e^{s a_{1} V_{1}} \;\Big|\; \mathcal{F}_{0} \right]  \right] \nonumber \\
&\leq& e^{\frac{1}{2} \sigma_{1}^{2} a_{1}^{2} s^{2}} \nonumber
\end{eqnarray}
Assume for induction that we have
\[
\mathbb{E}\left[ \exp\left\{ s \sum_{i=1}^{j} a_{i} V_{i} \right\}  \right] \leq \exp\left\{ \frac{1}{2} \left( \sum_{i=1}^{j} \sigma_{i}^{2} a_{i}^{2} \right) s^{2} \right\}
\]
Then we have
\begin{eqnarray}
\mathbb{E}\left[ \exp\left\{ \sum_{i=1}^{j+1} a_{i} V_{i} \right\}  \right] &=& \mathbb{E}\left[ \exp\left\{ s \sum_{i=1}^{j} a_{i} V_{i} \right\} e^{s a_{j+1}X_{j+1}}  \right] \nonumber \\
&=& \mathbb{E} \left[ \mathbb{E}\left[ \exp\left\{ s \sum_{i=1}^{j} a_{i} V_{i} \right\} e^{s a_{j+1}X_{j+1}} \;\Big|\; \mathcal{F}_{j+1} \right] \right] \nonumber \\
&\overset{(\text{a})}{=}& \mathbb{E} \left[ \exp\left\{ s \sum_{i=1}^{j} a_{i} V_{i} \right\} \mathbb{E}\left[ e^{s a_{j+1}X_{j+1}} \;\Big|\; \mathcal{F}_{j+1} \right] \right] \nonumber \\
&\overset{(\text{b})}{\leq}& \mathbb{E} \left[ \exp\left\{ s \sum_{i=1}^{j} a_{i} V_{i} \right\} \right] e^{\frac{1}{2} \sigma_{j+1}^{2} a_{j+1}^{2} s^{2}} \nonumber \\
&\overset{(\text{c})}{\leq}& \exp\left\{ \frac{1}{2} \left( \sum_{i=1}^{j+1} \sigma_{i}^{2} a_{i}^{2} \right) s^{2} \right\} \nonumber
\end{eqnarray}
where (a) follows since $\sum_{i=1}^{j} a_{i} V_{i}$ is $\mathcal{F}_{j}$ measurable, (b) follows since
\[
\mathbb{E}\left[ e^{s a_{j+1}X_{j+1}} \;\Big|\; \mathcal{F}_{j+1} \right] \leq e^{\frac{1}{2} \sigma_{j+1}^{2} a_{j+1}^{2} s^{2}},
\]
and (c) is the inductive assumption. This proves that
\[
\mathbb{E}\left[ \exp\left\{ s \sum_{i=1}^{n} a_{i} V_{i} \right\} \right] \leq \exp\left\{ \frac{1}{2} \left( \sum_{i=1}^{n} \sigma_{i}^{2} a_{i}^{2}  \right) s^{2} \right\} \leq \exp\left\{\frac{1}{2} \nu s^{2} \right\}
\]
Using the Chernoff bound \cite{Boucheron13}, we have
\[
\mathbb{P}\left\{ \sum_{i=1}^{n} a_{i} V_{i} > t \right\} \leq e^{-st} \mathbb{E}\left[ \exp\left\{ s \sum_{i=1}^{n} a_{i} V_{i} \right\}  \right] \leq \exp\left\{ -st + \frac{1}{2} \nu s^{2} \right\}
\]
Optimizing the bound over $s$ yields
\[
\mathbb{P}\left\{ \sum_{i=1}^{n} a_{i} V_{i} > t \right\} \leq \exp\left\{ - \frac{t^{2}}{2 \nu} \right\}
\]
The proof for the other tail is similar.
\end{proof}
If the random variables instead satisfy
\begin{enumerate}
\item $\mathbb{E}\left[ \exp\left\{ s \left( V_{i} - \mathbb{E}\left[ V_{i}  \;\big|\; \mathcal{F}_{i-1}  \right]\right) \right\} \;\big|\; \mathcal{F}_{i-1}  \right] \leq e^{\frac{1}{2}\sigma_{i}^{2}s^{2}}$ with $\sigma_{i}^{2}$ a constant
\item $V_{i}$ is $\mathcal{F}_{i}$-measurable
\end{enumerate}
then Lemma~\ref{subgauss:subgaussDepLem} can be applied to $\{V_{i} - \mathbb{E}\left[ V_{i}  \;\big|\; \mathcal{F}_{i-1}  \right]\}_{i=1}^{n}$ to yield
\[
\mathbb{P}\left\{ \sum_{i=1}^{n} a_{i} V_{i} > \sum_{i=1}^{n} a_{i} \mathbb{E}\left[ V_{i}  \;\big|\; \mathcal{F}_{i-1}  \right] + t \right\} \leq \exp\left\{ - \frac{t^{2}}{2 \nu} \right\}
\]
If we can upper bound the conditional expectations
\[
\mathbb{E}\left[ V_{i}  \;\big|\; \mathcal{F}_{i-1}  \right] \leq C_{i},
\]
by $\mathcal{F}_{i-1}$-measurable random variables $C_{i}$, then we have
\begin{equation*}
\mathbb{P}\left\{ \sum_{i=1}^{n} a_{i} V_{i} > \sum_{i=1}^{n} a_{i} C_{i} + t \right\}  \leq \mathbb{P}\left\{ \sum_{i=1}^{n} a_{i} V_{i} > \sum_{i=1}^{n} a_{i} \mathbb{E}\left[ V_{i}  \;\big|\; \mathcal{F}_{i-1}  \right] + t \right\} \leq \exp\left\{ - \frac{t^{2}}{2 \nu} \right\}
\end{equation*}
For our analysis, we generally cannot compute $ \mathbb{E}\left[ V_{i}  \;\big|\; \mathcal{F}_{i-1}  \right]$, but we can find ``nice'' $C_{i}$.

To find $\sigma^{2}_{i}$ for use in Lemma~\ref{subgauss:subgaussDepLem}, we frequently use the following conditional version of Hoeffding's Lemma.

\begin{lem}[Conditional Hoeffding's Lemma]
\label{estRho:condHoeffdingLemma}
If a random variable $V$ and a sigma algebra $\mathcal{F}$ satisfy $a \leq V \leq b$ and $\mathbb{E}[V|\mathcal{F}] = 0$, then
\[
\mathbb{E}\left[ e^{sV} \;|\; \mathcal{F} \right] \leq \exp\left\{ \frac{1}{8} (b-a)^{2} s^{2} \right\}
\]
\end{lem}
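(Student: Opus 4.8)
The plan is to mimic the classical proof of Hoeffding's Lemma, carrying every expectation through as a conditional expectation given $\mathcal{F}$ and exploiting the fact that once we invoke $\mathbb{E}[V \mid \mathcal{F}] = 0$ the resulting bound becomes deterministic, so that no genuinely new difficulty arises from the conditioning.

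First I would use convexity of the map $v \mapsto e^{sv}$ on the interval $[a,b]$. Since $a \le V \le b$ almost surely, writing $V$ as the convex combination $V = \frac{b-V}{b-a}\,a + \frac{V-a}{b-a}\,b$ gives the pointwise (almost sure) bound
\[
e^{sV} \le \frac{b-V}{b-a}\,e^{sa} + \frac{V-a}{b-a}\,e^{sb}.
\]
Taking $\mathbb{E}[\,\cdot \mid \mathcal{F}]$ of both sides, using linearity of conditional expectation and substituting $\mathbb{E}[V \mid \mathcal{F}] = 0$, I obtain
\[
\mathbb{E}\!\left[ e^{sV} \mid \mathcal{F} \right] \le \frac{b}{b-a}\,e^{sa} - \frac{a}{b-a}\,e^{sb}.
\]
Because $\mathbb{E}[V\mid\mathcal{F}]=0$ together with $a \le V \le b$ forces $a \le 0 \le b$, the right-hand side is now a nonrandom quantity depending only on $a$, $b$, and $s$.

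Next I would reduce this to the scalar inequality underlying the unconditional lemma. Setting $p = -a/(b-a) \in [0,1]$ and $h = s(b-a)$, the right-hand side equals $e^{L(h)}$ with $L(h) = -ph + \log\!\left(1 - p + p\,e^{h}\right)$, and it suffices to show $L(h) \le h^2/8$ for all $h$. I would verify this by checking $L(0) = 0$, $L'(0) = 0$, and $L''(h) = \frac{p(1-p)e^{h}}{(1-p+pe^{h})^2} \le \tfrac14$ (the last bound because $q(1-q) \le \tfrac14$ for $q = pe^{h}/(1-p+pe^{h}) \in [0,1]$), so that Taylor's theorem with Lagrange remainder yields $L(h) \le \tfrac12 \cdot \tfrac14\, h^2 = h^2/8$. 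Substituting back $h = s(b-a)$ gives $\mathbb{E}[e^{sV}\mid\mathcal{F}] \le \exp\{\tfrac18 (b-a)^2 s^2\}$, as claimed.

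The conditioning introduces essentially no obstacle: the only step that touches $\mathcal{F}$ is the single application of $\mathbb{E}[V\mid\mathcal{F}]=0$ after passing to conditional expectations, and the remainder is the deterministic estimate $L(h)\le h^2/8$, identical to the unconditional case. The one point that warrants a line of care is justifying that the convexity bound holds almost surely on $\{a \le V \le b\}$ and that the conditional moment generating function is finite, both of which follow at once from the boundedness of $V$.
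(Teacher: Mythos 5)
Your proof is correct and follows essentially the same route as the paper's: the convexity bound on $e^{sV}$, passage to conditional expectation, substitution of $\mathbb{E}[V\mid\mathcal{F}]=0$, and the reduction to the deterministic estimate $L(h)\le h^{2}/8$ via $L(0)=L'(0)=0$ and $L''\le\tfrac14$. Your added remarks (that $a\le 0\le b$ is forced, and that boundedness justifies finiteness of the conditional MGF) are welcome but do not change the argument.
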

\begin{proof}
We follow standard proof of Hoeffding's Lemma from \cite{Boucheron13}. Since $e^{sx}$ is convex, it follows that
\[
e^{sx} \leq \frac{b-x}{b-a}e^{sa} + \frac{x-a}{b-a} e^{sb} \;\;\;\; a \leq x \leq b
\]
Therefore, taking the conditional expectation with respect to $\mathcal{F}$ yields
\begin{equation}\
\label{estRho:condHoeffdingLemma:firstBnd}
\mathbb{E}\left[ e^{sV} \;\big|\; \mathcal{F} \right] \leq \frac{b-\mathbb{E}\left[ V \;|\; \mathcal{F} \right]}{b-a}e^{sa} + \frac{\mathbb{E}\left[ V \;|\; \mathcal{F} \right]-a}{b-a} e^{sb}
\end{equation}
Let $h=s(b-a)$, $p=-\frac{a}{b-a}$, and $L(h)=-hp+\log(1-p+pe^{h})$. Then we have
\begin{eqnarray}
\label{estRho:condHoeffdingLemma:secondBnd}
e^{L(h)} &=& \frac{b}{b-a}e^{sa} + \frac{-a}{b-a} e^{sb} \nonumber \\
&=& \frac{b-\mathbb{E}\left[ V \;|\; \mathcal{F} \right]}{b-a}e^{sa} + \frac{\mathbb{E}\left[ V \;|\; \mathcal{F} \right]-a}{b-a} e^{sb}
\end{eqnarray}
since $\mathbb{E}\left[ V \;|\; \mathcal{F} \right] = 0$. Since $L(h) = L'(h) = 0$ and $L''(h) \leq \frac{1}{4},$, it holds that $L(h) \leq \frac{1}{8} (b-a)^{2} s^{2}$. Combining this bound on $L(h)$ with \eqref{estRho:condHoeffdingLemma:firstBnd} and \eqref{estRho:condHoeffdingLemma:secondBnd} yields the result.
\end{proof}

Before proceeding with our analysis, we need to introduce a few useful concentration inequalities for sub-Gaussian vector-valued random variables. First, for a scalar random variable $\xi$, define  the sub-Gaussian norm
\begin{equation}
\label{concDep:tauDef}
\tau(\xi) = \inf\left\{ a>0 \;\bigg|\; \mathbb{E}[ e^{s \xi} ] \leq e^{\frac{1}{2}a^{2}s^{2}} \;\; \forall s \geq 0 \right\}
\end{equation}
Clearly, if $\tau(\xi) < + \infty$, then $\xi$ is sub-Gaussian. Second, for a random vector $\bv$ in $\mathbb{R}^{d}$, define
\begin{equation}
\label{concDep:tauBDef}
B(\bv) = \sum_{i=1}^{d} \tau((\bv)_{i})
\end{equation}
where $(\bv)_{i}$ is the $i^{\text{th}}$ component of $\bv$. We define $\bv$ to be sub-Gaussian if $B(\bv) < +\infty$. 

Of crucial importance in our analysis is analyzing the norm of an average of vector-valued sub-Gaussian random variables. The following lemma describes how to control the sub-Gaussian norm in such a situation.
\begin{lem}
\label{subgauss:aveTauLemma}
Suppose that $\{\bv_{i}\}_{i=1}^{\numIter}$ is a collection of independent sub-Gaussian random variables in $\mathbb{R}^{d}$. Then it holds that
\[
B \left( \frac{1}{\numIter} \sum_{i=1}^{\numIter} \bv_{i} \right) \leq \frac{1}{\numIter} \sum_{j=1}^{d} \sqrt{ \sum_{i=1}^{\numIter} \tau^{2}((\bv_{i})_{j})}
\]
If in addition the random variables $\{\bv_{i}\}_{i=1}^{\numIter}$ satisfy
\[
\max_{i=1,\ldots,\numIter} \max_{j=1,\ldots,d} \tau^{2}((\bv_{i})_{j}) \leq \tau^{2}
\]
then it holds that
\[
B\left( \frac{1}{\numIter} \sum_{i=1}^{\numIter} \bv_{i}  \right) \leq \frac{\tau d}{\sqrt{\numIter}}
\]
\end{lem}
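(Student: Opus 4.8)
The plan is to reduce everything to two elementary properties of the scalar sub-Gaussian norm $\tau(\cdot)$ from \eqref{concDep:tauDef}: positive homogeneity and additivity under independence. First I would record that for any $a>0$ and any scalar $\xi$, the substitution $s \mapsto as$ in the defining inequality gives $\mathbb{E}[e^{s(a\xi)}] = \mathbb{E}[e^{(as)\xi}] \leq e^{\frac12 \tau^2(\xi)(as)^2}$ for all $s \geq 0$, so that $\tau(a\xi) \leq a\,\tau(\xi)$. Second, and this is the crux, I would show that for independent scalar sub-Gaussian variables $\xi_{1},\ldots,\xi_{\numIter}$ one has $\tau^{2}\big(\sum_{\iterIndex}\xi_{\iterIndex}\big) \leq \sum_{\iterIndex}\tau^{2}(\xi_{\iterIndex})$; this follows because independence factorizes the moment generating function,
\[
\mathbb{E}\Big[e^{s\sum_{\iterIndex}\xi_{\iterIndex}}\Big] = \prod_{\iterIndex}\mathbb{E}[e^{s\xi_{\iterIndex}}] \leq \prod_{\iterIndex} e^{\frac12 \tau^{2}(\xi_{\iterIndex})s^{2}} = \exp\Big\{\tfrac12\big(\textstyle\sum_{\iterIndex}\tau^{2}(\xi_{\iterIndex})\big)s^{2}\Big\},
\]
so $\sqrt{\sum_{\iterIndex}\tau^{2}(\xi_{\iterIndex})}$ is admissible in the infimum defining $\tau\big(\sum_{\iterIndex}\xi_{\iterIndex}\big)$.

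Next I would apply these two facts coordinate by coordinate. Fix $j \in \{1,\ldots,d\}$. Because the vectors $\{\bv_{i}\}$ are independent, their $j$-th components $\{(\bv_{i})_{j}\}_{i=1}^{\numIter}$ are independent scalars, and the $j$-th component of the average is $\big(\frac{1}{\numIter}\sum_{i}\bv_{i}\big)_{j} = \frac{1}{\numIter}\sum_{i}(\bv_{i})_{j}$. Combining the additivity bound with the homogeneity bound (taking $a = 1/\numIter$) gives
\[
\tau\Big(\big(\tfrac{1}{\numIter}\textstyle\sum_{i}\bv_{i}\big)_{j}\Big) \leq \frac{1}{\numIter}\sqrt{\sum_{i=1}^{\numIter}\tau^{2}((\bv_{i})_{j})}.
\]
Summing over $j = 1,\ldots,d$ and invoking the definition $B(\bv)=\sum_{j=1}^{d}\tau((\bv)_{j})$ from \eqref{concDep:tauBDef} yields the first claimed inequality.

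Finally, for the uniform-bound conclusion I would substitute $\tau^{2}((\bv_{i})_{j}) \leq \tau^{2}$ into the inner sum, so that $\sqrt{\sum_{i=1}^{\numIter}\tau^{2}((\bv_{i})_{j})} \leq \sqrt{\numIter}\,\tau$ for every $j$; plugging this into the first inequality and summing the $d$ identical terms gives $B\big(\frac{1}{\numIter}\sum_{i}\bv_{i}\big) \leq \frac{1}{\numIter}\cdot d\sqrt{\numIter}\,\tau = \tau d/\sqrt{\numIter}$. The only point demanding any care is the sub-Gaussian additivity step: one must check that the one-sided definition ($s \geq 0$) is preserved both under the factorization of the moment generating function and under the positive scaling $a = 1/\numIter$, which holds because every argument of $s$ appearing above stays non-negative. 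Everything else is bookkeeping.
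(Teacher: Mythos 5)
Your proposal is correct and follows essentially the same route as the paper's proof: you factor the moment generating function of each coordinate of the average using independence, deduce the coordinatewise bound $\tau\big(\big(\tfrac{1}{\numIter}\sum_{i}\bv_{i}\big)_{j}\big) \leq \tfrac{1}{\numIter}\sqrt{\sum_{i}\tau^{2}((\bv_{i})_{j})}$, and sum over $j$; the only cosmetic difference is that you isolate homogeneity and additivity of $\tau$ as separate steps where the paper performs the combined computation in one chain of inequalities. Your remark about the one-sided ($s \geq 0$) definition being preserved under positive scaling is a fair point of care and does not affect the argument.
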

\begin{proof}
We analyze one component of the sum $\frac{1}{\numIter} \sum_{i=1}^{\numIter} \bv_{i}$. It holds that
\begin{eqnarray}
\mathbb{E}\left[ \exp\left\{ s \left( \frac{1}{\numIter} \sum_{i=1}^{\numIter} \bv_{i} \right)_{j} \right\}  \right] &=& \mathbb{E}\left[ \exp\left\{ \frac{s}{\numIter} \sum_{i=1}^{\numIter} (\bv_{i})_{j} \right\}  \right] \nonumber \\
&=& \prod_{i=1}^{\numIter} \mathbb{E}\left[ \exp\left\{ \frac{s}{\numIter} (\bv_{i})_{j} \right\}  \right] \nonumber \\
&\leq& \prod_{i=1}^{\numIter} \exp\left\{ \frac{1}{2} \frac{1}{\numIter^{2}} \tau^{2}((\bv_{i})_{j}) s^{2} \right\} \nonumber \\
&=& \exp\left\{ \frac{1}{2} \left( \frac{1}{\numIter^{2}} \sum_{i=1}^{\numIter} \tau^{2}((\bv_{i})_{j}) \right) s^{2} \right\} \nonumber
\end{eqnarray}
This implies that
\[
\tau\left( \left( \frac{1}{\numIter} \sum_{i=1}^{\numIter} \bv_{i} \right)_{j} \right) \leq  \frac{1}{\numIter} \sqrt{ \sum_{i=1}^{\numIter} \tau^{2}((\bv_{i})_{j})}
\]
and so
\[
B \left( \frac{1}{\numIter} \sum_{i=1}^{\numIter} \bv_{i} \right) \leq \frac{1}{\numIter} \sum_{j=1}^{d} \sqrt{ \sum_{i=1}^{\numIter} \tau^{2}((\bv_{i})_{j})}
\]
Finally, if $\tau^{2}((\bv_{i})_{j}) \leq \tau^{2}$, then we have
\begin{eqnarray}
B \left( \frac{1}{\numIter} \sum_{i=1}^{\numIter} \bv_{i} \right) &\leq& \frac{1}{\numIter} \sum_{j=1}^{d} \sqrt{ \sum_{i=1}^{\numIter} \tau^{2}((\bv_{i})_{j})} \nonumber \\
&\leq& \frac{d}{\numIter} \sqrt{ \sum_{i=1}^{\numIter} \tau^{2}} \nonumber \\
&=& \frac{\tau d}{\sqrt{\numIter}} \nonumber
\end{eqnarray}
\end{proof}

Example 3.2 from \cite{Buld2010}, a consequence of Theorem 3.1 in \cite{Buld2010}, is useful for the concentration of the norm of sub-Gaussian vector random variables. 
\begin{lem}[Example 3.2 of \cite{Buld2010}]
\label{subgauss:buldLemma}
If $\bv$ is a random vector in $\mathbb{R}^{d}$ with $B(\bv) < +\infty$, then
\[
\mathbb{P}\left\{ \| \bv \| > t \right\} \leq 2 \exp\left\{ - \frac{t^2}{2 B^{2}(\bv)} \right\}
\]
\end{lem}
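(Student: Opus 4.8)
The plan is to deduce the tail bound from a Chernoff argument applied to the nonnegative scalar random variable $\|\bv\|$, reducing everything to a bound on its moment generating function $\mathbb{E}[e^{s\|\bv\|}]$. First I would write, for any $s>0$, $\mathbb{P}\{\|\bv\|>t\} \le e^{-st}\,\mathbb{E}[e^{s\|\bv\|}]$ by Markov's inequality. Thus it suffices to establish a bound of the Gaussian form $\mathbb{E}[e^{s\|\bv\|}] \le 2\exp\{\tfrac{1}{2}B^{2}(\bv)s^{2}\}$, after which optimizing the exponent $-st + \tfrac{1}{2}B^{2}(\bv)s^{2}$ over $s$ (the minimizer being $s = t/B^{2}(\bv)$) yields exactly $2\exp\{-t^{2}/(2B^{2}(\bv))\}$.

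The whole content is therefore the MGF estimate, and the two structural facts that drive it are worth isolating. The first is the elementary domination $\|\bv\| \le \sum_{i=1}^{d}|(\bv)_{i}|$ of the Euclidean norm by the $\ell_{1}$ norm, which converts the vector norm into a sum of coordinates. The second is that the scalar sub-Gaussian norm $\tau(\cdot)$ of \eqref{concDep:tauDef} is subadditive, so for any deterministic $u$ the linear form $\langle u,\bv\rangle$ satisfies $\tau(\langle u,\bv\rangle)\le \sum_i |u_i|\,\tau((\bv)_i)$; taking the supremum over $\|u\|\le 1$ and using $|u_i|\le 1$ shows every such projection is sub-Gaussian with parameter at most $B(\bv)=\sum_i \tau((\bv)_i)$. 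It is precisely this $\ell_{1}$ aggregation that makes the sum-of-$\tau$ definition of $B(\bv)$ the correct scale and lets the argument avoid a dimension-dependent covering of the unit sphere.

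Concretely I would bound moments: expanding $\mathbb{E}[e^{s\|\bv\|}]=\sum_{k\ge 0}\frac{s^{k}}{k!}\mathbb{E}[\|\bv\|^{k}]$, the $\ell_{1}$ domination together with Minkowski's inequality in $L^{k}$ gives $(\mathbb{E}\|\bv\|^{k})^{1/k}\le \sum_{i=1}^{d}(\mathbb{E}|(\bv)_{i}|^{k})^{1/k}$, and the standard moment bound for a scalar with $\tau((\bv)_{i})<\infty$, namely $(\mathbb{E}|(\bv)_{i}|^{k})^{1/k}\le c\sqrt{k}\,\tau((\bv)_{i})$, then yields $(\mathbb{E}\|\bv\|^{k})^{1/k}\le c\sqrt{k}\,B(\bv)$. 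Substituting this $O(\sqrt{k})$ moment growth back into the exponential series and resumming recovers a Gaussian-type MGF bound, which combined with the Chernoff step closes the proof. Note that no independence among the coordinates $(\bv)_{i}$ is needed, which is essential since in the application $\bv$ is a single averaged gradient vector.

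I expect the main obstacle to be purely quantitative: forcing the constants to collapse to exactly the factor $2$ and the exponent $t^{2}/(2B^{2}(\bv))$ claimed in the statement. The naive route through $L^{k}$ moments and term-by-term resummation overshoots this constant (the central-binomial factors relating $\Gamma(k/2+1)$ to $k!$ leave slack), so the clean constant requires the sharper bookkeeping carried out in Theorem~3.1 of \cite{Buld2010}. Since the result is quoted here as Example~3.2 of that reference, I would ultimately invoke that theorem for the precise constant rather than reprove it, and restrict the present argument to exhibiting the two structural ingredients above, which explain both the Gaussian shape of the tail and the $\sum_{i}\tau((\bv)_{i})$ normalization.
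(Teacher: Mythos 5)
The paper does not prove this lemma at all: it is imported verbatim as Example 3.2 of \cite{Buld2010} (a consequence of Theorem 3.1 there), which is exactly where your argument also terminates. Your structural sketch --- the $\ell_{1}$ domination $\|\bv\|\leq\sum_{i}|(\bv)_{i}|$ together with subadditivity of $\tau(\cdot)$, with the sharp factor $2$ and the exponent $t^{2}/(2B^{2}(\bv))$ deferred to the cited theorem --- is a sound reconstruction and is consistent with the paper's treatment.
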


Finally, we will also need to deal with dependent random variables that are sub-Gaussian with respect to a particular filtration.
\begin{lem}
\label{subgauss:subgaussConsts}
Suppose that a random variable $V$ and a sigma algebra $\mathcal{F}$ satisfies
\begin{enumerate}
\item $\mathbb{E}\left[ V \;|\; \mathcal{F} \right] = 0$
\item $\mathbb{P}\left\{ |V| > t \;\big|\; \mathcal{F} \right\} \leq 2 e^{-ct^2}$ with $c$ a constant.
\end{enumerate}
Then it holds that
\[
\mathbb{E}[e^{sV} \;\big|\; \mathcal{F} ] \leq \exp\left\{\frac{1}{2} \left( \frac{9}{c} \right) s^{2} \right\}
\]
for all $s \geq 0$.
\end{lem}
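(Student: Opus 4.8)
The plan is to pass from the conditional sub-Gaussian tail to conditional absolute moments, and then from the moments to the conditional moment generating function via a power series, killing the linear term with the hypothesis $\mathbb{E}[V\mid\mathcal{F}]=0$. First I would use the layer-cake identity (Tonelli applied to a regular conditional distribution) to write, for each integer $k\geq 1$,
\[
\mathbb{E}\left[ |V|^{k} \;\big|\; \mathcal{F} \right] = \int_{0}^{\infty} k\, t^{k-1}\, \mathbb{P}\left\{ |V| > t \;\big|\; \mathcal{F} \right\} dt \leq 2\int_{0}^{\infty} k\, t^{k-1} e^{-c t^{2}}\, dt.
\]
The substitution $u = c t^{2}$ turns the last integral into a Gamma integral, giving $\int_{0}^{\infty} t^{k-1} e^{-ct^{2}}\,dt = \tfrac{1}{2} c^{-k/2}\Gamma(k/2)$, and hence the clean moment bound $\mathbb{E}[|V|^{k}\mid\mathcal{F}] \leq k\,c^{-k/2}\Gamma(k/2) = 2\,c^{-k/2}\Gamma(k/2+1)$, using $k\Gamma(k/2)=2\Gamma(k/2+1)$. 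This bound holds $\mathcal{F}$-almost surely and is finite, so in particular the conditional MGF is finite.

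Next I would expand the conditional MGF as a power series and exchange sum and conditional expectation. Since all the absolute terms are nonnegative, Tonelli justifies the interchange, and the hypotheses $\mathbb{E}[V\mid\mathcal{F}]=0$ removes the $k=0,1$ contributions. For $s\geq 0$ each power $s^{k}$ is nonnegative, so $s^{k}\,\mathbb{E}[V^{k}\mid\mathcal{F}] \leq s^{k}\,\mathbb{E}[|V|^{k}\mid\mathcal{F}]$ term by term (the even terms being equalities), yielding
\[
\mathbb{E}\left[ e^{sV} \;\big|\; \mathcal{F} \right] \leq 1 + \sum_{k=2}^{\infty} \frac{s^{k}}{k!}\,\mathbb{E}\left[ |V|^{k} \;\big|\; \mathcal{F} \right] \leq 1 + 2\sum_{k=2}^{\infty} \frac{\Gamma(k/2+1)}{k!}\,\Big(\tfrac{s}{\sqrt{c}}\Big)^{k}.
\]

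Finally I would bound this series by $\exp\{\tfrac{1}{2}(9/c)s^{2}\}$. Writing $a = s/\sqrt{c}$, the goal becomes $1 + 2\sum_{k\geq 2}\tfrac{\Gamma(k/2+1)}{k!}a^{k} \leq e^{9a^{2}/2}$. The even powers $k=2j$ reduce, via $\Gamma(j+1)=j!$, to comparing $\tfrac{j!}{(2j)!}$ against the coefficients $\tfrac{(9/2)^{j}}{j!}$ of the target; the odd powers I would first absorb using the arithmetic-geometric inequality $a^{2j+1}\leq \tfrac{1}{2}(a^{2j}+a^{2j+2})$ so that the whole sum is dominated by an even-power series, and then estimate the surviving coefficients with Stirling (or the Legendre duplication formula) for $\Gamma(k/2+1)/k!$. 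I expect the main obstacle to be exactly this constant bookkeeping: controlling the odd moments and verifying that the numerical factor coming out of the $\Gamma$-to-factorial comparison is small enough that the single variance proxy $9/c$ works uniformly for all $s\geq 0$, rather than only in a small-$s$ regime (which is all that the cleaner inequality $e^{x}\leq x + e^{x^{2}}$ would give). Everything else — the layer-cake step, the Gamma integral, and the series interchange — is routine.
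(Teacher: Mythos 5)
Your argument is correct, and it reaches the stated bound, but it is a genuinely different route from the paper's. The paper does not pass through the individual moments at all: it first uses the tail bound to control the conditional moment generating function of $V^{2}$, showing $\mathbb{E}[e^{aV^{2}}\,|\,\mathcal{F}]\leq 1+\tfrac{2a}{c-a}\leq 2$ for $a=c/3$, and then converts this into a bound on $\mathbb{E}[e^{sV}\,|\,\mathcal{F}]$ via the exact second-order Taylor identity $e^{sV}=1+sV+\int_{0}^{1}(1-y)(sV)^{2}e^{ysV}\,dy$, where the hypothesis $\mathbb{E}[V\,|\,\mathcal{F}]=0$ kills the linear term. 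You instead take the classical ``tail $\Rightarrow$ moments $\Rightarrow$ MGF'' path: layer cake plus a Gamma integral gives $\mathbb{E}[|V|^{k}\,|\,\mathcal{F}]\leq 2c^{-k/2}\Gamma(k/2+1)$, and then a term-by-term comparison of the Taylor series of the MGF with that of $\exp\{\tfrac{9}{2c}s^{2}\}$ finishes the proof. The constant bookkeeping you flag as the main risk does in fact close comfortably: after absorbing the odd powers via $a^{2j+1}\leq\tfrac12(a^{2j}+a^{2j+2})$ and using $\Gamma(j+3/2)\leq(j+1)!$, the total coefficient of $a^{2j}$ ($a=s/\sqrt{c}$, $j\geq1$) is at most $2\tfrac{j!}{(2j)!}+\tfrac{(j+1)!}{(2j+1)!}+\tfrac{j!}{(2j-1)!}\leq\tfrac{4}{j!}\leq\tfrac{(9/2)^{j}}{j!}$, so the claimed variance proxy $9/c$ holds uniformly in $s\geq0$ (with room to spare). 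What each approach buys: the paper's Taylor-remainder argument is shorter once one accepts the intermediate $e^{aV^{2}}$ bound, but its constant tracking is delicate (as written, the final step with $a=c/3$ lands at $\exp\{\tfrac{15}{2c}s^{2}\}$ rather than $\exp\{\tfrac{9}{2c}s^{2}\}$); your moment-series route is more elementary, makes every constant explicit and checkable, and actually yields a sharper variance proxy than $9/c$.
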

\begin{proof}
Adapted from the  characterization of sub-Gaussian random variables in \cite{Very2012}. First, we have for any $a < c$ that
\begin{eqnarray}
\mathbb{E}\left[ e^{a V^{2}} \;\Big|\; \mathcal{F}  \right] &\leq& 1 + \int_{0}^{\infty} 2 a t e^{a t^2} \mathbb{P}\left\{ |V| > t \;|\; \mathcal{F} \right\} dt \nonumber \\
&\leq& 1 + \int_{0}^{\infty} 2 a t e^{-(c -a) t^2} dt \nonumber \\
&=& 1 + \frac{2a}{c - a} \nonumber
\end{eqnarray}
Setting $a = \frac{c}{3}$ yields the bound
\[
\mathbb{E}\left[ e^{a V^{2}} \;\Big|\; \mathcal{F}  \right] \leq 2
\]
Since $\mathbb{E}\left[ V \;|\; \mathcal{F} \right] = 0$, by a Taylor expansion we have
\begin{eqnarray}
\mathbb{E}\left[ e^{sV} \;\big|\; \mathcal{F}  \right] &=& 1 + \int_{0}^{\infty} (1-y) \mathbb{E}\left[ (sV)^{2} e^{ys V}  \;\Big|\; \mathcal{F} \right] dy \nonumber \\
&\leq& \left(1 + \frac{s^{2}}{a} \right) e^{\frac{s^{2}}{2a}} \nonumber \\
&\leq& \exp\left\{ \frac{5 s^{2}}{2 a}  \right\} \nonumber \\
&=& \exp\left\{ \frac{1}{2} \left( \frac{9}{c} \right) s^{2} \right\} \nonumber
\end{eqnarray}
\end{proof}

\end{document}